\newtheorem{theorem}{Theorem}[section]
\newtheorem{lemma}[theorem]{Lemma}
\newtheorem{proposition}[theorem]{Proposition}
\newtheorem{remark}[theorem]{Remark}
\definecolor{ForestGreen}{cmyk}{0.91,0,0.88,0.12}
\colorlet{pierrem}{black}
\newcommand{\pierrecomment}[1]{ \textcolor{pierrem!60}{(P: \textit{#1})}}
\newcommand{\pierremodif}[1]{{\color{pierrem}#1\color{black}}}
\colorlet{clrcolor}{black}
\newcommand{\clrcomment}[1]{ \textcolor{clrcolor!60}{(CB: \textit{#1})}}
\newcommand{\clrmodif}[1]{{\color{clrcolor}#1\color{black}}}
\newcommand{\R}{\mathbb{R}}
\DeclareMathOperator*{\erf}{erf}
\DeclareMathOperator*{\argmin}{argmin}
\newcommand{\eqdef}{\stackrel{\text{\rm\tiny def}}{=}}
\title{Attention-based clustering}
\newcommand*\samethanks[1][\value{footnote}]{\footnotemark[#1]}
\author{ 
  Rodrigo Maulen-Soto\thanks{Correspondence: \texttt{maulen@lpsm.paris} and \texttt{claire.boyer@universite-paris-saclay.fr}.} \\
  Sorbonne Université, LPSM  \\
  Paris, France \\
 %  \\
  % examples of more authors
  \And
  Pierre Marion \\%\thanks{Now at Inria, Ecole Normale Sup ́erieure, PSL Research University} \\
  EPFL, Institute of Mathematics \\
  Lausanne, Switzerland \\
  \And
  Claire Boyer\samethanks \\
  Laboratoire de Mathématiques d’Orsay\\ Université Paris Saclay \\
  Institut Universitaire de France \\
%  Orsay, France \\
  % \\
  % Affiliation \\
  % Address \\
  % \texttt{email} \\
  % \AND
  % Coauthor \\
  % Affiliation \\
  % Address \\
  % \texttt{email} \\
  % \And
  % Coauthor \\
  % Affiliation \\
  % Address \\
  % \texttt{email} \\
  % \And
  % Coauthor \\
  % Affiliation \\
  % Address \\
  % \texttt{email} \\
}
\begin{document}
%%%%%%%%%%%%%%%%%%%%%%%%%%%%%%%%%%%

\maketitle

\begin{abstract}
Transformers have emerged as a powerful neural network architecture capable of tackling a wide range of learning tasks. In this work, we provide a theoretical analysis of their ability to automatically extract structure from data in an unsupervised setting. In particular, we demonstrate their suitability for clustering when the input data is generated from a Gaussian mixture model. To this end, we study a simplified two-head attention layer and define a population risk whose minimization with unlabeled data drives the head parameters to align with the true mixture centroids. 

This phenomenon highlights the ability of attention-based layers to capture underlying distributional structure. We further examine an attention layer with key, query, and value matrices fixed to the identity, and show that, even without any trainable parameters, it can perform in-context quantization, revealing the surprising capacity of transformer-based methods to adapt dynamically to input-specific distributions.
\end{abstract}

\section{Introduction}
Attention-based models \citep{bahdanau2015}, in particular Transformers \citep{vaswani}, have achieved state-of-the-art performance across a wide range of learning tasks. These include applications in natural language processing \citep{devlin, bubeck, luong, bahdanau2016} and computer vision \citep{dosovitskiy, liu, stand}. The success of the attention mechanism has been linked to its ability to capture long-range relationships in input sequences \citep{bahdanau2015,vaswani}. They do this by computing pairwise dependencies between tokens based on learned projections, without regard to the tokens' positions in the sequence.

On the theoretical side, a full understanding of attention-based mechanisms has not yet been developed. This is due to the complexity of the architectures and the diversity of relevant tasks they manage to achieve. 
A promising research direction to bridge this gap involves identifying essential features from real-world problems and constructing minimal yet representative tasks that retain the essential difficulty—paired with provable models that solve them using attention-based mechanisms. Notable recent efforts in this vein include \citet{ahn,oswald, yang, zhang2024trained, li2024one, li2023}. However, the existing literature mainly focuses on supervised learning aspects, and in particular in-context learning \citep{oswald,zhang2024trained,garg, li2023, furuya}. The goal of in-context learning is to predict the output corresponding to a new query, given a prompt consisting of input/output pairs.

% \pierremodif{Departing from the supervised scenario, Transformers models are typically in practice (pre)trained in a semi-supervised manner, using next-token prediction or masked language modeling \citep{phuong2022formal}. This prompts the need to understand the statistical properties and training dynamics of Transformers in such a setting. In this paper, we investigate the properties of Transformers when using a masked language modeling loss with distributional assumptions on the input data—a property that can be interpreted as an unsupervised learning capability of attention-based models.
% }
%\pierrecomment{why is it important that Transformers perform well for unsupervised tasks? Should we draw a connection with pretraining by masked language modeling/next token prediction?}.
Beyond the standard supervised setting, Transformer models are often (pre)trained in practice with semi-supervised objectives such as masked language modeling \citep{phuong2022formal}. This raises important questions about their statistical behavior and training dynamics in unsupervised regimes. 
%In this work, we study transformers through the lens of clustering,
%by analyzing their behavior under quantization loss, with distributional assumptions on the input data. This perspective 
In this work, we examine Transformers through the lens of clustering, thereby revealing the inherent capacity of attention mechanisms to perform unsupervised representation learning.
%To the best of our knowledge, 
%the only prior work that explicitly addresses the clustering aspect is due to \citet{he}, who examine how Transformers can solve it by implicitly performing the Expectation-Maximization (EM) algorithm \citep{lloyd}, albeit under the assumption that cluster labels are known during training. 
To the best of our knowledge, the only prior theoretical work that explicitly explores clustering with Transformers is \citet{he}, who demonstrate that attention layers can mimic the EM algorithm \citep{lloyd}, albeit assuming known cluster labels during training.
In contrast, our analysis focuses on the fully unsupervised setting and further provides insight into the functional roles of individual attention heads in the context of model-based clustering.

\paragraph{Contributions.}
In this paper, we investigate the behavior of attention layers in an unsupervised learning setting, where input data is drawn from mixture distributions. We focus on a two-component Gaussian mixture model. Within this classical clustering framework, we introduce a two-headed linear attention layer designed to capture cluster membership through attention scores, while remaining analytically and computationally tractable.
To assess the quality of the embeddings produced by the attention mechanism, we define a theoretical risk analogous to the classical quantization error in unsupervised learning. We analyze the training dynamics of the proposed predictor under projected gradient descent and prove that, with appropriate initialization, the algorithm can learn the true latent centroids of the mixture components, despite the non-convexity of the loss landscape and without access to cluster labels. 
To relax the initialization requirements in practice, we further propose a regularization scheme that promotes disentanglement between attention heads. Our theoretical findings are supported by numerical experiments under varying conditions, including different initialization regimes, mixture separability levels, and problem dimensionalities.
Overall, we show that attention-based predictors can successfully adapt to mixture models by learning the underlying centroids through training. 
\clrmodif{We also study their quantization properties, that is, how the layer summarizes the input distribution. In the oracle regime, where the attention parameters have converged to the true centroids, the problem can be viewed as an approximate denoising task, in the sense commonly used in the statistical physics literature, where each input can be interpreted as a noisy observation of an underlying centroid, and the transformed token corresponds, to some extent, to its denoised estimate.}
Finally, we focus on a particular attention layer in which the key, query, and value parameters are fixed to the identity matrix. Surprisingly, we show that this type of layer, despite having no trainable parameters, can still perform in-context quantization, meaning it adapts to the case where the distribution of each input sequence comes with its own mixing parameters. This further demonstrates the remarkable ability of transformer-based methods to adapt on the fly to the underlying data distribution, even when no attention parameters are trained.

%This structure is designed to capture essential interactions while maintaining computational tractability.
%We formulate a theoretical risk function that reflects the objective of recovering the underlying cluster structure. 

\paragraph{Organization.}  Section \ref{sec:starter} introduces the problem and outlines the proposed approach. 
%{\color{gray}[Should not be mentioned here] Section \ref{sec:dirac} presents an oversimplified version of the problem, using linear attention to solve a two-component Dirac mixture model. }
In Section \ref{sec:gmm}, we address the general problem with linear attention applied to a two-component Gaussian mixture model. 
In Section \ref{sec:statistical_properties}, we discuss the quantization properties of attention-based predictor with oracle parameters.
In Section \ref{sec:incontext}, we explore an in-context clustering framework and examine the quantization capabilities of a simple attention-based layer with no learned parameters. 
The proofs of all the theoretical results can be found in the appendices.

\section{A starter on attention-based layers and clustering}\label{sec:starter}

\paragraph{Data distribution in model-based clustering.} In attention-based learning, the key idea is to map a set of input tokens to a transformed set of output tokens. With this in mind, we consider an
input sequence $\mathbb{X}\in\R^{L\times d}$  composed of $L$ tokens $(X_1,\ldots, X_L)$, each token being a vector of $\R^d$, i.e., $\mathbb{X}= (X_1,\ldots,X_L)^\top$.
We assume that the tokens are i.i.d.\ drawn from a simple mixture model: for $1\leq \ell\leq L$, 
\begin{align}
\label{def:gaussian_mixture_model}
\tag{$\mathrm{P}_{\sigma}$}
X_\ell &\sim \frac{1}{2}\mathcal{N}(\mu_0^\star,\sigma^2I_d)+\frac{1}{2}\mathcal{N}(\mu_1^\star,\sigma^2I_d), %\clrcomment{\text{adding the scaling } r}
\end{align}
with balanced components and where the centroids $(\mu_0^\star,\mu_1^\star)\in (\mathbb{S}^{d-1})^2$ (i.e., $\|\mu_0^\star\|_2=\|\mu_1^\star\|_2=1$) are assumed to be orthogonal, i.e., such that $\langle \mu_0^\star,\mu_1^\star \rangle=0$. Therefore, for each token, there exists an associated latent variable, denoted by $Z_\ell$, corresponding to a Bernoulli random variable of parameter $1/2$ and encoding its corresponding cluster.

\paragraph{Attention-based predictors.} 
An attention head made of a self-attention layer can be written as 
 $$
 H^{\mathrm{soft_\lambda}}(\mathbb{X}) = \mathrm{softmax}_\lambda \left(  \mathbb{X}QK^\top \mathbb{X}^\top \right) \mathbb{X} V
 $$
where the softmax of temperature $\lambda >0$ is applied row-wise, no skip connection is considered and the matrices $K,Q,V \in \mathbb{R}^{d\times p}$ are usually referred to as keys, queries and values. 
We adopt the convention that the values are identity matrices; thus, the attention head simply outputs combinations of the initial tokens weighted by the attention scores. While this simplification is certainly convenient for facilitating the mathematical analysis that follows, it is also supported by experimental studies showing comparable performance when the value matrices are removed \citep{simplifying}.
Furthermore, assume that the key and query matrices are equal to the same column matrix $\mu\in \mathbb{R}^{d\times 1}$, we obtain 
 \begin{align}
 \label{eq:softmax_head}
 H^{\mathrm{soft_{\lambda}},\mu}(\mathbb{X}) = \mathrm{softmax}_\lambda \left(  \mathbb{X}\mu \mu^\top \mathbb{X}^\top \right) \mathbb{X} .
 \end{align}
With such an architecture, the $\ell$-th output vector is therefore given by
\begin{equation}
H^{\mathrm{soft_{\lambda}},\mu}(\mathbb{X})_\ell = \sum_{k=1}^L \mathrm{softmax}_\lambda \left(  X_\ell^\top  \mu \mu^\top \mathbb{X}^\top \right)_k X_k ,
\end{equation}
which corresponds to aggregating the $X_k$'s when $X_k$ and $X_\ell$ are simultaneously aligned with $\mu$. This suggests that attention heads may act as effective learners in a clustering framework.

The softmax nonlinearity used in the attention head \eqref{eq:softmax_head} introduces a coupling between tokens, which undoubtedly complicates the mathematical analysis. To address this difficulty, we propose to look at a simplified linear attention head, still parameterized by 
$\mu\in\mathbb{R}^d$, and defined for $1\leq \ell \leq L$, as
% $$
% H^{\mathrm{lin},\mu}(\mathbb{X})_\ell=\frac{2}{L}\sum_{k=1}^L (\lambda X_\ell^\top\mu\mu^\top X_k)X_k.
% $$
\begin{equation}
    H^{\mathrm{lin},\mu}(\mathbb{X})_\ell=\frac{2}{L}\sum_{k=1}^L (\lambda X_\ell^\top\mu\mu^\top X_k)X_k.
\end{equation}
This head uses a linear activation function instead of the traditional softmax found in practical architectures, and has already received interest in several mathematical studies \citep[see][]{zhang2024trained, oswald, han, rnn}. \pierremodif{This model is a particular case of Gaussian sequence multi-index models \citep{cui2025high}, which were already used to study attention models \citep{cui2024phase,arnaboldi2025asymptotics,troiani2025fundamental}, albeit not in a clustering context.}

Note that when 
$\mu$ is chosen to be $\mu_0^\star$, then for tokens 
$X_\ell$ and 
$X_k$ whose corresponding latent variables $Z_\ell$ and $Z_k$ are both equal to 0 (i.e., the samples belong to the same cluster centered at $\mu_0^\star$), the vectors 
$X_\ell$ and 
$X_k$ are likely to be aligned with $\mu_0^\star$. In this case, we have $(X_\ell^\top\mu\mu^\top X_k)X_k \simeq X_k$.
Conversely, if $X_\ell$ and 
    $X_k$ are associated with different latent variables (e.g., $Z_\ell=0$ and  $Z_k=1$), then $(X_\ell^\top\mu\mu^\top X_k)X_k \simeq 0$. This behavior suggests that when setting $\mu=\mu_0^\star$, and if $X_\ell$ belongs to the cluster centered at $\mu_0^\star$, the sum $\sum_{k=1}^L (\lambda X_\ell^\top\mu\mu^\top X_k)X_k$ effectively aggregates the $X_k$'s from the same cluster, whose expected number is $L/2$, motivating the renormalizing factor of $2/L$. 
    Overall, $H^{\mathrm{lin},\mu_0^\star}(\mathbb{X})_\ell$ can be seen as producing an empirical mean of the tokens belonging to the same cluster, serving as an estimator of the corresponding centroid.

    Therefore, assuming that the number of clusters in the data is known, it is natural to consider an attention-based predictor composed of two attention heads, parameterized by $\mu_0$ and $\mu_1\in\mathbb{R}^d$,
    \begin{equation}
    T^{{\rm lin}, \mu_0, \mu_1}(\mathbb{X}) = H^{\mathrm{lin},\mu_0}(\mathbb{X}) + H^{\mathrm{lin},\mu_1}(\mathbb{X}).     
    \end{equation}

\paragraph{Metric loss.} 
As no label is available, we focus on minimizing the following theoretical loss: 
\begin{equation} \label{eq:risk}
{\mathcal{L}}(T)\eqdef \frac{1}{L}\sum_{\ell=1}^L\mathbb{E}\left[\Vert X_\ell-T(\mathbb{X})_\ell\Vert_2^2\right], \,     
\end{equation} %\clrcomment{\text{1/L to track in all the document}}

where $T$ is an arbitrary attention-based predictor.
The distinctive feature of this risk lies in the fact that, if the predictor were able to return, for each token $X_\ell$, its associated centroid $\mu^\star_{Z_\ell}$, the risk would exactly correspond to a quantization error, characteristic of a standard clustering task. Note that, due to the independence of the tokens, we have 
$
\mathcal{L}(T) = \mathbb{E}\|X_1 - T(\mathbb{X})_1\|_2^2
$, so we can confine the following theoretical analysis on the minimization of the predictive error for the first token only.
% $$\mathcal{L}_1(T)  \eqdef \mathbb{E}\left[\Vert X_1-T(\mathbb{X})_1\Vert_2^2\right],
% $$
% \clrcomment{no need of the notation $\mathcal{R}$ from now, should be removed in all the document}
%  which corresponds to the predictive error for the first token only. 

\paragraph{PGD iterates.} 
In this paper, we focus on the training dynamics of Transformer-based predictors when minimizing the theoretical risk $\mathcal{L}$. While we acknowledge that, in practice, an empirical version of this risk is typically used, analyzing the optimization of the theoretical risk is already a non-trivial task, which offers valuable insights into the behavior observed in practice.

For a given predictor $T^{\mathrm{lin}, \mu_0, \mu_1}$ made of two linear attention heads parameterized respectively by $\mu_0$ and $\mu_1$,  one can reinterpret the objective $\mathcal{L}$ as a function $\mathcal{R}$ of the parameters $(\mu_0, \mu_1)$, defined by
\begin{equation}
\mathcal{R}(\mu_0, \mu_1) = \mathcal{L}(T^{\mathrm{lin}, \mu_0, \mu_1}).    
\end{equation}

Note that the computation of the risk also depends on the choice of the underlying data distribution. However, as is common practice in the literature, we do not explicitly indicate the dependence of the risk on the data distribution. This should not hinder understanding, as the distributional assumptions and context will always be made clear.
As we rely on the theoretical analysis on an expression of this risk restricted to the sphere, we consider as a gradient strategy, the Projected (Riemaniann) Gradient Descent (PGD) algorithm  \citep{boumal}. Given an initialization $(\mu_0^0, \mu_1^0) \in (\mathbb{S}^{d-1})^2$ and a step size $\gamma > 0$, the PGD iterates $(\mu_0^k, \mu_1^k) \in (\mathbb{S}^{d-1})^2$ are recursively defined by:
\begin{equation}\label{eq:PGD_iterates}\tag{$\mathrm{PGD}$}
    \begin{split}
        \mu_0^{k+1}&=\frac{\mu_0^k-\gamma(I_d-\mu_0^k(\mu_0^k)^\top)\nabla_{\mu_0}\mathcal{R}(\mu_0^k,\mu_1^k)}{\Vert \mu_0^k-\gamma(I_d-\mu_0^k(\mu_0^k)^\top)\nabla_{\mu_0}\mathcal{R}(\mu_0^k,\mu_1^k)\Vert_2},\\
        \mu_1^{k+1}&=\frac{\mu_1^k-\gamma(I_d-\mu_1^k(\mu_1^k)^\top)\nabla_{\mu_1}\mathcal{R}(\mu_0^k,\mu_1^k)}{\Vert \mu_1^k-\gamma(I_d-\mu_1^k(\mu_1^k)^\top)\nabla_{\mu_1}\mathcal{R}(\mu_0^k,\mu_1^k)\Vert_2}.
        \end{split}
    \end{equation}

In what follows, we analyze the convergence of these iterates to the oracle centroids, both theoretically and numerically.

\section{Training dynamics: The centroids are learned as attention parameters}
\label{sec:gmm}
\clrmodif{In this section, we analyze the training dynamics of an attention layer optimized by minimizing the risk $\mathcal{R}$ through the PGD iterations defined in \eqref{eq:PGD_iterates}.
We focus on the case of clustering data generated from a Gaussian mixture model, as specified in \eqref{def:gaussian_mixture_model}, where the input tokens are i.i.d. samples
$$
X_\ell\sim \frac{1}{2}\mathcal{N}(\mu_0^\star,\sigma^2 I_d)+\frac{1}{2}\mathcal{N}(\mu_1^\star,\sigma^2 I_d),
$$
for $1\leq \ell\leq L$, 
where $(\mu_0^\star, \mu_1^\star) \in (\mathbb{S}^{d-1})^2$ are orthogonal unit vectors. 
As a first step toward a mathematical understanding of Transformer-based layers in clustering settings, we have also considered the degenerate case $\sigma^2 = 0$, in which the data are drawn from a mixture of Dirac masses. We perform a detailed analysis of the training dynamics under this simplified setting, which provides valuable intuition and serves as a foundation for the subsequent study of the non-degenerate mixture model. For completeness, this preliminary analysis is presented in Appendix~\ref{sec:dirac}.
}

% Besides, throughout this section, we continue to use $\mathcal{R}(\mu_0, \mu_1)$, with the understanding that the underlying data distribution will be specified in each case:
% $$
% \mathcal{R}(\mu_0,\mu_1)= \mathbb{E}\left[\Vert X_1-(H^{\mathrm{lin},\mu_0}+H^{\mathrm{lin},\mu_1})(\mathbb{X})_1\Vert_2^2\right].
% $$

\subsection{Theoretical analysis}

\paragraph{Preliminary computations.}  We start by introducing the following quantities:
\begin{align}\label{notation}
\kappa_0&\eqdef\langle\mu_0^\star,\mu_0\rangle, \quad \kappa_1\eqdef\langle\mu_1^\star,\mu_1\rangle, \quad 
\eta_0\eqdef\langle \mu_1,\mu_0^\star\rangle,
\quad\eta_1\eqdef\langle \mu_0,\mu_1^\star\rangle,
\quad\xi \eqdef \langle\mu_0,\mu_1\rangle,
\end{align}
and derive a closed-form expression for the risk w.r.t.\ this reparameterization. Although the full expression is somewhat complex (see Appendix \ref{app:risk_gmm}), the following proposition highlights its key structural properties, as being a polynomial in these five variables. 
\begin{proposition}\label{prop:structure_of_R1}
    Under the Gaussian mixture model $\eqref{def:gaussian_mixture_model}$, consider the attention-based predictor $T^{{\rm lin}, \mu_0,\mu_1}$ composed of two linear heads parameterized by $(\mu_0,\mu_1)\in(\mathbb{S}^{d-1})^2$.  Then, there exists a function $\mathcal{R}^<:[-1,1]^5\mapsto \R$ such that $\mathcal{R}(\mu_0,\mu_1)=\mathcal{R}^<(\kappa_0,\kappa_1,\eta_0,\eta_1,\xi)$ and 
\begin{align}
\begin{split}
\mathcal{R}^<(\kappa_0,\kappa_1,\eta_0,\eta_1,\xi) \in 
\mathrm{span}\big(\{&\kappa_0^4,\eta_0^4,\kappa_1^4,\eta_1^4,
\kappa_0^2\eta_0^2,\kappa_1^2\eta_1^2,\kappa_0^2\eta_1^2,\kappa_1^2\eta_0^2,
\kappa_0\eta_0\kappa_1\eta_1,\\
&\kappa_0^2,\eta_0^2,\kappa_1^2,\eta_1^2,\kappa_0\eta_0\xi,\kappa_1\eta_1\xi,\xi^2,1\}\big).
\end{split}
\end{align}
\end{proposition}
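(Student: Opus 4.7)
The plan is to combine a rotational-invariance argument with a term-by-term degree analysis. First, I would rewrite the predictor compactly as $T^{\mathrm{lin}, \mu_0,\mu_1}(\mathbb{X})_1 = \frac{2\lambda}{L} S P X_1$, with $S = \sum_{k=1}^L X_k X_k^\top$ and $P = \mu_0\mu_0^\top + \mu_1\mu_1^\top$, so that the squared-norm expansion gives
$$
\mathcal{R}(\mu_0,\mu_1) = \E\|X_1\|^2 - \frac{4\lambda}{L}\E[X_1^\top S P X_1] + \frac{4\lambda^2}{L^2}\E[X_1^\top P S^2 P X_1].
$$
The first term is constant in $(\mu_0,\mu_1)$ and contributes the monomial $1$. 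Since the law of each $X_\ell$ is invariant under the orthogonal subgroup $G \subset O(d)$ that fixes $\{\mu_0^\star, \mu_1^\star\}$, and the risk depends on $(\mu_0,\mu_1)$ only through $G$-covariant quantities (with $\mu_0^\star,\mu_1^\star$ fixed as parameters), the first fundamental theorem of invariants shows that $\mathcal{R}$ is a polynomial in the generating scalars $\kappa_0,\kappa_1,\eta_0,\eta_1,\xi,\|\mu_0\|^2,\|\mu_1\|^2$. The unit-norm constraints reduce this to a polynomial in exactly the five variables claimed, which establishes the existence of $\mathcal{R}^<$.

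To pin down which monomials may appear, I would evaluate the two non-constant expectations by splitting $S = X_1 X_1^\top + S_2$, where $S_2 = \sum_{k\ge 2} X_k X_k^\top$ is independent of $X_1$, and then handling the single-token Gaussian-mixture moments via conditioning on the latent $Z_1 \in \{0,1\}$ and applying Stein/Isserlis formulas. The key structural observation is that every deterministic matrix $A$ produced this way lies in the commutative algebra generated by $\{I_d,\mu_0^\star(\mu_0^\star)^\top,\mu_1^\star(\mu_1^\star)^\top\}$, which by orthonormality of $\mu_0^\star,\mu_1^\star$ coincides with $\mathrm{span}\{I_d,\mu_0^\star(\mu_0^\star)^\top,\mu_1^\star(\mu_1^\star)^\top\}$. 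Hence every sandwich $\mu_i^\top A \mu_j$ belongs to $\mathrm{span}\{1,\kappa_0^2,\eta_1^2\}$ when $(i,j)=(0,0)$, to $\mathrm{span}\{1,\kappa_1^2,\eta_0^2\}$ when $(i,j)=(1,1)$, and to $\mathrm{span}\{\xi,\kappa_0\eta_0,\kappa_1\eta_1\}$ when $i\neq j$. Since each occurrence of $P$ contributes two factors of $\mu_0$ or $\mu_1$, the linear-in-$\lambda$ term (one $P$) yields monomials in $\{1,\kappa_0^2,\eta_1^2,\kappa_1^2,\eta_0^2\}$, while expanding $PS^2P = \sum_{i,j\in\{0,1\}}\mu_i(\mu_i^\top S^2\mu_j)\mu_j^\top$ splits the quadratic-in-$\lambda$ term into diagonal ($i=j$) contributions, producing the quartic monomials $\{\kappa_0^4,\eta_1^4,\kappa_0^2\eta_1^2,\kappa_1^4,\eta_0^4,\kappa_1^2\eta_0^2\}$, and cross ($i\neq j$) contributions, producing $\{\xi^2,\kappa_0\eta_0\xi,\kappa_1\eta_1\xi,\kappa_0^2\eta_0^2,\kappa_0\eta_0\kappa_1\eta_1,\kappa_1^2\eta_1^2\}$.

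The main technical obstacle is the careful bookkeeping for the cross case $i\neq j$: once $S^2$ is expanded as $\|X_1\|^2 X_1X_1^\top + X_1X_1^\top S_2 + S_2X_1X_1^\top + S_2^2$, the subterms couple $X_1$ nontrivially with itself through both copies of $P$ and through $S^2$, producing sixth-order joint Gaussian moments that do not factor into independent cross scalar products of the clean form $\mu_i^\top A \mu_j$. One must therefore compute these coupled moments explicitly via Wick's formula, average over the latent variable $Z_1$, and check that the a-priori plausible degree-$(2,2)$ monomials outside the stated basis (such as $\kappa_0^2\kappa_1^2$, $\eta_0^2\eta_1^2$, or $\kappa_0\kappa_1\xi$) ultimately receive zero net coefficient once the diagonal and cross contributions are combined.
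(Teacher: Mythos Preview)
Your approach differs genuinely from the paper's. The paper expands the risk directly into six terms $(I_0),(II_0),(III_0),(I),(II),(III)$, evaluates each by conditioning on the latent labels and applying explicit Gaussian-moment identities (Lemmas~\ref{I0}--\ref{III}), and then reads off the monomial structure from the resulting polynomial expressions. Your compact rewriting $T^{\mathrm{lin},\mu_0,\mu_1}(\mathbb X)_1=\tfrac{2\lambda}{L}SPX_1$ together with the rotational-invariance argument (first fundamental theorem for the stabilizer of $\mu_0^\star,\mu_1^\star$ in $O(d)$) establishes the existence of $\mathcal R^<$ as a polynomial in the five scalars more conceptually, and the degree/parity bookkeeping is a clean way to delimit the candidate monomials. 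This part is correct and is a nice alternative to the paper's brute-force route; your observation that the relevant deterministic matrices sit in $\mathrm{span}\{I_d,\mu_0^\star(\mu_0^\star)^\top,\mu_1^\star(\mu_1^\star)^\top\}$ is also the right structural lens for the uncoupled pieces.

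There is, however, a concrete gap in your final step. You anticipate that after the explicit Wick computation the monomials ``$\kappa_0^2\kappa_1^2$, $\eta_0^2\eta_1^2$, or $\kappa_0\kappa_1\xi$ ultimately receive zero net coefficient.'' This fails for $\kappa_0^2\kappa_1^2$ (and by the $\mu_0^\star\!\leftrightarrow\!\mu_1^\star$ symmetry, for $\eta_0^2\eta_1^2$). The culprit is precisely the coupled moment you flag as the obstacle: in the cross piece $i\neq j$ with the $X_1X_1^\top S_2$ part of $S^2$, one must evaluate $\E[\langle X_1,\mu_0\rangle^2\langle X_1,\mu_1\rangle\, X_1]^\top\,\E[S_2]\,\mu_1$. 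Conditioning on $Z_1=0$ produces a $\sigma^2\kappa_0^2\,\mu_1$ component in the first vector, whose inner product with the $\tfrac12\kappa_1\mu_1^\star$ component of $\E[S_2]\mu_1$ yields a nonzero $\kappa_0^2\kappa_1^2$ contribution. This is consistent with the paper's own Lemma~\ref{lem:riskmanifold_gmm}, where on the manifold $\mathcal M$ (i.e.\ $\eta_0=\eta_1=\xi=0$) the risk contains $C\kappa_0^2\kappa_1^2$ with $C=\tfrac{4\lambda^2\sigma^2(L-1)}{L^2}\neq 0$. Hence the span listed in the proposition is in fact incomplete, and carrying out your verification would \emph{detect} this discrepancy rather than confirm cancellation. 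Your overall strategy is sound; only the anticipated outcome of the last checking step is wrong.
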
 
%The proof is provided in Subsection~\ref{subsec:computations} of the appendix.
We remark that when $\eta_0, \eta_1$, and $\xi$ are fixed to $0$, most of the monomials vanish, yielding a fully explicit formula for the risk (see Lemma \ref{lem:riskmanifold_gmm} in the appendices). Far from being a mere rewrite, this step provides the algebraic foundation for all the exact calculations and insights that follow.

%\clrcomment{We have to motivate/comment Prop 4.2, so that the reader does not feel forced to read this just for pleasure}

\paragraph{Optimality conditions.}
Given the complexity of the theoretical analysis, we focus on a simplified setting by restricting our study to parameters $(\mu_0,\mu_1)$ lying on a specific manifold\footnote{up to relabeling the head parameters, since a priori, $\mu_0$ (resp. $\mu_1$) does not have to be automatically related to $\mu_0^\star$ (resp. $\mu_1^\star$).}:
\begin{equation}
\mathcal{M}=\{(\mu_0,\mu_1)\in(\mathbb{S}^{d-1})^2: \langle\mu_1^\star,\mu_0\rangle=0, \langle\mu_0^\star,\mu_1\rangle=0, \langle\mu_0,\mu_1\rangle=0\}.  
\end{equation}

In terms of notation, it is equivalent to assume that $\eta_0=0, \eta_1=0, \xi=0$. Therefore on this manifold, we adopt the shorthand notation $\mathcal{R}^<(\kappa_0,\kappa_1)\eqdef \mathcal{R}^<(\kappa_0,\kappa_1,0,0,0)$.
% A priori, $\mu_0$ (resp. $\mu_1$) does not have to be related to $\mu_0^\star$ (resp. $\mu_1^\star$), however, the manifold $$\mathcal{M}^\star=\{(\mu_0,\mu_1)\in(\mathbb{S}^{d-1})^2: \langle\mu_0^\star,\mu_0\rangle=0, \langle\mu_1^\star,\mu_1\rangle=0, \langle\mu_0,\mu_1\rangle=0\},$$
% would let us achieve analogous results, thus, it is not arbitrary in this sense.\\

\begin{lemma}\label{lem:riskmanifoldshort_gmm}
Under the Gaussian mixture model $\eqref{def:gaussian_mixture_model}$, the risk $\mathcal{R}^<$ restricted to $\mathcal{M}$ has the form \begin{align} \mathcal{R}^<(\kappa_0,\kappa_1)&=A(\kappa_0^4+\kappa_1^4)+B(\kappa_0^2+\kappa_1^2)+C\kappa_0^2\kappa_1^2+D,
 \end{align}
 for $A,B,C,D$ non-negatives constants, dependent on $\sigma$ and $L$, made explicit in Lemma \ref{lem:riskmanifold_gmm}.
\end{lemma}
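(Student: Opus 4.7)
The plan is to derive the restricted form of $\mathcal{R}^<$ by combining the structural result of Proposition \ref{prop:structure_of_R1} with symmetry of the model, and then extract the explicit constants from a direct expansion in order to verify non-negativity.

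First, I would apply Proposition \ref{prop:structure_of_R1} to express $\mathcal{R}^<(\kappa_0,\kappa_1,\eta_0,\eta_1,\xi)$ as a linear combination of the $17$ listed monomials. Restricting to $\mathcal{M}$ amounts to setting $\eta_0 = \eta_1 = \xi = 0$, which annihilates every monomial that contains one of these three variables. The only survivors are $\kappa_0^4$, $\kappa_1^4$, $\kappa_0^2$, $\kappa_1^2$ and $1$, together with a possible $\kappa_0^2\kappa_1^2$ contribution which is absorbed into the coefficient $C$ (this coefficient may well vanish based on Proposition \ref{prop:structure_of_R1} alone, but keeping it in the announced form does no harm and is consistent with the explicit expansion in Lemma \ref{lem:riskmanifold_gmm}).

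Second, I would collapse the coefficients of $\kappa_0^{2k}$ and $\kappa_1^{2k}$ (for $k\in\{1,2\}$) by symmetry. Since the mixture \eqref{def:gaussian_mixture_model} is balanced, the joint distribution of $\mathbb{X}$ is invariant under the relabeling $\mu_0^\star \leftrightarrow \mu_1^\star$, while the predictor $T^{\mathrm{lin},\mu_0,\mu_1} = H^{\mathrm{lin},\mu_0} + H^{\mathrm{lin},\mu_1}$ is manifestly invariant under $\mu_0 \leftrightarrow \mu_1$. Composing both swaps, the restricted risk is invariant under $(\kappa_0,\kappa_1) \leftrightarrow (\kappa_1,\kappa_0)$, which forces the coefficients of $\kappa_0^4$ and $\kappa_1^4$ (resp.\ of $\kappa_0^2$ and $\kappa_1^2$) to coincide. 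Denoting these common values by $A$ and $B$ and the constant term by $D$ yields the announced expression
\begin{equation*}
\mathcal{R}^<(\kappa_0,\kappa_1) \;=\; A(\kappa_0^4+\kappa_1^4) + B(\kappa_0^2+\kappa_1^2) + C\kappa_0^2\kappa_1^2 + D.
\end{equation*}

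Finally, to pin down the explicit form of $A,B,C,D$ and verify their non-negativity, I would run the direct expansion of $\mathcal{R}(\mu_0,\mu_1) = \mathbb{E}\|X_1 - T^{\mathrm{lin},\mu_0,\mu_1}(\mathbb{X})_1\|_2^2$ on $\mathcal{M}$. The computation is streamlined by rewriting $T^{\mathrm{lin},\mu_0,\mu_1}(\mathbb{X})_1 = \tfrac{2\lambda}{L}\,M\,\Pi\,X_1$ with $M = \sum_{k=1}^L X_k X_k^\top$ and $\Pi = \mu_0\mu_0^\top + \mu_1\mu_1^\top$, so that $\mathbb{E}\|T\|_2^2 = \tfrac{4\lambda^2}{L^2}\,\mathbb{E}[X_1^\top \Pi M^2 \Pi X_1]$ and on $\mathcal{M}$ the vectors $\mu_0,\mu_1,\mu_0^\star,\mu_1^\star$ admit a clean orthogonal decomposition that makes all inner products reduce to $\kappa_0,\kappa_1$ or $\sigma^2$. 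The main obstacle is the combinatorial bookkeeping in $M^2 = \sum_{k,k'} X_k X_k^\top X_{k'} X_{k'}^\top$: the pairs $(k,k')$ have to be split according to whether each index equals $1$, whether $k=k'$, or all three indices are distinct, and in each case one evaluates the resulting fourth- or sixth-order Gaussian moments via Isserlis' formula for a non-centered Gaussian. Once collected, each coefficient turns out to be a polynomial in $\sigma^2$ and $1/L$ with manifestly non-negative coefficients, from which the claimed non-negativity of $A,B,C,D$ follows, the explicit expressions being precisely those recorded in Lemma \ref{lem:riskmanifold_gmm}.
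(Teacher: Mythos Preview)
Your structural/symmetry argument for the polynomial form is a clean alternative to the paper's approach, which simply evaluates each of the six pieces $(I_0),(II_0),(III_0),(I),(II),(III)$ from the proof of Proposition~\ref{prop:structure_of_R1} directly on $\mathcal{M}$ and then reassembles them. Your route explains \emph{why} the coefficients of $\kappa_0^{2k}$ and $\kappa_1^{2k}$ must coincide without computing them, whereas the paper observes this only a posteriori. One caveat: your step~1 leans on the monomial list in Proposition~\ref{prop:structure_of_R1}, which does \emph{not} contain $\kappa_0^2\kappa_1^2$, yet the explicit computation in Lemma~\ref{lem:riskmanifold_gmm} gives $C = \tfrac{4\lambda^2\sigma^2(L-1)}{L^2} \neq 0$ (this cross term arises from $(III_0)$). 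You hedge on this, but it does mean that Proposition~\ref{prop:structure_of_R1} alone cannot deliver the $C$-term, and the direct expansion in your step~3 is actually needed to establish the full form, not merely the values of the constants.

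There is, however, a genuine gap in your non-negativity argument. You claim that ``each coefficient turns out to be a polynomial in $\sigma^2$ and $1/L$ with manifestly non-negative coefficients'', but this is false for $B$: inspecting Lemma~\ref{lem:riskmanifold_gmm}, $B$ contains the terms $-\tfrac{2\lambda}{L}c_2(4)$ and $-\tfrac{\lambda(L-1)}{L}c_1(4)$, which are strictly negative. In fact $B<0$ is exactly what makes $(\pm1,\pm1)$ minimizers rather than $(0,0)$; the proof of Proposition~\ref{finitecritical} uses $1 < -\tfrac{B+C}{2A}$, i.e.\ $B < -(2A+C) < 0$. Likewise $D$ contains negative terms (e.g.\ $-\tfrac{8\lambda\sigma^2}{L}c_2(2)$), so even if $D\ge 0$ as a value, it is not ``manifestly'' so. The non-negativity assertion in the statement is thus inaccurate for $B$, and your proposed verification of it would fail; the paper's own proof of Lemma~\ref{lem:riskmanifold_gmm} does not attempt to establish non-negativity either, it only derives the explicit expressions.
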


\begin{proposition}\label{prop:optimallambdashort_gmm}
    Consider $\mathcal{R}^<(\kappa_0,\kappa_1)$, there exists $\lambda^\star(\sigma,L)>0$ such that the points $(\pm 1,\pm 1)$ are global minima of $\mathcal{R}^<(\kappa_0,\kappa_1)$.    
\end{proposition}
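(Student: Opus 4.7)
My plan is to reduce the minimization of $\mathcal{R}^<(\kappa_0, \kappa_1)$ on $[-1, 1]^2$ to that of a bivariate polynomial on the unit square $[0, 1]^2$ via the change of variables $(u, v) \eqdef (\kappa_0^2, \kappa_1^2)$, and then to choose $\lambda^\star$ small enough for an explicit corner analysis to show that the minimum lies at $(1, 1)$. Since $\mathcal{R}^<$ depends on $\kappa_0, \kappa_1$ only through their squares (from Lemma \ref{lem:riskmanifoldshort_gmm}), this substitution yields $f(u, v) \eqdef A(u^2 + v^2) + B(u + v) + Cuv + D$ on $[0, 1]^2$, and the four sign-permutations $(\pm 1, \pm 1)$ all lift to the single corner $(u, v) = (1, 1)$; so it suffices to show that $f$ attains its global minimum at $(1, 1)$.

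The first substantive step is to make the $\lambda$-dependence of $A, B, C, D$ explicit. Since the predictor $T^{\mathrm{lin}, \mu_0, \mu_1}$ is linear in $\lambda$, I write $T^{\mathrm{lin}, \mu_0, \mu_1} = \lambda\tilde T$. The risk is then quadratic in $\lambda$:
$$
\mathcal{R} = \mathbb{E}\|X_1\|_2^2 - 2\lambda\,\mathbb{E}\langle X_1, \tilde T_1\rangle + \lambda^2\,\mathbb{E}\|\tilde T_1\|_2^2.
$$
Restricting to the manifold $\mathcal{M}$ and reading off the coefficients from the explicit formulas of Lemma \ref{lem:riskmanifold_gmm} yields $A = a\lambda^2$, $C = c\lambda^2$, and $B = -b\lambda + \tilde b\lambda^2$ with $a, b, c, \tilde b > 0$ depending only on $\sigma$ and $L$. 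The negative linear part of $B$ is inherited from the cross term $-2\lambda\,\mathbb{E}\langle X_1, \tilde T_1\rangle$, which is aligned with the $\kappa_i^2$ monomials at the ideal configuration.

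Next, I enumerate the critical points of $f$ on $[0, 1]^2$. By the $u \leftrightarrow v$ symmetry of $f$, any interior critical point satisfies $u^\star = v^\star$, and solving $\nabla f = 0$ gives $u^\star = -B/(2A + C) = (b - \tilde b\lambda)/[\lambda(2a + c)]$. This quantity diverges to $+\infty$ as $\lambda \to 0^+$, so for every $\lambda$ smaller than an explicit threshold $\lambda_1(\sigma, L)$, the interior critical point leaves $[0, 1]^2$; a similar one-dimensional argument on each edge shows that edge critical points escape the domain as well. The global minimum of $f$ is thus attained at a corner, and I compare:
$$
f(1, 1) - f(0, 0) = \lambda\bigl[-2b + \lambda(2a + 2\tilde b + c)\bigr], \qquad f(1, 1) - f(1, 0) = \lambda\bigl[-b + \lambda(a + \tilde b + c)\bigr].
$$
Both are strictly negative for every $\lambda < b/(a + \tilde b + c)$. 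Taking $\lambda^\star(\sigma, L) \eqdef \tfrac12 \min\bigl(\lambda_1,\, b/(a + \tilde b + c)\bigr)$ then makes $(1, 1)$ the unique global minimum of $f$ on $[0, 1]^2$, so that $(\pm 1, \pm 1)$ are global minima of $\mathcal{R}^<$.

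The main obstacle I anticipate is confirming, from the Gaussian moment calculations carried out in Lemma \ref{lem:riskmanifold_gmm}, the strict positivity of $b$ — equivalently, that the coefficient of $\lambda$ in $B$ is negative, as in the Dirac case (Proposition \ref{risknoiseless}). This is precisely what makes the corner $(1, 1)$ preferable for small $\lambda$; without it, the minimum could remain at $(0, 0)$ or drift to a non-corner point. Once this positivity is checked, the remaining comparisons are elementary and yield an explicit expression of $\lambda^\star$ in terms of $\sigma$ and $L$.
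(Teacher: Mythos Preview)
Your proposal is correct and follows essentially the same route as the paper: reduce to the quartic $\mathcal{R}^<(\kappa_0,\kappa_1)=A(\kappa_0^4+\kappa_1^4)+B(\kappa_0^2+\kappa_1^2)+C\kappa_0^2\kappa_1^2+D$, read off the $\lambda$-dependence $A=a\lambda^2$, $C=c\lambda^2$, $B=-b\lambda+\tilde b\lambda^2$ from Lemma~\ref{lem:riskmanifold_gmm}, and choose $\lambda$ so that all interior and edge critical points are pushed outside $[-1,1]^2$, leaving the corners as global minima. The only difference is that the paper pins down the \emph{sharp} threshold $\lambda^\star(\sigma,L)=b/(2a+\tilde b+c)$ (the value at which the edge critical point $-(B+C)/(2A)$ equals exactly $1$), whereas you take a safe sub-optimal value; for the existence statement this is immaterial, and your anticipated obstacle (positivity of $b$) is settled directly by the explicit constants in Lemma~\ref{lem:riskmanifold_gmm}.
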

This result demonstrates that, under a suitable condition on the temperature parameter—specifically, when $\lambda = \lambda^\star(\sigma, L)$—the points $\pm \mu_0^\star$ and $\pm \mu_1^\star$ are global minimizers of the risk. The explicit form of $\lambda^\star(\sigma, L)$ is provided in Proposition \ref{optimallambda} in the appendices.
Moreover, it is worth noting that as the variance $\sigma^2$ of the Gaussian components tends to zero, $\lambda^\star(\sigma, L)$ approaches the value $\lambda_0^\star = \frac{L+1}{L+3}$,which coincides with the value previously identified in the degenerate case \eqref{def:dirac_mixture_model}. %Furthermore, $\lambda_0^\star$ itself approaches 1 as $L \to \infty$.  \clrcomment{And so what? We should propbably remove the last sentence if we dont comment more}
On the other hand, when $\sigma$ is fixed and $L$ grows large, $\lambda^\star(\sigma, L)$ tends toward  $\lambda_\infty^\star = \frac{1 + 4\sigma^2}{1 + 5\sigma^2 + 6\sigma^4}$, which will guide us to properly choose $\lambda$ in our numerical experiments.  %\clrcomment{Idem?}

\paragraph{Convergence analysis.} In what follows, we show that the \eqref{eq:PGD_iterates} iterates %\clrcomment{à homogénéiser dans le doc, des fois on a "iterates (PGD)", des fois "(PGD) iterates", des fois PGD iterates}
can indeed converge to global minimizers, provided they are suitably initialized on the manifold $\mathcal{M}$.
\begin{theorem}\label{thm:main_gmm}
Under the Gaussian mixture model $\eqref{def:gaussian_mixture_model}$, consider the attention-based predictor $T^{{\rm lin}, \mu_0,\mu_1}$ composed of two linear heads. 
    Take $\lambda\in ]0,\lambda^\star(\sigma,L)]$, with $\lambda^\star(\sigma,L)$ defined as in Proposition \ref{prop:optimallambdashort_gmm}. Then there exists $\bar{\gamma}>0$ such that for any stepsize $0<\gamma<\bar{\gamma}$, and for a generic initialization $(\mu_0^0,\mu_1^0)\in \mathcal{M}$, the iterates $(\mu_0^k,\mu_1^k)$ generated by \eqref{eq:PGD_iterates} converge to the centroids (up to a sign), i.e. 
    $$
    (\mu_0^k,\mu_1^k)\xrightarrow[k\rightarrow\infty]{}(\pm \mu_0^\star,\pm \mu_1^\star).
    $$
\end{theorem}

Theorem \ref{thm:main_gmm} underlines the capabilities of linear attention-based predictors in a clustering framework. With appropriate initialization, the attention heads align with the true underlying centroids even when trained without access to labels. This result shows that attention layers can uncover and encode the latent structure of the input distribution in a fully unsupervised setting through their parameters.

%Although simplified, the latter setting remains insightful for Gaussian mixtures, as the regularization introduced in the degenerate case also proves useful in the more general setting to ease the initialization constraint, as shown in the following experiments.

\subsection{Experimental verification of the theoretical results}\label{xp_gmm}

\paragraph{Setting.} 
To better reflect practical algorithmic behavior, we implement Projected Stochastic Gradient Descent (PSGD; see Appendix \ref{def:psgdlinear}), which serves as an empirical counterpart to the \eqref{eq:PGD_iterates} iterates by relying on sample-based estimations.

In what follows, we use the metric referred to as \emph{distance to the centroids (up to a sign)}, given by 
\begin{align}
\min_{\pi\in S_2} \min_{s\in \{-1,1\}^2}\sqrt{\sum_{i=0}^1 \Vert \hat{\mu}_{\pi(i)}-s_i\mu_i^\star\Vert^2},
\label{def:xp_metric2}
\end{align}

%\begin{align}
%&\sqrt{\min \left\{ \mathrm{dist}_1, \mathrm{dist}_2 \right\}},
%\label{def:xp_metric2}
%\end{align}
%where $$\mathrm{dist}_1=\min\{\Vert\hat{\mu}_0 - \mu_0^\star\Vert^2,\Vert\hat{\mu}_0 + \mu_0^\star\Vert^2\} + \min\{\Vert\hat{\mu}_1 - \mu_1^\star\Vert^2,\Vert\hat{\mu}_1 + \mu_1^\star\Vert^2\},$$
%$$\mathrm{dist}_2=\min\{\Vert\hat{\mu}_0 - \mu_1^\star\Vert^2,\Vert\hat{\mu}_0 + \mu_1^\star\Vert^2\} + \min\{\Vert\hat{\mu}_1 - \mu_0^\star\Vert^2,\Vert\hat{\mu}_1 + \mu_0^\star\Vert^2\},$$
where $S_2$ is the permutation group of two elements, $\mu_0^\star, \mu_1^\star$ denote the true centroids, respectively, while $\hat{\mu}_0, \hat{\mu}_1$ are the parameters returned by \eqref{PGDrho}. Note that this distance is invariant under relabeling and sign flips of the head parameters. More implementation details related to the following experiments can be found in Appendix \ref{xp_details_dirac}.

It is worth noting that the assumption of orthogonality of the centroids on the unit sphere always results in a constant distance between the centroids, namely $\|\mu_0^\star - \mu_1^\star\|_2=\sqrt{2}$.
In this setting, to characterize the separation between the two modes of the mixture, one can introduce a notion of \emph{interference} that depends solely on the variance level of each mode and which is defined as
$\mathrm{I}(\sigma)=\mathbb{P}(X_\sigma>\frac{\sqrt{2}}{2}),$ where $X_\sigma\sim \mathcal{N}(0,\sigma^2)$. Remark that this function is increasing with supremum $0.5$. 
This motivates the choice of two contrasting scenarios for the numerical experiments: a low-interference regime with $\sigma = 0.3$, where $\mathrm{I}(0.3) \approx 0.01$, and a high-interference regime with $\sigma = 1$, where $\mathrm{I}(1) \approx 0.24$.
More implementation details of the following numerical experiments can be found in Appendix \ref{xp_details_gmm}.

\paragraph{Results.}
\begin{wrapfigure}[13]{r}{0.44\textwidth}
\includegraphics[width=\linewidth]{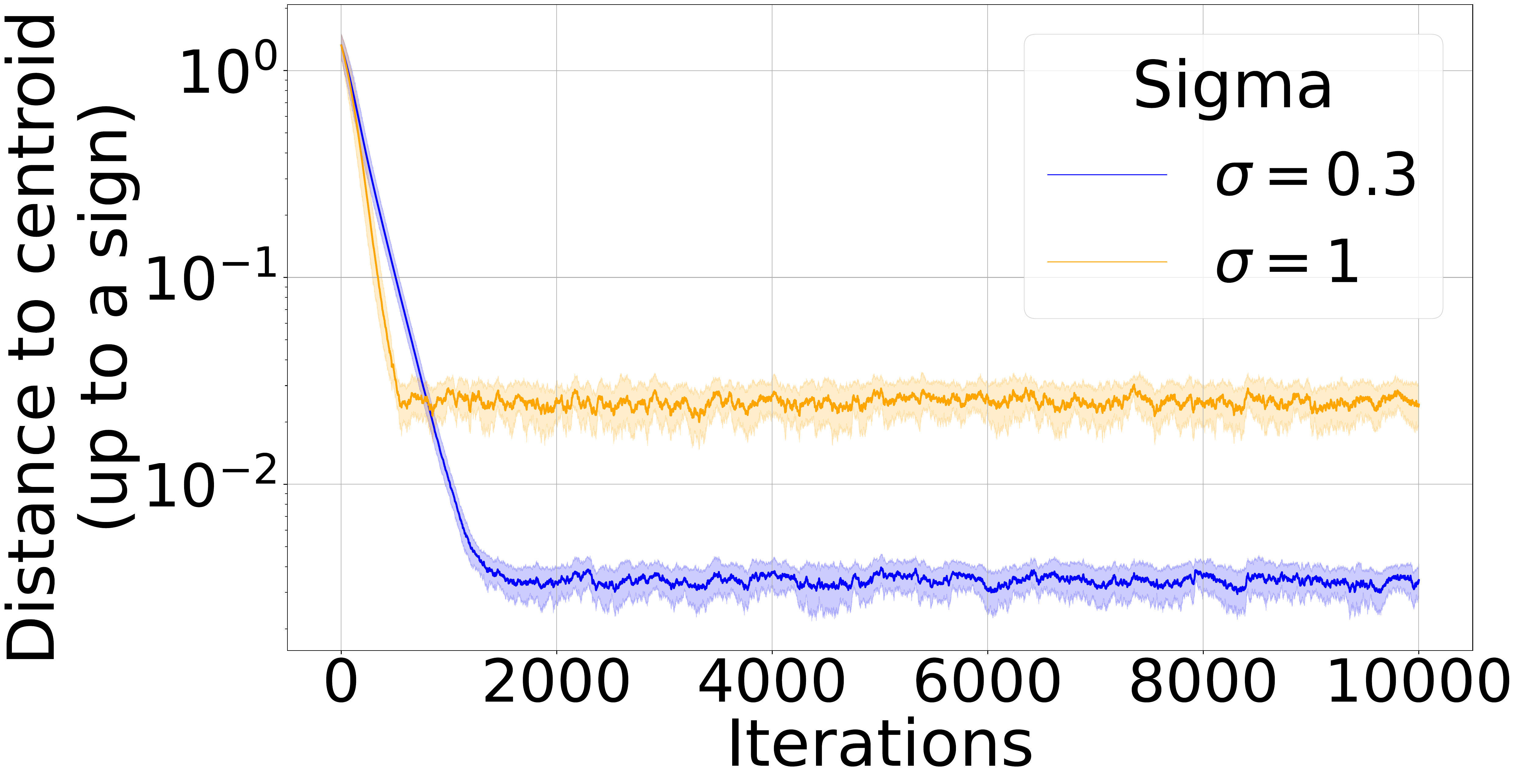}
    \caption{%{\color{red}check that $\rho=0?$}}
    Distance to centroids vs \eqref{PGDrho} iterations for the minimization of $\mathcal{R}$ (therefore without regularization), with an initialization on the manifold $\mathcal{M}$. 10 runs, 95\% percentile intervals are plotted. }
         \label{fig:linear_manifold_iters}
\end{wrapfigure}
When initialization is done on the manifold, the training analysis depicted in Figure \ref{fig:linear_manifold_iters} demonstrates linear convergence of the head parameters toward the centroids (up to a sign) during the first $10^3$ iterations, which is in line with the obtained theoretical results. The error then plateaus at around $10^{-2}$ in the low-interference setting ($\sigma = 0.3$), and around $10^{-1}$ in the high-interference setting ($\sigma = 1$). This saturation phenomenon is attributed to the stochasticity introduced by \eqref{PGDrho} in place of \eqref{eq:PGD_iterates} in the simulations.

\subsection{Generalizations} 
We consider several generalizations of our theoretical setting, to give insight on the role of our assumptions. %We refer to Appendix \ref{app:xp_higher_dim} for a detailed discussion of the numerical results. 

\paragraph{Random initialization on the unit sphere (outside of the manifold).}
When the initialization is performed outside the manifold, PGD iterates only partially align with the underlying centroids. A way to handle arbitrary initializations (suggested by our analysis in the degenerate case, see Appendix~\ref{sec:dirac}), is to introduce a regularized risk minimization problem:
\begin{align}\label{penalizationreal}\tag{$\mathcal{P}_{\rho}$}
    \min_{\mu_0,\mu_1\in\mathbb{S}^{d-1}} & \mathcal{R}^{\rho} (\mu_0, \mu_1), \quad \text{with} \quad \mathcal{R}^{\rho} (\mu_0, \mu_1) \eqdef\mathcal{R}(\mu_0,\mu_1)+\rho r(\mu_0,\mu_1),
\end{align}
for $\rho>0$ and the regularization term defined by $r(\mu_0,\mu_1)=\mathbb{E}[\langle \mu_0,X_1\rangle^2\langle \mu_1,X_1\rangle^2]$. The role of this term is to encourage the orthogonality conditions on $\mu_0, \mu_1$, thereby compensating for initializations that may fall outside the manifold $\mathcal{M}$. Numerical results show that the centroids can be recovered with an appropriate level of regularization (see Figure~\ref{linear_noise_reg}). Note that, as the strength of the regularization increases, it gradually overrides the original objective and impairs the alignment of the head parameters with the true centroids ---an effect that becomes more pronounced at higher noise levels. In Figure \ref{linear_noise_iters}, we fix the regularization strength and observe linear convergence towards the centroids over the course of $4\times 10^3$ iterations. The error eventually plateaus near $10^{-3}$ for $\sigma = 0.3$ and near $10^{-1}$ for $\sigma = 1$. This shows that the regularization strategy inspired by the analysis of the simplified Dirac mixture case remains effective in the more realistic setting of Gaussian mixtures. In this context as well, it enhances the interpretability of the attention parameters by encouraging their alignment with the unknown components of the underlying mixture.

\begin{figure}[ht]
    \centering
    \begin{subfigure}[b]{0.48\textwidth}
         \centering
         \includegraphics[width=\textwidth]{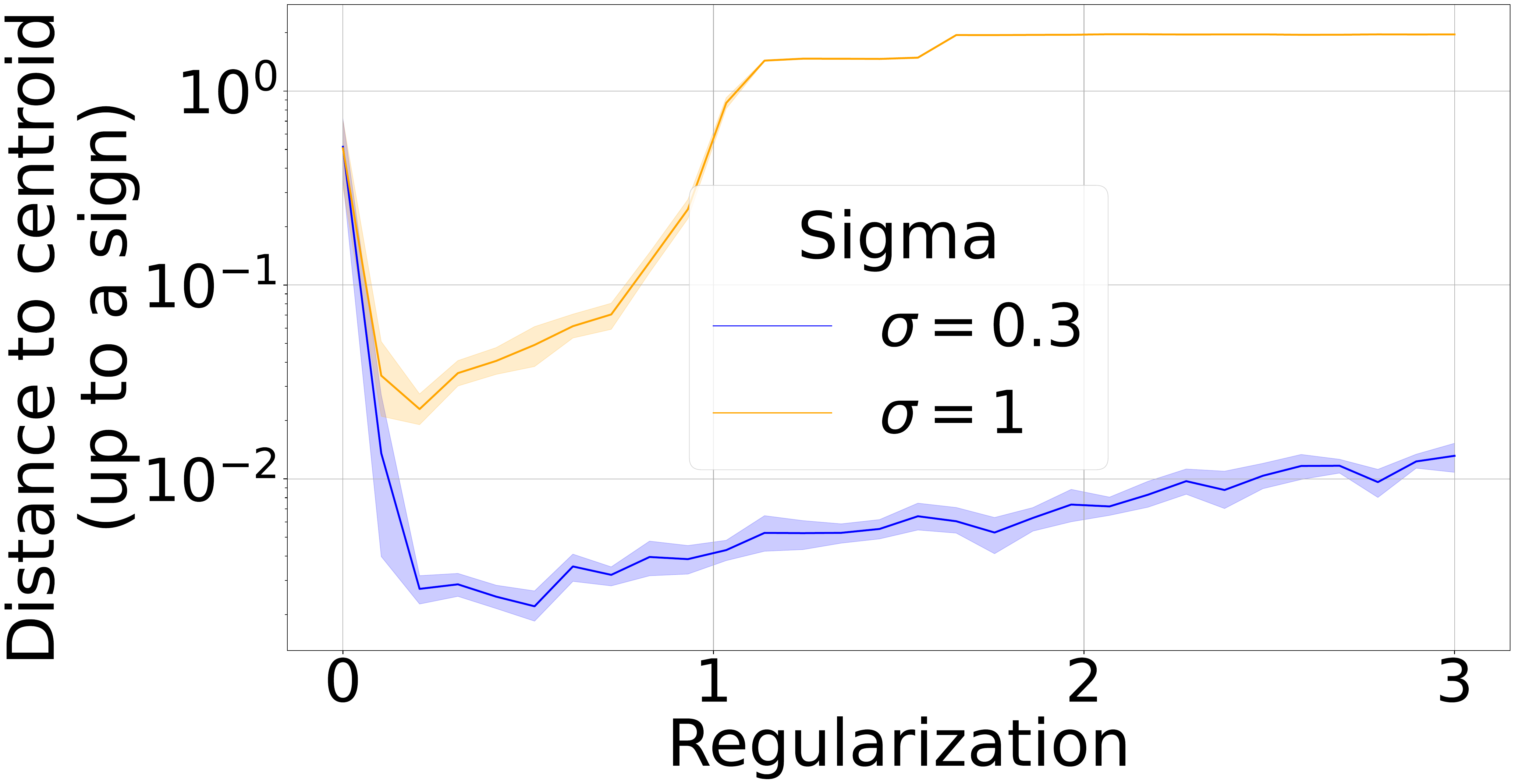}
         \caption{Distance to centroids after 5000 iterations vs regularization strength $\rho$ for the minimization of $\mathcal{R}^{\rho}$.}
    \label{linear_noise_reg}
     \end{subfigure}
     \begin{subfigure}[b]{0.48\textwidth}
         \centering
         \includegraphics[width=\textwidth]{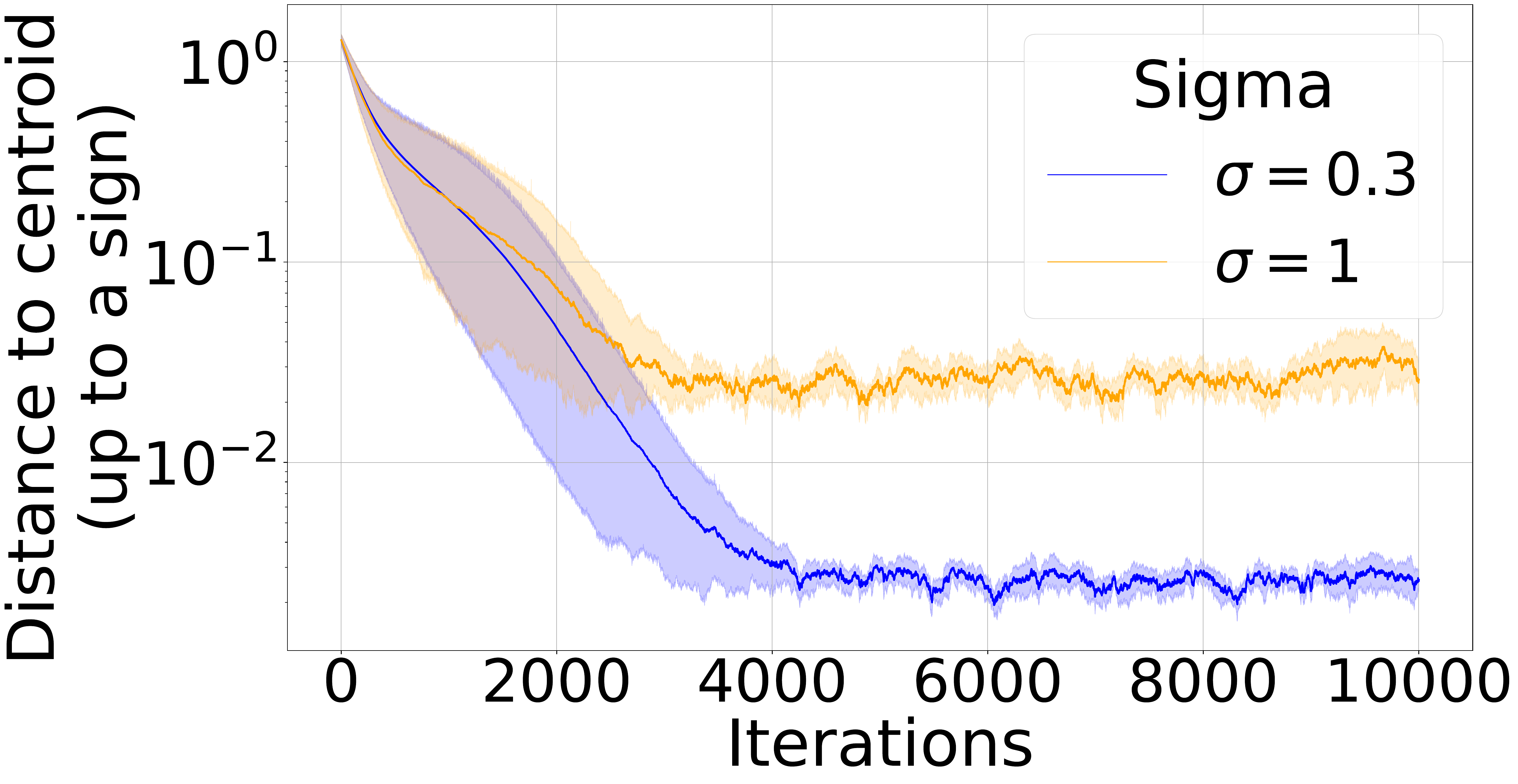}
         \caption{Distance to centroids vs \eqref{PGDrho} iterations for the minimization of $\mathcal{R}^{\rho}$, with regularization $\rho=0.2$. }
    \label{linear_noise_iters}
     \end{subfigure}
     \caption{Performance of \eqref{PGDrho}, when initializing on the unit sphere and minimizing the regularized risk \eqref{penalizationreal}. 10 runs, 95\% percentile intervals are plotted.}
\end{figure}
In Appendix~\ref{app:xp_dim}, we present additional experiments in higher-dimensional settings, highlighting the impact of dimensionality on the training dynamics.

\paragraph{Orthogonality of the clusters.}
The framework studied in this paper assumes that the centroids of the clusters are orthogonal. Relaxing this assumption to allow for potential overlap between centroid directions significantly complicates the theoretical analysis by introducing additional terms.
That said, the analysis in the orthogonal case remains highly informative for the non-orthogonal setting, particularly in high-dimensional regimes. Indeed, when the true centroids are randomly sampled on the sphere, they become nearly orthogonal as the dimension $d$ increases. In such cases, learning the attention heads via the regularized risk minimization \ref{penalizationreal} remains meaningful and yields compelling empirical results (see Appendix \ref{app:relaxing_orhtogonality_xp}).

\paragraph{Euclidean gradient instead of Riemannian gradient.}
The use of projected Riemannian gradient descent is both natural and mathematically justified in our setting, as we are able to compute the risk function only for parameters constrained to lie on the unit sphere. 
This constraint, however, is not a practical limitation: our numerical experiments indicate that standard projected SGD heuristics exhibit similar behavior to the theoretically grounded Riemannian updates, see Appendix \ref{projsgd}. We emphasize that the projection step is essential for the algorithm’s proper functioning. %{\color{red}to complete in the main and in the appendices and finalize.}
%{\color{red}For Rodrigo: do you have some plots?}

\paragraph{More clusters.} A natural extension is to consider the case of $K$ centroids and attention heads. We perform an experiment in the case $K=3$, which shows recovery of the centroids, see details in Appendix \ref{app:K=3}.
The main difference with the case $K=2$ is the regularization term which now writes
\[
r(\mu_0,\mu_1,\mu_2) = \sum_{0\leq i<j\leq 2}\langle\mu_i,X_1\rangle^2\langle\mu_j,X_1\rangle^2
\]
to promote pairwise orthogonality between the parameters. This approach should further generalize to the case of $K$ orthonormal centroids with $K<d$.

\color{black}
\section{Attention-based layers as approximate quantizers}
\label{sec:statistical_properties}

We have seen that attention-based predictors can adapt to mixture models by learning the underlying centroids through training. In this section, we investigate the quantization properties of an attention-based predictor whose parameters have converged to the true centroids. 
To guide our analysis, we introduce the optimal quantizer $T^\star$ as a statistical benchmark within a standard clustering framework. This oracle predictor returns the true centroid of each token, that is, for $1\leq \ell \leq L$, $T^\star(\mathbb{X})_\ell=\mu_{Z_\ell}^\star$ where $Z_\ell$ is encoding the latent cluster of the token $X_\ell$.  
One can immediately note  that the risk of the optimal quantizer is given by
\begin{equation*}
\mathcal{L}(T^\star) = \mathbb{E}\left[ \left\| X_1 -\mu_{Z_1}^\star \right\|_2^2 \right] = d\sigma^2.    
\end{equation*}
Returning to the attention-based predictor $T^{\mathrm{lin},\mu_0^\star,\mu_1^\star}$ with oracle parameters, the first key observation is that it closely resembles an optimal quantizer: its $\ell$-th output aligns, on average, with the centroid of the cluster to which the $\ell$-th token belongs, as shown by the next lemma.

\begin{proposition}
\label{prop:expectation_oracle_attention}
Under the Gaussian mixture model \eqref{def:gaussian_mixture_model}, it holds that
    $$
    \mathbb{E}[T^{\mathrm{lin},\mu_0^\star,\mu_1^\star}(\mathbb{X})_1|Z_1=c]=\mu_{c}^\star\frac{\lambda}{L}[(L+1)+2(L+3)\sigma^2],\quad \text{for} \quad c=\{0,1\}.
    $$
    Therefore, choosing $\lambda=\frac{L}{(L+1)+2(L+3)\sigma^2}$ leads to unbiased approximate quantization, i.e.,
    $$\mathbb{E}[T^{\mathrm{lin},\mu_0^\star,\mu_1^\star}(\mathbb{X})_1|Z_1=c]=\mu_{c}^\star.$$% Moreover, when the input length $L$ tends to $\infty$, such a choice of $\lambda$ boils down to $\lambda=\frac{1}{1+2\sigma^2}$, which yields 
  %  $$\mathrm{Var}[T^{\mathrm{lin},\mu_0^\star,\mu_1^\star}(\mathbb{X})_1|Z_1=c]=2\sigma^2.$$
\end{proposition}

%\pierrecomment{Below it was written before ``the variance of optimal quantization'' but I think it is in fact the naive quantizer.}
%\pierremodif{This should be compared with the variance of the naive quantizer $d \sigma^2$.}
%\clrcomment{Euh je ne sais plus, ça pouvait être "comparer les caractéristiques stat" du quantifieur optimal et de l'attention-based, ou comparer les caractéristiques stat du nuage de points initial et du nuage transformé (dans ce cas je ne sais pas si l'on a vraiment besoin du prédicteur naif)}
%\pierrecomment{il y avait écrit : 
%This should be compared with the variance of optimal quantization in a Gaussian mixture model, given by
%\[
%\mathrm{Var}[X_1 \mid Z_1 = c] = \sigma^2 d.
%\] 
%where $Z_1$ denotes the cluster assignment.}

One can next characterize the asymptotic risk and variance of the oracle attention-based predictor. 
\begin{proposition}
    \label{prop:risk_oracle_attention}
    Under the Gaussian mixture model \eqref{def:gaussian_mixture_model}, fix the temperature $\lambda = \frac{1+4\sigma^2+4\sigma^4}{1+6\sigma^2+12\sigma^4 +8\sigma^6}$.
 Then, the risk of the attention-based predictor $T^{\mathrm{lin}, \mu_0^\star, \mu_1^\star}$ with oracle parameters $\mu_0^\star$ and $\mu_1^\star$ satisfies
 \begin{align*}
     \lim_{L\rightarrow\infty}\mathcal{L}(T^{\mathrm{lin}, \mu_0^\star, \mu_1^\star}) & 
     = \sigma^2(d-2).
 \end{align*}
Moreover, for an arbitrary value of $\lambda$, when $L\rightarrow\infty$, we get 
    \begin{align*}
        \lim_{L\rightarrow\infty}\mathrm{Var}[ T^{\mathrm{lin},\mu_0^\star,\mu_1^\star}(\mathbb{X})_1|Z_1=c]=2\lambda^2\sigma^2(1+2\sigma^2)^2.
    \end{align*}
\end{proposition}
%This result characterizes the statistical behavior of the embedding produced by a linear attention layer with oracle parameters. 
Strikingly, as the input sequence length $L$ increases, we find that
\begin{align*}
\lim_{L\rightarrow\infty}\frac{\mathcal{L}(T^{{\rm lin}, \mu_0^\star, \mu_1^\star})}{\mathcal{L}(T^\star)}
&= 1-\frac{2}{d}.
\end{align*}
%\pierrecomment{To be honest, I do not really understand the argument of growing sequence/variance reduction. The ``optimal'' quantizer does the best thing in the class of functions that can take exactly two values, so giving it a sequence instead of a single vector would not change anything? Besides, it is not clear to me that lowering $\mathcal{L}$ is good in itself? The naive quantizer have vanishing $\mathcal{L}$ but it's not good... So I would be more cautious here.}.
%\clrcomment{Personnellemnt je trouve que c'est le quantifieur naîf qui est trompeur, c'est un peu comme si en supervisé on retournait toujours $Y_i$ et que l'on atteint des (training certes) errors nulles alors que le prédicteur est tout pourri, et que l'on n'a rien appris. Sauf qu'en général on ne se compare pas dans les papiers de supervisé à ce prédicteur là. }
%\clrcomment{C'es tune discussion délicate, et je pense que l'on est tous d'accord. Ce que j'ai écrit signifie que dans l'énoncé on compare 2 loss. Mais après je débunke en disant que cette compraison est injuste puisque les deux quanitfieurs n'appartiennent pas à la même classe de fonctions. D'une part l'un ne peut retourner que 2 valeurs différentes, alors que l'autre est plus plastique. D'autre part, d'un point de vue théorie de l'informaiton, l'un prend en entrée un seul token, l'autre une suite infinie de tokens i.i.d.. Ce sont pour moi les différences fondamentales entre ces 2 quantifieurs, qui ont semblé logiques à des pros du clustering tels que Christophe Giraud.}
This result shows that the attention-based predictor asymptotically achieves a lower risk than the optimal quantizer. This phenomenon can be partly explained by the fact that the comparison is not entirely fair:
the predictors $T^{{\rm lin},\mu_0^\star, \mu_1^\star}$ and $T^\star$ do not belong to the same class of functions. Indeed, the optimal quantizer $T^\star$ is only allowed to return two fixed vectors and relies on a single input token to predict the associated centroid (albeit with access to the latent label). 
\begin{wrapfigure}[20]{r}{0.46\textwidth}
\includegraphics[width=\linewidth]{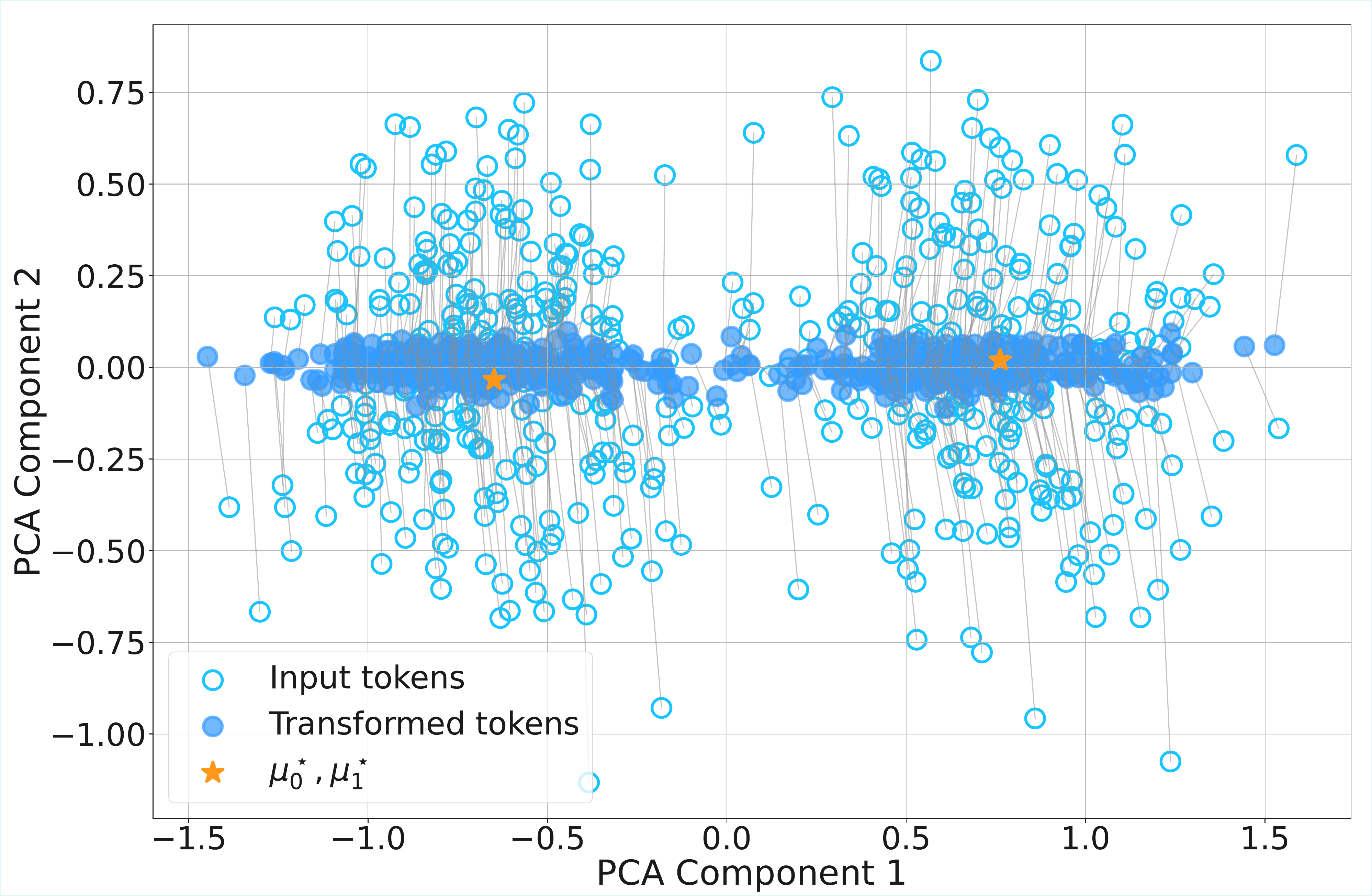}
    \caption{Comparison between inputs and embeddings ($d=10, \sigma=0.3$, $L=500$, $\lambda=\frac{1}{1+2\sigma^2}$, $\mu_0^\star=e_{10}, \mu_1^\star=-e_1$, $e_j$ is the $j$-th vector of the canonical basis of $\R^{10}$). PCA fitted on input tokens was used to project both input and transformed tokens to 2D.} 
    \label{fig:embedding}
\end{wrapfigure}
On the other hand, the image of attention-based encoder $T^{{\rm lin},\mu_0^\star, \mu_1^\star}$ is not discrete, and moreover this estimator aggregates a growing sequence of random variables, all drawn from the same mixture.
The aggregation of multiple inputs can be seen as a variance reduction mechanism, which reduces the risk (this is also evident in the proof of Proposition~\ref{prop:risk_oracle_attention}, where the risk is shown to decrease as $L$ increases). Note that the gap vanishes in a high-dimensional setting $d\to\infty$.

Another insightful comparison between the predictors $T^{{\rm lin},\mu_0^\star, \mu_1^\star}$ and $T^\star$ is through their conditional variances $\mathrm{Var}[ T^{{\rm lin},\mu_0^\star, \mu_1^\star}(\mathbb{X})_1|Z_1=c]$. While the conditional variance of the optimal quantizer is null by definition, it is positive for the linear attention layer, and asymptotically independent of $d$ as shown in Proposition \ref{prop:risk_oracle_attention}. 
This once again highlights the fact that these two quantifiers belong to function classes of different complexity.
%Notably, by choosing $\lambda$ such that the encoding remains unbiased, the variance of the embedding—under the infinite input length regime—becomes independent of the dimension $d$ and converges to $2\sigma^2$.
These properties are illustrated in Figure \ref{fig:embedding}: we observe that the attention-based embeddings are approximate projections of the inputs on the line between the two centroids. In particular, the variance of the embedded point cloud is lower than the variance of the inputs, illustrating that (for $\lambda$ independent of $d$ and $d$ large enough),
\vspace{0.5cm}
\[
0 < \lim_{L\rightarrow\infty}\mathrm{Var}[ T^{\mathrm{lin},\mu_0^\star,\mu_1^\star}(\mathbb{X})_1|Z_1=c]=2\lambda^2\sigma^2(1+2\sigma^2)^2 < d \sigma^2 = \mathrm{Var}[\mathbb{X}_1|Z_1=c].
\]
%(which precisely means that $\mathcal{L}(T^{{\rm lin}, \mu_0^\star, \mu_1^\star}) < \mathcal{L}(T^\star)$),  (i.e., $\mathrm{Var}[ T^{\mathrm{lin},\mu_0^\star,\mu_1^\star}(\mathbb{X})_1|Z_1=c]>0$). 

\section{In-context clustering}\label{sec:incontext}
\subsection{Setting} 
So far, we have considered the traditional setting for model-based clustering, with a mixture of Gaussian made of two components: each token was assumed to be distributed as follows
$$
X_\ell \sim \frac{1}{2}\mathcal{N}(\mu_0^\star, \sigma^2 I_d)  + \frac{1}{2}\mathcal{N}(\mu_1^\star, \sigma^2 I_d) 
$$
with fixed centroids $\mu_0^\star$ and $\mu_1^\star$ of unit-norm and orthogonal.
We have shown that despite the non-convexity of the problem, attention-based predictors including two heads could perform approximate quantization and discover the underlying centroids encoded in their parameters. Note that for an input sequence of tokens $\mathbb{X}=(X_1 | \hdots |X_L)^\top \in\mathbb{R}^{L\times d}$, the first output of the type of attention-based predictor considered in this paper, parameterized by $\mu_0$ and $\mu_1$, can be rewritten as 
\begin{align*}
T^{\mathrm{lin},\mu_0,\mu_1}(\mathbb{X})_1 &=  \frac{2}{L} \sum_{\ell=1}^L \lambda (X_1^\top \mu_0 \mu_0^\top X_\ell) X_\ell + \lambda(X_1^\top \mu_1 \mu_1^\top X_\ell) X_\ell \\
&= \frac{2}{L} \sum_{\ell=1}^L \lambda X_1^\top (\mu_0\mu_0^\top + \mu_1 \mu_1^\top) X_\ell X_\ell.
\end{align*}
Therefore, when we consider two linear heads parameterized by row vectors for the queries and keys, and constrain them to be orthogonal, the setup can be interpreted as a single attention head with a query/key matrix of rank 2.
Interestingly, we are able to effectively train this rank-2 query/key head by leveraging the non-convex optimization of two simple, row-structured heads.

Let us now challenge the clustering setting, by assuming that each input sequence is still drawn from a 2-component mixture, but with its \textit{own} centroids, i.e., for each input sequence $\mathbb{X}_i = (X_{i1}, \ldots, X_{iL})^\top\in \mathbb{R}^{L\times d}$, $i=1,\hdots, n$, we assume the tokens $(X_{i\ell})_{\ell}$ to be i.i.d., such that the $\ell$-th token is distributed as
$$
X_{i\ell} | \mu_{i0}^\star, \mu_{i1}^\star \sim 
 \frac{1}{2}\mathcal{N}(\mu_{i0}^\star, \sigma^2 I_d)  + \frac{1}{2}\mathcal{N}(\mu_{i1}^\star, \sigma^2 I_d),
$$
for some random orthogonal centroids $\mu^\star_{i0}$ and $\mu^\star_{i1}$ of unit-norm.

%If the prior distribution over the centroids is concentrated along specific preferred directions, denoted for instance by $\mu_0^{\star\star}$ and $\mu_1^{\star\star}$, then the predictor studied in this paper, $T^{\mathrm{lin}, \mu_0, \mu_1}$, will likely perform well. A more formal analysis would likely show that, after training, the Transformer's parameters align with these underlying preferred directions.
%Pushing this idea further, take now the centroids to be distributed in an isotropic way. In such a case, one can anticipate that $T^{\mathrm{lin}, \mu_0, \mu_1}$ will struggle to adapt to the task of in-context clustering due to limited flexibility: only two parameters, $\mu_0, \mu_1 \in \mathbb{S}^{d-1}$, are used to perform the embedding task in which the centroids vary significantly from one input sequence to another. To address this issue, a natural idea is to give more degrees of freedom to the predictor by increasing the number of linear attention heads. 
If the prior distribution over the centroids is concentrated along preferred directions, say $\mu_0^{\star\star}$ and $\mu_1^{\star\star}$, then the predictor $T^{\mathrm{lin}, \mu_0, \mu_1}$ will likely perform well, as the Transformer's parameters will tend to align with these directions after training. Conversely, if the centroids are distributed in an isotropic way, $T^{\mathrm{lin}, \mu_0, \mu_1}$ will struggle with in-context clustering due to its limited flexibility, as only two parameters $\mu_0$ and $\mu_1$ govern the embedding, whereas the centroids vary significantly from one input sequence to another.
To address this issue, a natural idea is to give more degrees of freedom to the predictor by increasing the number of linear attention heads. 
%despite large variations across input sequences. 
%Increasing the number of linear attention heads naturally alleviates this limitation.} 
Specifically, if we consider 
$d$ attention heads whose parameters are constrained to be row vectors $(\mu_c)_{c=1,\hdots, d}\in (\mathbb{R}^d)^d$, each of unit norm and mutually orthogonal, we obtain the following attention layer: for an input sequence $\mathbb{X}=(X_{1},\ldots , X_{L})^\top\in \mathbb{R}^{L\times d}$, and for $1\leq \ell \leq L$,
\begin{align*}
     T^{\mathrm{ctx}}(\mathbb{X})_\ell &=\sum_{c=1}^d H^{\mathrm{lin},\mu_c}(\mathbb{X})_{\ell} = \frac{2\lambda}{L} \sum_{c=1}^d \sum_{k=1}^L (X_{\ell}^\top \mu_c \mu_c^\top X_{k}) X_{k} = \frac{2\lambda}{L}  \sum_{k=1}^L \Big(X_{\ell}^\top \underbrace{\Big(\sum_{c=1}^d \mu_c \mu_c^\top\Big)}_{\mathrm{Id}} X_{k}\Big) X_{k}, 
\end{align*}
so finally,
\begin{align}
\label{def:attention_layer_in_context}
    T^{\mathrm{ctx}}(\mathbb{X})_\ell &=  \frac{2\lambda}{L}  \sum_{k=1}^L X_{\ell}^\top X_{k} X_{k}.
\end{align}
Somewhat surprisingly, using $d$ simplified linear heads in parallel, while enforcing orthogonality among their parameters, ultimately amounts to employing an attention layer with no trainable parameters. In what follows, we discuss the properties of $T^{\rm ctx}$ in an in-context clustering framework.

More formally, we refer to as the in-context clustering as a setting in which the input consists of a generic sequence of $L$ tokens $X_1,\hdots , X_L$, sampled from a Gaussian mixture model whose component means (centroids) are randomly drawn on the unit sphere:
\[
\left\{
\begin{array}{l}
\mu_0^\star \sim \mathcal{U}(\mathbb{S}^{d-1}) \qquad \text{and} \qquad 
\mu_{1}^\star \,|\,\mu_{0}^\star \quad \text{ 
arbitrarily distributed on } \mathbb{S}^{d-1} \cap (\mu_{0}^\star)^\perp \\
\\
X_1, \hdots , X_L \vert  \mu_0^\star, \mu_{1}^\star  \sim 
\frac{1}{2}\mathcal{N}(\mu_{0}^\star, \sigma^2 I_d)  + \frac{1}{2}\mathcal{N}(\mu_{1}^\star, \sigma^2 I_d)
\end{array}
\right.
\]
Associated with each token $X_\ell$, we still consider a latent variable $Z_\ell$, corresponding to a Bernoulli random variable of parameter $1/2$ and encoding its corresponding cluster, so that
$$
X_\ell | \mu_0^\star, \mu_1^\star , Z_\ell \sim \mathcal{N}(\mu_{Z_\ell}^\star , \sigma^2 I_d ).
$$

\subsection{Linear attention layers can perform in-context approximate quantization}

We first observe, that similarly to the fixed centroids setting of Section \ref{sec:statistical_properties}, the attention-based encoder $T^{\mathrm{ctx}}$  performs an approximate in-context quantization of the input distribution by aggregating sequences sampled from a Gaussian mixture. Specifically, as formalized below, this simple architecture effectively aligns its output with the correct centroid.
\begin{proposition}\label{def:attention_layer_in_contextsetting}
For $c\in\{0,1\}$, one has     
\begin{align*}
    \mathbb{E}\left[T^{\mathrm{ctx}}(\mathbb{X})_1| \mu_1^\star, \mu_0^\star, Z_1=c\right]&=\frac{2\lambda}{L}\left[(1+(d+2)\sigma^2)+(L-1)\left(\frac{1}{2}+\sigma^2\right)\right]\mu_c^\star.
\end{align*}
Choosing $\lambda=\frac{L}{2}\frac{1}{1+(d+2)\sigma^2+(L-1)\left(\frac{1}{2}+\sigma^2\right)}$ yields an unbiased embedding, i.e., $$\mathbb{E}\left[T^{\mathrm{ctx}}(\mathbb{X})_1| \mu_1^\star, \mu_0^\star, Z_1=c\right]=\mu_c^\star.$$
\end{proposition}

In what follows, we characterize the risk and variance of the attention-based embedding $T^{\mathrm{ctx}}$ defined in \eqref{def:attention_layer_in_context}, when the input sequence contains an infinite number of tokens.
\begin{proposition}\label{riskrandomsetting}
In the asymptotic regime where $L\rightarrow\infty$, one has that
\begin{align*}
   \lim_{L\rightarrow\infty}\mathcal{L}(T^{\mathrm{ctx}}) &=   (1+\sigma^2d)-2\lambda(1+4\sigma^2+2d\sigma^4)+4\lambda^2\left(2\left(\sigma^2+\frac{1}{2}\right)^3+(d-2)\sigma^6\right).
\end{align*}
Choosing the temperature $\lambda=\frac{1+4\sigma^2+2d\sigma^4}{4\left(2\left(\sigma^2+\frac{1}{2}\right)^3+(d-2)\sigma^6\right)}$ gives \begin{align*}
     \lim_{L\to \infty} \mathcal{L}(T^{\mathrm{ctx}})
    &=\sigma^2(d-2)\frac{1+2\sigma^2}{1+6\sigma^2+12\sigma^4+4d\sigma^6} \leq \sigma^2(d-2).
\end{align*}
Moreover, the conditional variance of the embedding satisfies when $L \to \infty$
%\begin{align*}
%    \mathrm{Var}\left[T^{\mathrm{ctx}}(\mathbb{X})_1| \mu_1^\star,\mu_0^\star,Z_1=c\right]
%    =&\frac{4\lambda^2}{L^2}(1+3(d+4)\sigma^2+3(d+2)(d+4)\sigma^4+d(d+2)(d+4)\sigma^6)\\
%    &+\frac{12\lambda^2}{L^2}(L-1)\left(\frac{1}{2}+\frac{(d+8)}{2}\sigma^2+3(d+2)\sigma^4+d(d+2)\sigma^6\right)\\
%    &+\frac{8\lambda^2}{L^2}\frac{(L-1)(L-2)}{2}\left(2\left(\sigma^2+\frac{1}{2}\right)^3+(d-2)\sigma^6\right)\\
%    &-\frac{4\lambda^2}{L^2}\left[1+(d+2)\sigma^2+(L-1)\left(\frac{1}{2}+\sigma^2\right)\right]^2.
%\end{align*}
\begin{align*}
\lim_{L\rightarrow\infty}\mathrm{Var}\left[T^{\mathrm{ctx}}(\mathbb{X})_1| \mu_1^\star,\mu_0^\star,Z_1=c\right]=2\lambda^2\sigma^2(1+4\sigma^2+2d\sigma^4).
\end{align*}
Choosing $\lambda=\frac{1}{1+2\sigma^2}$, we obtain an unbiased embedding with a  conditional variance of $$2\sigma^2\frac{1+4\sigma^2+2d\sigma^4}{(1+2\sigma^2)^2}.$$
%Furthermore,
%\begin{align*}
%     \lim_{d\to \infty}\lim_{L\to \infty} \mathcal{L}(T^{\mathrm{ctx}})
%    =\frac{1+2\sigma^2}{4\sigma^4}.
%\end{align*}

%\clrcomment{ the discussion following is not proper as it relies on the upper bound and not on the equivalent - could be fixed with limit instead of equivalent - to be fixed and uniformized everywhere if needed.}
\end{proposition}

As in the fixed centroids setting (Proposition~\ref{prop:risk_oracle_attention}),  we retrieve that for a suitable choice of $\lambda$, the loss satisfies 
%\[
%\lim_{L\rightarrow\infty}\mathcal{L}(T^{\mathrm{ctx}}) = \sigma^2 (d - 2)\frac{1+2\sigma^2}{1+6\sigma^2+12\sigma^4+4d\sigma^6},
%\]
%and thus 
\[\lim_{L\rightarrow\infty}\frac{\mathcal{L}(T^{\rm ctx})}{\mathcal{L}(T^\star)}
= \left(1-\frac{2}{d}\right)\frac{1+2\sigma^2}{1+6\sigma^2+12\sigma^4+4d\sigma^6}\leq \left(1-\frac{2}{d}\right).\]
\begin{wrapfigure}[12]{r}{0.46\textwidth}
\vspace{-0.6cm}
\includegraphics[width=\linewidth]{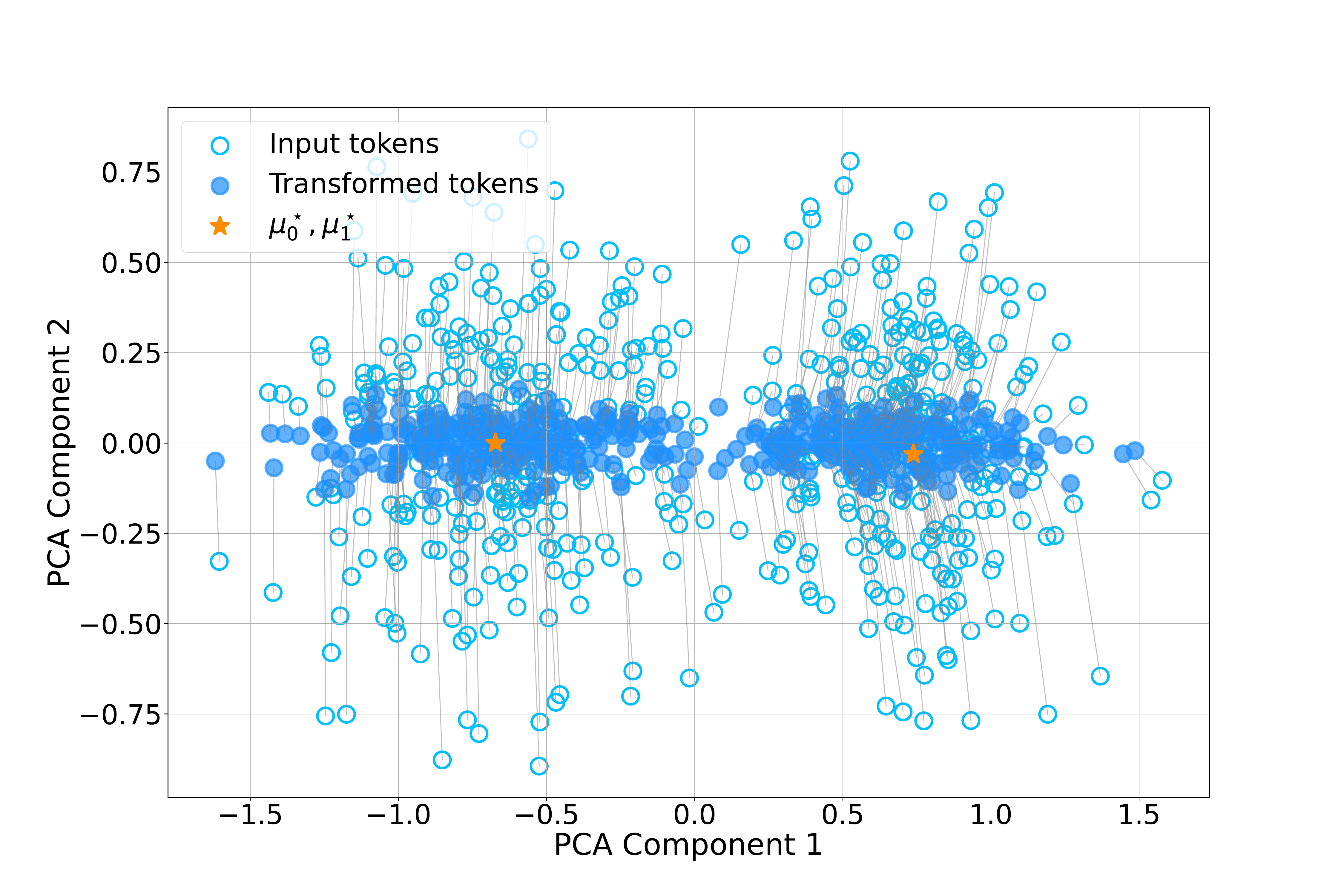}
    \caption{Comparison between inputs and embeddings ($d=10, \sigma=0.3$, $L=500$, $\lambda=\frac{1}{1+2\sigma^2}$, $\mu_0^\star=e_{10}, \mu_1^\star=-e_1$). PCA was fitted on the input tokens and was used to project both input and transformed tokens to 2D.
    }
    \label{fig:embedding2}
\end{wrapfigure}
This result is all the more surprising given that it emerges in a more complex setting, yet with a simpler mechanism. Unlike the attention-based predictor $T^{\mathrm{lin}, \mu_0,\mu_1}$, which benefits from access to the true centroids and a trainable architecture, the in-context encoder $T^{\mathrm{ctx}}$ achieves a smaller asymptotic risk without any learned parameters. The ability of $T^{\mathrm{ctx}}$ to extract meaningful representations illustrates the relevance of the key-query-value structure in the attention mechanism, even stripped of any learnable parameter and of the softmax nonlinearity.
%at such a non-parametric encoder can outmatch the performance of oracle quantizers or more informed or optimized methods reveals the power of attention layers in  purely through the attention mechanism.
%That such a non-parametric encoder can outmatch the performance of oracle quantizers or more informed or optimized methods reveals the power of attention layers in extracting meaningful representations purely through the attention mechanism.    
%\clrcomment{
%Solving clustering may be (i) achieving a better risk than null predictor, and close to the optimal one, (ii) predictions aligned with the underlying centroids (interpretability, we should compute the expectation and variance of $\frac{2\lambda}{L}\sum_{\ell=2}^L\langle X_1,X_\ell\rangle X_\ell$)}

%\clrmodif{In the random setting, $X_1$ is drawn according to a mixture of Gaussian component of random centroids $\mu_0^\star$ and $\mu_1^\star$. Call $Z_1$ the latent variable encoding the number of the cluster to which $X_1$ belongs to. }

It is worth noting that for $d\geq 2$, taking \( \lambda = \frac{1}{1 + 2\sigma^2} \),\vspace{0.2cm} %{\color{red}no condition on $d$?}
\resizebox{.9\linewidth}{!}{\begin{minipage}{\linewidth}
    \begin{align*}
        &\lim_{L\rightarrow\infty}\mathrm{Var}\left[T^{\mathrm{ctx}}(\mathbb{X})_1| \mu_1^\star,\mu_0^\star,Z_1=c\right]\\
        &=2\sigma^2\frac{1+4\sigma^2+2d\sigma^4}{(1+2\sigma^2)^2}\\
        &\leq \sigma^2d = \mathrm{Var}\left[\mathbb{X}_1| \mu_1^\star,\mu_0^\star,Z_1=c\right].
    \end{align*}
\end{minipage}} 

Therefore, in the regime of infinite input sequences ($L \to \infty$) and for the appropriate value of $\lambda$, the embedding becomes unbiased and exhibits a variance reduction effect compared to the input data points, as illustrated in Figure~\ref{fig:embedding2} and previously discussed in Section~\ref{sec:statistical_properties}.
 %\pierrecomment{we should add a visualization of the point clouds like in the previous section.}
%However, unlike in Proposition~\ref{prop:expectation_oracle_attention}, this variance remains dimension-dependent in general.

\color{black}
\section{Conclusion}\label{sec:conclusion}
%We have presented a simple clustering problem, a 2-component Gaussian mixture model, then we have designed simplified attention layers to define a population risk, which, once minimized, allows us to provably retrieve the centroids of the clustering problem, the implications of this are that attention layers might be useful for unsupervised problems, a result largely unexplored in the literature. Future mathematical analyses should also consider extensions to general initialization schemes and stochastic dynamics. 

This work offers a mathematically grounded, principled perspective on the unsupervised learning capabilities of attention mechanisms within mixture models. By combining a classical clustering framework with simplified yet non-trivial attention architectures, we present theoretical and empirical evidence showing that, when properly trained, attention layers can effectively recover latent structure in data.
Our analysis provides insight into the training dynamics, quantization behavior, and design choices, such as attention head regularization.
 
We further investigate an in-context setting, where attention-based models still perform efficient approximate quantization, and achieve lower risk than the optimal quantizer.
Future directions include exploring richer attention architectures, closer to that used in practice, which may further challenge the theoretical analysis.

% Future work should extend our analysis to general initialization schemes and rigorously study stochastic projected gradient descent for minimizing the population risk. These insights pave the way for leveraging attention mechanisms across a broader spectrum of unsupervised tasks and applications.

\section*{Acknowledgments}

The authors would like to thank Gérard Biau and Lenka Zdeborová for insightful discussions.
This work was supported by the Projet Émergence(s) of the City of Paris. P.M.~is supported by a Google PhD Fellowship.
Part of this work was done while P.M.~was visiting the Simons Institute for the Theory of Computing.

\section*{Code availability}
Our code is available at \\ \url{https://github.com/rodrigomaulen/Attention-based-clustering} 

% Lenka Zdeborova

% Gerard Biau

\bibliographystyle{plainnat}
\bibliography{citas}

\appendix

\section{Training dynamics in the degenerate case}
\label{sec:dirac}

In this section, we discuss the training behavior of the Transformer-based predictor $T^{\mathrm{lin}, \mu_0, \mu_1}$ in the context of clustering, assuming the data are drawn from the degenerate mixture model, where
%  $$
%  \mathbb{X}= \begin{pmatrix} X_1^\top\\
%  \vdots\\
%  X_L^\top\end{pmatrix}\in\R^{L\times d},
%  $$
% where 
for $1\leq \ell\leq L$, 
\begin{equation}
\label{def:dirac_mixture_model}
\tag{$\mathrm{P}_{0}$}
X_\ell\sim \frac{1}{2}\delta_{\mu_0^\star}+\frac{1}{2}\delta_{\mu_1^\star},
\end{equation}
(the orthogonal centroids $\mu_0^\star$ and $\mu_1^\star$ still lie on the unit sphere). We emphasize that despite its apparent simplicity, this study framework is already sufficient to reveal some of the complexity inherent in the clustering task carried out by a self-attention layer. 
% \clrcomment{I wouldnt put tilda here, the specificity comes from the data distribution, not from a change of the risk definition}
% {\color{gray!50}For analytical convenience, we reinterpret the objective loss as before $$
% \mathcal{R}(\mu_0,\mu_1)=\mathcal{L}_1(T^{{\rm lin}, \mu_0,\mu_1}),
% $$
% where the tilde indicates we are in the degenerate case.}

\subsection{Theoretical analysis}

Since training is performed by minimizing the risk $\mathcal{R}$, the first steps of our analysis focus on studying the critical points and extrema of this risk. %\clrcomment{For the optimization part, should we do the analysis only for $r=1$ and motivate that this is w.l.o.g.?}

\paragraph{Critical points and minimizers.} First, we reparameterize the problem using the quantities
\begin{equation}\label{notation0}
\kappa_0 \eqdef\langle\mu_0^\star,\mu_0\rangle, \quad \kappa_1\eqdef\langle\mu_1^\star,\mu_1\rangle, 
\quad 
\eta_0\eqdef\langle \mu_1,\mu_0^\star\rangle,
\quad \eta_1\eqdef\langle \mu_0,\mu_1^\star\rangle.
\end{equation}
% \begin{equation}\label{notation0}
% \left\{
% \begin{array}{rl}
% % e_k(\mu) &\eqdef \lambda X_1^\top\mu\mu^\top X_k, \\
% \kappa_0& \eqdef\langle\mu_0^\star,\mu_0\rangle, \quad \kappa_1\eqdef\langle\mu_1^\star,\mu_1\rangle\\
% \eta_0&\eqdef\langle \mu_1,\mu_0^\star\rangle,\quad\eta_1\eqdef\langle \mu_0,\mu_1^\star\rangle.
% \end{array}
% \right.
% \end{equation}
The scalar products $\kappa_0$ and $\kappa_1$  measure the alignment of the parameters $\mu_0$ and $\mu_1$ with the true centroids $\mu_0^\star$ and $\mu_1^\star$, respectively, while the scalar products $\eta_0$ and $\eta_1$  capture their orthogonality with the inverted centroids $\mu_1^\star$ and $\mu_0^\star$. The theoretical risk w.r.t.\ $\kappa_0, \kappa_1,\eta_0$ and $\eta_1$ reads as follows. \pierremodif{The proof of this result and the following are given in Appendix \ref{sec:app_degenerate}}.

\begin{proposition}  \label{risknoiseless} 
Under the degenerate mixture model $\eqref{def:dirac_mixture_model}$, considering the attention-based predictor $T^{{\rm lin}, \mu_0,\mu_1}$ composed of two linear heads parameterized by $\mu_0$ and $\mu_1$, the theoretical risk $\mathcal{R}$ can be re-expressed as a function $\mathcal{R}^<:\R^4\rightarrow\R$ such that $\mathcal{R}(\mu_0,\mu_1)=\mathcal{R}^<(\kappa_0,\kappa_1,\eta_0,\eta_1)$, where
\begin{align}
\begin{split}
    \mathcal{R}^<(\kappa_0,\kappa_1,\eta_0,\eta_1)&=1-\lambda\frac{L+1}{L}(\kappa_0^2+\kappa_1^2+\eta_0^2+\eta_1^2)+\lambda^2\frac{L+3}{2L}([\kappa_0^2+\eta_0^2]^2+[\kappa_1^2+\eta_1^2]^2)\\
    &\qquad +\lambda^2\frac{L-1}{L}(\kappa_0\eta_1+\kappa_1\eta_0)^2.
\end{split}
\end{align}
In addition, if $(\mu_0,\mu_1)\in(\mathbb{S}^{d-1})^2$ are prescribed to the unit sphere, then $\mathrm{dom}(\mathcal{R}^<)=[-1,1]^4$.
\end{proposition}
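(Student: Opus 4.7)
The plan is to reduce the population risk $\mathcal{R}(\mu_0,\mu_1)$ to an elementary polynomial in the four scalars $\kappa_0,\kappa_1,\eta_0,\eta_1$ by a direct expansion, leveraging two features of the setup: the token-independence remark, which confines the analysis to the expectation for $\ell=1$, and the fact that under~\eqref{def:dirac_mixture_model} every atomic moment of any $X_k$ is a two-point average over $\mu_0^\star,\mu_1^\star$. Starting from
\[
\mathcal{R}(\mu_0,\mu_1)=\mathbb{E}\bigl\|X_1-T^{\mathrm{lin},\mu_0,\mu_1}(\mathbb{X})_1\bigr\|_2^2=1-2\,\mathbb{E}\bigl[X_1^\top T(\mathbb{X})_1\bigr]+\mathbb{E}\bigl\|T(\mathbb{X})_1\bigr\|_2^2,
\]
and noting that $\|X_1\|_2=1$ a.s., I would decompose $T=H_{\mu_0}+H_{\mu_1}$ with $H_\nu(\mathbb{X})_1=\tfrac{2\lambda}{L}(X_1^\top\nu)\sum_k(\nu^\top X_k)X_k$ and process the two expectations separately.

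For the cross term $\mathbb{E}[X_1^\top T(\mathbb{X})_1]$, I would split $\sum_{k=1}^L$ into $k=1$ (one term, using $X_1^\top X_1=1$) and $k\neq 1$ ($L-1$ independent copies). A short calculation using the two-point distribution shows that each head $\nu=\mu_a$ contributes $\tfrac{1}{2}(\kappa_a^2+\eta_{1-a}^2)$ at $k=1$ and $\tfrac{1}{4}(\kappa_a^2+\eta_{1-a}^2)$ for every $k\neq 1$, where I use orthonormality of $(\mu_0^\star,\mu_1^\star)$ to turn inner products into $\kappa$'s and $\eta$'s. Summing and multiplying by the global factor $-4\lambda/L$ recovers the announced linear-in-$\lambda$ contribution $-\lambda\frac{L+1}{L}(\kappa_0^2+\kappa_1^2+\eta_0^2+\eta_1^2)$.

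The main obstacle is the quartic term $\mathbb{E}\|T(\mathbb{X})_1\|_2^2$, which splits into four inner products $\mathbb{E}[H_{\mu_a}^\top H_{\mu_b}]$ for $(a,b)\in\{0,1\}^2$, each a double sum over $(k,k')\in\{1,\dots,L\}^2$ of
\[
\mathbb{E}\bigl[(X_1^\top\mu_a)(X_1^\top\mu_b)(\mu_a^\top X_k)(\mu_b^\top X_{k'})(X_k^\top X_{k'})\bigr].
\]
I would partition the $(k,k')$-sum by incidence pattern: (i) $k=k'=1$ ($1$ term), (ii) $k=k'\neq 1$ ($L-1$ terms), (iii) exactly one of $k,k'$ equal to $1$ ($2(L-1)$ terms), and (iv) $1,k,k'$ pairwise distinct ($(L-1)(L-2)$ terms). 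In patterns (ii)--(iv), independence factors the expectation across $X_1,X_k,X_{k'}$, and the orthonormality of $(\mu_0^\star,\mu_1^\star)$ collapses each remaining Dirac moment into a monomial in the four scalars. Gathering the diagonal ($a=b$) contributions should produce $\lambda^2\frac{L+3}{2L}([\kappa_0^2+\eta_0^2]^2+[\kappa_1^2+\eta_1^2]^2)$, while the off-diagonal ones ($a\neq b$) assemble into the coupling $\lambda^2\frac{L-1}{L}(\kappa_0\eta_1+\kappa_1\eta_0)^2$.

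The hard part is bookkeeping rather than any conceptual step: one must track the dozen-odd two-point moments and their combinatorial multiplicities carefully enough for the prefactors $(L+1)/L$, $(L+3)/(2L)$ and $(L-1)/L$ to come out exactly. For the final assertion, whenever $\mu_0,\mu_1\in\mathbb{S}^{d-1}$, Cauchy--Schwarz applied to each inner product in~\eqref{notation0} forces $\kappa_0,\kappa_1,\eta_0,\eta_1\in[-1,1]$, and each value in this range is attainable when $d\geq 2$, giving $\mathrm{dom}(\mathcal{R}^<)=[-1,1]^4$.
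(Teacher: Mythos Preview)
Your strategy—expand the squared norm, partition the index sums by coincidence pattern, and evaluate every atom as a two-point Dirac average—is exactly the paper's approach, and it will produce the stated formula. The paper organizes the same six blocks (your patterns (i)--(iv) for the quartic, plus the two cross-term blocks) under the labels $(I_0),(II_0),(III_0),(I),(II),(III)$, keeping the two heads bundled as $e_k(\mu_0)+e_k(\mu_1)$ throughout and conditioning on the latent labels $Z_1,Z_k,Z_j$.

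One point to correct in your bookkeeping forecast: your attribution of the two quartic pieces to ``diagonal $(a=b)$'' versus ``off-diagonal $(a\neq b)$'' head-pairs is wrong. Splitting by head $\mu_a$ groups the scalars as $(\kappa_0,\eta_1)$ (both involve $\mu_0$) and $(\kappa_1,\eta_0)$ (both involve $\mu_1$), so the diagonal blocks $a=b$ produce monomials in $(\kappa_0,\eta_1)$ or in $(\kappa_1,\eta_0)$ separately—e.g.\ $\kappa_0^4+\eta_1^4$, not $(\kappa_0^2+\eta_0^2)^2$. The grouping $(\kappa_0^2+\eta_0^2)$, $(\kappa_1^2+\eta_1^2)$ in the final answer is by \emph{centroid} $\mu_c^\star$, not by head, and emerges only after you recombine all four $(a,b)$ blocks. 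This is why the paper keeps the heads bundled and conditions on $Z$: then $\langle\mu_{Z_1}^\star,\mu_0\rangle^2+\langle\mu_{Z_1}^\star,\mu_1\rangle^2$ is directly $\kappa_0^2+\eta_0^2$ or $\kappa_1^2+\eta_1^2$. Your approach still works, but expect the tidy factorizations to appear only at the very end, not block by block.
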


\begin{remark}
After a direct computation, we note that the critical points of the risk $\mathcal{R}$ correspond to those of its reparameterized version, $\mathcal{R}^<$, i.e., %\clrcomment{should we elaborate on how the orthogonality helps?}
% \begin{comment}
% $$\nabla_{\mu_0}\mathcal{R}=\partial_{\kappa_0}\mathcal{R}^< \mu_0^\star+\partial_{\eta_1}\mathcal{R}^<\mu_1^\star,$$
% $$\nabla_{\mu_1}\mathcal{R}=\partial_{\kappa_1}\mathcal{R}^< \mu_1^\star+\partial_{\eta_0}\mathcal{R}^<\mu_0^\star,$$
% and $\mu_0^\star\perp\mu_1^\star$, we have that 
% \end{comment}
$$(\mu_0,\mu_1)\in\mathrm{crit}(\mathcal{R})\iff (\kappa_0,\kappa_1,\eta_0,\eta_1) \in\mathrm{crit}(\mathcal{R}^<).$$    
\end{remark}

\begin{comment}
We obtain that \begin{align*}
    \partial_{\kappa_0}\mathcal{R}^< =0&\implies (L+3) \kappa_0^3+\left[(L-1)\eta_1^2+(L+3)\eta_0^2-\frac{1}{\lambda}(L+1)\right]\kappa_0+(L-1)\kappa_1\eta_0\eta_1=0.\\
    \partial_{\kappa_1}\mathcal{R}^< =0&\implies (L+3) \kappa_1^3+\left[(L-1)\eta_0^2+(L+3)\eta_1^2-\frac{1}{\lambda}(L+1)\right]\kappa_1+(L-1)\kappa_0\eta_0\eta_1=0.\\
     \partial_{\eta_0}\mathcal{R}^< =0&\implies (L+3) \eta_0^3+\left[(L-1)\kappa_1^2+(L+3)\kappa_0^2-\frac{1}{\lambda}(L+1)\right]\eta_0+(L-1)\kappa_0\kappa_1\eta_1=0.\\
     \partial_{\eta_1}\mathcal{R}^< =0&\implies (L+3) \eta_1^3+\left[(L-1)\kappa_0^2+(L+3)\kappa_1^2-\frac{1}{\lambda}(L+1)\right]\eta_1+(L-1)\kappa_0\kappa_1\eta_0=0,\\
\end{align*}
which is a non-linear system, so we cannot compute the critical points in closed form (in general) even in this simple case. However, if $\eta_0=\eta_1=0$, we adopt the shorthand notation $\mathcal{R}^<(\kappa_0,\kappa_1)=\mathcal{R}^<(\kappa_0,\kappa_1,0,0)$ and the previous equations would reduce to 
\begin{align*}
    \partial_{\kappa_0}\mathcal{R}^< =0&\implies (L+3) \kappa_0^3-\frac{1}{\lambda}(L+1)\kappa_0=0.\\
    \partial_{\kappa_1}\mathcal{R}^< =0&\implies (L+3) \kappa_1^3-\frac{1}{\lambda}(L+1)\kappa_1=0.
\end{align*}
So when $\lambda=\frac{L+1}{L+3}$, the critical points are  $(\kappa_0^\star,\kappa_1^\star)\in\{(0,0), (0,\pm 1), (\pm 1,0), (\pm 1,\pm 1)\}$, where $(0,0)$ is a local maximum, $(0,\pm 1), (\pm 1,0)$ are saddle points, and $(\pm 1,\pm 1)$ are global minimum. 
\end{comment}

\begin{proposition}[Characterization of global minima]
\label{prop:optimality_condition_dirac_model}
Consider $\mathcal{R}^<:\R^4\rightarrow\R$ defined as in Proposition \ref{risknoiseless} with $\lambda_0^\star=\frac{L+1}{L+3}$, then a point $(\kappa_0,\kappa_1,\eta_0,\eta_1)$ belongs to $\mathrm{argmin}(\mathcal{R}^<)$ if and only if
\begin{equation}\label{eqminima}
\kappa_0^2+\eta_0^2=1, \quad\kappa_1^2+\eta_1^2=1, \quad \text{and} \quad 
\kappa_0\eta_1+\kappa_1\eta_0=0.
\end{equation}
% \begin{equation}\label{eqminima}
% \left\{
% \begin{array}{rl}
% \kappa_0^2+\eta_0^2=1, \quad\kappa_1^2+\eta_1^2&=1,\\
% \kappa_0\eta_1+\kappa_1\eta_0=0.\\
% \end{array}
% \right.
% \end{equation}
\end{proposition}
%The proof of this proposition and the characterization of the rest of the critical points is given in Appendix \ref{criticalnoiselesspoints}. 

%Observe that if the minimizers $(\hat{\kappa}_0,\hat{\kappa}_1, \hat{\eta}_0, \hat{\eta}_0)$ of $\mathcal{R}^<$ then the vectors $(\hat{\kappa}_0,\hat{\eta}_0)$ and $(\hat{\kappa}_1,\hat{\eta}_1)$ form an orthonormal pair in $\mathbb{R}^2$.\clrcomment{and so what?}
%\clrcomment{to check}
While the characterization can be made for any value of $\lambda$, choosing $\lambda_0^\star = \frac{L+1}{L+3}$ simplifies the system by setting the first two conditions equal to 1. Moreover, this specific value provides a critical upper bound on the temperature parameter that guarantees recovery of the underlying centroids via risk minimization, as highlighted in the theorem below, and discussed in Remark \ref{lambdachoice} in the appendices.

\pierremodif{The proof of this result in Appendix \ref{sec:app_degenerate} also characterizes all critical points, beyond global minima.}
%\clrcomment{Add a comment on this result and on the choice and the role of $\lambda$}

%{\color{RoyalBlue}(refer to a subsection in the appendix and write this in the appendix)}.

\paragraph{Convergence analysis.} 
From the characterization of the minima of $\mathcal{R}$ given in Proposition \ref{prop:optimality_condition_dirac_model}, we observe that the points $(\kappa_0, \kappa_1, \eta_0, \eta_1)$ that saturate the first two equations (i.e., satisfy $\kappa_0^2 = \kappa_1^2 = 1$ and $\eta_0^2 = \eta_1^2 = 0$)  correspond to global minimizers of the risk that also recover the centroids (up to a sign). However, in general, other global minimizers may exist that do not exhibit this saturation behavior and are therefore disconnected from centroid recovery. In the next result, we show that under appropriate initialization conditions, the \eqref{eq:PGD_iterates} algorithm converges to the desired global minimum, which aligns with the clustering objective. To this end, we introduce the following manifold
% {\color{gray!50}The main message we could derive from Proposition \ref{prop:optimality_condition_dirac_model} is that when the first and second equations of \eqref{eqminima} are saturated by $\kappa_i^2$ or $\eta_i^2, i\in \{0,1\}$ , i.e.,
% $$
% \Big((\kappa_0^2= 1) \; \text{and} \; (\kappa_1^2= 1)\Big) \; \text{or} \; \Big((\eta_0^2= 1) \; \text{and} \; (\eta_1^2= 1)\Big),
% $$ 
% \clrcomment{should be zero for $\eta$, no?}
% then this specific global minimum of $\mathcal{R}^<(\kappa_0,\kappa_1,\eta_0,\eta_1)$, expressed in terms of $\mathcal{R}(\mu_0,\mu_1)$, recovers the centroids (up to a sign) and thus successfully resolves the clustering problem in this particular mixture setting. Nevertheless, in general, we will not have this saturation effect. However, the following theorem --- the main result of this section --- establishes conditions under which the algorithm converges to the desired global minimum.}
\begin{align}
    \tilde{\mathcal{M}}=\{(\mu_0,\mu_1)\in(\mathbb{S}^{d-1})^2: \langle\mu_1^\star,\mu_0\rangle=0, \langle\mu_0^\star,\mu_1\rangle=0\}.
% \tilde{\mathcal{M}}^\star&=\{(\mu_0,\mu_1)\in(\mathbb{S}^{d-1})^2: \langle\mu_1^\star,\mu_1\rangle=0, \langle\mu_0^\star,\mu_0\rangle=0\}.
\end{align}
 % \clrcomment{Can't we just say "up to a relabelling, and not introduce $\tilde{\mathcal{M}}^\star$?} 
\begin{theorem}\label{main0}
    Under the Dirac mixture model $\eqref{def:dirac_mixture_model}$, consider the attention-based predictor $T^{{\rm lin}, \mu_0,\mu_1}$ composed of two linear heads. 
    Take $\lambda\in ]0, \frac{L+1}{L+3}]$. Then there exists $\bar{\gamma}>0$ such that for any stepsize $0<\gamma<\bar{\gamma}$, and for a generic initialization $(\mu_0^0,\mu_1^0)\in \tilde{\mathcal{M}}$, the sequence of iterates $(\mu_0^k, \mu_1^k)$, generated by \eqref{eq:PGD_iterates}, converges to the centroids (up to a sign), i.e., 
    $$
    (\mu_0^k,\mu_1^k)\xrightarrow[k\rightarrow\infty]{}(\pm \mu_0^\star,\pm \mu_1^\star).
    $$

    % Besides, for a generic initialization $(\mu_0^0,\mu_1^0)\in \tilde{\mathcal{M}}^\star$, then 
    % $$(\mu_0^k,\mu_1^k)\underset{k\rightarrow\infty}{\rightarrow}(\pm \mu_1^\star,\pm \mu_0^\star).$$
\end{theorem}
\begin{proof}
    This is a direct consequence of Theorem \ref{thm:main_gmm} with $\sigma=0$.
\end{proof}
This result demonstrates that, despite the non-convexity of the objective function, the key and query row matrices of a linear attention layer trained via \eqref{eq:PGD_iterates} align with the centroids of the underlying Dirac mixture. Although the setting is simplified, it already highlights the representational role of key and query matrices in attention-based learning, and serves as a foundation for addressing the more general case of Gaussian mixtures. 

Note that the convergence is up to a sign, a consequence of the symmetry inherent in $H^{\mathrm{lin},\mu}$. Nonetheless, this sign ambiguity does not affect the output of the attention layer. To resolve this ambiguity and identify the true centroids, one could compare likelihoods or perform a hard assignment step, selecting the centroid that minimizes the total distance to all points within its assigned cluster.
Besides, by generic initialization, we mean that the set of initializations $(\mu_0^0,\mu_1^0)\in \tilde{\mathcal{M}}$ for which \eqref{eq:PGD_iterates} fails to recover the centroids is of Lebesgue measure zero with respect to $\tilde{\mathcal{M}}$.  %\clrcomment{The set of init too...}\\

Initialization on the manifold \(\tilde{\mathcal{M}}\) relies on prior knowledge of the centroids, which may be impractical. While a theoretical analysis under generic initialization on the unit sphere would be of interest, it remains analytically intractable due to the complexity of the resulting dynamical system derived from \eqref{eq:PGD_iterates}. In the following, however, we present numerical experiments incorporating a regularization term that proves effective in solving the problem without initialization constraints.

%\clrcomment{Add a comment here on the initialization on the manifold, and how it is hard to theoretically analyze outside.}

\subsection{Numerical experiments}\label{xp_dirac}

In this section, we study the empirical convergence of the \eqref{eq:PGD_iterates} iterates when the data follows the degenerate mixture model \eqref{def:dirac_mixture_model}.

\paragraph{Results.}
Figure \ref{fig:ig:risk_linear_noiseless_manifold} clearly illustrates that when initialized on the manifold $\tilde{\mathcal{M}}$, the \eqref{PGDrho} iterates, over the objective function $\mathcal{R}$, converge to the centroids, as established in Theorem \ref{main0}. 
\begin{figure}
    \begin{subfigure}[b]{0.48\textwidth}
        \centering
        \includegraphics[width=\textwidth]{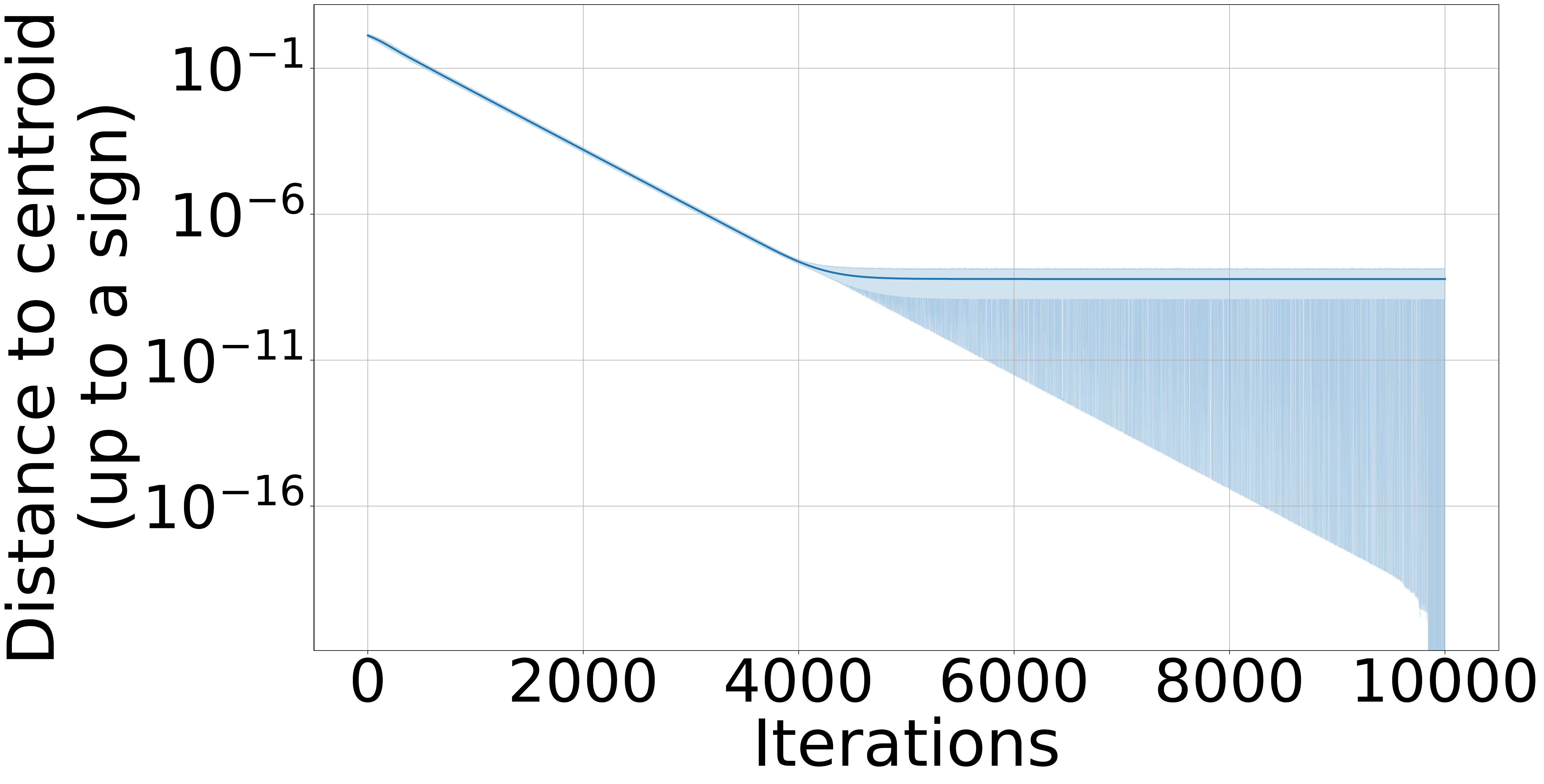}
         \caption{Initialization on the manifold $\tilde{\mathcal{M}}$}
         \label{fig:ig:risk_linear_noiseless_manifold}
     \end{subfigure}
        \begin{subfigure}[b]{0.48\textwidth}
         \centering
         \includegraphics[width=\textwidth]{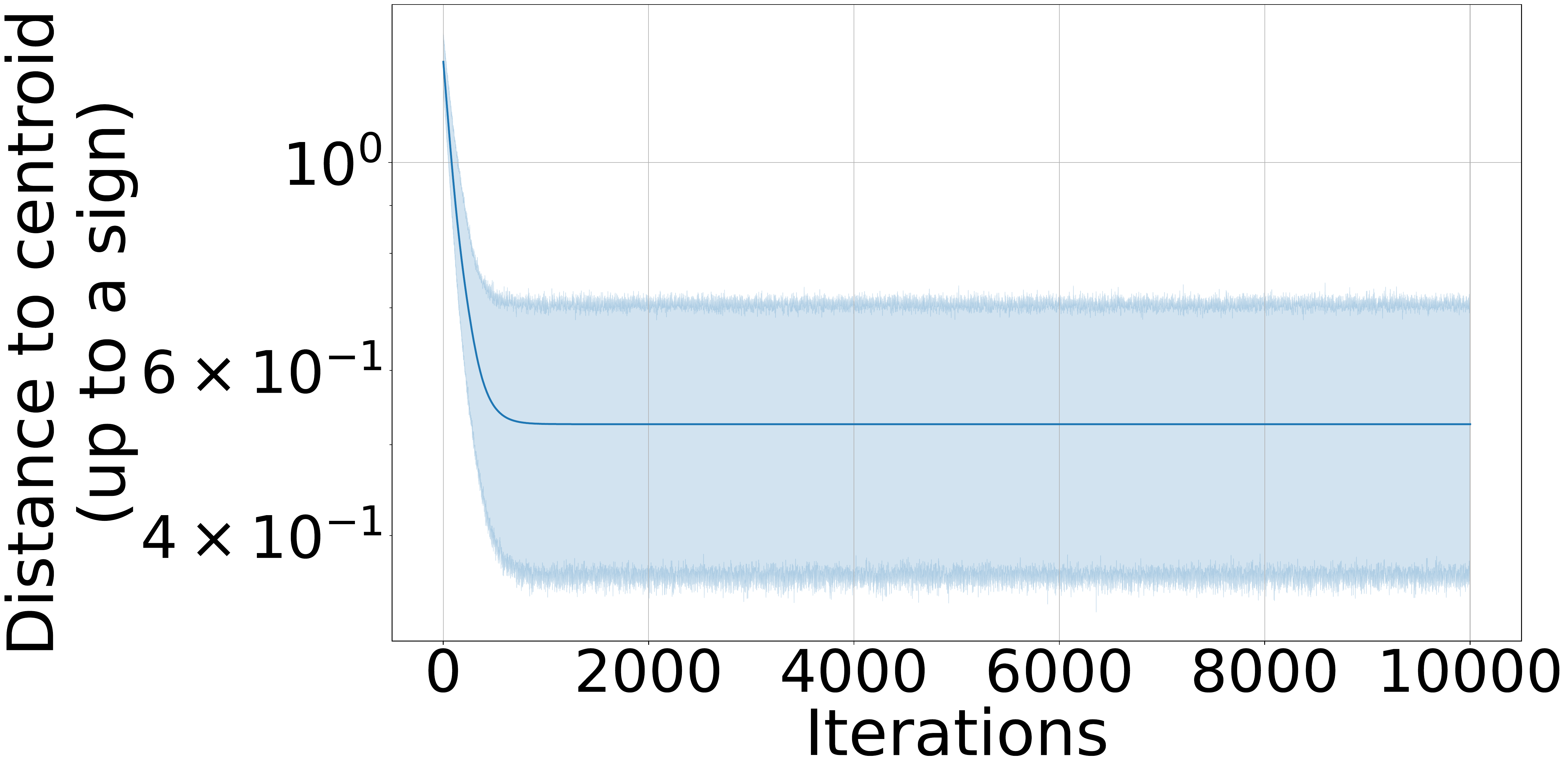}
         \caption{Initialization on the sphere $(\mathbb{S}^{d-1})^2$}
         \label{fig:risk_linear_noiseless_sphere}
     \end{subfigure}
    \caption{Distance to centroids vs \eqref{PGDrho} iterations for the minimization of $\mathcal{R}$, with data drawn from the degenerate case $\eqref{def:dirac_mixture_model}$. 10 runs, 95\% percentile intervals are plotted.}
    \label{fig:risk_linear_noiseless}
\end{figure}
% \begin{figure}
%     \centering
%     \begin{tabular}{cc}
%     \includegraphics[width=0.45\linewidth]{plot_linear_manifold_iters_0.pdf} &
%     \includegraphics[width=0.45\linewidth]{plot_linear_iters_0_noreg.pdf}
%     \\
%     {(a) Initialization on the manifold $\tilde{\mathcal{M}}$} & {(b) Initialization on the sphere $(\mathbb{S}^{d-1})^2$}
%     \end{tabular}
%     \caption{Distance to centroids vs \eqref{PGDrho} iterations for the minimization of $\mathcal{R}$, with data drawn from the degenerate case $\eqref{def:dirac_mixture_model}$. 10 runs, 95\% percentile intervals are plotted.}
%     \label{fig:risk_linear_noiseless}
% \end{figure}

%I am going to put this discussion on the appendix

The situation differs outside the manifold, where numerical evidence shows that the Transformer parameters only partially align with the true centroids as shown in Figure \ref{fig:risk_linear_noiseless_sphere}. In fact, we observe empirically that each parameter learns a mixture of both centroids.  This indicates that the \eqref{PGDrho} iterates may converge to optima that do not coincide with the underlying centroids. To mitigate this and better guide the learning process, we propose using a specific form of regularization:
\begin{equation}
r(\mu_0,\mu_1)\eqdef\mathbb{E}[\langle \mu_0,X_1\rangle^2\langle \mu_1,X_1\rangle^2].    
\end{equation}
Therefore, we train the attention-based predictor $H^{{\rm lin}, \mu_0, \mu_1}$ now by minimizing the regularized risk
\begin{equation}\label{penalization}\tag{$\tilde{\mathcal{P}}_{\rho
}$}
    \min_{\mu_0,\mu_1\in\mathbb{S}^{d-1}} \mathcal{R}^{\rho}(\mu_0,\mu_1) \qquad \text{with} \qquad \mathcal{R}^{\rho}(\mu_0,\mu_1) \eqdef\mathcal{R}(\mu_0,\mu_1)+\rho r(\mu_0,\mu_1),
\end{equation}
where $\rho>0$ denotes the strength of the regularization. 
It can be rigorously shown that as $\rho$ approaches 0, the minimizers of $\mathcal{R}^\rho$ converge to those of $\mathcal{R}$, exhibiting the saturation phenomenon, desirable to bolster the interpretability of the attention heads.
We refer to Appendix \ref{discreg} for more details.

%By reparameterizing the problem using the scalar products $(\kappa_0,\kappa_1,\eta_0,\eta_1)$ as previously, we can show that if $u=(\kappa_0,\kappa_1,\eta_0,\eta_1)$ is optimal for the regularized problem when $\rho\to 0$, then it satisfies
%\begin{equation}
%    \begin{cases}
%        {\kappa_0}^2=1, {\kappa_1}^2=1, {\eta_0}^2=0, {\eta_1}^2=0, \quad\text{or}\\
%        {\eta_0}^2=1, {\eta_1}^2=1, {\kappa_0}^2=0, {\kappa_1}^2=0,
%    \end{cases}
%\end{equation}
%(see Appendix \ref{discreg} for a more formal analysis).
 %Therefore, solving the regularized problem for sufficiently small $\rho > 0$ allows us to recover solutions arbitrarily close to the true centroids (up to a sign). Note that the second condition accounts for the case where the roles of the attention heads are swapped in detecting $\mu_0^\star$ and $\mu_1^\star$, respectively.

\begin{figure}[ht]
    \centering
    \begin{subfigure}[b]{0.48\textwidth}
         \centering
         \includegraphics[width=\textwidth]{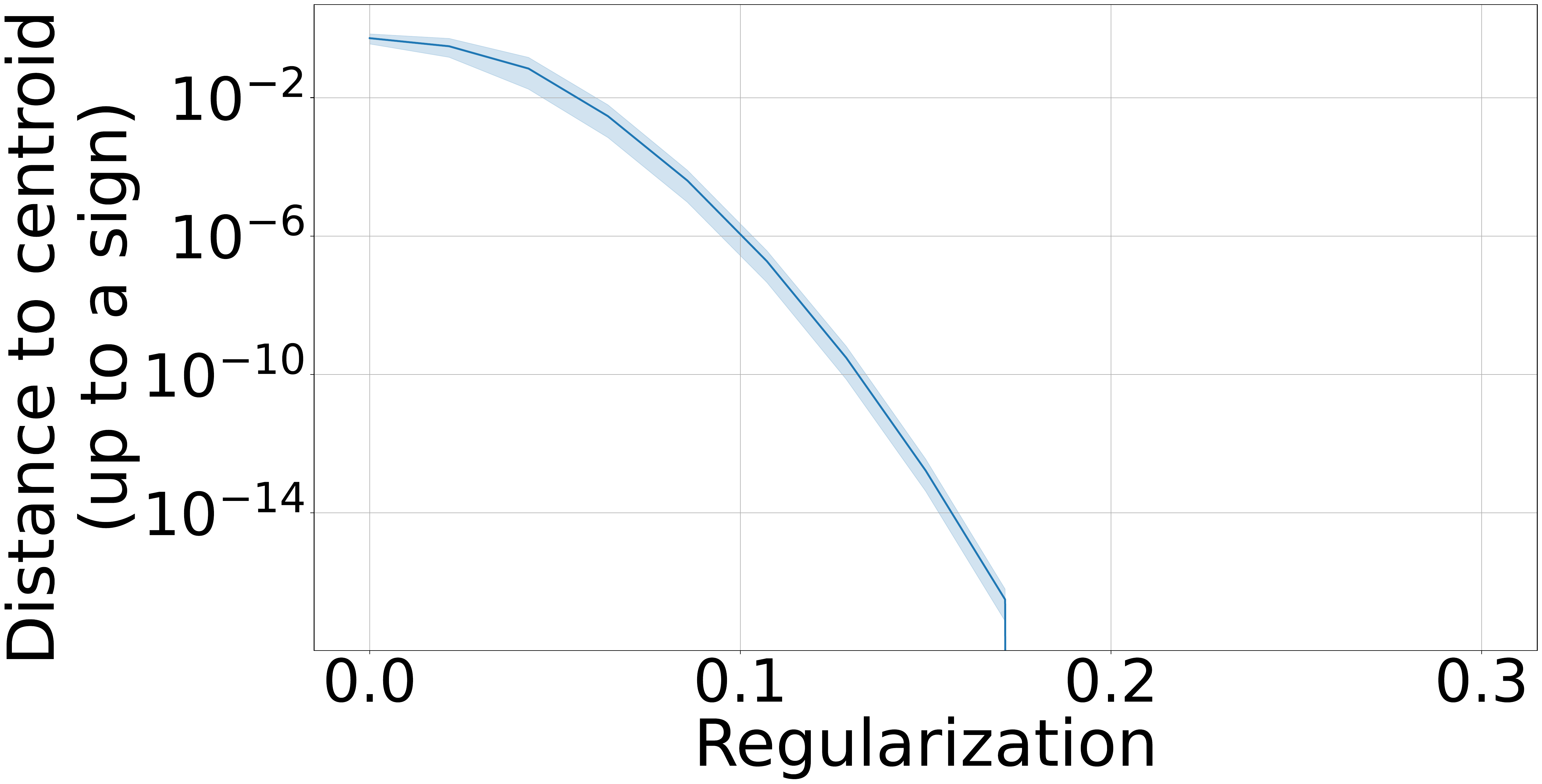}
         \caption{Distance to centroids vs. regularization strength}
         \label{linear_noiseless_reg}
     \end{subfigure}
    \begin{subfigure}[b]{0.48\textwidth}
         \centering
         \includegraphics[width=\textwidth]{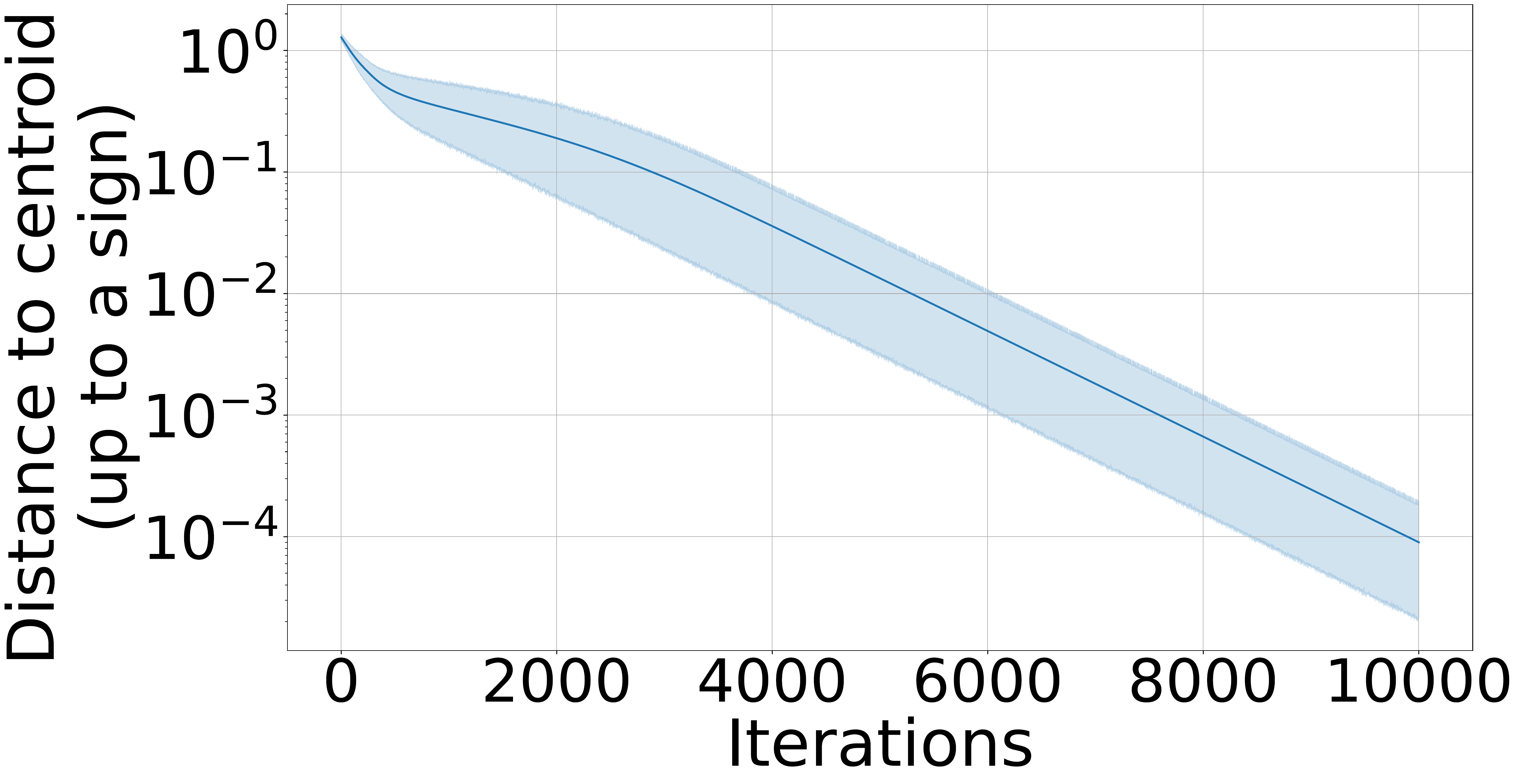}
         \caption{Distance to centroids vs iterations ($\rho=0.1$).}
         \label{linear_noiseless_iters}
     \end{subfigure}
    \caption{Convergence analysis of the \eqref{PGDrho} iterates for the minimization of the regularized risk $\mathcal{R}^\rho$, under the degenerate mixture case \eqref{def:dirac_mixture_model}. 10 runs, 95\% percentile intervals are plotted.}
\end{figure}

In Figure \ref{linear_noiseless_reg}, we observe that a relatively small regularization parameter (of the order of $10^{-1}$) is sufficient to achieve centroid alignment, with numerical error below $10^{-14}$. In Figure \ref{linear_noiseless_iters}, we fix the regularization parameter and observe that over the course of $10^4$ iterations, the attention head parameters exhibit linear convergence towards the true centroids. This numerical experiment highlights the effectiveness of this form of regularization in enhancing the interpretability of attention heads—by promoting their disentanglement—in the context of mixture models.

\section{Proofs of Section \ref{sec:dirac} (degenerate case)}   

\label{sec:app_degenerate}
In this section, we present the postponed proofs that support and elaborate on the arguments developed in the main text. We begin by characterizing the critical points of the Dirac mixture risk, then proceed to a discussion on the effects of a regularization term in the Dirac setting. Finally, we outline the proofs of Theorems~\ref{main0} and~\ref{thm:main_gmm}, which constitute the main theoretical results of our work. 

\subsection{Proof of Proposition \ref{risknoiseless} (expression of the risk in the degenerate case)}

To facilitate the analysis that follows, we introduce the notation  $e_k(\mu) \eqdef \lambda X_1^\top\mu\mu^\top X_k$, for $1\leq k \leq L$, which allows us to write
$$
\mathcal{R}(\mu_0,\mu_1)=\mathbb{E}\left[\Big\Vert X_1-\frac{2}{L}\sum_{k=1}^L (e_k(\mu_0)+e_k(\mu_1))X_k\Big\Vert_2^2\right].
$$

\allowdisplaybreaks

In what follows, we give an expression of the risk of an attention-based predictor, in the case where the data is distributed according to the Dirac mixture model \eqref{def:dirac_mixture_model}. Then, the risk of an attention-based predictor $T^{\mathrm{lin}, \mu_0, \mu_1}$ can be written as 
    \begin{align*}
    \mathcal{R}(\mu_0,\mu_1)&=\mathbb{E}\left[\Big\Vert X_1-\frac{2}{L}\sum_{k=1}^L (e_k(\mu_0)+e_k(\mu_1))X_k\Big\Vert_2^2\right]\\
    &=\mathbb{E}\left[\Vert X_1\Vert^2\right]-\frac{4}{L}\sum_{k=1}^L\mathbb{E}\big[\langle X_1, (e_k(\mu_0)+e_k(\mu_1))X_k\rangle\big]\\
    &+\frac{4}{L^2}\mathbb{E}\left[\Big\Vert \sum_{k=1}^L (e_k(\mu_0)+e_k(\mu_1))X_k\Big\Vert^2\right]\\
     &=1-\frac{4}{L}\sum_{k=1}^L\mathbb{E}\big[\langle X_1, (e_k(\mu_0)+e_k(\mu_1))X_k\rangle\big]+\frac{4}{L^2} \sum_{k=1}^L \mathbb{E}[\Vert(e_k(\mu_0)+e_k(\mu_1))X_k\Vert^2]\\
     &+\frac{8}{L^2}\sum_{1\leq k<j\leq L }\mathbb{E}\big[  (e_k(\mu_0)+e_k(\mu_1))(e_j(\mu_0)+e_j(\mu_1))\langle X_k, X_j\rangle\big]\\
    &=1-\frac{4}{L}\underbrace{\mathbb{E}[(e_1(\mu_0)+e_1(\mu_1))\Vert X_1\Vert^2]}_{\eqdef (I_0)}+\frac{4}{L^2}\underbrace{\mathbb{E}\left[\Vert  (e_1(\mu_0)+e_1(\mu_1))X_1\Vert^2\right]}_{\eqdef (II_0)} \\
    &+\frac{8}{L^2}\underbrace{\sum_{k=2}^L\mathbb{E}\big[  (e_1(\mu_0)+e_1(\mu_1))(e_k(\mu_0)+e_k(\mu_1))\langle X_1, X_k\rangle\big]}_{\eqdef (III_0)}\\
    &-\frac{4}{L}\underbrace{\sum_{k=2}^L\mathbb{E}\big[\langle X_1, (e_k(\mu_0)+e_k(\mu_1))X_k\rangle\big]}_{\eqdef(I)}+\frac{4}{L^2}\underbrace{\sum_{k=2}^L\mathbb{E}\left[\Vert  (e_k(\mu_0)+e_k(\mu_1))X_k\Vert^2\right]}_{\eqdef (II)}\\
    &+\frac{8}{L^2}\underbrace{\sum_{1< k<j\leq L}\mathbb{E}\big[  (e_k(\mu_0)+e_k(\mu_1))(e_j(\mu_0)+e_j(\mu_1))\langle X_k, X_j\rangle\big]}_{\eqdef (III)}.
\end{align*}
%\pierrecomment{here it is a bit strange to detail the computation for $(I)-(III)$ but not for other terms (where the computation is about as hard). Perhaps say somewhere that other terms are computed in a similar fashion?}
We compute $(I)$ by conditioning on $Z_1,Z_k$, \begin{align*}
    \mathbb{E}\big[(e_k(\mu_0)+e_k(\mu_1))\langle X_1, X_k\rangle\big]&=\mathbb{E}\big[\mathbb{E}[(e_k(\mu_0)+e_k(\mu_1))\langle X_1, X_k\rangle|Z_1,Z_k]\big]\\
    & =\lambda\mathbb{E}[(\langle \mu_{Z_1}^\star,\mu_0\rangle\langle \mu_{Z_k}^\star,\mu_0\rangle+\langle \mu_{Z_1}^\star,\mu_1\rangle\langle \mu_{Z_k}^\star,\mu_1\rangle)\langle \mu_{Z_1}^\star,\mu_{Z_k}^\star\rangle]\\
    &=\lambda\frac{\kappa_0^2+\kappa_1^2+\eta_0^2+\eta_1^2}{4}.
\end{align*}
This leads to $(I)=\lambda\frac{L-1}{L}(\kappa_0^2+\kappa_1^2+\eta_0^2+\eta_1^2)$.\\

Similarly for $(II)$, conditioning on $Z_1,Z_k$, 
\begin{align*}
    \mathbb{E}\left[\Vert  (e_k(\mu_0)+e_k(\mu_1))X_k\Vert^2\right]&=\mathbb{E}\left[\mathbb{E}[\Vert  (e_k(\mu_0)+e_k(\mu_1))X_k\Vert^2|Z_1,Z_k]\right]\\
    &=\lambda^2\mathbb{E}[\Vert(\langle \mu_{Z_1}^\star,\mu_0\rangle\langle \mu_{Z_k}^\star,\mu_0\rangle+\langle \mu_{Z_1}^\star,\mu_1\rangle\langle \mu_{Z_k}^\star,\mu_1\rangle)\mu_{Z_k}^\star\Vert^2]\\
&=\lambda^2\frac{[\kappa_0^2+\eta_0^2]^2+[\kappa_1^2+\eta_1^2]^2}{4}+\lambda^2\frac{(\kappa_0\eta_1+\kappa_1\eta_0)^2}{2}.
\end{align*}
Which gives $(II)=\lambda^2\frac{L-1}{L^2}([\kappa_0^2+\eta_0^2]^2+[\kappa_1^2+\eta_1^2]^2)+\lambda^2\frac{2(L-1)}{L^2}(\kappa_0\eta_1+\kappa_1\eta_0)^2$.\\

Finally, to compute $(III)$, note that
\begin{align*}
    \mathbb{E}&\big[  (e_k(\mu_0)+e_k(\mu_1))(e_j(\mu_0)+e_j(\mu_1))\langle X_k, X_j\rangle\big]\\
    &=\mathbb{E}\big[ \mathbb{E}[ (e_k(\mu_0)+e_k(\mu_1))(e_j(\mu_0)+e_j(\mu_1))\langle X_k, X_j\rangle|Z_1,Z_k,Z_j]\big]\\
    &=\lambda^2\mathbb{E}[(\langle \mu_{Z_1}^\star,\mu_0\rangle\langle \mu_{Z_k}^\star,\mu_0\rangle+\langle \mu_{Z_1}^\star,\mu_1\rangle\langle \mu_{Z_k}^\star,\mu_1\rangle)(\langle \mu_{Z_1}^\star,\mu_0\rangle\langle \mu_{Z_j}^\star,\mu_0\rangle+\langle \mu_{Z_1}^\star,\mu_1\rangle\langle \mu_{Z_j}^\star,\mu_1\rangle)\\
    &\quad\cdot\langle \mu_{Z_k}^\star,\mu_{Z_j}^\star\rangle]\\
    &=\lambda^2\frac{[\kappa_0^2+\eta_0^2]^2+[\kappa_1^2+\eta_1^2]^2}{8}+\lambda^2\frac{(\kappa_0\eta_1+\kappa_1\eta_0)^2}{4},
\end{align*}
leading to $(III)=\lambda^2\frac{(L-1)(L-2)}{2L^2}[[\kappa_0^2+\eta_0^2]^2+[\kappa_1^2+\eta_1^2]^2+2(\kappa_0\eta_1+\kappa_1\eta_0)^2]$.\\

In a similar fashion, we get that \begin{align*}
    (I_0)&=(\kappa_0^2+\kappa_1^2+\eta_0^2+\eta_1^2),\\
    (II_0)&=[\kappa_0^2+\eta_0^2]^2+[\kappa_1^2+\eta_1^2]^2,\\
    (III_0)&=[\kappa_0^2+\eta_0^2]^2+[\kappa_1^2+\eta_1^2]^2.
\end{align*}

Putting everything together, we obtain that the risk can be written in terms of $\kappa_0,\kappa_1,\eta_0,\eta_1$, i.e., $\mathcal{R}(\mu_0,\mu_1)=\mathcal{R}^<(\kappa_0,\kappa_1,\eta_0,\eta_1)$, where: 
\begin{align*}    \mathcal{R}^<&\eqdef 1-\lambda\left[\frac{2}{L}+\frac{L-1}{L}\right](\kappa_0^2+\kappa_1^2+\eta_0^2+\eta_1^2)\\
&+\lambda^2\left[\frac{2}{L^2}+\frac{2(L-1)}{L^2}+\frac{L-1}{L^2}+\frac{(L-1)(L-2)}{2L^2}\right]([\kappa_0^2+\eta_0^2]^2+[\kappa_1^2+\eta_1^2]^2)\\
&+\lambda^2\left[\frac{(L-1)(L-2)}{L^2}+\frac{2(L-1)}{L^2}\right](\kappa_0\eta_1+\kappa_1\eta_0)^2\\
    &=1-\lambda\frac{L+1}{L}(\kappa_0^2+\kappa_1^2+\eta_0^2+\eta_1^2)+\lambda^2\frac{L+3}{2L}([\kappa_0^2+\eta_0^2]^2+[\kappa_1^2+\eta_1^2]^2)\\
    &+\lambda^2\frac{L-1}{L}(\kappa_0\eta_1+\kappa_1\eta_0)^2.
\end{align*}

\subsection{Proof of Proposition \ref{prop:optimality_condition_dirac_model} (critical points of the risk in the degenerate case)}
\label{criticalnoiselesspoints}
\begin{proposition}\label{critical}
    Consider $\mathcal{R}^<:\R^4\rightarrow\R$ defined as in Proposition \ref{risknoiseless} with $\lambda=\lambda_0^\star=\frac{L+1}{L+3}$, then we characterize its critical points by
     \begin{enumerate}
        \item The point $(0,0,0,0)$ is a local maximum.
        \item The points $(\kappa_0,0,\eta_0,0)$, where $\kappa_0^2+\eta_0^2=1,$ and $(0,\kappa_1,0,\eta_1)$, where $\kappa_1^2+\eta_1^2=1$, are strict saddle points.
        \item The points $(\kappa_0,\kappa_1,\kappa_1,\kappa_0)$ and $(\kappa_0,\kappa_1,-\kappa_1,-\kappa_0)$, where $\kappa_0^2+\kappa_1^2=\frac{L+3}{2(L+1)}$, are strict saddle points. 
         \item $(\kappa_0,\kappa_1,\eta_0,\eta_1)$ belongs to $\mathrm{argmin}(\mathcal{R}^<)$ if and only if:
\begin{equation}
\left\{
\begin{array}{rl}
\kappa_0^2+\eta_0^2&=1,\\
\kappa_1^2+\eta_1^2&=1,\\
\kappa_0\eta_1+\kappa_1\eta_0&=0.\\
\end{array}
\right.
\end{equation}
\end{enumerate}
\end{proposition}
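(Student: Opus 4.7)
The plan hinges on recognizing that the choice $\lambda = \frac{L+1}{L+3}$ algebraically balances the three groups of terms in $\mathcal{R}^<$ from Proposition~\ref{risknoiseless}. I would first introduce the auxiliary quantities
\[
a \eqdef \kappa_0^2+\eta_0^2, \qquad b \eqdef \kappa_1^2+\eta_1^2, \qquad c \eqdef \kappa_0\eta_1 + \kappa_1\eta_0,
\]
together with the positive constants $\alpha \eqdef \frac{(L+1)^2}{L(L+3)}$ and $\beta \eqdef \frac{(L+1)^2(L-1)}{L(L+3)^2}$, and check by direct substitution that
\[
\mathcal{R}^< = (1-\alpha) + \frac{\alpha}{2}\bigl[(a-1)^2 + (b-1)^2\bigr] + \beta c^2.
\]
Since the three bracketed terms are nonnegative, item~4 would then follow immediately: $\mathcal{R}^< \geq 1-\alpha$, with equality if and only if $a = b = 1$ and $c = 0$.

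To handle items 1--3, I would compute the four partial derivatives via the chain rule to obtain the polynomial system
\begin{align*}
\alpha \kappa_0 (a-1) + \beta c \eta_1 &= 0, & \alpha \eta_0 (a-1) + \beta c \kappa_1 &= 0, \\
\alpha \kappa_1 (b-1) + \beta c \eta_0 &= 0, & \alpha \eta_1 (b-1) + \beta c \kappa_0 &= 0,
\end{align*}
and then split on the value of $c$. In the case $c=0$, each line of the system decouples into either $a=1$ or $\kappa_0=\eta_0=0$ (and analogously for the $b$-line), and the four combinations of these alternatives would yield exactly the origin of item 1, the two families of item 2, and the global minima. In the case $c \neq 0$, combining the equations in pairs would force the algebraic identity $\alpha^2(a-1)(b-1) = \beta^2 c^2$, then $a=b$, then the proportionality $\eta_0 = \epsilon \kappa_1, \eta_1 = \epsilon \kappa_0$ for a common sign $\epsilon \in \{\pm 1\}$, and finally $a = \frac{\alpha}{\alpha+\beta} = \frac{L+3}{2(L+1)}$, which is precisely item 3.

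The remaining task is the Hessian classification. The origin is a strict local maximum since its Hessian reduces to $-2\alpha I_4$. At the critical points of item 2, the Hessian splits into a block $-2\alpha I_2$ on the vanishing coordinates and a positive semidefinite block on the unit-norm pair, so these are strict saddles. The hard part will be item 3, where all four coordinates are nonzero and the Hessian is genuinely coupled. My plan there is to exhibit an orthogonal pair of explicit directions with opposite-sign second-order variation: writing $a_0 = \frac{L+3}{2(L+1)}$, the radial curve $t \mapsto (t\kappa_0, t\kappa_1, t\kappa_1, t\kappa_0)$ would be shown to have second derivative $8 a_0 \alpha > 0$ at $t=1$ (after using $a_0(\alpha+\beta) = \alpha$), while the orthogonal direction $(\kappa_1, 0, -\kappa_0, 0)$ leaves $b$ and $c$ unchanged to all orders and produces a negative second-order contribution of order $-2\beta a_0^2$. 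The coexistence of ascent and descent directions would then establish the strict saddle property for the whole family of item 3.
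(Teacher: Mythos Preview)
Your plan is correct and close in spirit to the paper's proof, but with two genuine simplifications worth keeping. The paper reparametrizes by $\zeta_0=(\kappa_0,\eta_0)$, $\zeta_1=(\eta_1,\kappa_1)$ and writes $\mathcal{R}^{<<}=1-A(\|\zeta_0\|^2+\|\zeta_1\|^2)+B(\|\zeta_0\|^4+\|\zeta_1\|^4)+C\langle\zeta_0,\zeta_1\rangle^2$ (your $\alpha=A=2B$, $\beta=C$), then solves the vector system and computes full $4\times4$ Hessian blocks at each family. Your completed-square identity $(1-\alpha)+\tfrac{\alpha}{2}[(a-1)^2+(b-1)^2]+\beta c^2$ is the same object, but it makes item~4 a one-line inequality rather than a Hessian-plus-coercivity argument. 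For item~3 the paper just writes down the Hessian and asserts mixed-sign eigenvalues; your explicit radial and orthogonal test directions, with the cancellation $\alpha(a_0-1)=-\beta a_0$ yielding second derivatives $8\alpha a_0>0$ and $-2\beta a_0^2<0$, are more self-contained.

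One correction for item~2: at $(0,\kappa_1,0,\eta_1)$ with $\kappa_1^2+\eta_1^2=1$ the block on the vanishing coordinates $(\kappa_0,\eta_0)$ is \emph{not} $-2\alpha I_2$ but $-2\alpha I_2 + 2\beta\, v v^\top$ with $v=(\eta_1,\kappa_1)^\top$, because $\beta c^2$ contributes $\beta\eta_1^2$, $\beta\kappa_1\eta_1$, $\beta\kappa_1^2$ to those second partials. Its eigenvalues are $-2\alpha$ and $-2(\alpha-\beta)$, still both negative since $\alpha>\beta$, so your saddle conclusion survives; just fix the stated form. Also, in the $c\neq0$ branch, spell out why $a=b$: from $\kappa_0\cdot(\mathrm{E}1)+\eta_0\cdot(\mathrm{E}2)$ and $\kappa_1\cdot(\mathrm{E}3)+\eta_1\cdot(\mathrm{E}4)$ you get $\alpha a(a-1)+\beta c^2=0=\alpha b(b-1)+\beta c^2$, hence $(a-b)(a+b-1)=0$, and the alternative $a+b=1$ is eliminated by combining with your identity $\alpha^2(a-1)(b-1)=\beta^2 c^2$ and $\alpha\neq\beta$.
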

\begin{proof}
    Let us define $\zeta_0=(\kappa_0,\eta_0), \zeta_1=(\eta_1,\kappa_1)$, then there exists a function $\mathcal{R}^{<<}:\R^2\times\R^2\rightarrow\R,$ such that $\mathcal{R}^<(\kappa_0,\kappa_1,\eta_0,\eta_1)=\mathcal{R}^{<<}(\zeta_0,\zeta_1)$, in fact, let us define $A=\frac{(L+1)^2}{L(L+3)}, B=\frac{(L+1)^2}{2L(L+3)}, C=\frac{(L+1)^2(L-1)}{L(L+3)^2},$ then \pierremodif{with the value of $\lambda$ defined in the proposition, we obtain} $$\mathcal{R}^{<<}(\zeta_0,\zeta_1)=1-A(\Vert \zeta_0\Vert ^2+\Vert \zeta_1\Vert ^2)+B(\Vert \zeta_0\Vert ^4+\Vert \zeta_1\Vert ^4)+C\langle \zeta_0,\zeta_1\rangle^2.$$
    To analyze its critical points, we take the partial derivatives, \begin{align*}
        \nabla_{\zeta_0}\mathcal{R}^{<<}(\zeta_0,\zeta_1)&=-2A \zeta_0+4B\Vert \zeta_0\Vert^2\zeta_0+2C\langle \zeta_0,\zeta_1\rangle \zeta_1,\\
        \nabla_{\zeta_1}\mathcal{R}^{<<}(\zeta_0,\zeta_1)&=-2A \zeta_1+4B\Vert \zeta_1\Vert^2\zeta_1+2C\langle \zeta_0,\zeta_1\rangle \zeta_0.\\
    \end{align*}
    And also, we compute its Hessian, we define \begin{align*}
        \nabla_{\zeta_0,\zeta_0}^2 \mathcal{R}^{<<}(\zeta_0,\zeta_1)&=-2AI_2+4B(2\zeta_0\zeta_0^\top+\Vert \zeta_0\Vert^2I_2)+2C\zeta_1\zeta_1^\top, \\
        \nabla_{\zeta_0,\zeta_1}^2 \mathcal{R}^{<<}(\zeta_0,\zeta_1)&=2C(\zeta_0\zeta_1^\top+\zeta_0^\top \zeta_1 I_2),  \\
        \nabla_{\zeta_1,\zeta_0}^2\mathcal{R}^{<<}(\zeta_0,\zeta_1)&=2C(\zeta_1\zeta_0^\top+\zeta_0^\top \zeta_1I_2),\\
        \nabla_{\zeta_1,\zeta_1}^2 \mathcal{R}^{<<}(\zeta_0,\zeta_1)&= -2AI_2+4B(2\zeta_1\zeta_1^\top+\Vert \zeta_1\Vert^2I_2)+2C\zeta_0\zeta_0^\top 
    \end{align*}
    Then the Hessian will be defined by
    \begin{equation}
        \nabla^2 \mathcal{R}^{<<}(\zeta_0,\zeta_1)=\begin{pmatrix}
             \nabla_{\zeta_0,\zeta_0}^2 \mathcal{R}^{<<}(\zeta_0,\zeta_1) & \nabla_{\zeta_0,\zeta_1}^2 \mathcal{R}^{<<}(\zeta_0,\zeta_1)\\
            \nabla_{\zeta_1,\zeta_0}^2 \mathcal{R}^{<<}(\zeta_1,\zeta_0) & \nabla_{\zeta_1,\zeta_1}^2 \mathcal{R}^{<<}(\zeta_0,\zeta_1)
        \end{pmatrix}
    \end{equation}

    To find the critical points, we solve the following system of equations: 
    
    \begin{equation}\label{system}
    \begin{split}
        -2A \zeta_0+4B\Vert \zeta_0\Vert^2\zeta_0+2C\langle \zeta_0,\zeta_1\rangle \zeta_1&=0,\\
        -2A \zeta_1+4B\Vert \zeta_1\Vert^2\zeta_1+2C\langle \zeta_0,\zeta_1\rangle \zeta_0&=0. 
    \end{split}
    \end{equation}

    \paragraph{$(0,0)$ is a local maximum.} We see that a trivial solution to this system is $(\zeta_0,\zeta_1)=(0,0)$, and replacing into the Hessian matrix, we see directly that this point is a local maximum.\\

    \paragraph{$(0,\zeta_1), (\zeta_0, 0)$ are strict saddle points.} We check the case when $\zeta_0=0, \zeta_1\neq 0$, then we need to solve
    \begin{align*}
        -2A\zeta_1+4B\Vert \zeta_1\Vert^2 \zeta_1=0.
    \end{align*}
    Since $\zeta_1\neq 0$, this forces \pierremodif{$-2A+4B\Vert \zeta_1\Vert^2=0$. }%, thus the eigenvalues of $\alpha I_2+\beta \zeta_1\zeta_1^\top$ are $\alpha$ and $\alpha+\beta$. 
    Replacing $(0,\zeta_1)$ into the Hessian matrix gives us $$\nabla^2\mathcal{R}^{<<}(0,\zeta_1)=\begin{pmatrix}
        -2AI_2+2C\zeta_1\zeta_1^\top & 0\\
        0 & \pierremodif{8B\zeta_1\zeta_1^\top}
    \end{pmatrix}.$$
    And $\mathrm{eig}(\nabla^2\mathcal{R}^{<<}(0,\zeta_1))=\mathrm{eig}(-2AI_2+2C\zeta_1\zeta_1^\top)\cup\mathrm{eig}(\pierremodif{8B\zeta_1\zeta_1^\top}),$
    where $\mathrm{eig}$ is the set of eigenvalues of a matrix. We have that \begin{align*}
        \mathrm{eig}(-2AI_2+2C\zeta_1\zeta_1^\top)&=\{-2A,2(C-A)\},\\ 
        \mathrm{eig}(\pierremodif{8B\zeta_1\zeta_1^\top})&=\{\pierremodif{0,8B\|\zeta_1\|^2}\}
    \end{align*} where we have used that $8B=4A$. We also note that $C-A<0$, then we conclude there are $2$ negative eigenvalues and $1$ positive eigenvalue, concluding that these points are strict saddle points, due to symmetry we conclude the same for the points of the form $(\zeta_0,0)$ for $\Vert \zeta_0\Vert^2=1$.

    \paragraph{Non-trivial critical points.} We will first show that the critical points that are of the form $(\zeta_0,\zeta_1)$ for $\zeta_0\neq 0,\zeta_1\neq 0$ necessarily satisfy $\Vert \zeta_0\Vert=\Vert \zeta_1\Vert\neq 0$, multiplying the first equation of \eqref{system} by $\zeta_0$ and the second by $\zeta_1$, then subtracting both resulting expressions \pierremodif{we obtain $4B(\|\zeta_1\|^4 - \|\zeta_0\|^4) = 2A(\|\zeta_1\|^2 - \|\zeta_0\|^2)$, and then}
    $$(\Vert \zeta_0\Vert^2-\Vert \zeta_1\Vert^2)(-1+\Vert \zeta_0\Vert^2+\Vert \zeta_1\Vert^2)=0,$$
    thus either $\Vert \zeta_0\Vert=\Vert \zeta_1\Vert$ and we get the first claim, or $\Vert \zeta_0\Vert^2+\Vert \zeta_1\Vert^2=1$, in this second case we divide in two subcases:
    \begin{itemize}
        \item Let us assume that $\langle \zeta_0,\zeta_1\rangle=0$, then multiplying the first equation of \eqref{system} by $\zeta_0$ and the second equation by $\zeta_1$, we get that $\Vert \zeta_0\Vert^2=\Vert \zeta_1\Vert^2=1$, which is a contradiction since we are in the case where $\Vert \zeta_0\Vert^2+\Vert \zeta_1\Vert^2=1$.
        \item Therefore $\langle \zeta_0,\zeta_1\rangle\neq 0$, we multiply the first equation of \eqref{system} by $\zeta_1$ and second by $\zeta_0$, after dividing by $2\langle \zeta_0,\zeta_1\rangle$ we get that \begin{align*}
            -A+A\Vert \zeta_0\Vert^2+C\langle \zeta_0,\zeta_1\rangle&=0\\
            -A+A\Vert \zeta_1\Vert^2+C\langle \zeta_0,\zeta_1\rangle&=0.
        \end{align*}
        Substracting both equations we get that $\Vert \zeta_0\Vert=\Vert \zeta_1\Vert$.
        
    \end{itemize}
    So we get that necessarily $\Vert \zeta_0\Vert=\Vert \zeta_1\Vert=r>0$. Then the equation \eqref{system} becomes \begin{equation}\label{system2}
        \begin{split}
            A(r^2-1)\zeta_0+C\langle \zeta_0,\zeta_1\rangle \zeta_1&=0,\\
            A(r^2-1)\zeta_1+C\langle \zeta_0,\zeta_1\rangle \zeta_0&=0.
        \end{split}
    \end{equation}
     %Adding/substracting both equations we get \pierrecomment{I don't think the next equation is ever used?
     %}$$(\zeta_0\pm \zeta_1)[A(r^2-1)\pm C\langle \zeta_0,\zeta_1\rangle]=0.$$
     \paragraph{$(\zeta_0,\pm \zeta_0)$ are strict saddle points.} In the case where $\zeta_0=\pm \zeta_1$, by \eqref{system2} we get that when $\zeta_0=\zeta_1$, then $$A(r_0^2-1)\zeta_0+ Cr_0^2\zeta_0=0,$$ and $\Vert\zeta_0\Vert=r_0$, with $r_0^2=\frac{A}{A+C}=\frac{L+3}{2(L+1)}$. Besides when $\zeta_0=-\zeta_1$, then $$A(r_1^2-1)\zeta_0+ Cr_1^2\zeta_0=0,$$ and $\Vert\zeta_0\Vert=r_1$, with $r_1^2=\frac{A}{A-C}=\frac{L+3}{4}$. Replacing this point on the Hessian matrix $\nabla^2\mathcal{R}^{<<}(\zeta_0,\pm \zeta_0)$,
     \begin{align*}
     \nabla^2\mathcal{R}^{<<}(\zeta_0, \zeta_0)&=\begin{pmatrix}
         2A(r_0^2-1)I_2+2(2A+C)\zeta_0\zeta_0^\top & 2C(r_0^2I_2+\zeta_0\zeta_0^\top)\\
         2C(r_0^2I_2+\zeta_0\zeta_0^\top) & 2A(r_0^2-1)I_2+2(2A+C)\zeta_0\zeta_0^\top
     \end{pmatrix},\\
     \nabla^2\mathcal{R}^{<<}(\zeta_0, -\zeta_0)&=\begin{pmatrix}
         2A(r_1^2-1)I_2+2(2A+C)\zeta_0\zeta_0^\top & -2C(r_1^2I_2+\zeta_0\zeta_0^\top)\\
         -2C(r_1^2I_2+\zeta_0\zeta_0^\top) & 2A(r_1^2-1)I_2+2(2A+C)\zeta_0\zeta_0^\top
     \end{pmatrix}.
     \end{align*}
     %We can do a similar analysis as before and conclude that this matrix has positive and negative eigenvalues. %\pierrecomment{please give some details}
    We can write the Hessians in block form with $2\times 2$ diagonal blocks $$M_0 = 2A(r^2-1)I_2 + 2(2A+C)\,\zeta_0\zeta_0^\top$$ and off-diagonal blocks $$M_1 = 2C(r^2 I_2 + \zeta_0\zeta_0^\top).$$
Considering vectors of the form $(x,\pm x), x\in\R^2$, this reduces the eigenvalue problem to the $2\times 2$ matrices $M_0 \pm M_1$. Substituting $r_0^2 = A/(A+C)$ or $r_1^2 = A/(A-C)$ shows that each matrix has one positive eigenvalue along $\zeta_0$ (equal to $4A$) and one negative eigenvalue orthogonal to $\zeta_0$ (equal to $-\frac{4AC}{A\pm C}$), so both Hessians are indefinite.
     
     \paragraph{Characterization of global minima.} If $\zeta_0\neq \pm \zeta_1$ and $\Vert \zeta_0\Vert=\Vert \zeta_1\Vert=r>0$, then both vectors are linearly independent, thus the first equation of \eqref{system2} is only possible when $r^2=1$ and $\langle \zeta_0,\zeta_1\rangle=0$, in which case we have to analyze the points $(\zeta_0,\zeta_1)$ such that $\langle \zeta_0,\zeta_1\rangle=0$ and $\Vert \zeta_0\Vert^2=\Vert \zeta_1\Vert^2=1$, we replace these points on the Hessian matrix and this gives us $$\nabla^2\mathcal{R}^{<<}(\zeta_0,\zeta_1)=\begin{pmatrix}
         4A\zeta_0\zeta_0^\top+2C\zeta_1\zeta_1^\top & 2C\zeta_0\zeta_1^\top\\ 2C\zeta_1\zeta_0^\top & 4A\zeta_1\zeta_1^\top+2C\zeta_0\zeta_0^\top
     \end{pmatrix}.$$
     A direct computation of the eigenvalues \pierremodif{with eigenvectors $(\zeta_1, \zeta_0)$ and $(\zeta_1, -\zeta_0)$} gives us that all the eigenvalues are positive in this case, since $\mathcal{R}^{<<}$ is coercive, these points are in fact global minima.
\end{proof}

\begin{remark}\label{lambdachoice}
    Note that the specific characterization of the global minima of $\mathcal{R}^<$ was valid only for $\lambda = \lambda_0^\star = \frac{L+1}{L+3}$. However, when restricting the analysis to the manifold $\tilde{\mathcal{M}}$ and considering $\lambda \in ]0, \frac{L+1}{L+3}[$, the global minima lie outside the domain $[-1,1]^2$. As a result, due to the structure of the update rule in \eqref{eq:PGD_iterates}, the extreme points ${(\pm1, \pm1)}$ of $[-1,1]^2$ become fixed points of the algorithm and serve as global minimizers of $\mathcal{R}^<$.
\end{remark} 

\subsection{Discussion on regularization}\label{discreg}
In order to solve the clustering problem in the degenerate case, we train the attention-based predictor $H^{{\rm lin}, \mu_0, \mu_1}$ now by minimizing the regularized risk
\begin{equation}\label{penalizationappendix}\tag{$\tilde{\mathcal{P}}_{\rho
}$}
    \min_{\mu_0,\mu_1\in\mathbb{S}^{d-1}} \mathcal{R}^{\rho}(\mu_0,\mu_1) \qquad \text{with} \qquad \mathcal{R}^{\rho}(\mu_0,\mu_1) \eqdef\mathcal{R}(\mu_0,\mu_1)+\rho r(\mu_0,\mu_1),
\end{equation}
where $r(\mu_0,\mu_1)=\mathbb{E}[\langle \mu_0,X_1\rangle^2\langle \mu_1,X_1\rangle^2]$, and  $\rho>0$ denotes the strength of the regularization. 

It is direct to check that there exists $r^<:\R^4\rightarrow\R$, such that $r(\mu_0,\mu_1)=r^<(\kappa_0,\kappa_1,\eta_0,\eta_1)$ according to the notation defined in \eqref{notation0}, and $r^<(\kappa_0,\kappa_1,\eta_0,\eta_1)=\frac{1}{2}(\kappa_0^2\eta_0^2+\kappa_1^2\eta_1^2)$. We define the following optimization problem \begin{equation}\label{penalization1}\tag{$\tilde{\mathcal{P}}_{\rho}^<$}
    \min_{\kappa_0,\kappa_1,\eta_0,\eta_1\in [-1,1]} \mathcal{R}^<(\kappa_0,\kappa_1,\eta_0,\eta_1)+\rho r^<(\kappa_0,\kappa_1,\eta_0,\eta_1),
\end{equation}
where $\mathcal{R}^<$ is defined in Proposition \ref{risknoiseless}. Since $\mathcal{R}^<$ and $r^<$ are coercive, we apply \citet[Theorem 2.1]{attouch96} to conclude that if $u_{\rho}\in[-1,1]^4$ is a solution of \eqref{penalization1}, then every limit point $\hat{u}$ of $u_{\rho}$, when $\rho\rightarrow0$, satisfies that: 
\begin{equation*}
\begin{cases}
    r^<(\hat{u})\leq r^<(v),\quad \text{for every $v\in \argmin \mathcal{R}^<$},\\
    \hat{u}\in\argmin \mathcal{R}^<.
\end{cases}
\end{equation*}
Due to the geometry of $r^<$ and the characterization of $\argmin \mathcal{R}^<$ we got in Proposition \ref{eqminima}, we obtain that if $\hat{u}=(\hat{\kappa_0},\hat{\kappa_1},\hat{\eta_0},\hat{\eta_1})$, then 
\begin{equation} \label{eq:discussion-reg}
    \begin{cases}
        \hat{\kappa}_0^2=1, \hat{\kappa}_1^2=1, \hat{\eta}_0^2=0, \hat{\eta}_1^2=0, \quad\text{or}\\
        \hat{\eta}_0^2=1, \hat{\eta}_1^2=1, \hat{\kappa}_0^2=0, \hat{\kappa}_1^2=0.
    \end{cases}
\end{equation}%\clrcomment{hat to be fixed and centered on the letter, not on the letter + underscore}
 Then the optimal solution for the regularized problem when $\rho\to 0$ achieves a saturation effect, corresponding to global minimizers that recover the centroids. However, due to the non-convex nature of the problem, it is not guaranteed \pierremodif{a priori that PGD on} \eqref{penalization1} will converge to the desired solution. 
 \pierremodif{A possible direction of analysis is to study the dynamics of PGD in the limit where $\rho \to 0$. We know from Proposition \ref{critical} that the only global minimizers of the unregularized problem lie on a manifold, so we expect that PGD converges to this manifold, before evolving on the manifold due to the regularization term, to converge to the minimizers given by \eqref{eq:discussion-reg}. Technically, this dynamics could be studied by using two-timescale tools, e.g.~similar to \citet[][]{marion2023} and references therein. We leave this analysis for future work.}

\section{Proofs of Section \ref{sec:gmm} (Gaussian mixture model)}
\label{sec:app_gmm}

\subsection{Proof of Proposition \ref{prop:structure_of_R1}
(expression of the risk in the non-degenerate case).} 
\label{app:risk_gmm}

Recall the notation  $e_k(\mu) \eqdef \lambda X_1^\top\mu\mu^\top X_k$, for $1\leq k \leq L$, which allows us to write
$$
\mathcal{R}(\mu_0,\mu_1)=\mathbb{E}\left[\Big\Vert X_1-\frac{2}{L}\sum_{k=1}^L (e_k(\mu_0)+e_k(\mu_1))X_k\Big\Vert_2^2\right].
$$
%\begin{lemma}\label{closedform}
Under the Gaussian mixture model, we are going to show that the risk $$\mathcal{R}(\mu_0,\mu_1)=\mathbb{E}\left[\Vert X_1-(H^{\mu_0}+H^{\mu_1})(\mathbb{X})_1\Vert_2^2\right]$$ admits a closed-form representation in terms of elementary functions. It holds that %\clrcomment{I removed the lemma, it was a weird structure.}
%\end{lemma}
%\begin{proof}[Proof of Lemma \ref{closedform}]

\allowdisplaybreaks
\begin{align*}
    \mathcal{R}(\mu_0,\mu_1)&=\mathbb{E}\left[\Big\Vert X_1-\frac{2}{L}\sum_{k=1}^L (e_k(\mu_0)+e_k(\mu_1))X_k\Big\Vert_2^2\right]\\
    &=\mathbb{E}\left[\Vert X_1\Vert^2\right]-\frac{4}{L}\sum_{k=1}^L\mathbb{E}\big[\langle X_1, (e_k(\mu_0)+e_k(\mu_1))X_k\rangle\big]\\
    &\quad+\frac{4}{L^2}\mathbb{E}\left[\Big\Vert \sum_{k=1}^L (e_k(\mu_0)+e_k(\mu_1))X_k\Big\Vert^2\right].\\
     &=(1+d\sigma^2)-\frac{4}{L}\sum_{k=1}^L\mathbb{E}\big[\langle X_1, (e_k(\mu_0)+e_k(\mu_1))X_k\rangle\big]\\
     &\quad+\frac{4}{L^2} \sum_{k=1}^L \mathbb{E}[\Vert(e_k(\mu_0)+e_k(\mu_1))X_k\Vert^2]\\
     &\quad+\frac{8}{L^2}\sum_{1\leq k<j\leq L }\mathbb{E}\big[  (e_k(\mu_0)+e_k(\mu_1))(e_j(\mu_0)+e_j(\mu_1))\langle X_k, X_j\rangle\big]\\
    &=(1+d\sigma^2)-\underbrace{\frac{4}{L}\mathbb{E}[(e_1(\mu_0)+e_1(\mu_1))\Vert X_1\Vert^2]}_{\eqdef(I_0)}+\underbrace{\frac{4}{L^2}\mathbb{E}\left[\Vert  (e_1(\mu_0)+e_1(\mu_1))X_1\Vert^2\right]}_{\eqdef(II_0)} \\
    &+\underbrace{\frac{8}{L^2}\sum_{k=2}^L\mathbb{E}\big[  (e_1(\mu_0)+e_1(\mu_1))(e_k(\mu_0)+e_k(\mu_1))\langle X_1, X_k\rangle\big]}_{\eqdef (III_0)}\\
    &-\underbrace{\frac{4}{L}\sum_{k=2}^L\mathbb{E}\big[\langle X_1, (e_k(\mu_0)+e_k(\mu_1))X_k\rangle\big]}_{\eqdef(I)}+\underbrace{\frac{4}{L^2}\sum_{k=2}^L\mathbb{E}\left[\Vert  (e_k(\mu_0)+e_k(\mu_1))X_k\Vert^2\right]}_{\eqdef (II)}\\
    &+\underbrace{\frac{8}{L^2}\sum_{1< k<j\leq L}\mathbb{E}\big[  (e_k(\mu_0)+e_k(\mu_1))(e_j(\mu_0)+e_j(\mu_1))\langle X_k, X_j\rangle\big]}_{\eqdef (III)}\\
    &=(1+d\sigma^2)-(I_0)+(II_0)+(III_0)- (I)+(II)+(III).
\end{align*}

We now proceed to compute each of the six terms. To compute $(I_0)$, we can use Lemma \ref{I0}, since \begin{align*}
     &\mathbb{E}[(\langle X_1,\mu_0\rangle^2+\langle X_1,\mu_1\rangle^2)\Vert X_1\Vert^2]\\
     &=\frac{1}{2}\left(\mathbb{E}[\langle X_1,\mu_0\rangle^2\Vert X_1\Vert^2|Z_1=0]+\mathbb{E}[(\langle X_1,\mu_0\rangle^2\Vert X_1\Vert^2|Z_1=1]\right)\\
     &+\frac{1}{2}\left(\mathbb{E}[\langle X_1,\mu_1\rangle^2\Vert X_1\Vert^2|Z_1=0]+\mathbb{E}[\langle X_1,\mu_1\rangle^2\Vert X_1\Vert^2|Z_1=1]\right)\\
     &=\frac{1}{2}\left[(\kappa_0^2+\eta_0^2+\kappa_1^2+\eta_1^2)(1+\sigma^2(d+4))+2\sigma^2(1+\sigma^2(d+2))(\Vert\mu_0\Vert^2+\Vert\mu_1\Vert^2)\right]   
 \end{align*}
 Then, $(I_0)=\frac{2\lambda}{L}\left[(\kappa_0^2+\eta_0^2+\kappa_1^2+\eta_1^2)(1+\sigma^2(d+4))+2\sigma^2(1+\sigma^2(d+2))(\Vert\mu_0\Vert^2+\Vert\mu_1\Vert^2)\right]$.\\

 To compute $(II_0)$, by defining $p_0(\mu_0,\mu_1,\mu^\star)$ as in Lemma \ref{II0}, we get   
\begin{align*}
     &\mathbb{E}[(\langle X_1,\mu_0\rangle^2+\langle X_1,\mu_1\rangle^2)^2\Vert X_1\Vert^2]\\
     &=\frac{1}{2}\mathbb{E}[(\langle X_1,\mu_0\rangle^4+2\langle X_1,\mu_0\rangle^2\langle X_1,\mu_1\rangle^2+\langle X_1,\mu_1\rangle^4)\Vert X_1\Vert^2|Z_1=0]\\
     &+\frac{1}{2}\mathbb{E}[(\langle X_1,\mu_0\rangle^4+2\langle X_1,\mu_0\rangle^2\langle X_1,\mu_1\rangle^2+\langle X_1,\mu_1\rangle^4)\Vert X_1\Vert^2|Z_1=1]\\ 
     &=\frac{1}{2}(p_0(\mu_0,\mu_0,\mu_0^\star)+2p_0(\mu_0,\mu_1,\mu_0^\star)+p_0(\mu_1,\mu_1,\mu_0^\star))\\
     &+\frac{1}{2}(p_0(\mu_0,\mu_0,\mu_1^\star)+2p_0(\mu_0,\mu_1,\mu_1^\star)+p_0(\mu_1,\mu_1,\mu_1^\star)).
 \end{align*}
 Then,  
 \begin{align*}(II_0)&=\frac{4\lambda^2}{L^2}\mathbb{E}[(\langle X_1,\mu_0\rangle^2+\langle X_1,\mu_1\rangle^2)^2\Vert X_1\Vert^2]\\
 &=\frac{2\lambda^2}{L^2}\left(p_0(\mu_0,\mu_0,\mu_0^\star +2p_0(\mu_0,\mu_1,\mu_0^\star)+p_0(\mu_1,\mu_1,\mu_0^\star)\right)\\
     &+\frac{2\lambda^2}{L^2}\left(p_0(\mu_0,\mu_0,\mu_1^\star)+2p_0(\mu_0,\mu_1,\mu_1^\star)+p_0(\mu_1,\mu_1,\mu_1^\star)\right).\end{align*}

 To compute $(III_0)$, by defining $p_1(\mu_0,\mu_1,\mu_{Z_1}^\star,\mu_{Z_2}^\star)$ as in Lemma \ref{III0}, we get
 \begin{align*}
     &\mathbb{E}[(\langle X_1,\mu_0\rangle^2+\langle X_1,\mu_1\rangle^2)(\langle X_1,\mu_0\rangle\langle X_2,\mu_0\rangle+\langle X_1,\mu_1\rangle\langle X_2,\mu_1\rangle)\langle X_1,X_2\rangle]\\
     &=\frac{1}{4}\sum_{(z_1,z_2)\in \{0,1\}^2}\Upsilon_1(z_1,z_2),
     \end{align*}
     where \begin{align*}
         \Upsilon_1(z_1,z_2)=\mathbb{E}[&(\langle X_1,\mu_0\rangle^2+\langle X_1,\mu_1\rangle^2)(\langle X_1,\mu_0\rangle\langle X_2,\mu_0\rangle+\langle X_1,\mu_1\rangle\langle X_2,\mu_1\rangle)\\
         &\cdot\langle X_1,X_2\rangle|Z_1=z_1,Z_2=z_2].
     \end{align*}
     %&=\frac{1}{4}\mathbb{E}[(\langle X_1,\mu_0\rangle^2+\langle X_1,\mu_1\rangle^2)(\langle X_1,\mu_0\rangle\langle X_2,\mu_0\rangle+\langle X_1,\mu_1\rangle\langle X_2,\mu_1\rangle)\langle X_1,X_2\rangle|Z_1=0,Z_2=0]\\
     %&+\frac{1}{4}\mathbb{E}[(\langle X_1,\mu_0\rangle^2+\langle X_1,\mu_1\rangle^2)(\langle X_1,\mu_0\rangle\langle X_2,\mu_0\rangle+\langle X_1,\mu_1\rangle\langle X_2,\mu_1\rangle)\langle X_1,X_2\rangle|Z_1=1,Z_2=0]\\
     %&+\frac{1}{4}\mathbb{E}[(\langle X_1,\mu_0\rangle^2+\langle X_1,\mu_1\rangle^2)(\langle X_1,\mu_0\rangle\langle X_2,\mu_0\rangle+\langle X_1,\mu_1\rangle\langle X_2,\mu_1\rangle)\langle X_1,X_2\rangle|Z_1=0,Z_2=1]\\
     %&+\frac{1}{4}\mathbb{E}[(\langle X_1,\mu_0\rangle^2+\langle X_1,\mu_1\rangle^2)(\langle X_1,\mu_0\rangle\langle X_2,\mu_0\rangle+\langle X_1,\mu_1\rangle\langle X_2,\mu_1\rangle)\langle X_1,X_2\rangle|Z_1=1,Z_2=1]\\
     And then
     \begin{align*}
     &\frac{1}{4}\sum_{(z_1,z_2)\in \{0,1\}^2}\Upsilon_1(z_1,z_2)\\
     &=\frac{1}{4}(p_1(\mu_0,\mu_0,\mu_0^\star,\mu_0^\star)+p_1(\mu_0,\mu_1,\mu_0^\star,\mu_0^\star)+p_1(\mu_1,\mu_0,\mu_0^\star,\mu_0^\star)+p_1(\mu_1,\mu_1,\mu_0^\star,\mu_0^\star))\\
     &+\frac{1}{4}(p_1(\mu_0,\mu_0,\mu_1^\star,\mu_0^\star)+p_1(\mu_0,\mu_1,\mu_1^\star,\mu_0^\star)+p_1(\mu_1,\mu_0,\mu_1^\star,\mu_0^\star)+p_1(\mu_1,\mu_1,\mu_1^\star,\mu_0^\star))\\
     &+\frac{1}{4}(p_1(\mu_0,\mu_0,\mu_0^\star,\mu_1^\star)+p_1(\mu_0,\mu_1,\mu_0^\star,\mu_1^\star)+p_1(\mu_1,\mu_0,\mu_0^\star,\mu_1^\star)+p_1(\mu_1,\mu_1,\mu_0^\star,\mu_1^\star))\\
     &+\frac{1}{4}(p_1(\mu_0,\mu_0,\mu_1^\star,\mu_1^\star)+p_1(\mu_0,\mu_1,\mu_1^\star,\mu_1^\star)+p_1(\mu_1,\mu_0,\mu_1^\star,\mu_1^\star)+p_1(\mu_1,\mu_1,\mu_1^\star,\mu_1^\star)).
 \end{align*}
 Consequently, \begin{align*}
 (III_0)&=8\lambda^2\frac{(L-1)}{L^2}\mathbb{E}[(\langle X_1,\mu_0\rangle^2+\langle X_1,\mu_1\rangle^2)(\langle X_1,\mu_0\rangle\langle X_2,\mu_0\rangle+\langle X_1,\mu_1\rangle\langle X_2,\mu_1\rangle)\langle X_1,X_2\rangle]\\
 &=2\lambda^2\frac{(L-1)}{L^2}\sum_{(z_1,z_2)\in \{0,1\}^2}\Upsilon_1(z_1,z_2).
 \end{align*}
 %\clrcomment{There should be the explicit value of $(II_0)$}

 To compute $(I)$, we can use Lemma \ref{I} to obtain: 
 \begin{align*}
        &\mathbb{E}[(\langle X_1,\mu_0\rangle\langle X_2,\mu_0\rangle+\langle X_1,\mu_1\rangle\langle X_2,\mu_1\rangle)\langle X_1, X_2\rangle]\\
        &=\frac{1}{4}\mathbb{E}[(\langle X_1,\mu_0\rangle\langle X_2,\mu_0\rangle+\langle X_1,\mu_1\rangle\langle X_2,\mu_1\rangle)\langle X_1, X_2\rangle|Z_1=0,Z_2=0]\\
        &+\frac{1}{4}\mathbb{E}[(\langle X_1,\mu_0\rangle\langle X_2,\mu_0\rangle+\langle X_1,\mu_1\rangle\langle X_2,\mu_1\rangle)\langle X_1, X_2\rangle|Z_1=0,Z_2=1]\\
        &+\frac{1}{4}\mathbb{E}[(\langle X_1,\mu_0\rangle\langle X_2,\mu_0\rangle+\langle X_1,\mu_1\rangle\langle X_2,\mu_1\rangle)\langle X_1, X_2\rangle|Z_1=1,Z_2=0]\\
        &+\frac{1}{4}\mathbb{E}[(\langle X_1,\mu_0\rangle\langle X_2,\mu_0\rangle+\langle X_1,\mu_1\rangle\langle X_2,\mu_1\rangle)\langle X_1, X_2\rangle|Z_1=1,Z_2=1]\\
        &=\frac{1+4\sigma^2}{4}(\kappa_0^2+\eta_0^2+\kappa_1^2+\eta_1^2)+\sigma^4(\Vert\mu_0\Vert^2+\Vert\mu_1\Vert^2).
    \end{align*}
Thus, $(I)=\lambda\frac{L-1}{L}[(\kappa_0^2+\eta_0^2+\kappa_1^2+\eta_1^2)(1+4\sigma^2)+4\sigma^4(\Vert\mu_0\Vert^2+\Vert\mu_1\Vert^2)]$ .\\

To compute $(II)$, by defining $p_2(\mu_0,\mu_1,\mu_{Z_1}^\star,\mu_{Z_2}^\star)$ as in Lemma \ref{II}, we obtain: \begin{align*}
 &\mathbb{E}[(\langle X_1,\mu_0\rangle\langle X_2,\mu_0\rangle+\langle X_1,\mu_1\rangle\langle X_2,\mu_1\rangle)^2\Vert X_2\Vert^2]\\
 &=\frac{1}{4}\sum_{(z_1,z_2)\in\{0,1\}^2}\Upsilon_2(z_1,z_2),
 \end{align*}
 %&=\frac{1}{4}\mathbb{E}[(\langle X_1,\mu_0\rangle\langle X_2,\mu_0\rangle+\langle X_1,\mu_1\rangle\langle X_2,\mu_1\rangle)^2\Vert X_2\Vert^2|Z_1=0,Z_2=0]\\
 %&+\frac{1}{4}\mathbb{E}[(\langle X_1,\mu_0\rangle\langle X_2,\mu_0\rangle+\langle X_1,\mu_1\rangle\langle X_2,\mu_1\rangle)^2\Vert X_2\Vert^2|Z_1=0,Z_2=1]\\
 %&+\frac{1}{4}\mathbb{E}[(\langle X_1,\mu_0\rangle\langle X_2,\mu_0\rangle+\langle X_1,\mu_1\rangle\langle X_2,\mu_1\rangle)^2\Vert X_2\Vert^2|Z_1=1,Z_2=0]\\
 %&+\frac{1}{4}\mathbb{E}[(\langle X_1,\mu_0\rangle\langle X_2,\mu_0\rangle+\langle X_1,\mu_1\rangle\langle X_2,\mu_1\rangle)^2\Vert X_2\Vert^2|Z_1=1,Z_2=1]\\
 where \begin{align*}
     \Upsilon_2(z_1,z_2)=\mathbb{E}[(\langle X_1,\mu_0\rangle\langle X_2,\mu_0\rangle+\langle X_1,\mu_1\rangle\langle X_2,\mu_1\rangle)^2\Vert X_2\Vert^2|Z_1=z_1,Z_2=z_2].
 \end{align*}
 And then
 \begin{align*}
 &\frac{1}{4}\sum_{(z_1,z_2)\in\{0,1\}^2}\Upsilon_2(z_1,z_2)
 \\
 &=\frac{1}{4}(p_2(\mu_0,\mu_0,\mu_0^\star,\mu_0^\star)+2p_2(\mu_0,\mu_1,\mu_0^\star,\mu_0^\star)+p_2(\mu_1,\mu_1,\mu_0^\star,\mu_0^\star))\\
 &+\frac{1}{4}(p_2(\mu_0,\mu_0,\mu_0^\star,\mu_1^\star)+2p_2(\mu_0,\mu_1,\mu_0^\star,\mu_1^\star)+p_2(\mu_1,\mu_1,\mu_0^\star,\mu_1^\star))\\
 &+\frac{1}{4}(p_2(\mu_0,\mu_0,\mu_1^\star,\mu_0^\star)+2p_2(\mu_0,\mu_1,\mu_1^\star,\mu_0^\star)+p_2(\mu_1,\mu_1,\mu_1^\star,\mu_0^\star))\\
 &+\frac{1}{4}(p_2(\mu_0,\mu_0,\mu_1^\star,\mu_1^\star)+2p_2(\mu_0,\mu_1,\mu_1^\star,\mu_1^\star)+p_2(\mu_1,\mu_1,\mu_1^\star,\mu_1^\star)).
\end{align*}
So we obtain, \begin{align*}
    (II)&=\frac{4\lambda^2(L-1)}{L^2}\mathbb{E}[(\langle X_1,\mu_0\rangle\langle X_2,\mu_0\rangle+\langle X_1,\mu_1\rangle\langle X_2,\mu_1\rangle)^2\Vert X_2\Vert^2]\\
    &=\frac{\lambda^2(L-1)}{L^2}\sum_{(z_1,z_2)\in\{0,1\}^2}\Upsilon_2(z_1,z_2).
\end{align*}

Finally, to compute $(III)$, by defining $p_3(\mu_0,\mu_1,\mu_{Z_1}^\star,\mu_{Z_2}^\star,\mu_{Z_3}^\star)$ as in Lemma \ref{III}, we get  \begin{align*}
    &\mathbb{E}[(\langle X_1,\mu_0\rangle\langle X_2,\mu_0\rangle+\langle X_1,\mu_1\rangle\langle X_2,\mu_1\rangle)(\langle X_1,\mu_0\rangle\langle X_3,\mu_0\rangle+\langle X_1,\mu_1\rangle\langle X_3,\mu_1\rangle)\langle X_2,X_3\rangle]\\
    &=\frac{1}{8}\sum_{(z_1,z_2,z_3)\in \{0,1\}^3}\Upsilon_3(z_1,z_2,z_3),\\
    \end{align*}
    where 
    \begin{align*}
\Upsilon_3(z_1,z_2,z_3) 
= \mathbb{E}\big[(&\langle X_1,\mu_0\rangle\langle X_2,\mu_0\rangle 
+ \langle X_1,\mu_1\rangle\langle X_2,\mu_1\rangle) \\
&\cdot (\langle X_1,\mu_0\rangle\langle X_3,\mu_0\rangle 
+ \langle X_1,\mu_1\rangle\langle X_3,\mu_1\rangle) \\
&\cdot \langle X_2,X_3\rangle 
\;\big|\, Z_1 = z_1, Z_2 = z_2, Z_3 = z_3 \big].
\end{align*}
\pierremodif{
Observing that %\small{
    \begin{align*}
    &\sum_{(z_1,z_2,z_3)\in \{0,1\}^3}\Upsilon_3(z_1,z_2,z_3) = \sum_{(a, b, c, d, e) \in \{0, 1\}^5} p_3(\mu_a,\mu_b,\mu_c^\star,\mu_d^\star,\mu_e^\star) ,
\end{align*}
%}
we get 
\begin{align*}
(III) & = \frac{8\lambda^2(L-1)(L-2)}{2L^2} \cdot \frac{1}{8}\sum_{(z_1,z_2,z_3)\in \{0,1\}^3}\Upsilon_3(z_1,z_2,z_3) \\
    & = \frac{\lambda^2(L-1)(L-2)}{2L^2} \sum_{(a, b, c, d, e) \in \{0, 1\}^5} p_3(\mu_a,\mu_b,\mu_c^\star,\mu_d^\star,\mu_e^\star).
\end{align*}
}
% \end{proof}

% \begin{proof}[Proof of Proposition \ref{prop:structure_of_R1}.]
\pierremodif{Finally, putting everything together, recalling the notation introduced in \eqref{notation}, and inspecting the formulas given by Lemmas from \ref{I0} to \ref{III} allows to conclude.}
%\end{proof}

\subsection{Proof of Lemma \ref{lem:riskmanifoldshort_gmm} (expression of the risk on the manifold, non-degenerate case)} We provide in Lemma \ref{lem:riskmanifold_gmm} a more precise version of Lemma \ref{lem:riskmanifoldshort_gmm}, with explicit constants.
\begin{lemma}\label{lem:riskmanifold_gmm}
    Define $c_1(n)=1+n\sigma^2$ and $c_2(n)=1+\sigma^2(d+n)$, then the risk $\mathcal{R}^<(\kappa_0,\kappa_1)$ restricted to $\mathcal{M}$ has the form \begin{align*} \mathcal{R}^<(\kappa_0,\kappa_1)&=A(\kappa_0^4+\kappa_1^4)+B(\kappa_0^2+\kappa_1^2)+C\kappa_0^2\kappa_1^2+D,
 \end{align*}
 where  \begin{align*}
     A&=\frac{2\lambda^2}{L^2}c_2(8)+\frac{2\lambda^2(L-1)}{L^2}c_1(5)+\frac{\lambda^2 (L-1)}{L^2}c_2(4)+\frac{\lambda^2(L-1)(L-2)}{2L^2}c_1(4).\\
     B&=-\frac{2\lambda}{L}c_2(4)+\frac{16\lambda^2\sigma^2}{L^2}c_2(6)+\frac{8\lambda^2\sigma^2(L-1)}{L^2}c_1(6)-\frac{\lambda (L-1)}{L}c_1(4)\\
     &+\frac{4\lambda^2\sigma^2(L-1)}{L^2}c_2(3)+\frac{\lambda^2\sigma^2(L-1)(L-2)}{L^2}c_1(6).\\
     C&=\frac{4\lambda^2\sigma^2(L-1)}{L^2}.\\
     D&=c_1(d)-\frac{8\lambda\sigma^2}{L}c_2(2)+\frac{32\lambda^2\sigma^4}{L^2}c_2(4)+\frac{64\lambda^2\sigma^6(L-1)}{L^2}\\
     &-\frac{8\lambda\sigma^4 (L-1)}{L}+\frac{8\lambda^2\sigma^4(L-1)}{L^2}c_2(2)+\frac{8\lambda^2\sigma^6(L-1)(L-2)}{L^2}.
 \end{align*}
\end{lemma}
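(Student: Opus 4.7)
The plan is to start from the explicit decomposition $\mathcal{R}(\mu_0,\mu_1)=(1+d\sigma^2)-(I_0)+(II_0)+(III_0)-(I)+(II)+(III)$ already established in the proof of Proposition \ref{prop:structure_of_R1}, together with the closed forms of the six terms in terms of $\kappa_0, \kappa_1, \eta_0, \eta_1, \xi$ and the auxiliary functions $p_0, p_1, p_2, p_3$ introduced in Lemmas \ref{I0}--\ref{III}. The lemma is then obtained by substituting the defining conditions of the manifold $\mathcal{M}$: $\eta_0=\eta_1=\xi=0$ together with $\|\mu_0\|^2=\|\mu_1\|^2=1$, and collecting coefficients monomial by monomial.

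The first structural observation is that many cross terms vanish on $\mathcal{M}$. Indeed, for each $c\in\{0,1\}$, $\mathbb{E}[\langle X,\mu_0\rangle\langle X,\mu_1\rangle\mid Z=c] = \langle \mu_c^\star,\mu_0\rangle\langle \mu_c^\star,\mu_1\rangle + \sigma^2\langle \mu_0,\mu_1\rangle = 0$, since each factor in the mean term vanishes by definition of $\mathcal{M}$ and $\xi=0$. Combined with the independence of $X_1, X_2, X_3$, this identity forces $p_2(\mu_0,\mu_1,\cdot,\cdot) = p_2(\mu_1,\mu_0,\cdot,\cdot) = 0$ and $p_3(\mu_0,\mu_1,\cdot,\cdot,\cdot) = p_3(\mu_1,\mu_0,\cdot,\cdot,\cdot) = 0$, since both factor through $\mathbb{E}[\langle X_1,\mu_0\rangle\langle X_1,\mu_1\rangle\mid Z_1]$. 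As a consequence, $(II)$ and $(III)$ collapse to sums of $p_2(\mu_c,\mu_c,\cdot)$ and $p_3(\mu_c,\mu_c,\cdot)$, which by independence factor into products of one- and two-point Gaussian moments involving only $\mu_c$ and therefore only produce $\kappa_c^4, \kappa_c^2$ and constants. The same analysis handles $(II_0)$ after using the orthogonal decomposition $Y = u\mu_0 + v\mu_1 + Y^\perp$ of the centered Gaussian noise (available since $\mu_0\perp\mu_1$ on $\mathcal{M}$) together with standard formulas such as $\mathbb{E}[u^{2k}\|Y\|^2]$, which I would derive by noting that $\|Y\|^2 = u^2 + v^2 + \|Y^\perp\|^2$ with the three summands independent.

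The only place a $\kappa_0^2\kappa_1^2$ monomial can arise is $(III_0)$, through the quantities $p_1(\mu_0,\mu_1,m_1,m_2) = \mathbb{E}[\langle X_1,\mu_0\rangle^2 \langle X_1,\mu_1\rangle \langle X_2,\mu_1\rangle \langle X_1,X_2\rangle]$ and the mirror $p_1(\mu_1,\mu_0,\cdot)$, where the scalar product $\langle X_1, X_2\rangle = \sum_i X_{1,i}X_{2,i}$ couples the otherwise independent tokens. By Wick's theorem on $\mathcal{M}$, for $Z_1 = 0$ one gets $\mathbb{E}[\langle X_1,\mu_0\rangle^2 \langle X_1,\mu_1\rangle X_{1,i}] = (\kappa_0^2+\sigma^2)\sigma^2\mu_{1,i}$, while for $Z_2=1$, $\mathbb{E}[\langle X_2,\mu_1\rangle X_{2,i}] = \sigma^2\mu_{1,i} + \kappa_1\mu_{1,i}^\star$. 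The outer sum over $i$ produces $\sum_i \mu_{1,i}\mu_{1,i}^\star = \kappa_1$, which is precisely the interaction that converts $(\kappa_0^2+\sigma^2)\kappa_1^2$ into a $\kappa_0^2\kappa_1^2$ term. Tracking this through the two symmetric subcases $p_1(\mu_0,\mu_1,\mu_0^\star,\mu_1^\star) + p_1(\mu_1,\mu_0,\mu_1^\star,\mu_0^\star)$ together with the overall prefactor of $(III_0)$ yields exactly the coefficient $C = 4\lambda^2\sigma^2(L-1)/L^2$.

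The remaining work is then to collect the coefficients of $\kappa_0^4+\kappa_1^4$, $\kappa_0^2+\kappa_1^2$, $\kappa_0^2\kappa_1^2$ and the constant across the six terms, using the symmetry $(\mu_0,\mu_0^\star)\leftrightarrow(\mu_1,\mu_1^\star)$ that the problem enjoys on $\mathcal{M}$ to ensure the polynomial indeed has the claimed symmetric form. The main obstacle is not any individual calculation but the volume of bookkeeping: up to $2^3 = 8$ conditional subcases indexed by $(Z_1,Z_2,Z_3)$ in $(III)$, and four choices of $(\mu_a,\mu_b)\in\{\mu_0,\mu_1\}^2$ per term, with sixth-order Gaussian moments appearing throughout. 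The orthogonal decomposition of $Y$ and a systematic application of Wick's theorem at each step reduce these to elementary combinations of $\kappa_0, \kappa_1, \sigma^2, d, L$, yielding the announced expressions for $A, B, C, D$.
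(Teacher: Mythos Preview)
Your proposal is correct and follows essentially the same route as the paper: start from the six-term decomposition $(1+d\sigma^2)-(I_0)+(II_0)+(III_0)-(I)+(II)+(III)$ of Proposition~\ref{prop:structure_of_R1}, specialize each term to the manifold $\mathcal{M}$ using the auxiliary lemmas, and collect coefficients. Your additional observations---that the cross terms in $(II)$ and $(III)$ vanish because $\mathbb{E}[\langle X,\mu_0\rangle\langle X,\mu_1\rangle\mid Z=c]=0$ on $\mathcal{M}$, and that the $\kappa_0^2\kappa_1^2$ monomial arises solely from $(III_0)$ with coefficient $C=4\lambda^2\sigma^2(L-1)/L^2$---are correct and match the paper's intermediate expressions exactly.
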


\begin{proof}[Proof of Lemma \ref{lem:riskmanifold_gmm}]
Using the decomposition obtained in the proof of Proposition \ref{prop:structure_of_R1}, after simple algebraic manipulation we get that on this manifold: 
\begin{itemize}
    \item $(I_0)=\frac{2\lambda}{L}[(\kappa_0^2+\kappa_1^2)(1+\sigma^2(d+4))+4\sigma^2(1+\sigma^2(d+2))]$.
    \item $(II_0)=\frac{2\lambda^2}{L^2}[(\kappa_0^4+\kappa_1^4)(1+\sigma^2(d+8))+8\sigma^2(\kappa_0^2+\kappa_1^2)(1+\sigma^2(d+6))+16\sigma^4(1+\sigma^2(d+4))]$.
    \item $(III_0)=2\lambda^2 \frac{(L-1)}{L^2}[(\kappa_0^4+\kappa_1^4)(1+5\sigma^2)+4\sigma^2(\kappa_0^2+\kappa_1^2)(1+6\sigma^2)+2\sigma^2\kappa_0^2\kappa_1^2+32\sigma^6].$
    \item $(I)=\lambda\frac{L-1}{L}[(\kappa_0^2+\kappa_1^2)(1+4\sigma^2)+8\sigma^4]$.
    \item $(II)=\lambda^2\frac{(L-1)}{L^2}[(\kappa_0^4+\kappa_1^4)(1+\sigma^2(d+4))+4\sigma^2(\kappa_0^2+\kappa_1^2)(1+\sigma^2(d+3))+8\sigma^4(1+\sigma^2(d+2))]$.
    \item $(III)=\lambda^2\frac{(L-1)(L-2)}{2L^2}[(\kappa_0^4+\kappa_1^4)(1+4\sigma^2)+2\sigma^2(\kappa_0^2+\kappa_1^2)(1+6\sigma^2)+16\sigma^6]$.
\end{itemize} 
We conclude by noting that the risk restricted to this manifold is $$\mathcal{R}^<(\kappa_0,\kappa_1)=(1+d\sigma^2)-(I_0)+(II_0)+(III_0)- (I)+(II)+(III),$$

and properly factorizing the terms.   
\end{proof}

\subsection{Proof of Proposition \ref{prop:optimallambdashort_gmm} (global minima of the risk, non-degenerate case).} In what follows we provide an extended version of Proposition \ref{prop:optimallambdashort_gmm} with explicit constant, together with its proof.

\begin{proposition}\label{optimallambda}
    Let us define \begin{align*}
        c_3(\sigma,L)&\eqdef 16\sigma^2c_2(6)+8\sigma^2(L-1)c_1(6)+4\sigma^2(L-1)c_2(3)+\sigma^2(L-1)(L-2)c_1(6)+4c_2(8)\\
    &+4(L-1)c_1(5)+2(L-1)c_2(4)+(L-1)(L-2)c_1(4)+4\sigma^2(L-1),
    \end{align*} and consider $\mathcal{R}^<(\kappa_0,\kappa_1)$ with the following $\lambda$: \begin{align*}
        \lambda^\star(\sigma,L)=\frac{2Lc_2(4)+L(L-1)c_1(4)}{c_3(\sigma,L)}.
    \end{align*}
Then the points $(\pm 1,\pm 1)$ are global minimum of $\mathcal{R}^<(\kappa_0,\kappa_1)$.    
\end{proposition}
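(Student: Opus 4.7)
The plan is to exploit the evenness of $\mathcal{R}^<$ in each of its two arguments, then perform a change of variables centered at $(1,1)$ that turns the target inequality into an elementary non-negativity statement. The specific value $\lambda^\star(\sigma,L)$ will emerge as the unique positive root that cancels a certain linear-in-$(s,t)$ remainder.

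First, I would note that, by the form given in Lemma \ref{lem:riskmanifoldshort_gmm}, $\mathcal{R}^<$ depends only on $\kappa_0^2$ and $\kappa_1^2$. Hence it suffices to show that $(u,v)=(1,1)$ is a global minimum on $[0,1]^2$ of $\tilde{\mathcal{R}}(u,v) \eqdef A(u^2+v^2)+B(u+v)+Cuv+D$, since then $(\kappa_0,\kappa_1)=(\pm 1,\pm 1)$ will be global minima of $\mathcal{R}^<$ on $[-1,1]^2$. Introducing $s=1-u$ and $t=1-v$, with $(s,t)\in[0,1]^2$, a direct expansion yields
\begin{align*}
\tilde{\mathcal{R}}(u,v)-\tilde{\mathcal{R}}(1,1) \;=\; A(s^2+t^2)+Cst-(2A+B+C)(s+t).
\end{align*}
From the explicit formulas of Lemma \ref{lem:riskmanifold_gmm}, $A$ and $C$ are non-negative for every choice of $\lambda>0$, $\sigma\geq 0$ and $L\geq 2$, so the quadratic part $A(s^2+t^2)+Cst$ is non-negative on $[0,1]^2$. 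Everything thus reduces to controlling the sign of $2A+B+C$.

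Next, I would plug the expressions of $A$, $B$, $C$ into $2A+B+C$ and collect by powers of $\lambda$. The only linear-in-$\lambda$ terms come from $B$ and sum to $-\tfrac{\lambda}{L}[2c_2(4)+(L-1)c_1(4)]$. All other contributions are quadratic in $\lambda$, and aggregating them term by term yields exactly $\tfrac{\lambda^2}{L^2}c_3(\sigma,L)$, with $c_3(\sigma,L)$ as defined in the statement. Therefore
\begin{align*}
2A+B+C \;=\; \frac{\lambda}{L}\left(\frac{\lambda\, c_3(\sigma,L)}{L}-\bigl[2c_2(4)+(L-1)c_1(4)\bigr]\right),
\end{align*}
and the non-trivial root of $2A+B+C=0$ is precisely $\lambda^\star(\sigma,L)=\tfrac{2Lc_2(4)+L(L-1)c_1(4)}{c_3(\sigma,L)}$.

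Finally, specializing to $\lambda=\lambda^\star(\sigma,L)$, the linear remainder vanishes and we obtain $\tilde{\mathcal{R}}(u,v)-\tilde{\mathcal{R}}(1,1)=A(s^2+t^2)+Cst\geq 0$ for every $(s,t)\in[0,1]^2$, which proves that $(\kappa_0,\kappa_1)=(\pm 1,\pm 1)$ are global minima of $\mathcal{R}^<$. The main obstacle is purely bookkeeping: one must carefully verify that the quadratic-in-$\lambda$ coefficient obtained from $2A+B+C$ reproduces exactly the nine groups of constants defining $c_3(\sigma,L)$. This is a mechanical check, but it is also the step that pins down the precise form of $c_3$ and hence of $\lambda^\star$.
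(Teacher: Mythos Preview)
Your argument is correct. Both your approach and the paper's hinge on the same algebraic fact, namely that $2A+B+C=0$ precisely when $\lambda=\lambda^\star(\sigma,L)$, but you package it differently. The paper works through first-order conditions: it computes $\partial_{\kappa_i}\mathcal{R}^<=2\kappa_i(2A\kappa_i^2+B+C\kappa_{1-i}^2)$, observes that the non-trivial interior critical point satisfies $\kappa_0^2=\kappa_1^2=-B/(2A+C)$, and then picks $\lambda^\star$ so that this value equals $1$; the global-minimum assertion is then stated rather than argued in detail. Your route is more direct and more self-contained: the substitution $(u,v)=(\kappa_0^2,\kappa_1^2)$ followed by the shift $(s,t)=(1-u,1-v)$ turns the statement into the trivial non-negativity of $A(s^2+t^2)+Cst$ on $[0,1]^2$ once the linear remainder is killed, with no need to enumerate or compare critical points. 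The price you pay is the bookkeeping step you flag---checking that the $\lambda^2$-coefficient of $2A+B+C$ reproduces $c_3(\sigma,L)$ exactly---but this is mechanical and you have identified it correctly.
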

\begin{remark}
    In the case where $\sigma=0$, we get $\lambda^\star(0,L)=\lambda_0^\star=\frac{L+1}{L+3}$.
\end{remark}
\begin{proof}
    Imposing first order conditions on $\mathcal{R}^<(\kappa_0,\kappa_1)$ from Lemma \ref{lem:riskmanifold_gmm}, we obtain an explicit form of its critical points. \pierremodif{From this expression,} we note that \pierremodif{the global minimum are the points $(\pm 1,\pm 1)$ if and only if $4A+2B+2C = 0$. The function $\lambda \mapsto 2A(\lambda)+B(\lambda)+C(\lambda)$ is a quadratic which is negative for $0 \leq \lambda < \lambda^\star(\sigma,L)$}, and vanishes at $\lambda=\lambda^\star(\sigma,L)$.
\end{proof}

% {\color{red} [Wrong] With a proper scaling: we replace (i) $\kappa_c^2=\langle \mu_c, \mu_c^\star \rangle^2$ by $r^2\kappa_c^2 = \langle \mu_c, r \mu_c^\star \rangle^2 $ and (ii) $\sigma^2$ by $\sigma^2/r^2$.
% \begin{itemize}
%     \item $(I_0)=\frac{2\lambda}{L}[(\kappa_0^2+\kappa_1^2)(r^2+\sigma^2(d+4))+4\sigma^2(1+\frac{\sigma^2}{r^2}(d+2))]$.
%     \item $(II_0)=\frac{2\lambda^2}{L^2}[(\kappa_0^4+\kappa_1^4)(r^4+r^2\sigma^2(d+8))+8\sigma^2(\kappa_0^2+\kappa_1^2)(1+\frac{\sigma^2}{r^2}(d+6))+16\frac{\sigma^4}{r^4}(1+\frac{\sigma^2}{r^2}(d+4))]$.
%     \item $(III_0)=2\lambda^2 \frac{L-1}{L^2}[(\kappa_0^4+\kappa_1^4)(r^4+5r^2\sigma^2)+4\sigma^2(\kappa_0^2+\kappa_1^2)(1+6\frac{\sigma^2}{r^2})+2r^2\sigma^2\kappa_0^2\kappa_1^2+32\frac{\sigma^6}{r^6}].$
%     \item $(I)=\lambda\frac{L-1}{L}[(\kappa_0^2+\kappa_1^2)(r^2+4\sigma^2)+8\frac{\sigma^4}{r^4}]$.
%     \item $(II)=\lambda^2\frac{(L-1)}{L^2}[(\kappa_0^4+\kappa_1^4)(r^2+r^2\sigma^2(d+4))+4\sigma^2(\kappa_0^2+\kappa_1^2)(1+\frac{\sigma^2}{r^2}(d+3))+8\frac{\sigma^4}{r^4}(1+\frac{\sigma^2}{r^2}(d+2))]$.
%     \item $(III)=\lambda^2\frac{(L-1)(L-2)}{2L^2}[(\kappa_0^4+\kappa_1^4)(r^4+4r^2\sigma^2)+2\sigma^2(\kappa_0^2+\kappa_1^2)(1+6\frac{\sigma^2}{r^2})+16\frac{\sigma^6}{r^6}]$.
% \end{itemize} 
% }

\subsection{Proof of Theorem \texorpdfstring{\ref{thm:main_gmm}}{}}\label{outlinemainnoise}

The proof of this result is built upon a series of intermediate results that progressively lead to the desired conclusion.
\begin{lemma}\label{der0f}
    At a point $(\kappa_0,\kappa_1,\eta_0,\eta_1,\xi)$ such that $\eta_0=\eta_1=\xi=0$, we have $\partial_{\eta_0}\mathcal{R}^<=\partial_{\eta_1}\mathcal{R}^<=\partial_{\xi}\mathcal{R}^<=0.$
\end{lemma}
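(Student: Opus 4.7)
The plan is to mirror the symmetry argument used for Lemma \ref{der0} in the degenerate case, but now leveraging the refined monomial structure provided by Proposition \ref{prop:structure_of_R1}. The key observation I would make is that $\mathcal{R}^<$ should be invariant under the joint sign flip $(\eta_0, \eta_1, \xi) \mapsto (-\eta_0, -\eta_1, -\xi)$ at fixed $(\kappa_0, \kappa_1)$, and this immediately gives the vanishing of the three partial derivatives at $\eta_0 = \eta_1 = \xi = 0$.

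To establish the invariance, I would go through the list of basis monomials in Proposition \ref{prop:structure_of_R1} and verify that each one has \emph{even total degree} in the variables $(\eta_0, \eta_1, \xi)$. The pure $\kappa$-monomials $\kappa_0^4, \kappa_1^4, \kappa_0^2, \kappa_1^2, 1$ have degree $0$; the monomials $\eta_0^4, \eta_1^4, \eta_0^2, \eta_1^2, \xi^2, \kappa_0^2\eta_0^2, \kappa_1^2\eta_1^2, \kappa_0^2\eta_1^2, \kappa_1^2\eta_0^2$ obviously have even degree; and the mixed terms $\kappa_0\eta_0\kappa_1\eta_1$, $\kappa_0\eta_0\xi$, $\kappa_1\eta_1\xi$ each contain exactly two factors drawn from $\{\eta_0, \eta_1, \xi\}$, so they too are invariant under the joint sign flip. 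Hence the full risk inherits the symmetry
\[
\mathcal{R}^<(\kappa_0, \kappa_1, \eta_0, \eta_1, \xi) \;=\; \mathcal{R}^<(\kappa_0, \kappa_1, -\eta_0, -\eta_1, -\xi).
\]

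Differentiating both sides with respect to $\eta_0$ (and applying the chain rule on the right-hand side) yields $\partial_{\eta_0}\mathcal{R}^<(\kappa_0, \kappa_1, \eta_0, \eta_1, \xi) = -\partial_{\eta_0}\mathcal{R}^<(\kappa_0, \kappa_1, -\eta_0, -\eta_1, -\xi)$. Evaluating at $(\eta_0, \eta_1, \xi) = (0,0,0)$ forces $\partial_{\eta_0}\mathcal{R}^<(\kappa_0, \kappa_1, 0, 0, 0) = 0$. The identical argument carried out by differentiating in $\eta_1$ or in $\xi$ produces the remaining two vanishing identities.

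The only real obstacle here is having confidence in the exhaustive monomial catalogue stated in Proposition \ref{prop:structure_of_R1}; once that proposition is taken as given, the conclusion follows solely from the $\mathbb{Z}/2$-symmetry, without any further computation. It is worth noting that this symmetry has a geometric interpretation: the transformation $(\eta_0, \eta_1, \xi) \to (-\eta_0, -\eta_1, -\xi)$ corresponds to reflecting the components of $\mu_0, \mu_1$ living in the plane $\mathrm{span}(\mu_0^\star, \mu_1^\star)$ transversely to their $\kappa$-components, which is an isometry preserving the distribution of the tokens and therefore the risk.
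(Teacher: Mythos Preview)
Your proof is correct and follows the same route as the paper: use Proposition \ref{prop:structure_of_R1} to obtain the symmetry $\mathcal{R}^<(\kappa_0,\kappa_1,\eta_0,\eta_1,\xi)=\mathcal{R}^<(\kappa_0,\kappa_1,-\eta_0,-\eta_1,-\xi)$, differentiate, and evaluate at zero. Your explicit parity check on each monomial is a helpful addition; the closing geometric remark is inessential and slightly imprecise (the five scalar products are not independent coordinates on $(\mathbb{S}^{d-1})^2$, so the sign flip need not arise from a single isometry of $(\mu_0,\mu_1)$), but this does not affect the argument.
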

\begin{proof}
    According to Proposition \ref{prop:structure_of_R1}, we can directly obtain that $$\mathcal{R}^<(\kappa_0,\kappa_1,\eta_0,\eta_1,\xi)=\mathcal{R}^<(\kappa_0,\kappa_1,-\eta_0,-\eta_1,-\xi).$$
    Taking the partial derivative in $\eta_0$, we get $$\partial_{\eta_0}\mathcal{R}^<(\kappa_0,\kappa_1,\eta_0,\eta_1,\xi)=-\partial_{\eta_0}\mathcal{R}^<(\kappa_0,\kappa_1,-\eta_0,-\eta_1,-\xi).$$
    At a point such that $\eta_0=\eta_1=\xi=0$, this gives $\partial_{\eta_0}\mathcal{R}^<(\kappa_0,\kappa_1,0,0,0)=-\partial_{\eta_0}\mathcal{R}^<(\kappa_0,\kappa_1,0,0,0)$, therefore $\partial_{\eta_0}\mathcal{R}^<(\kappa_0,\kappa_1,0,0,0)=0$, the proof for $\partial_{\eta_1}\mathcal{R}^<, \partial_\xi\mathcal{R}^<$ is analogous.
\end{proof}

\begin{lemma}
    The manifold $\mathcal{M}$ is invariant under \eqref{eq:PGD_iterates} dynamics, this is if $(\mu_0^k,\mu_1^k)\in \mathcal{M}$, then $(\mu_0^{k+1},\mu_1^{k+1})\in \mathcal{M}$.
\end{lemma}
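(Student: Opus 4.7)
The plan is to use the chain rule decomposition of the Euclidean gradients together with Lemma~\ref{der0f} to show that, at any point of $\mathcal{M}$, the full gradients $\nabla_{\mu_0}\mathcal{R}$ and $\nabla_{\mu_1}\mathcal{R}$ have a very restricted form, and then verify the three orthogonality constraints defining $\mathcal{M}$ by a short linear-algebraic check on the update formula.

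First I would observe that, viewing $\mathcal{R}(\mu_0,\mu_1)=\mathcal{R}^<(\kappa_0,\kappa_1,\eta_0,\eta_1,\xi)$ with the dependencies given in \eqref{notation}, the chain rule gives
\begin{align*}
\nabla_{\mu_0}\mathcal{R}(\mu_0,\mu_1) &= \partial_{\kappa_0}\mathcal{R}^<\,\mu_0^\star + \partial_{\eta_1}\mathcal{R}^<\,\mu_1^\star + \partial_{\xi}\mathcal{R}^<\,\mu_1,\\
\nabla_{\mu_1}\mathcal{R}(\mu_0,\mu_1) &= \partial_{\kappa_1}\mathcal{R}^<\,\mu_1^\star + \partial_{\eta_0}\mathcal{R}^<\,\mu_0^\star + \partial_{\xi}\mathcal{R}^<\,\mu_0,
\end{align*}
since $\kappa_0,\eta_1,\xi$ are the only scalar products in \eqref{notation} depending on $\mu_0$, and similarly for $\mu_1$. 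At any $(\mu_0^k,\mu_1^k)\in\mathcal{M}$ one has $\eta_0^k=\eta_1^k=\xi^k=0$, so Lemma~\ref{der0f} forces $\partial_{\eta_0}\mathcal{R}^<=\partial_{\eta_1}\mathcal{R}^<=\partial_{\xi}\mathcal{R}^<=0$ there. Consequently, the two gradients simplify to scalar multiples of $\mu_0^\star$ and $\mu_1^\star$ respectively.

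Next I would plug this into the \eqref{PGDintro} update. For $\mu_0^{k+1}$, the numerator becomes
\[
\mu_0^k - \gamma\,\partial_{\kappa_0}\mathcal{R}^<\bigl(\mu_0^\star-\kappa_0^k\,\mu_0^k\bigr),
\]
which lies in $\mathrm{span}(\mu_0^k,\mu_0^\star)$; renormalizing preserves this span, so $\mu_0^{k+1}\in\mathrm{span}(\mu_0^k,\mu_0^\star)$. The analogous computation shows $\mu_1^{k+1}\in\mathrm{span}(\mu_1^k,\mu_1^\star)$.

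Finally, I would check the three defining conditions of $\mathcal{M}$ by using that every ``cross'' inner product among $\{\mu_0^k,\mu_0^\star,\mu_1^k,\mu_1^\star\}$ vanishes on $\mathcal{M}$ (i.e.\ $\langle\mu_0^k,\mu_1^k\rangle=\langle\mu_0^k,\mu_1^\star\rangle=\langle\mu_0^\star,\mu_1^k\rangle=\langle\mu_0^\star,\mu_1^\star\rangle=0$). This immediately yields $\langle\mu_1^\star,\mu_0^{k+1}\rangle=0$, $\langle\mu_0^\star,\mu_1^{k+1}\rangle=0$, and, expanding the bilinear form along the two spanning pairs, $\langle\mu_0^{k+1},\mu_1^{k+1}\rangle=0$. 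Unit-norm is preserved by construction of \eqref{PGDintro}, so $(\mu_0^{k+1},\mu_1^{k+1})\in\mathcal{M}$. There is no real obstacle here: the only nontrivial input is Lemma~\ref{der0f}, after which the proof reduces to bookkeeping; the mildly delicate step is correctly identifying the gradients of $\eta_1$ and $\xi$ in terms of $\mu_1^\star$ and $\mu_1$ (not $\mu_0^\star$), so that the third condition $\xi^{k+1}=0$ really follows from the four pairwise orthogonalities listed above.
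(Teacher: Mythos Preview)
Your proposal is correct and follows essentially the same approach as the paper: compute the chain-rule decomposition of $\nabla_{\mu_i}\mathcal{R}$, invoke Lemma~\ref{der0f} to reduce the gradients to scalar multiples of $\mu_0^\star$ and $\mu_1^\star$, and then verify the three orthogonality constraints directly from the update rule. Your span formulation of the last step is a minor repackaging of the paper's coordinate-wise check, but the content is identical.
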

\begin{proof}
 We apply the chain rule and Lemma \ref{der0f} to get: \begin{equation}\label{chainrule0}
        \nabla_{\mu_0}\mathcal{R}=\partial_{\kappa_0}\mathcal{R}^< \mu_0^\star+\partial_{\eta_1}\mathcal{R}^< \mu_1^\star+\partial_{\xi}\mathcal{R}^< \mu_1=\partial_{\kappa_0}\mathcal{R}^< \mu_0^\star,
    \end{equation}
        \begin{equation}\label{chainrule1}
        \nabla_{\mu_1}\mathcal{R}=\partial_{\kappa_1}\mathcal{R}^< \mu_1^\star+\partial_{\eta_0}\mathcal{R}^< \mu_0^\star+\partial_{\xi}\mathcal{R}^< \mu_0=\partial_{\kappa_1}\mathcal{R}^< \mu_1^\star.
    \end{equation}
    We then follow the same ideas as in \citet[Lemma 4]{marion2024attention}, where our Lemma \ref{der0f} takes the role of \citet[Lemma 14]{marion2024attention}.
    
    More concretely, let us consider $c_0=\Vert \mu_0^k-\gamma(I_d-\mu_0^k(\mu_0^k)^\top)\nabla_{\mu_0}\mathcal{R}(\mu_0^k,\mu_1^k)\Vert_2,$ and \linebreak
    $c_1=\Vert \mu_1^k-\gamma(I_d-\mu_1^k(\mu_1^k)^\top)\nabla_{\mu_1}\mathcal{R}(\mu_0^k,\mu_1^k)\Vert_2$, then recalling \eqref{eq:PGD_iterates} updates, we have that if $(\mu_0^k,\mu_1^k)\in \mathcal{M}$, then \begin{align*}
       (\mu_1^\star)^\top\mu_0^{k+1}&=\frac{(\mu_1^\star)^\top\mu_0^{k}-\gamma(\mu_1^\star)^\top(I_d-\mu_0^k(\mu_0^k)^\top)\partial_{\kappa_0}\mathcal{R}^<(\kappa_0^k,\kappa_1^k) \mu_0^\star}{c_0} =0,\\
       (\mu_0^\star)^\top\mu_1^{k+1}&=\frac{(\mu_0^\star)^\top\mu_1^{k}-\gamma(\mu_0^\star)^\top(I_d-\mu_1^k(\mu_1^k)^\top)\partial_{\kappa_1}\mathcal{R}^<(\kappa_0^k,\kappa_1^k) \mu_1^\star}{c_1} =0,
       \end{align*}
       And
       \begin{align*}
       &(\mu_1^{k+1})^\top\mu_0^{k+1}\\
       &=\frac{(\mu_1^{k})^\top\mu_0^{k}}{c_0c_1}\\
       &-\frac{\gamma(\partial_{\kappa_1}\mathcal{R}^<(\kappa_0^k,\kappa_1^k))((I_d-\mu_1^k(\mu_1^k)^\top)\mu_1^\star)^\top\mu_0^{k}-\gamma(\partial_{\kappa_0}\mathcal{R}^<(\kappa_0^k,\kappa_1^k))((I_d-\mu_0^k(\mu_0^k)^\top)\mu_0^\star)^\top\mu_1^{k}}{c_0c_1}\\
    &+\frac{\gamma^2(\partial_{\kappa_0}\mathcal{R}^<(\kappa_0^k,\kappa_1^k))(\partial_{\kappa_1}\mathcal{R}^<(\kappa_0^k,\kappa_1^k))((I_d-\mu_0^k(\mu_0^k)^\top)\mu_0^\star)^\top(I_d-\mu_1^k(\mu_1^k)^\top)\mu_1^\star}{c_0c_1}=0,
    \end{align*}
    where the last term is zero since $$((I_d-\mu_0^k(\mu_0^k)^\top)\mu_0^\star)^\top(I_d-\mu_1^k(\mu_1^k)^\top)\mu_1^\star=0.$$
    Then $(\mu_0^{k+1},\mu_1^{k+1})\in\mathcal{M}$.
\end{proof}
\begin{lemma}
    When initialized on the manifold $\mathcal{M}$, the iterations generated by \eqref{eq:PGD_iterates} can be reformulated as follows:

    \begin{equation}\label{autonomousnoise}
(\kappa_0^{k+1},\kappa_1^{k+1})=\varphi(\kappa_0^{k},\kappa_1^k),
    \end{equation}
    where $\varphi:[-1,1]^2\rightarrow [-1,1]^2$ is given by $$\varphi(\kappa_0,\kappa_1)=\left( \frac{\kappa_0-\gamma(\partial_{\kappa_0} \mathcal{R}^<(\kappa_0,\kappa_1))(1-\kappa_0^2)}{\sqrt{1+\gamma^2 (\partial_{\kappa_0} \mathcal{R}^<(\kappa_0,\kappa_1))^2(1-\kappa_0^2)}},\frac{\kappa_1-\gamma(\partial_{\kappa_1} \mathcal{R}^<(\kappa_0,\kappa_1))(1-\kappa_1^2)}{\sqrt{1+\gamma^2 (\partial_{\kappa_1} \mathcal{R}^<(\kappa_0,\kappa_1))^2(1-\kappa_1^2)}}\right),$$
    and $\mathcal{R}^<(\kappa_0,\kappa_1)\eqdef \mathcal{R}^<(\kappa_0,\kappa_1,0,0,0)$ as in Lemma \ref{lem:riskmanifold_gmm}.
\end{lemma}
\begin{proof}
    By definition of the iterates and \eqref{chainrule0},\eqref{chainrule1}, we have \begin{align*}
        \kappa_0^{k+1}&=(\mu_0^{k+1})^\top\mu_0^\star=\frac{\kappa_0^k-\gamma\partial_{\kappa_0}\mathcal{R}^<(\kappa_0,\kappa_1)((\mu_0^\star)^\top(I_d-\mu_0^k(\mu_0^k)^\top)\mu_0^\star)}{\sqrt{1+\gamma^2(\partial_{\kappa_0}\mathcal{R}^<(\kappa_0,\kappa_1))^2\Vert (I_d-\mu_0^k(\mu_0^k)^\top)\mu_0^\star\Vert_2^2}}\\
        &=\frac{\kappa_0^k-\gamma\partial_{\kappa_0}\mathcal{R}^<(\kappa_0,\kappa_1)(1-(\kappa_0^k)^2)}{\sqrt{1+\gamma^2\partial_{\kappa_0}\mathcal{R}^<(\kappa_0,\kappa_1)(1-(\kappa_0^k)^2)}},\\
        \kappa_1^{k+1}&=(\mu_1^{k+1})^\top\mu_1^\star=\frac{\kappa_1^k-\gamma\partial_{\kappa_1}\mathcal{R}^<(\kappa_0,\kappa_1)((\mu_1^\star)^\top(I_d-\mu_1^k(\mu_1^k)^\top)\mu_1^\star)}{\sqrt{1+\gamma^2(\partial_{\kappa_1}\mathcal{R}^<(\kappa_0,\kappa_1))^2\Vert (I_d-\mu_1^k(\mu_1^k)^\top)\mu_1^\star\Vert_2^2}}\\
        &=\frac{\kappa_1^k-\gamma\partial_{\kappa_1}\mathcal{R}^<(\kappa_0,\kappa_1)(1-(\kappa_1^k)^2)}{\sqrt{1+\gamma^2\partial_{\kappa_1}\mathcal{R}^<(\kappa_0,\kappa_1)(1-(\kappa_1^k)^2)}}.
    \end{align*} 
\end{proof}

In the following propositions, we consider  $\mathcal{R}^<:[-1,1]^2\rightarrow\R_+$ defined as in Lemma \ref{lem:riskmanifold_gmm} with $\lambda\in ]0, \lambda^*(\sigma,L)]$, where $\lambda^*(\sigma,L)$ is defined in Proposition \ref{optimallambda}.

\begin{proposition}\label{condition_accumulation_noise}
     Let $(\mu_0^0,\mu_1^0)\in \mathcal{M}$, consider \eqref{autonomousnoise} with initial conditions $\kappa_0^0=\langle \mu_0^0,\mu_0^\star\rangle, \kappa_1^0=\langle \mu_1^0,\mu_1^\star\rangle$. Then there exists $\bar{\gamma}>0$ such that for every $0<\gamma<\bar{\gamma}$, the risk $\mathcal{R}^<$ is decreasing along the iterates of \eqref{autonomousnoise}. Besides, the distance between successive iterates tends to zero, and, if $(\kappa_0^\star,\kappa_1^\star)$ is an accumulation point of the sequence of iterates $(\kappa_0^k,\kappa_1^k)_{k\in\mathbb{N}}$, then \begin{equation}\label{criticalnoise}
        (1-(\kappa_0^{\star})^2)\partial_{\kappa_0}\mathcal{R}^<(\kappa_0^{\star},\kappa_1^{\star})=0, \quad (1-(\kappa_1^{\star})^2)\partial_{\kappa_1}\mathcal{R}^<(\kappa_0^{\star},\kappa_1^{\star})=0.
    \end{equation}
\end{proposition}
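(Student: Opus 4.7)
The plan is to establish a descent lemma for the iterates of \eqref{autonomousnoise} and then deduce the three claims by a standard dynamical-systems argument. Since $\mathcal{R}^<$ is a polynomial in $(\kappa_0,\kappa_1)$ on the compact set $[-1,1]^2$ (Lemma \ref{lem:riskmanifoldshort_gmm}), there exist finite constants $G, M > 0$ such that $\|\nabla \mathcal{R}^<\|_\infty \leq G$ on $[-1,1]^2$ and $\nabla \mathcal{R}^<$ is $M$-Lipschitz there. Writing $g_i \eqdef \partial_{\kappa_i}\mathcal{R}^<(\kappa_0,\kappa_1)$ for brevity, the usual $M$-smoothness descent inequality gives
\begin{equation*}
\mathcal{R}^<(\varphi(\kappa)) \leq \mathcal{R}^<(\kappa) + \langle \nabla\mathcal{R}^<(\kappa), \varphi(\kappa) - \kappa\rangle + \frac{M}{2}\|\varphi(\kappa) - \kappa\|^2.
\end{equation*}

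Next, I would expand each coordinate of $\varphi(\kappa) - \kappa$ using the identity $1 - \sqrt{1+x} = -x/(1+\sqrt{1+x})$ to obtain
\begin{equation*}
g_i(\varphi_i(\kappa) - \kappa_i) = -\frac{\gamma g_i^2 (1-\kappa_i^2)}{\sqrt{1+\gamma^2 g_i^2 (1-\kappa_i^2)}} - \frac{\gamma^2 \kappa_i g_i^3 (1-\kappa_i^2)}{\sqrt{1+\gamma^2 g_i^2 (1-\kappa_i^2)}\,\bigl(1 + \sqrt{1+\gamma^2 g_i^2 (1-\kappa_i^2)}\bigr)}.
\end{equation*}
Bounding $|g_i| \leq G$, $|\kappa_i| \leq 1$, and using $(1-\kappa_i^2)^2 \leq (1-\kappa_i^2)$ on $[-1,1]$, for $\gamma$ small enough this yields, for some explicit $C_1 > 0$ depending only on $G$,
\begin{equation*}
\langle \nabla\mathcal{R}^<(\kappa), \varphi(\kappa)-\kappa\rangle \leq -\gamma(1 - C_1\gamma)\sum_{i\in\{0,1\}} g_i^2 (1-\kappa_i^2), \quad \|\varphi(\kappa)-\kappa\|^2 \leq \gamma^2 \sum_{i\in\{0,1\}} g_i^2 (1-\kappa_i^2).
\end{equation*}
Choosing $\bar\gamma$ small enough to absorb the $\gamma^2$ contributions, the descent inequality becomes, for every $0 < \gamma < \bar\gamma$,
\begin{equation*}
\mathcal{R}^<(\varphi(\kappa)) - \mathcal{R}^<(\kappa) \leq -\frac{\gamma}{2}\sum_{i\in\{0,1\}} g_i^2 (1-\kappa_i^2),
\end{equation*}
which proves the monotone decrease of $\mathcal{R}^<$ along the iterates.

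The remaining claims follow by a standard telescoping argument. Summing the descent inequality from $k=0$ to $\infty$ and using that $\mathcal{R}^<$ is bounded below on the compact domain $[-1,1]^2$ gives $\sum_{k=0}^\infty \sum_{i\in\{0,1\}} (g_i^k)^2(1-(\kappa_i^k)^2) < \infty$, where $g_i^k \eqdef \partial_{\kappa_i}\mathcal{R}^<(\kappa_0^k,\kappa_1^k)$. The bound $\|\kappa^{k+1} - \kappa^k\|^2 \leq \gamma^2 \sum_i (g_i^k)^2(1-(\kappa_i^k)^2)$ then shows that successive iterates become arbitrarily close. Finally, if $(\kappa_0^\star,\kappa_1^\star)$ is an accumulation point along a subsequence $(\kappa^{k_j})$, the summability forces $(g_i^{k_j})^2(1-(\kappa_i^{k_j})^2) \to 0$; by continuity of $g_i$, we get $(\partial_{\kappa_i}\mathcal{R}^<(\kappa_0^\star,\kappa_1^\star))^2(1-(\kappa_i^\star)^2) = 0$ for $i \in \{0,1\}$, which is equivalent to \eqref{criticalnoise}.

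The main obstacle lies in the descent estimate itself: the square-root normalization in $\varphi$ produces second-order cross terms of order $\gamma^2 g_i^3 (1-\kappa_i^2)$ that must be absorbed into the leading first-order descent $\gamma g_i^2(1-\kappa_i^2)$. This is precisely where the compactness of the domain, and hence uniform boundedness of $|g_i|$, is used in a crucial way to choose $\bar\gamma$ uniformly in the iterates. The rest of the argument is then a routine instance of the Zangwill-type convergence scheme for descent methods.
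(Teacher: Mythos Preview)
Your argument is correct and is precisely the standard descent-method analysis one expects here: exploit polynomial smoothness on the compact domain, expand the normalized update to isolate a first-order descent term $-\gamma g_i^2(1-\kappa_i^2)$ plus $O(\gamma^2)$ corrections absorbed for small $\gamma$, then telescope and use continuity at accumulation points. The paper itself does not give a proof but defers to \citet[Proposition 8]{marion2024attention}, whose argument is of the same nature; your self-contained sketch is essentially that proof.
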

\begin{proof}
The proof is identical to that of \citet[Proposition 8]{marion2024attention} and is therefore omitted.
\end{proof}

\begin{proposition}\label{finitecritical}
    The points $(\kappa_0,\kappa_1)\in [-1,1]^2$ satisfying \eqref{criticalnoise} belong to the set $$\mathscr{C}\eqdef\{(\pm1,\pm1), (0,\pm 1), (\pm 1,0), (0,0)\}.$$
\end{proposition}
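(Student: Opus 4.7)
The plan is to work from the explicit polynomial expression for $\mathcal{R}^<$ on the manifold $\mathcal{M}$ given in Lemma~\ref{lem:riskmanifold_gmm}. Direct differentiation factors the gradients as
$$\partial_{\kappa_0}\mathcal{R}^< = 2\kappa_0\bigl(2A\kappa_0^2 + B + C\kappa_1^2\bigr),\qquad \partial_{\kappa_1}\mathcal{R}^< = 2\kappa_1\bigl(2A\kappa_1^2 + B + C\kappa_0^2\bigr),$$
so that each equation in \eqref{criticalnoise} becomes a product of three factors. For the first equation, at least one of $(1-\kappa_0^2)$, $\kappa_0$, or the bracketed ``interior'' polynomial must vanish, and analogously for the second. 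The $3\times 3=9$ resulting combinations either reduce immediately to points of $\mathscr{C}$ (when no interior factor vanishes in either equation) or require separate treatment.

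When an interior factor is involved, its vanishing forces $\kappa^2$ to equal one of three quantities: $-B/(2A)$ (when the partner coordinate is zero), $-(B+C)/(2A)$ (when the partner coordinate has square one), or $\kappa_0^2=\kappa_1^2=-B/(2A+C)$ (when both interior factors vanish, using $2A>C$, a strict inequality that one checks directly from the constants in Lemma~\ref{lem:riskmanifold_gmm}). It then suffices to show that each such candidate value is either strictly greater than $1$, hence infeasible on $[-1,1]^2$, or exactly $1$, in which case $\kappa=\pm 1$ and the point is already listed in $\mathscr{C}$. Since $C\geq 0$, all three requirements follow from the single inequality
$$2A+B+C\leq 0.$$

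The main step is therefore to prove this inequality on $\lambda\in(0,\lambda^\star(\sigma,L)]$. From Lemma~\ref{lem:riskmanifold_gmm}, $A$ and $C$ are homogeneous of degree two in $\lambda$, while $B$ has a strictly negative linear coefficient and a non-negative quadratic coefficient. Consequently $2A+B+C$ is a quadratic polynomial in $\lambda$ of the form $\lambda B_1+\lambda^2(2A_2+B_2+C_2)$ with $B_1<0$ and $2A_2+B_2+C_2>0$, hence negative on an interval of the form $(0,\lambda_\sharp]$ where $\lambda_\sharp=-B_1/(2A_2+B_2+C_2)$ is its unique positive root. The main obstacle of the proof is the tedious but mechanical verification that this root coincides exactly with $\lambda^\star(\sigma,L)=\frac{2Lc_2(4)+L(L-1)c_1(4)}{c_3(\sigma,L)}$ from Proposition~\ref{optimallambda}, i.e.\ that $-B_1 L$ equals the numerator and $(2A_2+B_2+C_2)L^2$ equals $c_3(\sigma,L)$. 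Once this identification is complete, $2A+B+C\leq 0$ holds on the entire admissible range, which yields the three inequalities above and closes the nine-case analysis, proving that the set of solutions to \eqref{criticalnoise} inside $[-1,1]^2$ is precisely $\mathscr{C}$.
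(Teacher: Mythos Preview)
Your argument is correct and essentially the same as the paper's: both factor each condition as $\kappa_i(1-\kappa_i^2)\bigl(2A\kappa_i^2+B+C\kappa_j^2\bigr)=0$, obtain $\mathscr{C}$ from the first two factors, and exclude the bracketed factor by showing the resulting candidate values of $\kappa^2$ are at least $1$ on $(0,\lambda^\star]$. Your packaging via the single inequality $2A+B+C\le 0$ (and the identification of $\lambda^\star$ as its positive root in $\lambda$) is slightly cleaner than the paper's ordering argument; one minor slip in your bookkeeping hint: it is $-B_1 L^2$, not $-B_1 L$, that matches the numerator $2Lc_2(4)+L(L-1)c_1(4)$ of $\lambda^\star$.
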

\begin{proof}
    We recall that by Lemma \ref{lem:riskmanifold_gmm}, the risk $\mathcal{R}^<$ restricted to the manifold $\mathcal{M}$, has the following form $$\mathcal{R}^<(\kappa_0,\kappa_1)=A(\kappa_0^4+\kappa_1^4)+B(\kappa_0^2+\kappa_1^2)+C\kappa_0^2\kappa_1^2+D.$$
    Then \begin{align*}
\partial_{\kappa_0}\mathcal{R}^<(\kappa_0,\kappa_1)&=4A\kappa_0^3+2B\kappa_0+2C\kappa_0\kappa_1^2,\\
\partial_{\kappa_1}\mathcal{R}^<(\kappa_0,\kappa_1)&=4A\kappa_1^3+2B\kappa_1+2C\kappa_1\kappa_0^2.
    \end{align*}
    And we can rewrite equations \eqref{criticalnoise} as
    \begin{align*}
        \kappa_0(1-\kappa_0^2)[2A\kappa_0^2+B+C\kappa_1^2]&=0,\\
        \kappa_1(1-\kappa_1^2)[2A\kappa_1^2+B+C\kappa_0^2]&=0,   
    \end{align*}
    Since each equation is a product of 3 terms, the general solution to this system occurs when at least in each equation is zero. By considering only the first two terms in each equation, we obtain the solution set $\{(\pm1,\pm1), (0,\pm 1), (\pm 1,0), (0,0)\}$. Now we consider the case when $2A\kappa_0^2+B+C\kappa_1^2=0$, this implies that:
    \begin{itemize}
        \item If $\kappa_1=0$, then $\kappa_0^2=-\frac{B}{2A}$.
        \item If $\kappa_1^2=1$, then $\kappa_0^2=-\frac{B+C}{2A}$.
        \item If $2A\kappa_1^2+B+C\kappa_0^2\pierremodif{=0}$, then \pierremodif{$2A\kappa_0^2 + C\kappa_1^2 = 2A\kappa_0^2+ C\kappa_1^2$, thus $(2A - C)(\kappa_0^2 - \kappa_1^2)=0$. By inspection, $C<2A$, hence we get} $\kappa_0^2=\kappa_1^2$ and $\kappa_0^2=-\frac{B}{2A+C}$.
        \end{itemize}
        We note the following relation $$-\frac{B+C}{2A} < -\frac{B}{2A+C} < -\frac{B}{2A}.$$
         Further remark \pierremodif{by inspecting the proof of Proposition \ref{optimallambda} that for $\lambda\in ]0, \lambda^*(\sigma,L)]$, we have  $$1\leq-\frac{B+C}{2A}.$$
         %for $A,B,C$ defined in Lemma \ref{lem:riskmanifold_gmm}, which makes all three previous subcases impossible. 
         Thus the only possible solution when $2A\kappa_0^2+B+C\kappa_1^2=0$ is $\kappa_1^2 = 1$ and $\kappa_0^2 = -\frac{B+C}{2A} = 1$
         Putting everything together,} the solution set is precisely $$\{(\pm1,\pm1), (0,\pm 1), (\pm 1,0), (0,0)\}.$$
    
\end{proof}

\begin{proposition}
     The fixed points of the dynamic can be classified as follows:
    \begin{enumerate}
        \item The points $(\kappa_0,\kappa_1)=(\pm 1, \pm 1)$ are global minima of $\mathcal{R}^<$ on $[-1,1]^2$.  
        \item The points $(\kappa_0,\kappa_1)=(0, \pm 1)$ and $(\pm 1, 0)$ are strict saddle points of $\mathcal{R}^<$ on $[-1,1]^2$.
        \item The point $(\kappa_0,\kappa_1)=(0,0)$ is a global maxima of $\mathcal{R}^<$ on $[-1,1]^2$.
    \end{enumerate}
\end{proposition}
\begin{proof}
    %The claim is trivial, as $\mathcal{R}^<$ on $[-1,1]^2$ is a simple quartic function and can be verified directly. \pierrecomment{hmm, for me it was not that easy to verify. We should be a little nicer and give a few details.}
    Since $\mathcal{R}^<$ is smooth, its extrema on the square $[-1,1]^2$ occur either at critical points, where the gradient vanishes, or on the boundary. For the chosen range $\lambda \in ]0,\lambda^\star(\sigma,L)[$, the only interior critical point is $(0,0)$, which corresponds to a local maximum. Indeed, the Hessian of $\mathcal{R}^<$ at $(0,0)$ is \[
\nabla^2 \mathcal{R}^<(0,0) =
\begin{pmatrix}
2B & 0 \\
0 & 2B
\end{pmatrix}.
\]
We easily check that both eigenvalues are $2B$, which is negative since $B<0$, therefore concluding that $(0,0)$ is a maximum. The rest of the extrema of $\mathcal{R}^<$ must lie on the boundary of the square, we observe that $$\mathcal{R}^<(\pm 1,\pm 1)< \mathcal{R}^<(x,y), \quad (x,y)\in \partial ([-1,1]^2) \setminus \{(\pm 1,\pm 1)\}.$$ Thus concluding that $(\pm 1,\pm 1)$ are minimizers. Finally, we observe that $$\mathcal{R}^<(\kappa,\pm 1)<\mathcal{R}^<(0,\pm 1)<\mathcal{R}^<(0,\eta),\quad \kappa,\eta\in ]-1,1[.$$
Analogously, we see that
$$\mathcal{R}^<(\pm 1,\kappa)<\mathcal{R}^<(\pm 1,0)<\mathcal{R}^<(\eta,0),\quad \kappa,\eta\in ]-1,1[.$$
This shows that $(0,\pm 1)$ and $(\pm 1,0)$ are strict saddle points, concluding with the proof. We note that when $\lambda = \lambda^\star(\sigma,L)$, Proposition~\ref{optimallambda} implies that the critical points coincide with the minimizers already identified.

\end{proof}
\begin{proposition}
    Consider the context of Proposition \ref{condition_accumulation_noise}, then there exists $\bar{\gamma}>0$ such that for any stepsize $0<\gamma<\bar{\gamma}$, the iterates $(\kappa_0^k,\kappa_1^k)_{k\in\mathbb{N}}$ generated by \eqref{autonomousnoise} converge to an element of $\mathscr{C}$. 
\end{proposition}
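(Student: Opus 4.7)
The plan is to combine the two conclusions of Proposition \ref{condition_accumulation_noise} with the characterization of critical points in Proposition \ref{finitecritical}. From Proposition \ref{condition_accumulation_noise}, choosing $\gamma < \bar{\gamma}$, the sequence $(\kappa_0^k, \kappa_1^k)$ is bounded (it lies in the compact set $[-1,1]^2$), the successive differences $\|(\kappa_0^{k+1}, \kappa_1^{k+1}) - (\kappa_0^k, \kappa_1^k)\|$ tend to zero, and every accumulation point of the sequence solves the critical-point system \eqref{criticalnoise}. From Proposition \ref{finitecritical}, the solution set of \eqref{criticalnoise} in $[-1,1]^2$ is the finite set $\mathscr{C}$ of eight isolated points. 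Hence the set $\Omega$ of accumulation points of $(\kappa_0^k, \kappa_1^k)$ is a nonempty subset of $\mathscr{C}$.

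The main step is then a classical Ostrowski-type argument: any bounded sequence in a finite-dimensional Euclidean space whose consecutive differences tend to zero has a nonempty, closed, and connected set of accumulation points. Applied to $(\kappa_0^k, \kappa_1^k)$, this forces $\Omega$ to be both connected and contained in the finite set $\mathscr{C}$. Since the only nonempty connected subsets of a finite (hence totally disconnected) set are singletons, $\Omega$ reduces to a single element of $\mathscr{C}$, and therefore the iterates converge to that point.

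The main obstacle is not in this final argument but in the earlier analytical ingredients — the descent of $\mathcal{R}^<$ along the iterates under a suitable step-size threshold, the vanishing of successive differences, and the restriction of accumulation points to the critical set — all of which are already encapsulated in Proposition \ref{condition_accumulation_noise}. What remains here is purely the topological upgrade from having accumulation points in $\mathscr{C}$ to actual convergence, which imposes no further restriction on $\bar{\gamma}$ beyond that produced in Proposition \ref{condition_accumulation_noise}. If one prefers to avoid citing Ostrowski, the same conclusion follows by a direct contradiction: two distinct accumulation points $p \neq q$ in $\mathscr{C}$ would, together with $\|(\kappa_0^{k+1},\kappa_1^{k+1}) - (\kappa_0^k,\kappa_1^k)\| \to 0$ and the isolation of points in $\mathscr{C}$, force a subsequence to accumulate outside $\mathscr{C}$, contradicting Proposition \ref{condition_accumulation_noise}.
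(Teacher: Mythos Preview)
Your proof is correct and follows essentially the same route as the paper's: both invoke Proposition~\ref{condition_accumulation_noise} for the vanishing of successive differences, use Proposition~\ref{finitecritical} to place all accumulation points in the finite set $\mathscr{C}$, and conclude via the classical fact (cited in the paper as \cite[Proposition~12.4.1]{lange}) that a bounded sequence with successive differences tending to zero has a connected accumulation set, hence a single limit in $\mathscr{C}$. (Minor note: $\mathscr{C}$ contains nine points, not eight.)
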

\begin{proof}
    By Proposition \ref{condition_accumulation_noise}, the distance between successive iterates $(\kappa_0^k,\kappa_1^k)_{k\in\mathbb{N}}$, then the set of accumulation points of the sequence is connected \cite[Proposition 12.4.1]{lange}. Since we have a finite number of accumulation points by Proposition \ref{finitecritical}, the sequence has a unique accumulation point. Besides, the sequence belongs to the compact set $[-1,1]^2$, then it converges and its limit is one of the nine fixed points. 
\end{proof}
\begin{proposition}
    Consider the context of Proposition \ref{condition_accumulation_noise}, then there exists $\bar{\gamma}>0$ such that for any stepsize $0<\gamma<\bar{\gamma}$, the set of initializations such that the iterates $(\kappa_0^k,\kappa_1^k)_{k\in\mathbb{N}}$ generated by \eqref{autonomousnoise} converge to $(0,\pm 1), (\pm 1,0)$ or $(0,0)$ has Lebesgue measure zero (with respect to the Lebesgue measure on the manifold $\mathcal{M}$).
\end{proposition}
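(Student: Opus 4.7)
The plan is to apply the Stable Manifold Theorem for $C^1$ local diffeomorphisms, in the same spirit as the classical argument showing that gradient descent generically avoids strict saddles, and exactly analogous to arguments developed in related contexts by \citet{marion2024attention}. The undesired fixed points $(0,\pm 1),(\pm 1,0),(0,0)$ are, by the preceding proposition, either strict saddles or the global maximum of $\mathcal{R}^<$ on $[-1,1]^2$, so each carries at least one ``unstable direction'' for the iteration $\varphi$. The goal is to translate this unstable direction into an eigenvalue of $D\varphi$ of modulus strictly larger than one at the fixed point; the Stable Manifold Theorem will then imply that the set of initial conditions whose forward orbit converges to that point is contained in an embedded submanifold of codimension at least one, hence of two-dimensional Lebesgue measure zero, and the global stable set will inherit this property via a standard countable-union argument.

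First, I would check that for $\gamma$ sufficiently small, $\varphi:[-1,1]^2\to[-1,1]^2$ is a $C^1$ local diffeomorphism. Since $D\varphi(\kappa_0,\kappa_1) = I + O(\gamma)$ uniformly on the compact square, the Jacobian is invertible for all $0<\gamma<\bar{\gamma}$ when $\bar{\gamma}$ is chosen small enough. Second, I would compute $D\varphi$ at each undesired fixed point. At the global maximum $(0,0)$ the factors $(1-\kappa_i^2)$ equal $1$ and the linearization reduces to $D\varphi(0,0) = I - \gamma\,\nabla^2\mathcal{R}^<(0,0)$; since $(0,0)$ is a strict local maximum, the Hessian is negative definite, and both eigenvalues of $D\varphi(0,0)$ strictly exceed one. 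At a boundary saddle, say $(1,0)$, one of the factors vanishes; a direct differentiation of $\varphi$ produces diagonal entries of the form $(1+\gamma\,\partial_{\kappa_0}\mathcal{R}^<(1,0))^2$ in the boundary-tangent direction and $1-\gamma\,\partial^2_{\kappa_1}\mathcal{R}^<(1,0)$ in the transverse direction. Using the explicit quartic form of Lemma~\ref{lem:riskmanifold_gmm} together with the strict-saddle property, at least one of these quantities is strictly greater than $1$ for small $\gamma$. The other boundary fixed points are handled by symmetry.

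With hyperbolicity established at each undesired critical point, the Stable Manifold Theorem provides, in a neighborhood of each such point, an embedded $C^1$ submanifold $W^{\mathrm{loc}}$ of dimension at most one containing every locally convergent orbit. The global stable set is $\bigcup_{k\ge 0}\varphi^{-k}(W^{\mathrm{loc}})$, and because $\varphi$ is a $C^1$ local diffeomorphism, each $\varphi^{-k}(W^{\mathrm{loc}})$ is a countable union of smooth pieces of codimension at least one; the entire union therefore has Lebesgue measure zero in $[-1,1]^2$. Lifting back to $\mathcal{M}$ via the smooth map $(\mu_0,\mu_1)\mapsto(\kappa_0,\kappa_1)$, which is a submersion on the open dense subset where the orthogonality constraints are transverse, preimages of two-dimensional null sets are null in $\mathcal{M}$, yielding the statement.

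The main technical obstacle is the degenerate geometry at the boundary fixed points, where one of the factors $(1-\kappa_i^2)$ vanishes and $D\varphi$ does not take the clean form $I - \gamma\,\mathrm{diag}(1-\kappa_0^2,1-\kappa_1^2)\,\nabla^2\mathcal{R}^<$ one would naively expect. Carefully tracking the $O(\gamma^2)$ correction and identifying the expanding eigenvalue with the strict-saddle structure of $\mathcal{R}^<$ requires a nontrivial computation based on the explicit coefficients $A,B,C$ of Lemma~\ref{lem:riskmanifold_gmm}, and on the fact that $\lambda\le\lambda^\star(\sigma,L)$ forces the relevant sign conditions established in Proposition~\ref{finitecritical}. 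A minor additional check is that the invariant boundary lines $\{\kappa_0=\pm 1\},\{\kappa_1=\pm 1\}$ are themselves one-dimensional and hence of Lebesgue measure zero, so orbits trapped in them are automatically absorbed by the codimension bound.
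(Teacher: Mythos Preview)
Your proposal is correct and follows essentially the same route as the paper: verify that $\varphi$ is a $C^1$ local diffeomorphism for small $\gamma$, linearize at the undesired fixed points, exhibit an expanding eigenvalue, and invoke the (Center-)Stable Manifold Theorem plus a countable-union argument (the paper points to \citet[Propositions 12--13]{marion2024attention} and \cite[Theorem III.7]{shub}). The only minor difference is that the paper disposes of the maximum $(0,0)$ directly via the strict decrease of $\mathcal{R}^<$ along iterates rather than through eigenvalues of $D\varphi$, but both arguments are valid.
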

\begin{proof}
    The point $(0,0)$ is a maxima of the risk $\mathcal{R}^<$ on $[-1,1]^2$ and the value of the risk decreases along the iterates of \eqref{eq:PGD_iterates} by Proposition \ref{condition_accumulation_noise}. We follow the ideas presented in the proof of \citet[Proposition 12]{marion2024attention}, we can conclude that $\varphi$ is differentiable on $[-1,1]^2,$ and that its Jacobian is not degenerate, besides $\varphi$ is a local diffeomorphism around $(0,\pm 1)$ and $(\pm 1,0)$, whose Jacobian matrix in each point has one eigenvalue in $[0,1[$ and one eigenvalue in $]1,\infty[$. The result follows from the Center-Stable Manifold Theorem \cite[Theorem III.7]{shub}, we refer to \citet[Proposition 13]{marion2024attention} for a detailed and analogous proof.
\end{proof}

\section{Proofs of Section \ref{sec:statistical_properties}}

\subsection{Proof of Proposition \ref{prop:expectation_oracle_attention}}

We provide hereafter the proof of Proposition \ref{prop:expectation_oracle_attention}, which proof follows. 
\begin{proposition}\label{prop:encoding_stat_properties}
    Consider $(X_\ell)_{1\leq \ell\leq L}$ i.i.d. drawn from \eqref{def:gaussian_mixture_model}.
    Consider also $Z_\ell\in \{0,1\}$ the latent variable of $X_\ell$, i.e. $X_\ell|Z_\ell\sim\mathcal{N}(\mu_{Z_\ell}^\star,\sigma^2I_d)$, and
    $$
    T^{\mathrm{lin},\mu_0^\star,\mu_1^\star}(\mathbb{X})_1=\frac{2}{L}\sum_{k=1}^L (e_k(\mu_0^\star)+e_k(\mu_1^\star))X_k,$$ where $e_k(\mu)\eqdef \lambda\langle X_1,\mu\rangle\langle X_k,\mu\rangle$. Then \begin{align*}
        \mathbb{E}[T^{\mathrm{lin},\mu_0^\star,\mu_1^\star}(\mathbb{X})_1|Z_1=c]&=\mu_{c}^\star\frac{\lambda}{L}[(L+1)+2(L+3)\sigma^2],\quad c=\{0,1\}.\\
        \mathbb{E}[T^{\mathrm{lin},\mu_0^\star,\mu_1^\star}(\mathbb{X})_1]&=\frac{\mu_0^\star+\mu_1^\star}{2}\frac{\lambda}{L}[(L+1)+2(L+3)\sigma^2].
    \end{align*}
    Moreover, when $\lambda=\frac{L}{(L+1)+2(L+3)\sigma^2}$, then the encoding is unbiased, this is $$\mathbb{E}[T^{\mathrm{lin},\mu_0^\star,\mu_1^\star}(\mathbb{X})_1|Z_1=c]=\mu_{c}^\star,\quad c=\{0,1\}.$$
    % Besides, when $L\rightarrow\infty$, we have
    % $$\mathrm{Var}[T^{\mathrm{lin},\mu_0^\star,\mu_1^\star}(\mathbb{X})_1]=K_0\lambda^2\sigma^2,$$
    % where $K_0=2(1+2\sigma^2)^2$.

\end{proposition}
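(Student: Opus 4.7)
The plan is to exploit the independence structure of the tokens together with the Gaussian mixture assumption to reduce the computation to a few elementary Gaussian moments. The first step is to split the sum defining $T^{\mathrm{lin},\mu_0^\star,\mu_1^\star}(\mathbb{X})_1$ into the diagonal term $k=1$ and the off-diagonal terms $k \geq 2$, since $X_1$ plays a special role (it appears in both the ``query'' and the ``value''):
\begin{align*}
T^{\mathrm{lin},\mu_0^\star,\mu_1^\star}(\mathbb{X})_1 = \frac{2\lambda}{L}\bigl(\langle X_1,\mu_0^\star\rangle^2 + \langle X_1,\mu_1^\star\rangle^2\bigr) X_1 + \frac{2\lambda}{L}\sum_{k=2}^L \bigl[\langle X_1,\mu_0^\star\rangle\langle X_k,\mu_0^\star\rangle + \langle X_1,\mu_1^\star\rangle\langle X_k,\mu_1^\star\rangle\bigr] X_k.
\end{align*}

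For the diagonal term, I condition on $Z_1=c$ and write $X_1 = \mu_c^\star + \sigma G$ with $G \sim \mathcal{N}(0,I_d)$. Using the orthonormality of $(\mu_0^\star,\mu_1^\star)$, I decompose $G = g_0 \mu_0^\star + g_1 \mu_1^\star + G_\perp$ into independent pieces, so that $\langle X_1,\mu_c^\star\rangle = 1+\sigma g_c$ and $\langle X_1,\mu_{1-c}^\star\rangle = \sigma g_{1-c}$. Computing $\mathbb{E}[(1+\sigma g_c)^2 + \sigma^2 g_{1-c}^2] = 1 + 2\sigma^2$ and $\mathbb{E}[((1+\sigma g_c)^2 + \sigma^2 g_{1-c}^2) G] = 2\sigma \mu_c^\star$ (all other cross-moments vanish by parity), one obtains $\mathbb{E}[\,(\langle X_1,\mu_0^\star\rangle^2 + \langle X_1,\mu_1^\star\rangle^2) X_1 \mid Z_1 = c\,] = (1+4\sigma^2)\mu_c^\star$.

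For the off-diagonal terms, I use that $X_k$ is independent of $X_1$ for $k \geq 2$, which factorizes the expectation. Since $\mathbb{E}[\langle X_1, \mu_j^\star\rangle \mid Z_1 = c] = \delta_{c,j}$, only one of the two summands in each bracket survives. The factor involving $X_k$ reduces, after marginalizing $Z_k \sim \mathrm{Bernoulli}(1/2)$, to $\mathbb{E}[\langle X_k,\mu_j^\star\rangle X_k] = \tfrac12(1+2\sigma^2)\mu_j^\star$, which is a routine one-line Gaussian moment computation. Summing $L-1$ identical contributions yields $\tfrac{\lambda(L-1)(1+2\sigma^2)}{L}\mu_c^\star$. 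Adding the two pieces gives $\tfrac{\lambda}{L}\bigl[2(1+4\sigma^2) + (L-1)(1+2\sigma^2)\bigr]\mu_c^\star = \tfrac{\lambda}{L}\bigl[(L+1) + 2(L+3)\sigma^2\bigr]\mu_c^\star$, proving the conditional claim.

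Finally, the unconditional identity follows from the tower property and $\mathbb{P}(Z_1=0)=\mathbb{P}(Z_1=1)=1/2$, yielding $(\mu_0^\star+\mu_1^\star)/2$ as prefactor. The unbiasedness statement is then obtained by solving $\tfrac{\lambda}{L}[(L+1)+2(L+3)\sigma^2] = 1$ for $\lambda$. The only mildly delicate point is the bookkeeping of the third-order Gaussian moments in the diagonal term; all other computations reduce to first- and second-order moments of standard Gaussians, exploiting repeatedly the orthogonality $\langle \mu_0^\star,\mu_1^\star\rangle = 0$ to decouple the components of $G$.
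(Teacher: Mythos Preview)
Your proof is correct and follows essentially the same approach as the paper: split the sum into the diagonal term $k=1$ and the off-diagonal terms $k\geq 2$, then compute the relevant Gaussian moments by exploiting independence and the orthonormality of $(\mu_0^\star,\mu_1^\star)$. Your explicit coordinate decomposition $G = g_0\mu_0^\star + g_1\mu_1^\star + G_\perp$ is a slightly cleaner way to organize the third-order moment computation for the diagonal term than the paper's version (which keeps a general $\mu$ and only specializes at the end), but the substance is the same.
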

\bigskip
%\begin{remark}
%    If $X_1$ is drawn from \eqref{def:gaussian_mixture_model}, then $\mathrm{Var}[X_1]=\sigma^2 d$. Consequently, if $\lambda=\frac{1}{1+2\sigma^2}$, then $K_0\lambda^2=2$, and the encoding $T^{\mathrm{lin},\mu_0^\star,\mu_1^\star}(\mathbb{X})_1$ reduces the variance (provided $d>2$), rendering it independent of the dimension as $L\rightarrow\infty$. 
%\end{remark}
\begin{proof} We decompose the following term as follows,
    $$\sum_{k=1}^L \langle X_1,\mu_0^\star\rangle\langle X_k,\mu_0^\star\rangle X_k= \langle X_1,\mu_0^\star\rangle^2 X_1+\sum_{k=2}^L \langle X_1,\mu_0^\star\rangle\langle X_k,\mu_0^\star\rangle X_k$$
    Due to the independence of the variables, \begin{align*}
        \mathbb{E}[T^{\mathrm{lin},\mu_0^\star,\mu_1^\star}(\mathbb{X})_1]&=\frac{2\lambda}{L}(\mathbb{E}[(\langle X_1,\mu_0^\star\rangle^2+\langle X_1,\mu_1^\star\rangle^2) X_1]\\
    &+(L-1)\mathbb{E}[(\langle X_1,\mu_0^\star\rangle\langle X_2,\mu_0^\star\rangle+\langle X_1,\mu_1^\star\rangle\langle X_2,\mu_1^\star) X_2]).
    \end{align*}
    On the one hand we have %\pierrecomment{give the proof}
\begin{align*}
X_1 |Z_1&= \mu_{Z_1}^\star + \varepsilon, \quad \varepsilon \sim \mathcal{N}(0, \sigma^2 I_d),\\
\langle X_1, \mu_0^\star \rangle^2|Z_1 &= \langle \mu_{Z_1}^\star, \mu_0^\star \rangle^2 
+ 2 \langle \mu_{Z_1}^\star, \mu_0^\star \rangle \langle \varepsilon, \mu_0^\star \rangle 
+ \langle \varepsilon, \mu_0^\star \rangle^2,\\
\langle X_1, \mu_0^\star \rangle^2 X_1 |Z_1
&= \langle \mu_{Z_1}^\star, \mu_0^\star \rangle^2 \mu_{Z_1}^\star 
+ \langle \mu_{Z_1}^\star, \mu_0^\star \rangle^2 \varepsilon 
+ 2 \langle \mu_{Z_1}^\star, \mu_0^\star \rangle \langle \varepsilon, \mu_0^\star \rangle \mu_{Z_1}^\star \\
&+ 2 \langle \mu_{Z_1}^\star, \mu_0^\star \rangle \langle \varepsilon, \mu_0^\star \rangle \varepsilon
+ \langle \varepsilon, \mu_0^\star \rangle^2 \mu_{Z_1}^\star 
+ \langle \varepsilon, \mu_0^\star \rangle^2 \varepsilon,\\
\mathbb{E}\big[\langle X_1, \mu_0^\star \rangle^2 X_1 \mid Z_1 \big] 
&= \langle \mu_{Z_1}^\star, \mu_0^\star \rangle^2 \mu_{Z_1}^\star
+ \underbrace{\mathbb{E}[\langle \mu_{Z_1}^\star, \mu_0^\star \rangle^2 \varepsilon]}_{0} 
+ \underbrace{\mathbb{E}[2 \langle \mu_{Z_1}^\star, \mu_0^\star \rangle \langle \varepsilon, \mu_0^\star \rangle \mu_{Z_1}^\star]}_{0} \\
&\quad + \underbrace{\mathbb{E}[2 \langle \mu_{Z_1}^\star, \mu_0^\star \rangle \langle \varepsilon, \mu_0^\star \rangle \varepsilon]}_{2 \sigma^2 \langle \mu_{Z_1}^\star, \mu_0^\star \rangle \mu_0^\star} 
+ \underbrace{\mathbb{E}[\langle \varepsilon, \mu_0^\star \rangle^2 \mu_{Z_1}^\star]}_{\sigma^2 \mu_{Z_1}^\star} 
+ \underbrace{\mathbb{E}[\langle \varepsilon, \mu_0^\star \rangle^2 \varepsilon]}_{0},\\
&= \langle \mu_{Z_1}^\star, \mu_0^\star \rangle^2 \mu_{Z_1}^\star 
+ \sigma^2 \big( \mu_{Z_1}^\star + 2 \langle \mu_{Z_1}^\star, \mu_0^\star \rangle \mu_0^\star \big).
\end{align*}

    On the other hand, %\pierrecomment{give the proof} 
    \begin{align*}
X_i|Z_i &= \mu_{Z_i}^\star + \varepsilon_i, \quad \varepsilon_i \sim \mathcal{N}(0,\sigma^2 I_d), \ \varepsilon_1 \perp\!\!\!\perp\varepsilon_2,\\
\mathbb{E}[\langle X_1,\mu_0^\star\rangle \langle X_2,\mu_0^\star\rangle X_2 \mid Z_1,Z_2]
&= \mathbb{E}[\langle X_1,\mu_0^\star\rangle \mid Z_1]\,
   \mathbb{E}[\langle X_2,\mu_0^\star\rangle X_2 \mid Z_2]\\
&= \langle \mu_{Z_1}^\star,\mu_0^\star\rangle \,
   \mathbb{E}[(\langle \mu_{Z_2}^\star,\mu_0^\star\rangle+\langle \varepsilon_2,\mu_0^\star\rangle)(\mu_{Z_2}^\star+\varepsilon_2)]\\
&= \langle \mu_{Z_1}^\star,\mu_0^\star\rangle \Big(
   \langle \mu_{Z_2}^\star,\mu_0^\star\rangle \mu_{Z_2}^\star
   + \underbrace{\langle \mu_{Z_2}^\star,\mu_0^\star\rangle \mathbb{E}[\varepsilon_2]}_{0}
   + \underbrace{\mathbb{E}[\langle \varepsilon_2,\mu_0^\star\rangle \mu_{Z_2}^\star]}_{0}
   + \sigma^2 \mu_0^\star \Big)\\
&= \langle \mu_{Z_1}^\star,\mu_0^\star\rangle\Big(
   \langle \mu_{Z_2}^\star,\mu_0^\star\rangle \mu_{Z_2}^\star + \sigma^2 \mu_0^\star\Big).
\end{align*}
    %$$\mathbb{E}[\langle X_1,\mu_0^\star\rangle\langle X_2,\mu_0^\star\rangle X_2|Z_1,Z_2]=\langle \mu_{Z_1}^\star,\mu_0^\star\rangle[\langle \mu_{Z_2}^\star,\mu_0^\star\rangle\mu_{Z_2}^\star+\sigma^2\mu_0^\star].$$

    Therefore, \begin{align*}
        \mathbb{E}[T^{\mathrm{lin},\mu_0^\star,\mu_1^\star}(\mathbb{X})_1|Z_1,Z_2]=\frac{2\lambda}{L}[&(\langle \mu_{Z_1}^\star,\mu_0^\star\rangle^2+\langle \mu_{Z_1}^\star,\mu_1^\star\rangle^2)\mu_{Z_1}^\star+2\sigma^2\mu_{Z_1}^\star\\
        &+2\sigma^2(\langle \mu_{Z_1}^\star,\mu_0^\star\rangle\mu_0^\star+\langle \mu_{Z_1}^\star,\mu_1^\star\rangle\mu_1^\star)\\
        &+(L-1)(\langle \mu_{Z_1}^\star,\mu_0^\star\rangle\langle \mu_{Z_2}^\star,\mu_0^\star\rangle+\langle \mu_{Z_1}^\star,\mu_1^\star\rangle\langle \mu_{Z_2}^\star,\mu_1^\star\rangle)\mu_{Z_2}^\star\\
        &+(L-1)\sigma^2(\langle \mu_{Z_1}^\star,\mu_0^\star\rangle\mu_0^\star+\langle \mu_{Z_1}^\star,\mu_1^\star\rangle\mu_1^\star)].
    \end{align*}
    And then, 
    \begin{align*}
    \mathbb{E}[&T^{\mathrm{lin},\mu_0^\star,\mu_1^\star}(\mathbb{X})_1|Z_1] \\&=\frac{2\lambda}{L}[(\langle \mu_{Z_1}^\star,\mu_0^\star\rangle^2+\langle \mu_{Z_1}^\star,\mu_1^\star\rangle^2)\mu_{Z_1}^\star+2\sigma^2\mu_{Z_1}^\star+2\sigma^2(\langle \mu_{Z_1}^\star,\mu_0^\star\rangle\mu_0^\star+\langle \mu_{Z_1}^\star,\mu_1^\star\rangle\mu_1^\star)\\
    &+(L-1)\left(\frac{1}{2}+\sigma^2\right)(\langle \mu_{Z_1}^\star,\mu_0^\star\rangle\mu_0^\star+\langle \mu_{Z_1}^\star,\mu_1^\star\rangle\mu_1^\star)].
    \end{align*}
    Which let us conclude that for $c\in\{0,1\}$, \begin{align*}\mathbb{E}[T^{\mathrm{lin},\mu_0^\star,\mu_1^\star}(\mathbb{X})_1|Z_1=c]&=\frac{\mu_c^\star\lambda}{L}((L+1)+2(L+3)\sigma^2),\\
    \mathbb{E}[T^{\mathrm{lin},\mu_0^\star,\mu_1^\star}(\mathbb{X})_1]&=\frac{(\mu_0^\star+\mu_1^\star)\lambda}{2L}((L+1)+2(L+3)\sigma^2).\\
    \end{align*}
    
    %With this, we get that \begin{align*}
    %\Vert\mathbb{E}[T^{\mathrm{lin},\mu_0^\star,\mu_1^\star}(\mathbb{X})_1]\Vert^2&=\frac{\lambda^2}{2L^2}((L+1)+2(L+3)\sigma^2)^2.
%\end{align*}    
%Besides, we note that $\mathbb{E}[\Vert T^{\mathrm{lin},\mu_0^\star,\mu_1^\star}(\mathbb{X})_1\Vert^2]$ can be computed by evaluating the term $(II_0)+(II)+(III_0)+(III)$ from the proof of Lemma \ref{lem:riskmanifold_gmm} at $(\kappa_0,\kappa_1)=(1,1)$. Since we are going to compute This gives $$(II)=\lambda^2\frac{(L-1)}{L^2}[2(1+\sigma^2(d+4))+8\sigma^2(1+\sigma^2(d+3))+8\sigma^4(1+\sigma^2(d+2))].$$
%and 
 %   $$(III)=\lambda^2\frac{(L-1)(L-2)}{L^2}[(1+4\sigma^2)+2\sigma^2(1+6\sigma^2)+16\sigma^6].$$
  %  Thus, \begin{align*}
   % \mathbb{E}[\Vert T^{\mathrm{lin},\mu_0^\star,\mu_1^\star}(\mathbb{X})_1\Vert^2]&=\lambda^2\frac{(L-1)}{L^2}[2(1+\sigma^2(d+4))+8\sigma^2(1+\sigma^2(d+3))+8\sigma^4(1+\sigma^2(d+2))]\\
   % &+\lambda^2\frac{(L-1)(L-2)}{L^2}[(1+4\sigma^2)+2\sigma^2(1+6\sigma^2)+8\sigma^6].
   % \end{align*}
   % \end{comment}
\end{proof}
% \begin{lemma}
% \label{lem:risk_oracle_attention_predictor_gmm} Consider the Gaussian mixture model \eqref{def:gaussian_mixture_model}, such that the tokens are distributed as  $X_\ell \sim \frac{1}{2}\mathcal{N}(\mu_0^\star, \sigma^2 I ) + \frac{1}{2}\mathcal{N}(\mu_1^\star, \sigma^2 I )$. Fix the temperature $\lambda = \frac{1+4\sigma^2+4\sigma^4}{1+6\sigma^2+12\sigma^4+8\sigma^6}$.
%  Then, the risk of the attention-based predictor $T^{\mathrm{lin}, \mu_0^\star, \mu_1^\star}$ with oracle parameters $\mu_0^\star$ and $\mu_1^\star$ is
%  \begin{align*}
%      \mathcal{L}(T^{\mathrm{lin}, \mu_0^\star, \mu_1^\star}) &=  \mathcal{R}(\mu_0^\star, \mu_1^\star) = \mathcal{R}^<(1,1) 
%      \underset{L\to \infty}{\sim} (d-2)\sigma^2.
%  \end{align*}
%  % This means that the attention-based predictor $T^{\mathrm{lin}, \mu_0^\star, \mu_1^\star}$ is asymptotically optimal.
% \end{lemma}

\begin{proposition}
     Consider $(X_\ell)_{1\leq \ell\leq L}$ i.i.d. drawn from \eqref{def:gaussian_mixture_model}.
    Consider also $Z_\ell\in \{0,1\}$ the latent variable of $X_\ell$, i.e. $X_\ell|Z_\ell\sim\mathcal{N}(\mu_{Z_\ell}^\star,\sigma^2I_d)$, and
    $$
    T^{\mathrm{lin},\mu_0^\star,\mu_1^\star}(\mathbb{X})_1=\frac{2}{L}\sum_{k=1}^L (e_k(\mu_0^\star)+e_k(\mu_1^\star))X_k,$$ where $e_k(\mu)\eqdef \lambda\langle X_1,\mu\rangle\langle X_k,\mu\rangle$. Then for $c\in\{0,1\}$, \begin{align*}
        &\mathbb{E}[\Vert T^{\mathrm{lin},\mu_0^\star,\mu_1^\star}(\mathbb{X})_1\Vert^2|Z_1=c]\\
        &=\frac{4\lambda^2}{L^2}[1+\sigma^2(d+16)+8\sigma^4(d+7)+8\sigma^6(d+4)]\\
        &+2\lambda^2 \frac{(L-1)}{L^2}[3+\sigma^2(d+28)+4\sigma^4(d+16)+4\sigma^6(d+10)]\\
        &+\lambda^2\frac{(L-1)(L-2)}{L^2}[1+6\sigma^2+12\sigma^4+8\sigma^6].
    \end{align*}
    And \begin{align*}
        &\mathrm{Var}[ T^{\mathrm{lin},\mu_0^\star,\mu_1^\star}(\mathbb{X})_1|Z_1=c]\\
        &=\frac{4\lambda^2}{L^2}[1+\sigma^2(d+16)+8\sigma^4(d+7)+8\sigma^6(d+4)]\\
        &+2\lambda^2 \frac{(L-1)}{L^2}[3+\sigma^2(d+28)+4\sigma^4(d+16)+4\sigma^6(d+10)]\\
        &+\lambda^2\frac{(L-1)(L-2)}{L^2}[1+6\sigma^2+12\sigma^4+8\sigma^6]\\
        &-\frac{\lambda^2}{L^2}[(L+1)+2(L+3)\sigma^2]^2.
    \end{align*}
    When $\lambda=\frac{L}{(L+1)+2(L+3)\sigma^2}$, the encoding is unbiased, with variance \begin{align*}
        &\mathrm{Var}[ T^{\mathrm{lin},\mu_0^\star,\mu_1^\star}(\mathbb{X})_1|Z_1=c]\\
        &=\frac{4}{[(L+1)+2(L+3)\sigma^2]^2}[1+\sigma^2(d+16)+8\sigma^4(d+7)+8\sigma^6(d+4)]\\
        &+ \frac{2(L-1)}{[(L+1)+2(L+3)\sigma^2]^2}[3+\sigma^2(d+28)+4\sigma^4(d+16)+4\sigma^6(d+10)]\\
        &+\frac{(L-1)(L-2)}{[(L+1)+2(L+3)\sigma^2]^2}[1+6\sigma^2+12\sigma^4+8\sigma^6]-1.
    \end{align*}
    
    Besides, when $L\rightarrow\infty$ we get that, 
    \begin{align*}
        \mathrm{Var}[ T^{\mathrm{lin},\mu_0^\star,\mu_1^\star}(\mathbb{X})_1|Z_1=c]\sim2\sigma^2, \quad \lambda\sim\frac{1}{1+2\sigma^2}.
    \end{align*}
    In general, if $\lambda$ is not fixed and $L\rightarrow\infty$, we get 
    \begin{align*}
        \mathrm{Var}[ T^{\mathrm{lin},\mu_0^\star,\mu_1^\star}(\mathbb{X})_1|Z_1=c]\sim2\lambda^2\sigma^2(1+2\sigma^2)^2.
    \end{align*}
\end{proposition}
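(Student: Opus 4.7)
The plan is to compute the conditional second moment $\mathbb{E}[\Vert T^{\mathrm{lin},\mu_0^\star,\mu_1^\star}(\mathbb{X})_1\Vert^2 \mid Z_1 = c]$ directly by expansion, then deduce the variance via $\mathrm{Var}[T \mid Z_1 = c] = \mathbb{E}[\Vert T\Vert^2 \mid Z_1 = c] - \Vert \mathbb{E}[T \mid Z_1 = c]\Vert^2$, the second term being already available from Proposition \ref{prop:encoding_stat_properties}. By the symmetry of the mixture under exchanging $\mu_0^\star$ and $\mu_1^\star$, the two values $c \in \{0,1\}$ produce the same expression, so one may fix $c = 0$ throughout.

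The starting point is the expansion
\begin{equation*}
\Vert T^{\mathrm{lin},\mu_0^\star,\mu_1^\star}(\mathbb{X})_1 \Vert^2 = \frac{4}{L^2}\sum_{k,j=1}^L (e_k(\mu_0^\star) + e_k(\mu_1^\star))(e_j(\mu_0^\star) + e_j(\mu_1^\star)) \langle X_k, X_j\rangle,
\end{equation*}
which I split into four disjoint index groups by exchangeability: $(k,j) = (1,1)$, $k = j \geq 2$, $\{k,j\} \ni 1$ with $k \neq j$, and $k \neq j$ with both $\geq 2$, of multiplicities $1$, $L-1$, $2(L-1)$ and $(L-1)(L-2)$ respectively. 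Each representative expectation is then evaluated by conditioning on the relevant latent variables $(Z_1, Z_k, Z_j) \in \{0,1\}^3$ and invoking the Gaussian moments of $X\mid Z \sim \mathcal{N}(\mu_Z^\star, \sigma^2 I_d)$ up to order six, via Isserlis's theorem. After averaging over the $2^3$ latent configurations and using $\langle \mu_0^\star, \mu_1^\star\rangle = 0$ together with $\Vert \mu_0^\star\Vert = \Vert \mu_1^\star\Vert = 1$, each contribution collapses to an explicit polynomial in $\sigma^2$ whose coefficients involve $d$. These blocks are in fact of the same type as those appearing as $(II_0)$, $(III_0)$ and $(III)$ in the proof of Proposition \ref{prop:structure_of_R1} evaluated at $(\mu_0,\mu_1) = (\mu_0^\star, \mu_1^\star)$, so I can reuse those auxiliary computations rather than redo them from scratch.

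Once $\mathbb{E}[\Vert T\Vert^2 \mid Z_1=c]$ is obtained in closed form, the variance follows by subtracting $\Vert \mathbb{E}[T\mid Z_1=c]\Vert^2 = \tfrac{\lambda^2}{L^2}[(L+1) + 2(L+3)\sigma^2]^2$ coming from Proposition \ref{prop:encoding_stat_properties} (using $\Vert \mu_c^\star\Vert = 1$), and the unbiasing substitution $\lambda = L / [(L+1) + 2(L+3)\sigma^2]$ yields the form displayed with a trailing $-1$. For the asymptotic regime, only the $(L-1)(L-2)/L^2$ term in $\mathbb{E}[\Vert T\Vert^2\mid Z_1=c]$ keeps an $O(1)$ contribution, delivering $\lambda^2 (1+6\sigma^2+12\sigma^4+8\sigma^6) = \lambda^2(1+2\sigma^2)^3$, while the squared mean tends to $\lambda^2(1+2\sigma^2)^2$. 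The difference is $2\lambda^2 \sigma^2 (1+2\sigma^2)^2$; choosing $\lambda = 1/(1+2\sigma^2)$ in this limit yields the claimed $2\sigma^2$. The main obstacle is purely the bookkeeping for the sixth-order Gaussian moments across the $2^3$ latent assignments and the $2^2$ choices of key/query vectors in each summand; every single term is a routine Wick contraction, but the aggregation is lengthy and it is easy to mislay a multiplicity.
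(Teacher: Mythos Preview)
Your approach is correct and essentially coincides with the paper's own proof: both expand $\Vert T\Vert^2$ by bilinearity, group the double sum over $(k,j)$ into the four index patterns you list, identify these with the blocks $(II_0)$, $(III_0)$, $(II)$, $(III)$ from the risk decomposition in Proposition~\ref{prop:structure_of_R1} (you omitted $(II)$ from your list, but your four groups are exactly those four terms), evaluate at the oracle parameters, and then subtract $\Vert\mathbb{E}[T\mid Z_1=c]\Vert^2$ from Proposition~\ref{prop:encoding_stat_properties}. One small slip: since $Z_1=c$ is fixed, you average over at most $2^2$ latent configurations (for $Z_k,Z_j$ with $k,j\geq 2$), not $2^3$; but this does not affect the argument.
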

\begin{remark} %\clrcomment{To read}
    %\clrcomment{What happens if we take $\lambda$ like before? + Discussion about these approximate quantizer properties. Comment the dimension-free properties.}
   We recall that 
\[
\mathrm{Var}[X_1 \mid Z_1 = c] = \sigma^2 d.
\] 
Notably, by selecting $\lambda$ to ensure an unbiased encoding, the variance becomes independent of the dimension $d$ and equals $2\sigma^2$. This shows a variance reduction effect whenever the dimension $d$ is bigger than 2. More generally, $\lambda$ can be chosen independently of $d$ such that
\[
2 \lambda^2 (1 + 2\sigma^2)^2 \ll d.
\]
In this regime, the encoding also asymptotically reduces the variance of $X_1$, conditioned on its cluster assignment, as the number of components $L \to \infty$.

\end{remark}
\begin{proof}
    We note that the needed computations were already stated in the proof of Proposition \ref{prop:structure_of_R1}, we follow as in the proof of Lemma \ref{lem:riskmanifold_gmm}, without loss of generality, we assume $Z_1=0$, then we get that for $\mu_0,\mu_1\in\mathcal{M}$: \begin{align*} %COMPUTATION ON THE MANIFOLD, K_0 MIGHT BE NOT 1
        &\mathbb{E}[\Vert T^{\mathrm{lin},\mu_0,\mu_1}(\mathbb{X})_1\Vert^2|Z_1=0]\\
        &=\frac{4\lambda^2}{L^2}[\kappa_0^4(1+\sigma^2(d+8))+8\sigma^2\kappa_0^2(1+\sigma^2(d+6))+8\sigma^4(1+\sigma^2(d+4))]\\
        &+4\lambda^2 \frac{L-1}{L^2}[\kappa_0^4(1+5\sigma^2)+4\sigma^2\kappa_0^2(1+6\sigma^2)+\sigma^2\kappa_0^2\kappa_1^2+16\sigma^6]\\
        &+2\lambda^2\frac{(L-1)}{L^2}[\kappa_0^4(1+\sigma^2(d+4))+4\sigma^2\kappa_0^2(1+\sigma^2(d+3))+4\sigma^4(1+\sigma^2(d+2))]\\
        &+\lambda^2\frac{(L-1)(L-2)}{L^2}[\kappa_0^4(1+4\sigma^2)+2\sigma^2\kappa_0^2(1+6\sigma^2)+8\sigma^6].
    \end{align*}
    Recalling that $\kappa_0=\langle \mu_0,\mu_0^\star\rangle$, $\kappa_1=\langle \mu_1,\mu_1^\star\rangle$, in order to compute $\mathbb{E}[\Vert T^{\mathrm{lin},\mu_0^*,\mu_1^*}(\mathbb{X})_1\Vert^2|Z_1=0]$, we just need to replace $\kappa_0$ and $\kappa_1$ by 1 in the previous expression, as follows
    \begin{align*}
        &\mathbb{E}[\Vert T^{\mathrm{lin},\mu_0^\star,\mu_1^\star}(\mathbb{X})_1\Vert^2|Z_1=0]\\
        &=\frac{4\lambda^2}{L^2}[1+\sigma^2(d+8)+8\sigma^2(1+\sigma^2(d+6))+8\sigma^4(1+\sigma^2(d+4))]\\
        &+4\lambda^2 \frac{(L-1)}{L^2}[1+5\sigma^2+4\sigma^2(1+6\sigma^2)+\sigma^2+16\sigma^6]\\
        &+2\lambda^2\frac{(L-1)}{L^2}[1+\sigma^2(d+4)+4\sigma^2(1+\sigma^2(d+3))+4\sigma^4(1+\sigma^2(d+2))]\\
        &+\lambda^2\frac{(L-1)(L-2)}{L^2}[1+4\sigma^2+2\sigma^2(1+6\sigma^2)+8\sigma^6]\\
        &=\frac{4\lambda^2}{L^2}[1+\sigma^2(d+16)+8\sigma^4(d+7)+8\sigma^6(d+4)]\\
        &+2\lambda^2 \frac{(L-1)}{L^2}[3+\sigma^2(d+28)+4\sigma^4(d+16)+4\sigma^6(d+10)]\\
        &+\lambda^2\frac{(L-1)(L-2)}{L^2}[1+6\sigma^2+12\sigma^4+8\sigma^6].
    \end{align*}

\pierremodif{The expression of the variance comes from subtracting to this term the square of the conditional expectation given in Proposition \ref{prop:expectation_oracle_attention}. The asymptotic expressions are then straightforward to derive.}
%And when $L\rightarrow\infty$, we get that $$\mathbb{E}[\Vert T^{\mathrm{lin},\mu_0^\star,\mu_1^\star}(\mathbb{X})_1\Vert^2|Z_1=0]= \lambda^2[1+6\sigma^2+12\sigma^4+8\sigma^6].$$

%Finally, we note that when $L\rightarrow\infty$, \begin{align*}
%    \mathrm{Var}[ T^{\mathrm{lin},\mu_0^\star,\mu_1^\star}(\mathbb{X})_1|Z_1=c]&=\mathbb{E}[\Vert T^{\mathrm{lin},\mu_0^\star,\mu_1^\star}(\mathbb{X})_1\Vert^2|Z_1=c]-\Vert\mathbb{E}[T^{\mathrm{lin},\mu_0^\star,\mu_1^\star}(\mathbb{X})_1|Z_1=c]\Vert^2\\
%    &=\lambda^2[1+6\sigma^2+12\sigma^4+8\sigma^6]-\lambda^2[1+2\sigma^2]^2\\
%    &=2\lambda^2\sigma^2(1+2\sigma^2)^2.
%\end{align*}
    
\end{proof}

\subsection{Proof of Proposition \ref{prop:risk_oracle_attention}}
\begin{proof} 
    Assume that the tokens are i.i.d., such that for any $\ell$, $X_\ell \sim \frac{1}{2}\mathcal{N}(\mu_0^\star, \sigma^2 I_d ) + \frac{1}{2}\mathcal{N}(\mu_1^\star, \sigma^2 I_d )$. 
    The risk of the oracle predictor $T^{{\rm lin}, \mu_0^\star, \mu_1^\star}$ can be decomposed as follows
\begin{align}
    \mathcal{L}(T^{{\rm lin}, \mu_0^\star, \mu_1^\star}) = (1+d\sigma^2)-(I_0)+(II_0)+(III_0)- (I)+(II)+(III),
\end{align}
where, \pierremodif{from the proof of Lemma \ref{lem:riskmanifold_gmm}, by taking $\kappa_0=\kappa_1=1$,}
\begin{itemize}
    \item $(I_0)= \frac{4\lambda}{L}\left[(1+\sigma^2(d+4))+2\sigma^2(1+\sigma^2(d+2))\right]$.
    \item $(II_0)=\frac{4\lambda^2}{L^2}[1+\sigma^2(d+16)+8\sigma^4(d+7)+8\sigma^6(d+4))]$.
    \item $(III_0)=4\lambda^2 \frac{L-1}{L^2}[1+10\sigma^2+24\sigma^4 +16\sigma^6]$.
    \item $(I)=2\lambda\frac{L-1}{L}[1+4\sigma^2+4\sigma^4]$.
    \item $(II)=2\lambda^2\frac{(L-1)}{L^2}[1+\sigma^2(d+8)+4\sigma^4(d+7)+4\sigma^6(d+2)]$.
    \item $(III)=\lambda^2\frac{(L-1)(L-2)}{L^2}[1+6\sigma^2+12\sigma^4 +8\sigma^6]$.
\end{itemize} 
When $L$ tends to $\infty$, only the first term together with $(I)$ and $(III)$ contribute. Therefore, we obtain that 
$$
\mathcal{L}(T^{{\rm lin}, \mu_0^\star, \mu_1^\star}) \underset{L\to \infty} \sim (1+d\sigma^2)
-\lambda[2+8\sigma^2+8\sigma^4]
+ \lambda^2[1+6\sigma^2+12\sigma^4 +8\sigma^6].
$$
Choosing $\lambda = \frac{1+4\sigma^2+4\sigma^4}{1+6\sigma^2+12\sigma^4 +8\sigma^6}$ (its value being independent of $L$) minimizes the equivalent bound obtained above. With such a choice, the equivalent becomes
\begin{align*}
\mathcal{L}(T^{{\rm lin}, \mu_0^\star, \mu_1^\star}) &\underset{L\to \infty}{ \sim}  (1+d\sigma^2)-\frac{(1+4\sigma^2+4\sigma^4)^2}{1+6\sigma^2+12\sigma^4 +8\sigma^6}\\
&\underset{L\to \infty}{ \sim}  (1+d\sigma^2)-\frac{(1+2\sigma^2)^4}{(1+2\sigma^2)^3}\\
&\underset{L\to \infty}{ \sim}  (1+d\sigma^2)-1-2\sigma^2 \\
&\underset{L\to \infty}{ \sim}  \sigma^2(d-2). 
%&\underset{L\to \infty}{ \sim} d\sigma^2 + \frac{6r^6\sigma^2-12r^4 \sigma^4-32r^2\sigma^6 - 16\sigma^8}{r^6+6r^4\sigma^2+12\sigma^4 r^2+8\sigma^6}
%&= (r^2+d\sigma^2)- (r^2+2\sigma^2)=(d-2)\sigma^2.
\end{align*}
\end{proof}

%%%%%%%%%%%%%%%%%%
%%% Null predictor
%%%%%%%%%%%%%%
% This computation indicates that when $d$ is large
% \begin{align*}
% \mathcal{L}(T^{{\rm lin}, \mu_0^\star, \mu_1^\star}) 
% &\underset{\genfrac{}{}{0pt}{1}{L\to \infty}{d\to \infty}}{ \sim} d\sigma^2,
% \end{align*}
% which shows that the attention-based predictor $T^{{\rm lin}, \mu_0^\star, \mu_1^\star}$ with oracle parameters is asymptotically optimal when the dimension $d$ and the length $L$ tend to infinity. 

% {\color{red} This computation should be done in a more proper way, because we are taking 2 limits, first in L, and then in $d$, could we make them work together? Can we have a more clean proof? So that we can see a real improvement over the null predictor} 
% \end{proof}

% In contrast, as the scaling $r$ increases, the null predictor’s risk deteriorates.
% \begin{remark}
%     Under the rescaled Gaussian mixture model, the risk of the null predictor is 
%     $$
%     \mathcal{L}(0) = r^2+\sigma^2d.
%     $$
%     When the scaling $r=\sqrt{d}$, the risk becomes $d(\sigma^2+1)$.

%     When the scaling $r=d$, the risk gets a higher order of magnitude, $\mathcal{L}(0)\sim_d d^2$. {\color{red}But does it make sense to look at this order?}
% \end{remark}

\begin{remark} 
    In the case of the degenerate case ($\sigma=0$), similar computations lead to 
    \begin{align*}
    \mathcal{L}(T^{{\rm lin}, \mu_0^\star, \mu_1^\star}) 
    &= 1 
    - \frac{4\lambda}{L}  
    + 4\frac{\lambda^2}{L^2} 
    + 4\lambda^2 \frac{L-1}{L^2} 
    - 2\lambda \frac{L-1}{L}
    + 2\lambda^2 \frac{L-1}{L^2} 
    + \lambda^2 \frac{(L-1)(L-2)}{L^2}  \\
    &= 1 
    -  2\lambda    \frac{L+1}{L} +\lambda^2 
    \frac{(L+3)}{L}
    \end{align*}
    Optimizing this quantity w.r.t.\ $\lambda$ leads to choose $\lambda = \frac{L+1}{L+3}$, plugging this value for $\lambda$ gives 
    \begin{align*}
    \mathcal{L}(T^{{\rm lin}, \mu_0^\star, \mu_1^\star}) 
    &=   1- \frac{(L+1)^2}{L(L+3)}.
    \end{align*}
   In the degenerate case, we observe that as the sequence length tends to infinity, the risk of the attention-based predictor with oracle parameters converges to zero, matching that of the optimal quantizer.
\end{remark}

% For a given predictor $T$, define its associated excess risk $\mathcal{E}$ to be 
% $$
% \mathcal{E}(T) \eqdef \mathcal{L}(T) - \mathcal{L}(T^\star) =\mathcal{L}(T)  - d\sigma^2,
% $$
% with the Bayes/oracle predictor $T^\star$ given by $T^\star(\mathbb{X})_\ell=\mu_{Z_\ell}^\star$.

% The excess risk of the null predictor is $\mathcal{E}(0)=r^2$. When $r\propto d$, the excess risk of the null predictor diverges to $\infty$ when $d$ increases.

\section{Technical results} \label{sec:computation}

%This section serves as the backbone of our theoretical investigations, providing explicit calculations that support the ideas developed in the main text and that are used in Appendix \ref{sec:app_gmm}.

This section gathers a series of technical results about Gaussian random variables, used to derive expression of the risk in the rest of the document. 
%\pierrecomment{All the statements in this section require proofs (or a reference), they are not trivial. Besides it is \textbf{very} confusing to use here the notations $\mu_a$, $\mu_b$, $\mu_c$, because they are different quantities from the $\mu_a$, $\mu_b$ defined in the main text, and when applying the Lemmas here, we sometimes take $\mu_a$, $\mu_b$ to be $\mu_b$, $\mu_a$. Please replace everywhere here $\mu_a$, $\mu_b$, $\mu_c$ by $\mu_a$, $\mu_b$, $\mu_c$.}

%\subsection{Computations for Gaussian random variables}
\begin{lemma}\label{isserlis} \citep{isserlis}.
    Consider $G\sim \mathcal{N}(0,\sigma^2 I_d)$ and $\mu_a,\mu_b, \mu_c\in\R^d$, then \begin{enumerate}
        \item $\mathbb{E}[\Vert G\Vert^2]=\sigma^2 d$.
        \item $\mathbb{E}[\langle \mu_a,G\rangle]=0$.
        \item $\mathbb{E}(\langle \mu_a,G\rangle G)=\sigma^2\mu_a$.
        \item $\mathbb{E}[\langle \mu_a,G\rangle\langle \mu_b,G\rangle]=\sigma^2\langle \mu_a,\mu_b\rangle$.
        \item $\mathbb{E}[\langle \mu_a,G\rangle^2\langle \mu_b,G\rangle^2]=\sigma^4(\Vert\mu_a\Vert^2\Vert\mu_b\Vert^2+2\langle \mu_a,\mu_b\rangle^2)$.
        \item $\mathbb{E}[\langle \mu_a,G\rangle\langle \mu_b,G\rangle^2\langle\mu_c,G\rangle]=\sigma^4(\Vert\mu_b\Vert^2\langle\mu_a,\mu_c\rangle+2\langle\mu_a,\mu_b\rangle\langle\mu_b,\mu_c\rangle).$
        \item $\mathbb{E}[\langle\mu_a,G\rangle\langle\mu_b,G\rangle\Vert G\Vert^2]=\sigma^4(d+2)\langle\mu_a,\mu_b\rangle$.
        \item $\mathbb{E}[\langle\mu_a,G\rangle^2\langle\mu_b,G\rangle^2\Vert G\Vert^2]=\sigma^6(d+4)(\Vert\mu_a\Vert^2\Vert\mu_b\Vert^2+2\langle\mu_a,\mu_b\rangle^2$).
    \end{enumerate}
\end{lemma}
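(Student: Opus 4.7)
\medskip

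\noindent\textbf{Proof proposal.} All eight identities are classical consequences of the isotropic Gaussian moment structure, so the plan is to reduce everything to a single workhorse: Isserlis' (Wick's) formula, which for $G\sim\mathcal{N}(0,\sigma^2 I_d)$ gives
\begin{equation*}
\mathbb{E}[G_{i_1}G_{i_2}\cdots G_{i_{2n}}]=\sigma^{2n}\sum_{\pi}\prod_{\{a,b\}\in\pi}\delta_{i_a i_b},
\end{equation*}
the sum running over all perfect pairings $\pi$ of $\{1,\dots,2n\}$, and which vanishes for odd orders.

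First, items 1--4 are immediate: 1 is $\sum_i\mathbb{E}[G_i^2]=d\sigma^2$; 2 follows from $\mathbb{E}[G]=0$; 3 is obtained coordinatewise via $\mathbb{E}[G_iG_j]=\sigma^2\delta_{ij}$; 4 is item~3 pre-multiplied by $\mu_1^\top$. These are one-line computations I would state and dispatch.

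For the fourth-order identities 5 and 6, I would expand $\langle\mu_a,G\rangle=\sum_i\mu_a^iG_i$ and apply Isserlis to $\mathbb{E}[G_iG_jG_kG_l]=\sigma^4(\delta_{ij}\delta_{kl}+\delta_{ik}\delta_{jl}+\delta_{il}\delta_{jk})$. For item 5, the three pairings produce respectively $\|\mu_0\|^2\|\mu_1\|^2$, $\langle\mu_0,\mu_1\rangle^2$ and $\langle\mu_0,\mu_1\rangle^2$, yielding $\sigma^4(\|\mu_0\|^2\|\mu_1\|^2+2\langle\mu_0,\mu_1\rangle^2)$. Item 6 is the same game with the multiset $\{\mu_0,\mu_1,\mu_1,\mu_2\}$: the three Wick pairings give $\langle\mu_0,\mu_1\rangle\langle\mu_1,\mu_2\rangle$, $\|\mu_1\|^2\langle\mu_0,\mu_2\rangle$, $\langle\mu_0,\mu_1\rangle\langle\mu_1,\mu_2\rangle$, producing the stated formula.

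For the sixth-order identities 7 and 8 involving $\|G\|^2=\sum_m G_m^2$, my preferred route is the \emph{orthogonal decomposition trick}, which avoids enumerating $15$ pairings. Let $V=\mathrm{span}(\mu_0,\mu_1)$ (of dimension at most $2$) and write $G=G_\parallel+G_\perp$ with $G_\parallel,G_\perp$ independent Gaussians on $V,V^\perp$. Since $\langle\mu_a,G\rangle=\langle\mu_a,G_\parallel\rangle$ and $\|G\|^2=\|G_\parallel\|^2+\|G_\perp\|^2$, independence yields
\begin{equation*}
\mathbb{E}[\langle\mu_0,G\rangle\langle\mu_1,G\rangle\|G\|^2]=\mathbb{E}[\langle\mu_0,G_\parallel\rangle\langle\mu_1,G_\parallel\rangle\|G_\parallel\|^2]+\sigma^2(d-2)\,\mathbb{E}[\langle\mu_0,G_\parallel\rangle\langle\mu_1,G_\parallel\rangle],
\end{equation*}
and the $V$-side term is a low-dimensional fourth-order Gaussian moment computable via items~4 and~5. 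A direct calculation (using $\mathbb{E}[\|G_\parallel\|^2\langle\mu_0,G_\parallel\rangle\langle\mu_1,G_\parallel\rangle]=4\sigma^4\langle\mu_0,\mu_1\rangle$ on a $2$-dimensional space) gives $\sigma^4(d+2)\langle\mu_0,\mu_1\rangle$, establishing 7. Item 8 is handled analogously: one reduces to a $V$-side sixth-order moment and uses Isserlis on the six indices, exploiting that pairings involving $G_\perp$ contribute only via $\mathbb{E}[\|G_\perp\|^2]=(d-2)\sigma^2$ multiplied by item~5. The only mildly tedious point—and the one I would be most careful with—is the bookkeeping of the $15$ pairings (or equivalently the algebra on $V$) in item~8; everything else is mechanical. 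As a sanity check I would verify 8 against the scalar case $\mu_0=\mu_1=e_1$, $d=1$, where it must reduce to $\mathbb{E}[G_1^6]=15\sigma^6$, matching $\sigma^6(1+4)(1+2)=15\sigma^6$.
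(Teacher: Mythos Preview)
Your proof is correct. The paper actually states this lemma without proof (it collects standard Gaussian moment identities used elsewhere in the appendix), so there is no ``paper's own proof'' to compare against. Your route via Isserlis' theorem is the natural one, and the orthogonal decomposition trick you use for items~7 and~8 is a clean way to avoid enumerating all $15$ pairings. The only minor caveat is that your decomposition $G=G_\parallel+G_\perp$ with $\dim V=2$ tacitly assumes $d\geq 2$ and that $\mu_0,\mu_1$ span a two-dimensional space; when $\dim\mathrm{span}(\mu_0,\mu_1)<2$ you can either enlarge $V$ to any $2$-plane containing both vectors (if $d\geq 2$) or appeal to continuity/bilinearity in $(\mu_0,\mu_1)$, so this is not a genuine gap.
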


\medskip 

% \begin{comment}
% \begin{lemma}\label{regtry}
%     Consider $X\sim \mathcal{N}(\mu^\star,\sigma^2 I_d)$ where $\mu_a,\mu_b\in\R^d$, then
    
%         \begin{align*}
%             \mathbb{E}[\langle X,\mu_a\rangle^2\langle X,\mu_b\rangle^2]&=\langle \mu_a,\mu^\star\rangle^2\langle \mu_b,\mu^\star\rangle^2+\sigma^2[\langle \mu_a,\mu^\star\rangle^2\Vert\mu_b\Vert^2+4\langle \mu_a,\mu^\star\rangle\langle \mu_b,\mu^\star\rangle\langle \mu_a,\mu_b\rangle]\\
%             &+\sigma^4[\Vert \mu_a\Vert^2\Vert \mu_b\Vert^2+2\langle \mu_a,\mu_b\rangle^2]
%             .
%         \end{align*}
% \end{lemma}
% \end{comment}

\begin{lemma}\label{I0}
    Consider $X\sim \mathcal{N}(\mu^\star,\sigma^2 I_d)$ where $\Vert\mu^\star\Vert=1$ and $\mu_a\in\R^d$, then
    
        $$
        \mathbb{E}[\langle X,\mu_a\rangle^2\Vert X\Vert^2]=\langle\mu^\star,\mu_a\rangle^2(1+\sigma^2(d+4))+\sigma^2\Vert \mu_a\Vert^2(1+\sigma^2(d+2)).
        $$
\end{lemma}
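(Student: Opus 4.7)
The plan is to reduce the statement to standard Gaussian moments by writing $X = \mu^\star + \sigma G$ with $G \sim \mathcal{N}(0, I_d)$, which cleanly isolates the deterministic mean from a centered isotropic fluctuation. All the required expectations of polynomials in $G$ are then available directly from the moment lemma stated just before Lemma~\ref{I0} (specialized to unit variance, i.e.\ $\sigma=1$ in that lemma's notation).

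First I would expand both factors as polynomials in $G$: using $\Vert \mu^\star\Vert=1$,
\begin{align*}
\langle X,\mu_0\rangle^2 &= \langle \mu^\star,\mu_0\rangle^2 + 2\sigma\,\langle \mu^\star,\mu_0\rangle\,\langle G,\mu_0\rangle + \sigma^2 \langle G,\mu_0\rangle^2, \\
\Vert X\Vert^2 &= 1 + 2\sigma\,\langle \mu^\star, G\rangle + \sigma^2 \Vert G\Vert^2.
\end{align*}
Multiplying these yields nine monomials in $G$. By symmetry of the standard Gaussian, every monomial of odd total degree in $G$ has zero expectation, which discards four of the nine terms.

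For the surviving five terms I would invoke the moment identities: $\mathbb{E}[\langle G,\mu_0\rangle\langle \mu^\star,G\rangle] = \langle \mu^\star,\mu_0\rangle$, $\mathbb{E}[\Vert G\Vert^2] = d$, $\mathbb{E}[\langle G,\mu_0\rangle^2] = \Vert \mu_0\Vert^2$, and $\mathbb{E}[\langle G,\mu_0\rangle^2\,\Vert G\Vert^2] = (d+2)\Vert \mu_0\Vert^2$ (the last one being the key quartic moment, derivable from Isserlis/Wick or by a direct computation decomposing $G$ along $\mu_0$ and its orthogonal complement).

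Collecting the surviving contributions and grouping by $\langle \mu^\star,\mu_0\rangle^2$ and $\Vert \mu_0\Vert^2$ yields the claimed closed form. The only point requiring care — and the main, purely bookkeeping, obstacle — is that three distinct terms contribute to the coefficient of $\langle \mu^\star,\mu_0\rangle^2$: the constant $1$, the quadratic cross term giving $4\sigma^2$ (from $2\sigma\langle\mu^\star,\mu_0\rangle\langle G,\mu_0\rangle \cdot 2\sigma\langle \mu^\star,G\rangle$), and the term $\sigma^2 d$ (from $\langle\mu^\star,\mu_0\rangle^2 \cdot \sigma^2\Vert G\Vert^2$), which must sum to $1+\sigma^2(d+4)$. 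The coefficient of $\Vert \mu_0\Vert^2$ combines $\sigma^2$ with $\sigma^4(d+2)$ to produce $\sigma^2(1+\sigma^2(d+2))$, completing the proof.
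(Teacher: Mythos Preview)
Your proof is correct. The paper does not actually spell out a proof of this lemma (the technical results section simply states it), but your approach — writing $X=\mu^\star+\sigma G$ with $G\sim\mathcal{N}(0,I_d)$ and reducing to the moment identities listed in the preceding lemma — is exactly the intended route and the bookkeeping you describe checks out.
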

\begin{proof}
We decompose $X$ as follows,
    \begin{align*}
X &= \mu^\star + \varepsilon,\qquad \varepsilon\sim\mathcal N(0,\sigma^2 I_d),\\
\langle X,\mu_a\rangle^2\|X\|^2
&= \big(\langle\mu^\star,\mu_a\rangle + \langle\varepsilon,\mu_a\rangle\big)^2
   \big(1 + 2\langle\varepsilon,\mu^\star\rangle + \|\varepsilon\|^2\big)\\[4pt]
\mathbb{E}[\langle X,\mu_a\rangle^2\|X\|^2]
&= \langle\mu^\star,\mu_a\rangle^2
   + \langle\mu^\star,\mu_a\rangle^2\mathbb{E}\|\varepsilon\|^2
   + 4\langle\mu^\star,\mu_a\rangle\,\mathbb{E}[\langle\varepsilon,\mu_a\rangle\langle\varepsilon,\mu^\star\rangle]\\
&\qquad + \mathbb{E}[\langle\varepsilon,\mu_a\rangle^2]
   + \mathbb{E}\big[\langle\varepsilon,\mu_a\rangle^2\|\varepsilon\|^2\big],
\end{align*}
Then, by Lemma \ref{isserlis} one obtains \[
\mathbb{E}[\langle X,\mu_a\rangle^2\|X\|^2]
= \langle\mu^\star,\mu_a\rangle^2(1+\sigma^2(d+4))+\sigma^2\|\mu_a\|^2(1+\sigma^2(d+2)).
\]
\end{proof}
\bigskip
\begin{lemma}\label{II0}
    Let $X\sim \mathcal{N}(\mu^\star,\sigma^2I_d)$, where $\Vert\mu^\star\Vert=1$ and $\mu_a,\mu_b\in\R^d$, then $$p_0(\mu_a,\mu_b,\mu^\star)\eqdef \mathbb{E}(\langle X,\mu_a\rangle^2\langle X,\mu_b\rangle^2\Vert X\Vert^2)$$ can be expressed as
    \begin{align*}
p_0(\mu_a,\mu_b,\mu^\star)&=\langle\mu^\star,\mu_a\rangle^2\langle\mu^\star,\mu_b\rangle^2\\
&+\sigma^2\left(\langle \mu^\star,\mu_b\rangle^2\Vert \mu_a\Vert^2+4\langle\mu^\star,\mu_a\rangle\langle\mu^\star,\mu_b\rangle\langle\mu_a,\mu_b\rangle+\langle\mu^\star,\mu_a\rangle^2\Vert \mu_b\Vert^2\right)\\
        &+\sigma^2(d+8)\langle\mu^\star,\mu_a\rangle^2\langle\mu^\star,\mu_b\rangle^2\\
        &+\sigma^4(\Vert \mu_a\Vert^2\Vert \mu_b\Vert^2+2\langle \mu_a,\mu_b\rangle^2+(d+6)(\Vert\mu_a\Vert^2\langle \mu^\star,\mu_b\rangle^2+\Vert\mu_b\Vert^2\langle \mu^\star,\mu_a\rangle^2))\\
        &+4\sigma^4(d+6)\langle \mu^\star,\mu_a\rangle\langle \mu^\star,\mu_b\rangle\langle \mu_a,\mu_b\rangle+\sigma^6(d+4)(\Vert \mu_a\Vert^2\Vert \mu_b\Vert^2+2\langle \mu_a,\mu_b\rangle^2).
    \end{align*} 
\end{lemma}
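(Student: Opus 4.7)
The strategy is a direct expansion after the Gaussian shift $X = \mu^\star + G$ with $G \sim \mathcal{N}(0,\sigma^2 I_d)$. Using $\Vert \mu^\star\Vert = 1$, one has
$$\Vert X\Vert^2 \;=\; 1 + 2\langle \mu^\star, G\rangle + \Vert G\Vert^2, \qquad \langle X,\mu_c\rangle \;=\; \langle \mu^\star,\mu_c\rangle + \langle G,\mu_c\rangle \quad (c\in\{0,1\}).$$
It will be convenient to abbreviate $a_c \eqdef \langle \mu^\star,\mu_c\rangle$, $U_c \eqdef \langle G,\mu_c\rangle$, and $W \eqdef \langle G,\mu^\star\rangle$, so that $\langle X,\mu_c\rangle^2 = a_c^2 + 2 a_c U_c + U_c^2$.

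The first step is to multiply out the three factors $\langle X,\mu_0\rangle^2\langle X,\mu_1\rangle^2\Vert X\Vert^2$ and group the resulting monomials by their total degree in the components of $G$. Since $G$ is centered Gaussian, every monomial of \emph{odd} total $G$-degree has zero expectation, so only the contributions of degrees $0$, $2$, $4$, and $6$ survive. The degree-$0$ piece is $a_0^2 a_1^2$. The degree-$2$ piece collects terms of the form $a_c^2 U_{1-c}^2$, $a_0 a_1 U_0 U_1$, and $a_c^2 a_{1-c}^2 \cdot 2W$ combined with the $\Vert X\Vert^2$ factor; these are evaluated with the identities $\mathbb{E} U_c^2 = \sigma^2\Vert\mu_c\Vert^2$, $\mathbb{E} U_0 U_1 = \sigma^2\langle\mu_0,\mu_1\rangle$, and $\mathbb{E} W U_c = \sigma^2 a_c$.

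The degree-$4$ piece is the most demanding: it collects terms such as $U_0^2 U_1^2$, $a_c^2 U_{1-c}^2 \Vert G\Vert^2$, $a_0 a_1 U_0 U_1 \Vert G\Vert^2$, and the cross terms $a_c U_{1-c}^2 U_c \cdot 2W$ coming from the product of the $2 a_c U_c$ factor in one square with the $2W$ factor of $\Vert X\Vert^2$ and the $U_{1-c}^2$ factor in the other square. They are handled by identities 5 ($\mathbb{E}[U_0^2 U_1^2]$), 6 ($\mathbb{E}[\langle \mu^\star,G\rangle \langle \mu_c,G\rangle U_{1-c}^2]$), and 7 ($\mathbb{E}[U_c U_{c'} \Vert G\Vert^2]$) from the preceding block of lemmas. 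The degree-$6$ piece reduces to the single expectation $\mathbb{E}[U_0^2 U_1^2 \Vert G\Vert^2]$, given by identity 8.

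The main obstacle is purely combinatorial bookkeeping. The full expansion produces the product of three trinomials and a trinomial, i.e.\ $3\cdot 3\cdot 3 = 27$ monomials in the symbols $\{1,U_c,U_c^2\}_{c=0,1}$ and $\{1,W,\Vert G\Vert^2\}$, each multiplied by an explicit polynomial in $a_0,a_1$; roughly half of them are killed by the odd-degree rule. To avoid mistakes, I would organize the calculation as a table indexed by the $G$-pattern, apply the appropriate identity from the preceding lemma to each nonzero entry, and then regroup the final sum by powers of $\sigma^2$ and by the scalars $a_0^2 a_1^2$, $a_c^2 \Vert\mu_{1-c}\Vert^2$, $a_0 a_1 \langle\mu_0,\mu_1\rangle$, $\Vert\mu_0\Vert^2\Vert\mu_1\Vert^2$ and $\langle\mu_0,\mu_1\rangle^2$ to match the stated expression.
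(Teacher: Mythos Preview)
Your proposal is correct and follows exactly the approach the paper sets up: the preceding unnumbered lemma in the technical results section lists precisely the eight Gaussian moment identities (for $\mathbb{E}[U_0U_1]$, $\mathbb{E}[U_0^2U_1^2]$, $\mathbb{E}[U_0U_1\Vert G\Vert^2]$, $\mathbb{E}[U_0^2U_1^2\Vert G\Vert^2]$, etc.) needed to evaluate each surviving term after the shift $X=\mu^\star+G$ and the $27$-term expansion you describe. The paper does not spell out the bookkeeping for Lemma~\ref{II0} explicitly, so your plan is in fact more detailed than what appears there.
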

\begin{proof}
Write \(X=\mu^\star+\varepsilon\) with \(\varepsilon\sim\mathcal N(0,\sigma^2 I_d)\).
Then
\[
\langle X,\mu_a\rangle^2\langle X,\mu_b\rangle^2\|X\|^2
= \big(\langle\mu^\star,\mu_a\rangle+\langle\varepsilon,\mu_a\rangle\big)^2
  \big(\langle\mu^\star,\mu_b\rangle+\langle\varepsilon,\mu_b\rangle\big)^2
  \big(1+2\langle\varepsilon,\mu^\star\rangle+\|\varepsilon\|^2\big).
\]
Expand the product and drop all odd-moment terms of \(\varepsilon\) (their expectation is \(0\)). The surviving types of expectations are of the form of Lemma \ref{isserlis}, collecting all nonzero contributions after expansion and simplifying gives the stated formula for \(p_0(\mu_a,\mu_b,\mu^\star)\).
\end{proof}

\bigskip
\begin{comment}

Let $Z_1,Z_2 \in \{0,1\}$ be fixed and consider $X_i\sim \mathcal{N}(\mu_{Z_i}^\star,\sigma^2 I_d)$ independently for each $i=\{1,2\}$, \clrcomment{Confusion here: It does not mean anything}
\clrmodif{Let $Z_1 \in \{0,1\}$ be fixed and consider $X_1|Z_1\sim \mathcal{N}(\mu_{Z_1}^\star,\sigma^2 I_d)$,} 
\clrcomment{Add a conditional expectation???}
\end{comment}

The proofs of Lemmas \ref{III0}--\ref{III} follow the same approach as the proof of Lemma \ref{II0}: we rewrite $X_i \mid Z_i$ as $\mu_{Z_i}^\star + \varepsilon_i$, where $(\varepsilon_i)_i$ are i.i.d.\ random variables with $\varepsilon_i \sim \mathcal{N}(0,\sigma^2 I_d)$. We then expand the product, discard odd-moment terms, and apply Lemma \ref{isserlis} to obtain the desired results.

\begin{lemma}\label{III0}
Let $Z_1$ and $Z_2\in \{0,1\}$ be fixed. Consider two independent $\R^d-$valued random variables $X_1$ and $X_2$, such that 
$$
X_i|Z_i\sim \mathcal{N}(\mu_{Z_i}^\star,\sigma^2 I_d),\quad \text{for each $i=\{1,2\}$},
$$ 
where the unit vectors $\mu_a^\star, \mu_b^\star$   (i.e., $\Vert\mu_a^\star\Vert=\Vert\mu_b^\star\Vert=1$) are orthogonal.  For $\mu_a,\mu_b,\mu_c\in\R^d$, define
\begin{align*}
    p_{1,0}(\mu_a,\mu_b,\mu_{Z_1}^\star,\mu_c)\eqdef\mathbb{E}[\langle X_1,\mu_a\rangle^2\langle X_1,\mu_b\rangle\langle X_1,\mu_c\rangle|Z_1].
\end{align*}
This quantity satisfies
\begin{align*}
    p_{1,0}(\mu_a,\mu_b,\mu_{Z_1}^\star,\mu_c)&=\langle \mu_{Z_1}^\star,\mu_a\rangle^2\langle \mu_{Z_1}^\star,\mu_b\rangle\langle \mu_{Z_1}^\star,\mu_c\rangle\\
    &+\sigma^2\left[\Vert\mu_a\Vert^2\langle \mu_{Z_1}^\star,\mu_b\rangle\langle \mu_{Z_1}^\star,\mu_c\rangle+2\langle \mu_{Z_1}^\star,\mu_a\rangle(\langle \mu_{Z_1}^\star,\mu_c\rangle\langle \mu_a,\mu_b\rangle+\langle \mu_{Z_1}^\star,\mu_b\rangle\langle \mu_a,\mu_c\rangle)\right]\\
&+\sigma^2\left[\langle\mu_{Z_1}^\star,\mu_a\rangle^2\langle\mu_b,\mu_c\rangle\right]\\
    &+\sigma^4(\Vert\mu_a\Vert^2\langle \mu_b,\mu_c\rangle+2\langle\mu_a,\mu_b\rangle\langle\mu_a,\mu_c\rangle).
\end{align*}
Moreover, we define \begin{align*}
p_1(\mu_a,\mu_b,\mu_{Z_1}^\star,\mu_{Z_2}^\star)\eqdef\mathbb{E}[\langle X_1,\mu_a\rangle^2\langle X_1,\mu_b\rangle\langle X_2,\mu_b\rangle\langle X_1,X_2\rangle|Z_1,Z_2],
\end{align*}
which satisfies
\begin{align*}
p_1(\mu_a,\mu_b,\mu_{Z_1}^\star,\mu_{Z_2}^\star)=\langle \mu_{Z_2}^\star,\mu_b\rangle p_{1,0}(\mu_a,\mu_b,\mu_{Z_1}^\star,\mu_{Z_2}^\star)+\sigma^2p_{1,0}(\mu_a,\mu_b,\mu_{Z_1}^\star,\mu_b).
\end{align*}
\end{lemma}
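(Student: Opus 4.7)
My plan for Lemma \ref{III0} splits naturally into two parts, corresponding to the two claimed identities.

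For the expression of $p_{1,0}$, I would start with the standard decomposition $X_1 = \mu_{Z_1}^\star + G_1$, where $G_1 \sim \mathcal{N}(0, \sigma^2 I_d)$, and write each inner product as
$$\langle X_1, \mu_j\rangle = \langle \mu_{Z_1}^\star, \mu_j\rangle + \langle G_1, \mu_j\rangle, \qquad j=0,1,2.$$
Plugging this into $\langle X_1,\mu_0\rangle^2\langle X_1,\mu_1\rangle\langle X_1,\mu_2\rangle$ yields an expansion into twelve monomials indexed by how many of the three factors are sampled as ``mean'' versus ``noise''. Taking the expectation kills every monomial of odd total degree in $G_1$, leaving only contributions of degree $0$, $2$, and $4$. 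The degree-$0$ term is the deterministic $\langle \mu_{Z_1}^\star,\mu_0\rangle^2 \langle \mu_{Z_1}^\star,\mu_1\rangle \langle \mu_{Z_1}^\star,\mu_2\rangle$. The four degree-$2$ terms are evaluated via item 4 of the technical lemma, giving the $\sigma^2$-block in the statement; the single degree-$4$ term $\mathbb{E}[\langle G_1,\mu_0\rangle^2\langle G_1,\mu_1\rangle\langle G_1,\mu_2\rangle]$ is evaluated by Isserlis' formula (equivalently item 6) and yields $\sigma^4(\|\mu_0\|^2\langle \mu_1,\mu_2\rangle + 2\langle\mu_0,\mu_1\rangle\langle\mu_0,\mu_2\rangle)$. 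Collecting contributions by degree reproduces the announced expression.

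For $p_1$, my plan is to use the tower property by conditioning on $X_1$ (and $Z_2$), isolating the inner expectation
$$\mathbb{E}\bigl[\langle X_2,\mu_1\rangle\langle X_1,X_2\rangle \,\big|\, X_1,Z_2\bigr].$$
Writing $X_2 = \mu_{Z_2}^\star + G_2$ with $G_2 \sim \mathcal{N}(0,\sigma^2 I_d)$ independent of $X_1$, and expanding, the only surviving noise contribution is the quadratic cross-term $\mathbb{E}[\langle G_2,\mu_1\rangle\langle X_1,G_2\rangle \mid X_1] = \sigma^2\langle X_1,\mu_1\rangle$, by item 4 of the technical lemma applied conditionally. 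This gives
$$\mathbb{E}\bigl[\langle X_2,\mu_1\rangle\langle X_1,X_2\rangle \,\big|\, X_1,Z_2\bigr] = \langle \mu_{Z_2}^\star,\mu_1\rangle\langle X_1,\mu_{Z_2}^\star\rangle + \sigma^2\langle X_1,\mu_1\rangle.$$
Multiplying by $\langle X_1,\mu_0\rangle^2\langle X_1,\mu_1\rangle$ and taking the outer expectation (conditional on $Z_1$), each of the two resulting terms is by definition an instance of $p_{1,0}$: the first with third argument $\mu_{Z_2}^\star$, the second with third argument $\mu_1$. This produces exactly the claimed decomposition.

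The only real obstacle is the bookkeeping in the first part: enumerating the twelve monomials without dropping a cross-term, and correctly applying the fourth-order Isserlis formula on the single degree-$4$ contribution. Once the first identity is established, the second follows by a clean conditioning argument with no further moment computation, since the inner expectation over $X_2$ collapses to a simple linear combination of inner products of $X_1$.
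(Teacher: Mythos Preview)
Your proposal is correct. The paper states Lemma~\ref{III0} as a technical result without supplying a proof, so there is no ``paper's own proof'' to compare against; your argument---decomposing $X_1=\mu_{Z_1}^\star+G_1$, expanding into the twelve monomials and applying Isserlis' theorem (items~4 and~6 of the preceding technical lemma) for $p_{1,0}$, then conditioning on $X_1$ and using the tower property to reduce $p_1$ to two instances of $p_{1,0}$---is exactly the natural route and matches the level at which the paper's other technical lemmas are implicitly derived.
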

\bigskip
\begin{lemma}\label{I}
    Let $Z_1,Z_2 \in \{0,1\}$ be fixed. Consider two independent $\R^d-$valued random variables $X_1$ and $X_2$, such that $$
    X_i|Z_i\sim \mathcal{N}(\mu_{Z_i}^\star,\sigma^2 I_d),\quad \text{for each $i=\{1,2\}$},
    $$ 
where the unit vectors $\mu_a^\star, \mu_b^\star$ (i.e., $\Vert\mu_a^\star\Vert=\Vert\mu_b^\star\Vert=1$) are orthogonal.  For $\mu_a,\mu_b\in\R^d$, we get that 

\begin{align*}
        &\mathbb{E}[(\langle X_1,\mu_a\rangle\langle X_2,\mu_a\rangle+\langle X_1,\mu_b\rangle\langle X_2,\mu_b\rangle)\langle X_1, X_2\rangle|Z_1,Z_2]\\
        &=\langle \mu_{Z_1}^\star,\mu_{Z_2}^\star\rangle(\langle \mu_{Z_1}^\star,\mu_a\rangle\langle \mu_{Z_2}^\star,\mu_a\rangle+\langle \mu_{Z_1}^\star,\mu_b\rangle\langle \mu_{Z_2}^\star,\mu_b\rangle)\\
        &+\sigma^2(\langle \mu_a,\mu_{Z_1}^\star\rangle^2+\langle \mu_a,\mu_{Z_2}^\star\rangle^2+\langle \mu_b,\mu_{Z_1}^\star\rangle^2+\langle \mu_b,\mu_{Z_2}^\star\rangle^2)\\
        &+\sigma^4 (\Vert\mu_a\Vert^2+\Vert\mu_b\Vert^2)
    \end{align*}
\end{lemma}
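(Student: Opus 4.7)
My plan is to reduce the statement to a Gaussian moment computation by conditioning on $Z_1, Z_2$ and writing each token in the form $X_i = \mu_{Z_i}^\star + \sigma G_i$, where $G_1, G_2 \sim \mathcal{N}(0, I_d)$ are independent. Setting $m_i \eqdef \mu_{Z_i}^\star$, the expectation inside the conditional splits by the bilinear sum structure, so by symmetry in $\mu_0, \mu_1$ it suffices to compute, for a generic $\mu \in \mathbb{R}^d$, the quantity
\begin{equation*}
E(\mu) \eqdef \mathbb{E}\!\left[\langle X_1,\mu\rangle \langle X_2,\mu\rangle \langle X_1,X_2\rangle \mid Z_1, Z_2\right],
\end{equation*}
and then sum $E(\mu_0) + E(\mu_1)$.

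The second step is the core expansion. Substituting $X_i = m_i + \sigma G_i$ and distributing produces $4 \times 4 = 16$ monomials in $(m_1, G_1, m_2, G_2)$. Since $G_1$ and $G_2$ are independent centered Gaussians, a monomial survives the expectation only if it contains an even power of $G_1$ and an even power of $G_2$. Bookkeeping the parity in $(G_1, G_2)$ of each of the $16$ terms, I expect exactly four surviving contributions: the purely deterministic term $\langle m_1, \mu\rangle \langle m_2, \mu\rangle \langle m_1, m_2\rangle$; two $\sigma^2$ terms coming from pairing $(G_1 \cdot \mu)(G_1 \cdot m_2)$ and $(G_2 \cdot \mu)(m_1 \cdot G_2)$, each evaluating to $\sigma^2 \langle m_i, \mu\rangle^2$ for $i=1,2$ via $\mathbb{E}[\langle a,G\rangle \langle b,G\rangle] = \langle a,b\rangle$; and one $\sigma^4$ term $\mathbb{E}[(G_1 \cdot \mu)(G_2 \cdot \mu)(G_1 \cdot G_2)]$, which by expanding $G_1 \cdot G_2 = \sum_j G_{1,j} G_{2,j}$ and using independence collapses to $\sum_j \mu_j^2 = \Vert \mu\Vert^2$. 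All the identities invoked here are elementary consequences of the first Lemma in the technical section.

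Putting this together yields
\begin{equation*}
E(\mu) = \langle m_1, m_2\rangle \langle m_1, \mu\rangle \langle m_2, \mu\rangle + \sigma^2\bigl(\langle m_1, \mu\rangle^2 + \langle m_2, \mu\rangle^2\bigr) + \sigma^4 \Vert \mu\Vert^2,
\end{equation*}
and summing $E(\mu_0) + E(\mu_1)$ and re-expressing $m_i$ as $\mu_{Z_i}^\star$ gives exactly the claimed formula. The orthogonality assumption $\langle \mu_0^\star, \mu_1^\star\rangle = 0$ is not actually needed: the statement remains valid as is, because $\langle m_1, m_2\rangle = \langle \mu_{Z_1}^\star, \mu_{Z_2}^\star\rangle$ appears unsimplified in the result.

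I do not anticipate any serious obstacle: the identity is a pure moment computation, and once the parity argument eliminates the twelve vanishing monomials, the remaining four collapse via standard isotropic-Gaussian identities. The only care needed is bookkeeping — in particular keeping track of which $G_i$ appears in each factor (since $\langle X_1, X_2\rangle$ contributes cross-terms in both $G_1$ and $G_2$) so as not to miscount the surviving contractions.
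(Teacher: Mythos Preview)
Your proof is correct and is exactly the natural computation: the paper does not spell out a proof for this lemma (it is listed among the technical Gaussian-moment identities in the appendix and treated as a direct expansion), so your decomposition $X_i = \mu_{Z_i}^\star + \sigma G_i$ followed by the parity bookkeeping and the standard identities $\mathbb{E}[\langle a,G\rangle\langle b,G\rangle]=\langle a,b\rangle$ is precisely the intended route. Your side remarks are also on point: the orthogonality of $\mu_0^\star,\mu_1^\star$ is indeed not used here, and your formula $E(\mu_0)+E(\mu_1)$ in fact reveals a typo in the stated $\sigma^2$-line of the lemma (the last $\langle \mu_1,\mu_{Z_1}^\star\rangle^2$ should read $\langle \mu_1,\mu_{Z_2}^\star\rangle^2$).
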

\bigskip
\begin{lemma}\label{II}
  %\clrcomment{Same comment here}
    Let $Z_1,Z_2 \in \{0,1\}$ be fixed. Consider two independent $\R^d-$valued random variables $X_1$ and $X_2$, such that 
    $$
    X_i|Z_i\sim \mathcal{N}(\mu_{Z_i}^\star,\sigma^2 I_d),\quad \text{for each $i=\{1,2\}$},
    $$ 
where the unit vectors $\mu_a^\star, \mu_b^\star$  (i.e., $\Vert\mu_a^\star\Vert=\Vert\mu_b^\star\Vert=1$) are orthogonal. For $\mu_a,\mu_b\in\R^d$, define also: 
   \begin{align*}
       p_{2,0}(\mu_a,\mu_b,\mu_{Z_1}^\star)&\eqdef \mathbb{E}[\langle X_1,\mu_a\rangle\langle X_1,\mu_b\rangle|Z_1]\\
        p_{2,1}(\mu_a,\mu_b,\mu_{Z_2}^\star)&\eqdef\mathbb{E}[\langle X_2,\mu_a\rangle\langle X_2,\mu_b\rangle\Vert X_2\Vert^2|Z_2].
   \end{align*}
These quantities satisfy
\begin{align*}
    p_{2,0}(\mu_a,\mu_b,\mu_{Z_1}^\star)&=\langle \mu_{Z_1}^\star,\mu_a\rangle\langle \mu_{Z_1}^\star,\mu_b\rangle+\sigma^2\langle \mu_a,\mu_b\rangle,\\
    p_{2,1}(\mu_a,\mu_b,\mu_{Z_2}^\star)&=\langle\mu_{Z_2}^\star,\mu_a\rangle\langle\mu_{Z_2}^\star,\mu_b\rangle+\sigma^2((d+4)\langle \mu_{Z_2}^\star,\mu_a\rangle\langle\mu_{Z_2}^\star,\mu_b\rangle+\langle\mu_a,\mu_b\rangle)\\
        &+\sigma^4(d+2)\langle \mu_a,\mu_b\rangle.
\end{align*}
   
    Moreover, we define
   \begin{align*}
       p_2(\mu_a,\mu_b,\mu_{Z_1}^\star,\mu_{Z_2}^\star)&\eqdef \mathbb{E}[\langle X_1,\mu_a\rangle\langle X_2,\mu_a\rangle\langle X_1,\mu_b\rangle\langle X_2,\mu_b\rangle\lVert X_2\Vert^2|Z_1,Z_2],\\
       \end{align*}
       which satisfies
       \begin{align*}
       p_2(\mu_a,\mu_b,\mu_{Z_1}^\star,\mu_{Z_2}^\star)&=p_{2,0}(\mu_a,\mu_b,\mu_{Z_1}^\star)p_{2,1}(\mu_a,\mu_b,\mu_{Z_2}^\star).
       \end{align*}
\end{lemma}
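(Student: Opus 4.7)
\textbf{Proof plan for Lemma \ref{II}.} The natural strategy is to decompose each token around its conditional mean: write $X_i = \mu_{Z_i}^\star + G_i$ where $G_i \sim \mathcal{N}(0,\sigma^2 I_d)$ are independent, then expand the relevant polynomial expressions and apply the standard Gaussian moment identities collected at the start of Section~\ref{sec:computation}. Since the computation for $p_2$ factors once we have the two single-variable quantities, the whole result reduces to (i) computing $p_{2,0}$, (ii) computing $p_{2,1}$, and (iii) invoking conditional independence.

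First I would handle $p_{2,0}$. Expanding
$\langle X_1,\mu_0\rangle\langle X_1,\mu_1\rangle = (\langle\mu_{Z_1}^\star,\mu_0\rangle+\langle G_1,\mu_0\rangle)(\langle\mu_{Z_1}^\star,\mu_1\rangle+\langle G_1,\mu_1\rangle)$
and taking the conditional expectation kills the two linear-in-$G_1$ cross terms because $\mathbb{E}[G_1]=0$, while the pure Gaussian term gives $\mathbb{E}[\langle G_1,\mu_0\rangle\langle G_1,\mu_1\rangle]=\sigma^2\langle\mu_0,\mu_1\rangle$ by item~4 of the Gaussian moments lemma. Summing yields the claimed formula.

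Next I would tackle $p_{2,1}$, which is the main calculational step. Setting $a=\langle\mu_{Z_2}^\star,\mu_0\rangle$, $b=\langle\mu_{Z_2}^\star,\mu_1\rangle$, $u=\langle G_2,\mu_0\rangle$, $v=\langle G_2,\mu_1\rangle$, $w=\langle G_2,\mu_{Z_2}^\star\rangle$, and using $\|X_2\|^2 = 1 + 2w + \|G_2\|^2$ (since $\|\mu_{Z_2}^\star\|=1$), the quantity becomes $(a+u)(b+v)(1+2w+\|G_2\|^2)$. Expanding gives twelve monomials; parity kills every monomial with an odd total degree in $G_2$ (namely those containing exactly one or three of $u,v,w$ without a $\|G_2\|^2$ partner of opposite parity). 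The surviving terms are $ab$, $ab\,\|G_2\|^2$, $2abw$ (killed), $2avw$, $2buw$, $uv$, and $uv\,\|G_2\|^2$, whose expectations come respectively from items 1, 4, and 8 of the Gaussian moments lemma; the crucial identity is $\mathbb{E}[\langle G_2,\mu_0\rangle\langle G_2,\mu_1\rangle\|G_2\|^2]=\sigma^4(d+2)\langle\mu_0,\mu_1\rangle$. Collecting gives $ab(1+\sigma^2(d+4))+\sigma^2\langle\mu_0,\mu_1\rangle+\sigma^4(d+2)\langle\mu_0,\mu_1\rangle$, matching the claim after regrouping.

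Finally, for $p_2$, I would note that $X_1$ and $X_2$ are independent given $(Z_1,Z_2)$, so
\[
p_2(\mu_0,\mu_1,\mu_{Z_1}^\star,\mu_{Z_2}^\star)
= \mathbb{E}[\langle X_1,\mu_0\rangle\langle X_1,\mu_1\rangle\mid Z_1]\cdot \mathbb{E}[\langle X_2,\mu_0\rangle\langle X_2,\mu_1\rangle\|X_2\|^2\mid Z_2],
\]
which is exactly $p_{2,0}\cdot p_{2,1}$. The only real obstacle is bookkeeping in the expansion for $p_{2,1}$ (twelve terms, of which several vanish by parity); all the nontrivial Gaussian moments are already available in Section~\ref{sec:computation}, so no new analytic input is needed.
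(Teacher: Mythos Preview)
Your approach is correct and is exactly the intended one: the paper does not spell out a proof of Lemma~\ref{II}, but simply records it among the technical results in Section~\ref{sec:computation}, with the understanding that it follows from the Gaussian moment identities listed in the unnamed lemma at the start of that section---which is precisely what you do. One small bookkeeping slip: the identity $\mathbb{E}[\langle G,\mu_0\rangle\langle G,\mu_1\rangle\|G\|^2]=\sigma^4(d+2)\langle\mu_0,\mu_1\rangle$ is item~7, not item~8, in that list.
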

\bigskip
\begin{lemma}\label{III}
  %\clrcomment{Same comment here}
   %Let $Z_1,Z_2,Z_3 \in \{0,1\}$ be fixed and consider $X_i\sim \mathcal{N}(\mu_{Z_i}^\star,\sigma^2 I_d)$ independently for each $i=\{1,2,3\}$, where $\Vert\mu_a^\star\Vert=\Vert\mu_b^\star\Vert=1$ and $\mu_a,\mu_b\in\R^d$, define also: 
    Let $Z_1,Z_2,Z_3 \in \{0,1\}$ be fixed. Consider three independent $\R^d-$valued random variables $X_1, X_2, X_3$, where $$X_i|Z_i\sim \mathcal{N}(\mu_{Z_i}^\star,\sigma^2 I_d),\quad \text{for each $i=\{1,2,3\}$},$$ 
such that $\mu_a^\star, \mu_b^\star$ unit vectors  (i.e., $\Vert\mu_a^\star\Vert=\Vert\mu_b^\star\Vert=1$) are orthogonal. For $\mu_a,\mu_b\in\R^d$, define also: 
   \begin{align*}
       p_{3,0}(\mu_a,\mu_b,\mu_{Z_1}^\star)&\eqdef \mathbb{E}[\langle X_1,\mu_a\rangle\langle X_1,\mu_b\rangle|Z_1],\\
        p_{3,1}(\mu_a,\mu_b,\mu_{Z_2}^\star,\mu_{Z_3}^\star)&\eqdef \mathbb{E}[\langle X_2,\mu_a\rangle\langle X_3,\mu_b\rangle\langle X_2,X_3\rangle|Z_2,Z_3].
   \end{align*}
These quantities satisfy
\begin{align*}
    p_{3,0}(\mu_a,\mu_b,\mu_{Z_1}^\star)&=\langle \mu_{Z_1}^\star,\mu_a\rangle\langle \mu_{Z_1}^\star,\mu_b\rangle+\sigma^2\langle \mu_a,\mu_b\rangle,\\
    p_{3,1}(\mu_a,\mu_b,\mu_{Z_2}^\star,\mu_{Z_3}^\star)&=\langle\mu_{Z_2}^\star,\mu_a\rangle\langle\mu_{Z_3}^\star,\mu_b\rangle\langle\mu_{Z_2}^\star,\mu_{Z_3}^\star\rangle\\
    &\qquad +\sigma^2(\langle\mu_{Z_2}^\star,\mu_a\rangle\langle\mu_{Z_2}^\star,\mu_b\rangle+\langle\mu_{Z_3}^\star,\mu_a\rangle\langle\mu_{Z_3}^\star,\mu_b\rangle)\\
        &\qquad +\sigma^4\langle \mu_a,\mu_b\rangle.
\end{align*}
   
   Moreover, we define
   \begin{align*}
       p_3(\mu_a,\mu_b,\mu_{Z_1}^\star,\mu_{Z_2}^\star,\mu_{Z_3}^\star)&\eqdef \mathbb{E}[\langle X_1,\mu_a\rangle\langle X_2,\mu_a\rangle\langle X_1,\mu_b\rangle\langle X_3,\mu_b\rangle\langle X_2,X_3\rangle|Z_1,Z_2,Z_3],
   \end{align*}
   which satisfies
   \begin{align*}
p_3(\mu_a,\mu_b,\mu_{Z_1}^\star,\mu_{Z_2}^\star,\mu_{Z_3}^\star)&=p_{3,0}(\mu_a,\mu_b,\mu_{Z_1}^\star)p_{3,1}(\mu_a,\mu_b,\mu_{Z_2}^\star,\mu_{Z_3}^\star).
   \end{align*}
\end{lemma}

% \subsection{Computations of the risks}\label{subsec:computations}

% \clrcomment{This section should be put in}

\section{Experimental details}\label{sec:xp_details}

This section provides algorithmic details, choices of parameters, and settings used for the plots displayed in Sections \ref{sec:dirac} and \ref{sec:gmm}.

\subsection{Projected Stochastic Gradient Descent}\label{sec:psgd}
 We formally define the method Projected Stochastic Gradient Descent (PSGD), which we run for our numerical experiments.

\paragraph{PSGD iterates for linear attention heads.}
\label{def:psgdlinear}
Given the objective function $\mathcal{R}^{\rho}:(\mathbb{S}^{d-1})^2\rightarrow\R$ defined in \eqref{penalizationreal}, we define $h:(\mathbb{S}^{d-1})^2\times \R^{L\times d}$  as \begin{equation}\label{hdef}
    h(\mu_0,\mu_1,\mathcal{X})=\Big\Vert X_1-\frac{2}{L}\sum_{k=1}^L \lambda[ X_1^\top(\mu_0\mu_0^\top+\mu_1\mu_1^\top) X_k]X_k\Big\Vert_2^2+\rho\langle \mu_0,X_1\rangle^2\langle\mu_1,X_1\rangle^2,
\end{equation} where $X_i$ is the $i-$th row of the matrix $\mathcal{X}$. Consequently we can write $$\mathcal{R}^{\rho}(\mu_0,\mu_1)=\mathbb{E}_{\mathbb{X}\sim \mathcal{D}}[h(\mu_0,\mu_1,\mathbb{X})],$$ where $\mathcal{D}$ is the distribution over $\R^{L\times d}$ where each row is i.i.d. according to $$\frac{1}{2}\mathcal{N}(\mu_0^\star,\sigma^2I_d)+\frac{1}{2}\mathcal{N}(\mu_1^\star,\sigma^2I_d).$$ Then, given and an initialization $(\mu_0^0,\mu_1^0)\in(\mathbb{S}^{d-1})^2$, a stepsize $\gamma$, we define $(\mu_0^k,\mu_1^k)\in(\mathbb{S}^{d-1})^2$ recursively by: \begin{equation}\label{PGDrho}\tag{$\mathrm{PSGD}$}
    \begin{split}
        g_0^k&=\frac{1}{M}\sum_{i=1}^M\nabla_{\mu_0} h(\mu_0^k,\mu_1^k,\xi_i^k),\\
        g_1^k&=\frac{1}{M}\sum_{i=1}^M\nabla_{\mu_1} h(\mu_0^k,\mu_1^k,\xi_i^k),\\
        \mu_0^{k+1}&=\frac{\mu_0^k-\gamma(I_d-\mu_0^k(\mu_0^k)^\top)g_0^k}{\Vert \mu_0^k-\gamma(I_d-\mu_0^k(\mu_0^k)^\top)g_0^k\Vert_2},\\
        \mu_1^{k+1}&=\frac{\mu_1^k-\gamma(I_d-\mu_1^k(\mu_1^k)^\top)g_1^k}{\Vert \mu_1^k-\gamma(I_d-\mu_1^k(\mu_1^k)^\top)g_1^k\Vert_2},
        \end{split}
    \end{equation}
    where $M$ is called the batch size, and for each $k\in\mathbb{N}$, $(\xi_i^k)_{i=\{1,\ldots,M\}}$ are $M$ independents samples of $\mathcal{D}$.

\paragraph{PSGD iterates for softmax attention heads.}
\label{psgdsoft}
    Given the objective function $\mathcal{R}^{\mathrm{soft},\rho_0}:(\mathbb{S}^{d-1})^2\times\R^2\rightarrow\R$ defined in \eqref{risk_softmax_regularized}, for simplicity, we note that for an appropriate $h_0$, we can write $$\mathcal{R}^{\mathrm{soft},\rho_0}(\mu_0,\mu_1,\psi,\lambda)=\mathbb{E}_{\mathbb{X}\sim \mathcal{D}}[h_0(\mu_0,\mu_1,\psi,\lambda,\mathbb{X})],$$ where $\mathcal{D}$ is the distribution over $\R^{L\times d}$ where each row is i.i.d. according to $$\frac{1}{2}\mathcal{N}(\mu_0^\star,\sigma^2I_d)+\frac{1}{2}\mathcal{N}(\mu_1^\star,\sigma^2I_d).$$ Then, given and an initialization $(\mu_0^0,\mu_1^0)\in(\mathbb{S}^{d-1})^2$, $(\psi^0,\lambda^0)=(2,3)$, a stepsize $\gamma$, we define $(\mu_0^k,\mu_1^k)\in(\mathbb{S}^{d-1})^2$ and $(\psi^k,\lambda^k)\in\R^2$ recursively by: \begin{equation}\label{PGDtheta}\tag{$\mathrm{PSGD_{soft}}$}
    \begin{split}
        g_0^k&=\frac{1}{M}\sum_{i=1}^M\nabla_{\mu_0} h_0(\mu_0^k,\mu_1^k,\psi^k,\lambda^k,\xi_i^k),\\
        g_1^k&=\frac{1}{M}\sum_{i=1}^M\nabla_{\mu_1} h_0(\mu_0^k,\mu_1^k,\psi^k,\lambda^k,\xi_i^k),\\
        \mu_0^{k+1}&=\frac{\mu_0^k-\gamma(I_d-\mu_0^k(\mu_0^k)^\top)g_0^k}{\Vert \mu_0^k-\gamma(I_d-\mu_0^k(\mu_0^k)^\top)g_0^k\Vert_2},\\
        \mu_1^{k+1}&=\frac{\mu_1^k-\gamma(I_d-\mu_1^k(\mu_1^k)^\top)g_1^k}{\Vert \mu_1^k-\gamma(I_d-\mu_1^k(\mu_1^k)^\top)g_1^k\Vert_2},\\
        \psi^{k+1}&=\psi^k-\gamma \frac{1}{M}\sum_{i=1}^M\nabla_{\psi}h_0(\mu_0^k,\mu_1^k,\psi^k,\lambda^k,\xi_i^k),\\
        \lambda^{k+1}&=\lambda^k-\gamma \frac{1}{M}\sum_{i=1}^M\nabla_{\lambda}h_0(\mu_0^k,\mu_1^k,\psi^k,\lambda^k,\xi_i^k),
        \end{split}
    \end{equation}
    where $M$ is called the batch size, and for each $k\in\mathbb{N}$, $(\xi_i^k)_{i=\{1,\ldots,M\}}$ are $M$ independents samples of $\mathcal{D}$.
\begin{remark}
Gradient computations in the numerical experiments were carried out using JAX \citep{jax}.
\end{remark}

\subsection{Experimental details}
In the following, we provide the experimental setup corresponding to Sections~\ref{sec:dirac} and~\ref{sec:gmm}.

We use input sequences of length $L=30$ of 5-dimensional tokens ($d=5$), and define the true centroids as $\mu_0^\star = (0, 0, 0, 0, 1)$ and $\mu_1^\star = (-1, 0, 0, 0, 0)$. We recall that the metric used to quantify the distance to the centroids (up to a sign) is defined in~\eqref{def:xp_metric2}.

\paragraph{Experimental details of Section \ref{sec:dirac}.}
\label{xp_details_dirac}

Regarding the experiment on the manifold, i.e., Figure \ref{fig:ig:risk_linear_noiseless_manifold}, we perform $10^4$ \eqref{PGDrho} iterations without regularization ($\rho=0$) with a learning rate of $\gamma=0.01$, $\lambda = 0.6$, batch size $M=256$. The experiment is repeated across 10 independent runs, each initialized randomly on the manifold $\tilde{\mathcal{M}}$.

For the rest of the experiments of this section, we adopt the same setup as before, with the exception that each run is randomly initialized on the unit sphere. In Figure \ref{fig:risk_linear_noiseless_sphere}, we perform $10^4$ iterations to observe that without adding a regularization term, we only get partial alignment of the Transformer parameters towards the true centroids. 

Then, in Figure \ref{linear_noiseless_reg} we perform $5 \times 10^3$ iterations of \eqref{PGDrho} to minimize the regularized risk $\mathcal{R}^\rho$ for $15$ values of the regularization strength $\rho$, linearly spaced in $[0, 0.3]$. Finally, in Figure \ref{linear_noiseless_iters} we choose $\rho=0.1$ and perform $10^4$ \eqref{PGDrho} iterations.

\paragraph{Experimental details of Section \ref{sec:gmm}.}\label{xp_details_gmm}

Regarding the experiment on the manifold, i.e., Figure \ref{fig:linear_manifold_iters}, we run the algorithm for $10^4$ iterations without regularization ($\rho=0$), with a learning rate of $\gamma=0.01$, batch size $M=256$, and choosing $\lambda=0.6$ for $\sigma=0.3$, and $\lambda=0.2$ for $\sigma=1$. The experiment is repeated across 10 independent runs, each initialized randomly on the manifold $\mathcal{M}$. 

For the rest of the experiments of this section, we adopt the same setup as before, with the exception that each run is randomly initialized on the unit sphere. In Figure \ref{linear_noise_reg} we perform $5 \times 10^3$ iterations of \eqref{PGDrho} to minimize the regularized risk $\mathcal{R}^\rho$ for $30$ values of the regularization strength $\rho$, linearly spaced in $[0, 3]$. Finally, in Figure \ref{linear_noise_iters} we choose $\rho=0.2$ and perform $10^4$ \eqref{PGDrho} iterations.

\begin{remark}
    All experiments in Section \ref{sec:dirac} and \ref{sec:gmm} can be run on a standard laptop. Most complete within a few minutes, with the exception of those in Figures \ref{linear_noiseless_reg} and \ref{linear_noise_reg}, which require approximately 20 minutes and up to an hour, respectively, due to repeated problem-solving across a grid of regularization strengths.
\end{remark}

\section{Additional numerical experiments}
\label{app:xp_higher_dim}

In what follows, we first present numerical experiments in dimension 100. We then vary the dimension from 4 to 200. Results are shown only for the linear approach, as the softmax variant exhibits numerical instability in higher dimensions.

\subsection{Influence of the dimension}
\label{app:xp_dim}

\textbf{Experiments in \texorpdfstring{$\R^{100}$}{}.}
We use input sequences of length $L=30$ in  $\R^{100}$, where we define two centroids: $\mu_0^\star=(\underbrace{0,\ldots,0}_{\text{99 times}},1)$ and $\mu_1^\star=(-1,\underbrace{0,\ldots,0}_{\text{99 times}})$. The model is trained using \eqref{PGDrho} with an online batch sampling strategy, with a batch size of 256, and a learning rate of 0.01. Due to the big dimensionality of the problem, we modify the concept introduced as distance to the centroid up to a sign by the concept of minimal root mean squared error, which is nothing but the distance to the centroid (up to a sign) divided by the square root of the dimension, i.e.,

\[
\text{Minimal $\mathrm{RMSE}$}=\frac{1}{\sqrt{d}}\min_{\pi\in S_2}\min_{s\in \{-1,1\}^2}\sqrt{\sum_{i=0}^1 \Vert \hat{\mu}_{\pi(i)}-s_i\mu_i^\star\Vert^2},
\]
where $S_2$ is the symmetric group of order 2, \(\mu_0^\star, \mu_1^\star\) are the true centroids, and \(\hat{\mu}_0, \hat{\mu}_1\) are the returned parameters from \eqref{PGDrho}. In Figures \ref{linear_manifold_big}, \ref{linear_iters_big} we can observe the behavior of the RMSE over the iterations for different levels of noise $\sigma$. We remark that in Figure \ref{linear_manifold_big} we initialize on the manifold $\mathcal{M}$, and there is no regularization term (i.e. $\rho=0$), in Figure \ref{linear_iters_big} we initialize randomly over the unit sphere and we set $\rho=0.2$. In both experiments we set $\lambda=0.6$ for the case $\sigma=0$ and $\sigma=0.3$, and $\lambda=0.2$ for the case $\sigma=1$.

\begin{figure}
% \centering
% \begin{tabular}{ccc}
%      \includegraphics[width=\textwidth]{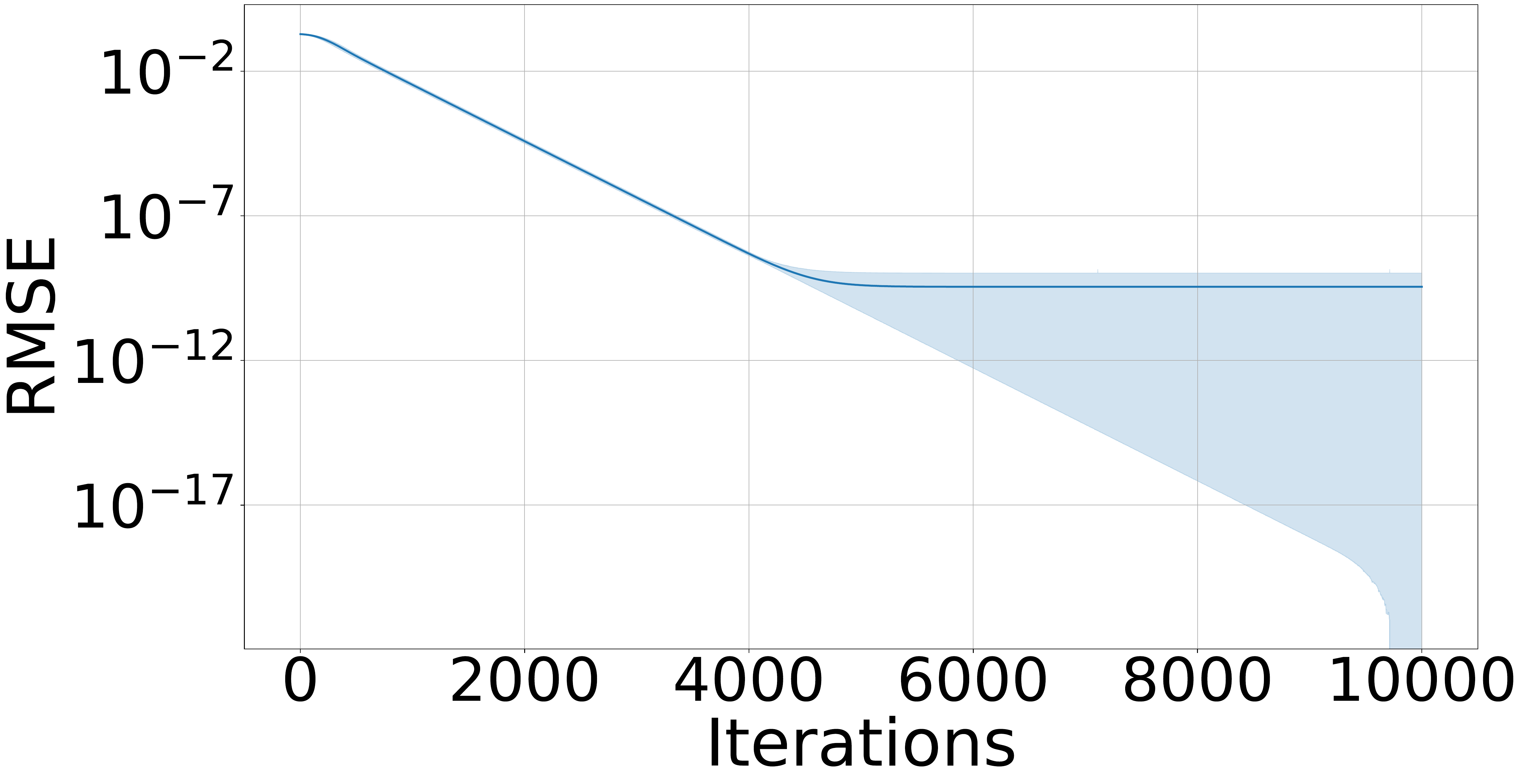}
%     &
%     \includegraphics[width=\textwidth]{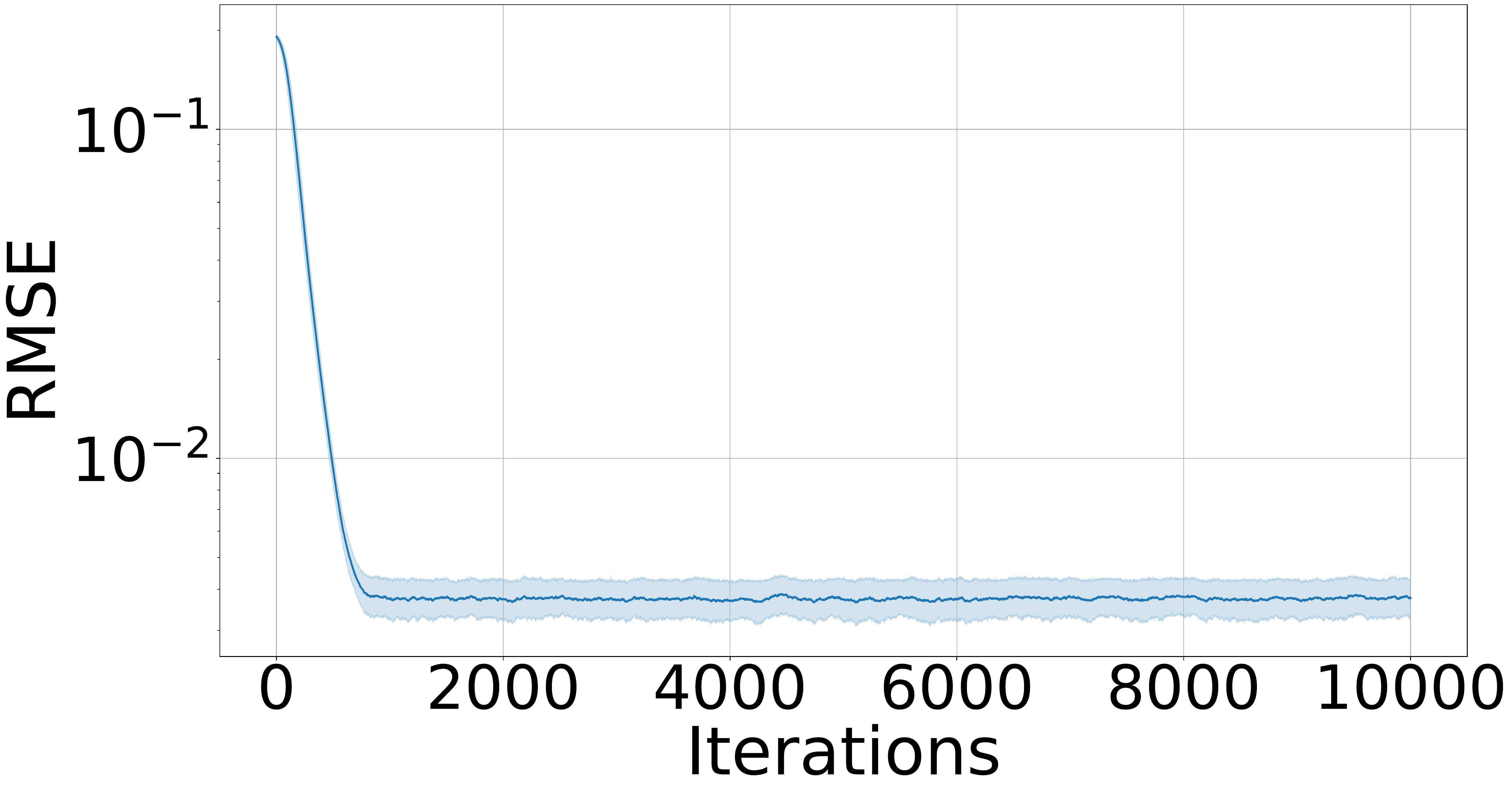} &
%     \includegraphics[width=\textwidth]{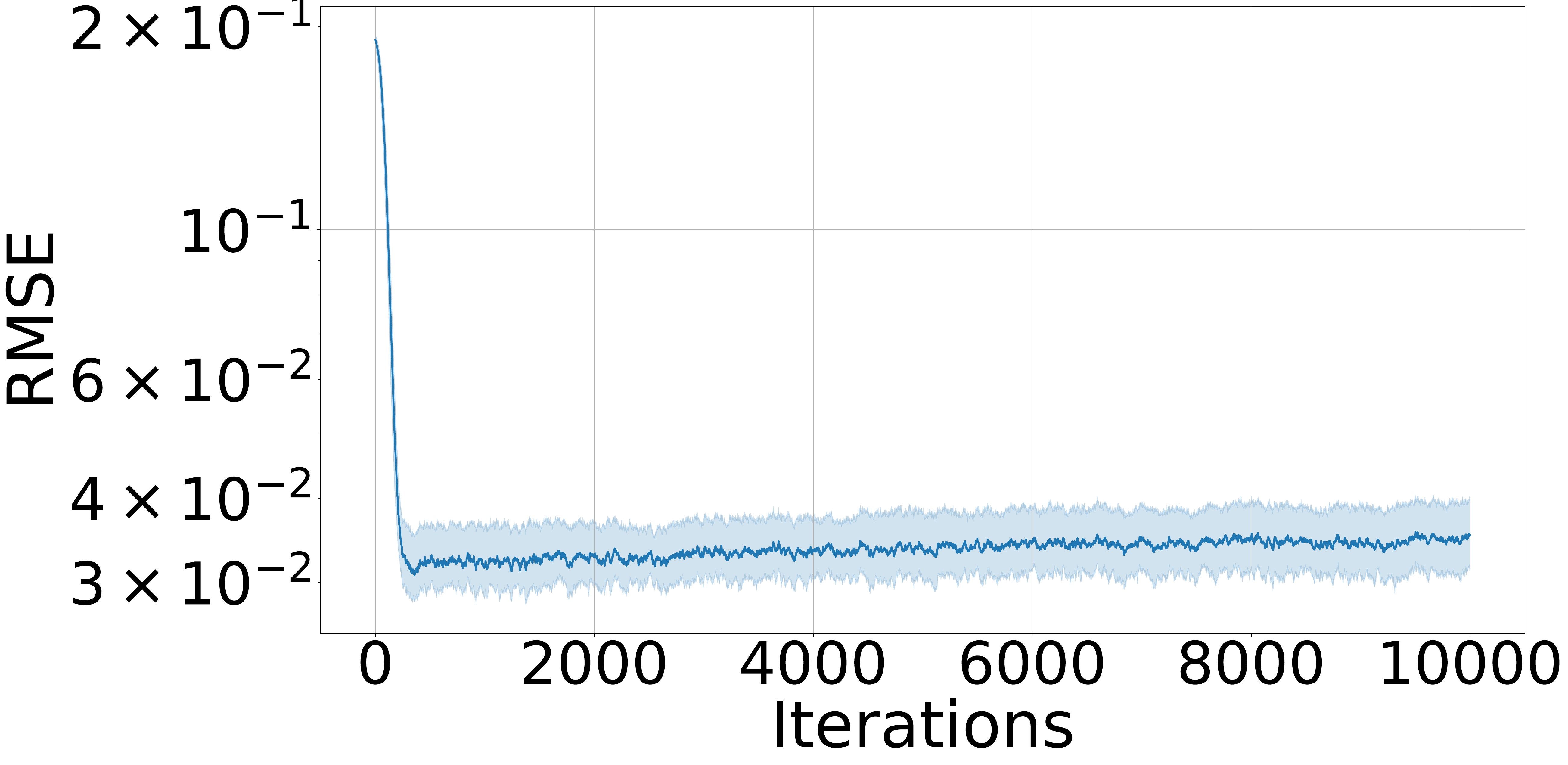}
%     \\
%     (a) $\sigma=0$ & 
%      & 
% \end{tabular}
    \centering
    \begin{subfigure}[b]{0.325\textwidth}
         \centering
         \includegraphics[width=\textwidth]{plot_linear_manifold_iters_0_b.pdf}
         \caption{$\sigma=0$.}
         \label{linear_manifold_iter_0_b}
     \end{subfigure}
    \begin{subfigure}[b]{0.325\textwidth}
         \centering
         \includegraphics[width=\textwidth]{plot_linear_manifold_iters_0.3_b.pdf}
         \caption{$\sigma=0.3$.}
         \label{linear_manifold_iter_0.3_b}
     \end{subfigure}
     \begin{subfigure}[b]{0.325\textwidth}
         \centering
         \includegraphics[width=\textwidth]{plot_linear_manifold_iters_1.0_b.pdf}
         \caption{$\sigma=1$.}
         \label{linear_manifold_iter_1.0_b}
     \end{subfigure}
    \caption{Minimal RMSE vs Iterations, Initialization on the manifold in dimension 100.}
    \label{linear_manifold_big}
\end{figure}

\begin{figure}
    \centering
    \begin{subfigure}[b]{0.325\textwidth}
         \centering
         \includegraphics[width=\textwidth]{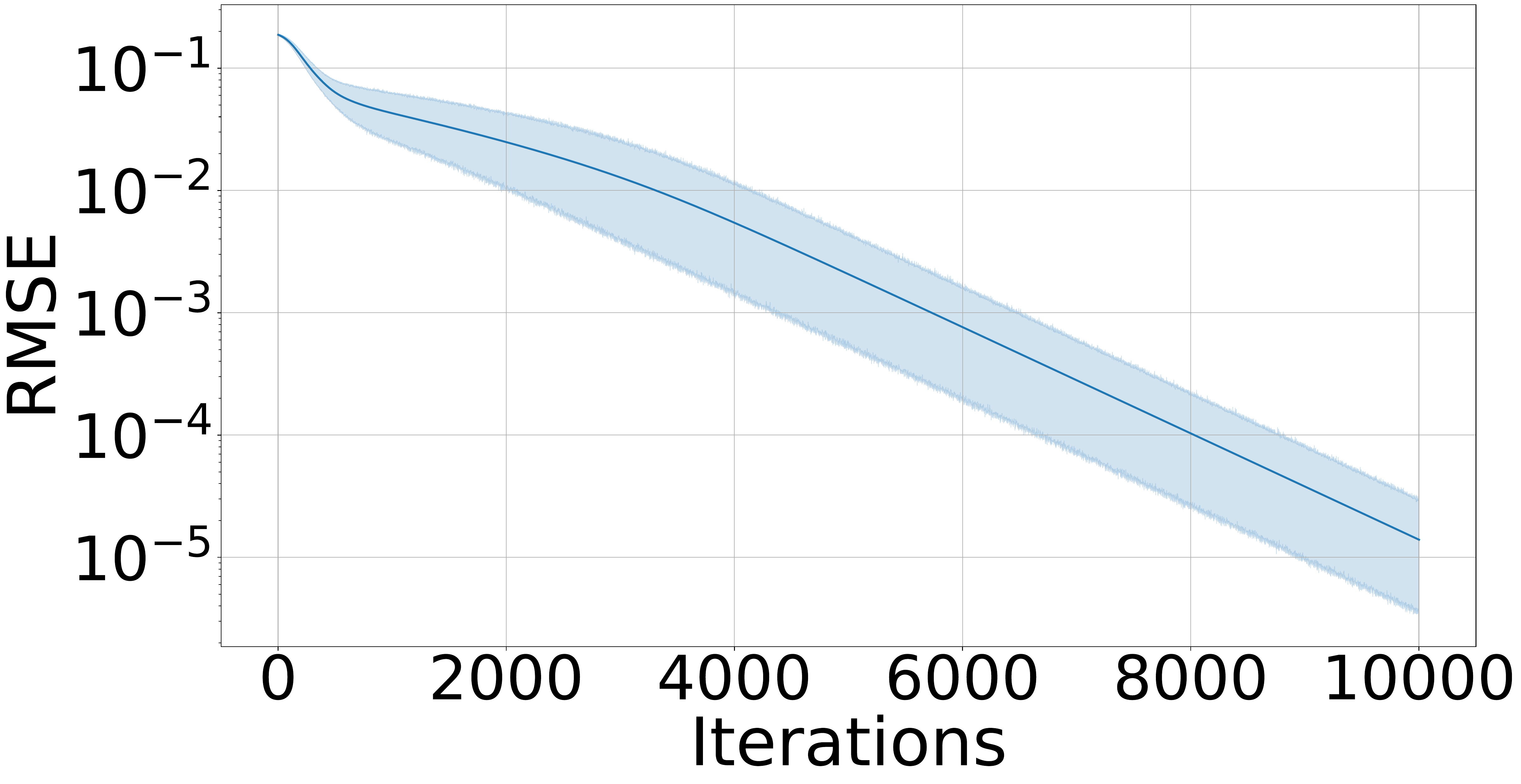}
         \caption{$\sigma=0$.}
         \label{linear_iter_0_b}
     \end{subfigure}
    \begin{subfigure}[b]{0.325\textwidth}
         \centering
         \includegraphics[width=\textwidth]{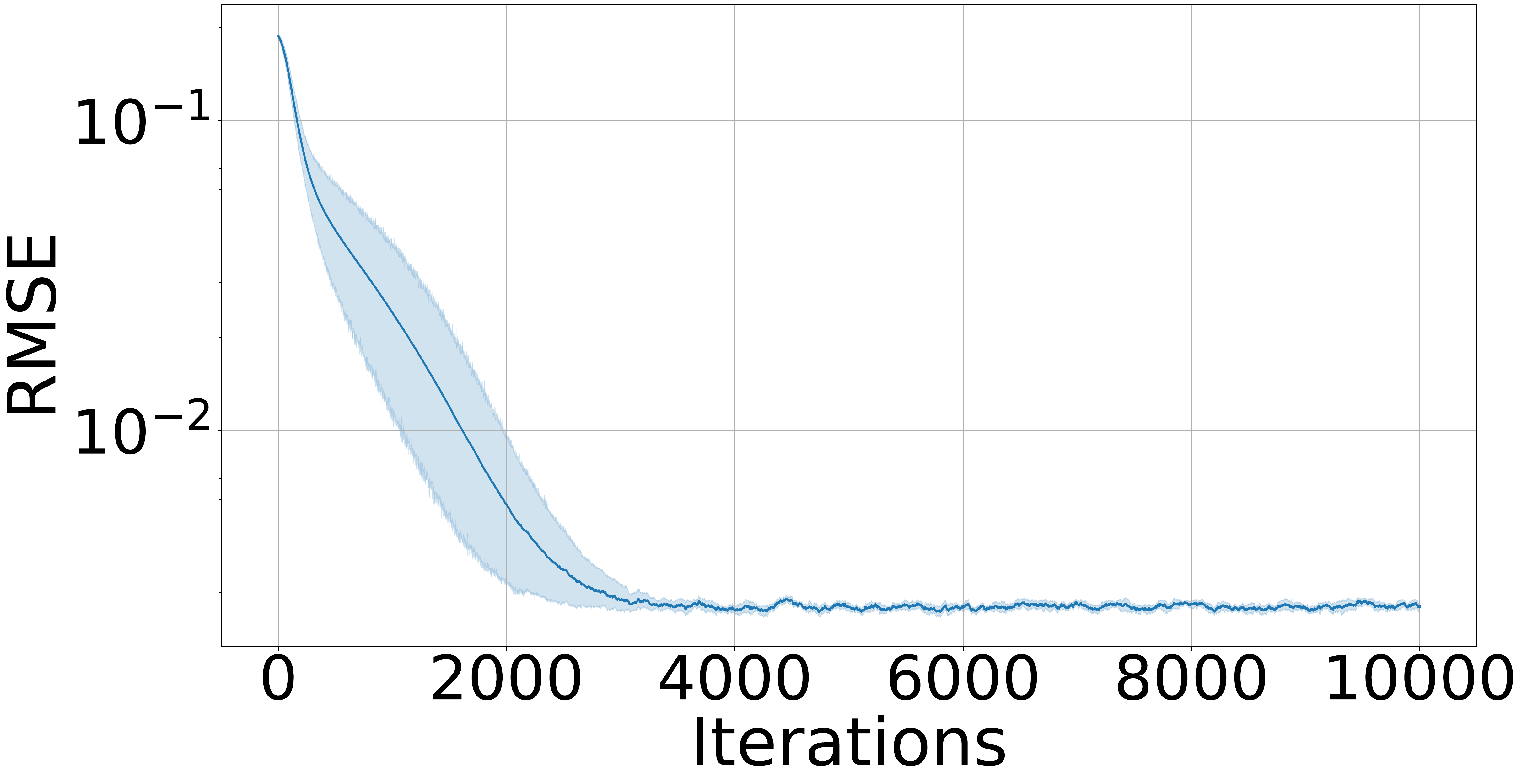}
         \caption{$\sigma=0.3$.}
         \label{linear_iter_0.3_b}
     \end{subfigure}
     \begin{subfigure}[b]{0.325\textwidth}
         \centering
         \includegraphics[width=\textwidth]{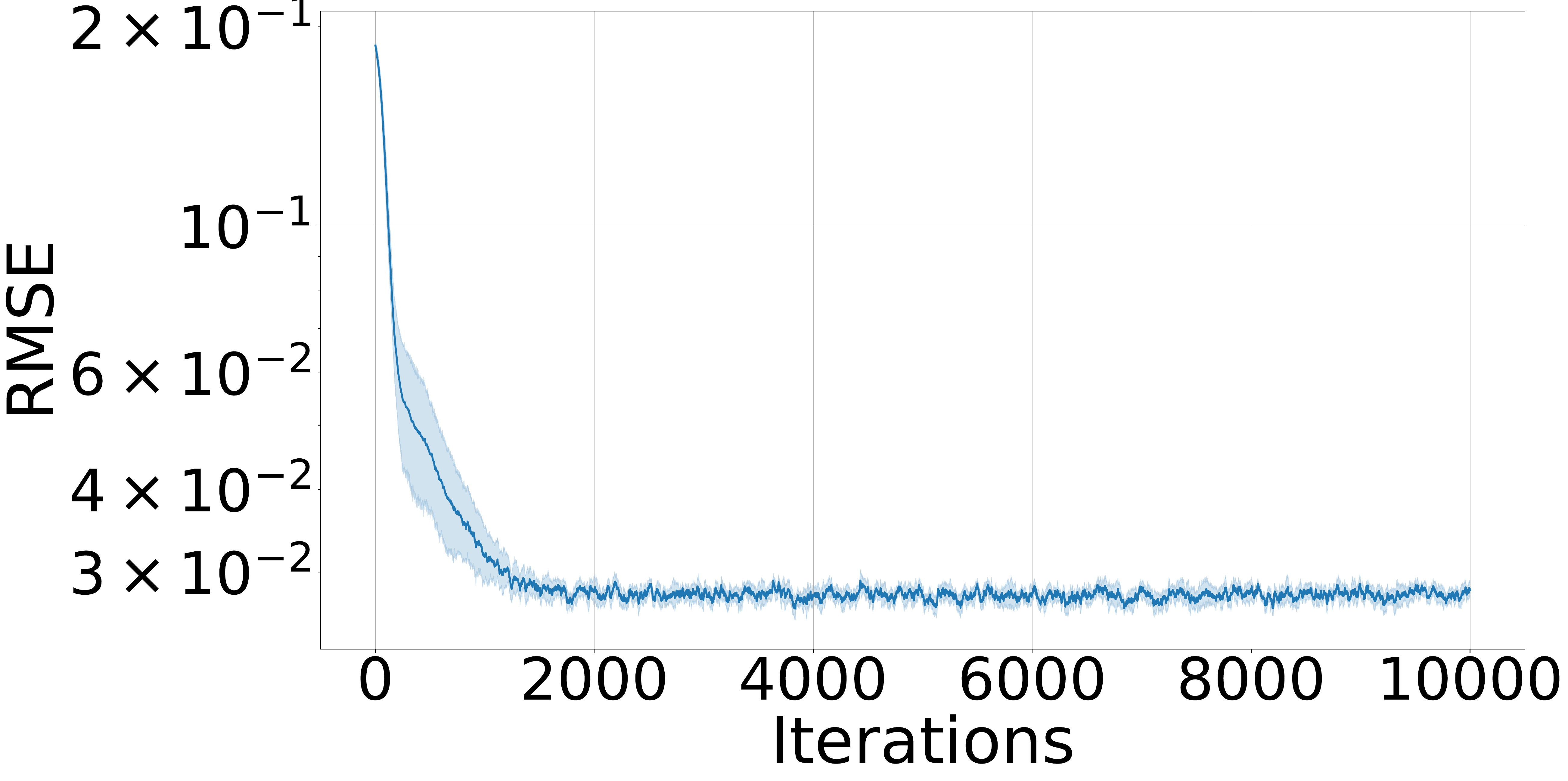}
         \caption{$\sigma=1$}
         \label{linear_iter_1.0_b}
     \end{subfigure}
    
    \caption{Minimal RMSE vs Iterations in dimension 100, Regularization $\rho=0.1$ for $\sigma=0$,  $\rho=0.2$ for $\sigma>0$, Initialization on the unit sphere.}
    \label{linear_iters_big}
\end{figure}

In each experiment, the RMSE is of the order $10^{-2}$, which can be interpreted as, on average per coordinate, the estimators $\hat{\mu}_0,\hat{\mu}_1$ are missing the true parameters $\mu_0^\star,\mu_1^\star$ by $10^{-2}$, suggesting a high level of accuracy in the estimation procedure.

\textbf{Making $d$ vary.}
We repeat the same experiment as before, just varying the dimension of the problem, the two centroids in $\R^d$ are defined by $\mu_0^\star=(\underbrace{0,\ldots,0}_{\text{d-1 times}},1)$ and $\mu_1^\star=(-1,\underbrace{0,\ldots,0}_{\text{d-1 times}})$. For $d$ ranging between $4$ and $200$, we show in Figures \ref{linear_manifold_change} and \ref{linear_iters_change}, on the x-axis the dimension of the problem and on the y-axis the minimal RMSE after $5000$ iterations. We remark that in Figures \ref{linear_manifold_iter_0.3_c} and \ref{linear_manifold_iter_1.0_c}, the y-axis displays only a single line due to the logarithmic scale; this may appear misleading, but it simply indicates that the next power of 10 is much larger and is not captured on the plot.

\begin{figure}
    \centering
    \begin{subfigure}[b]{0.325\textwidth}
         \centering
         \includegraphics[width=1.03\textwidth]{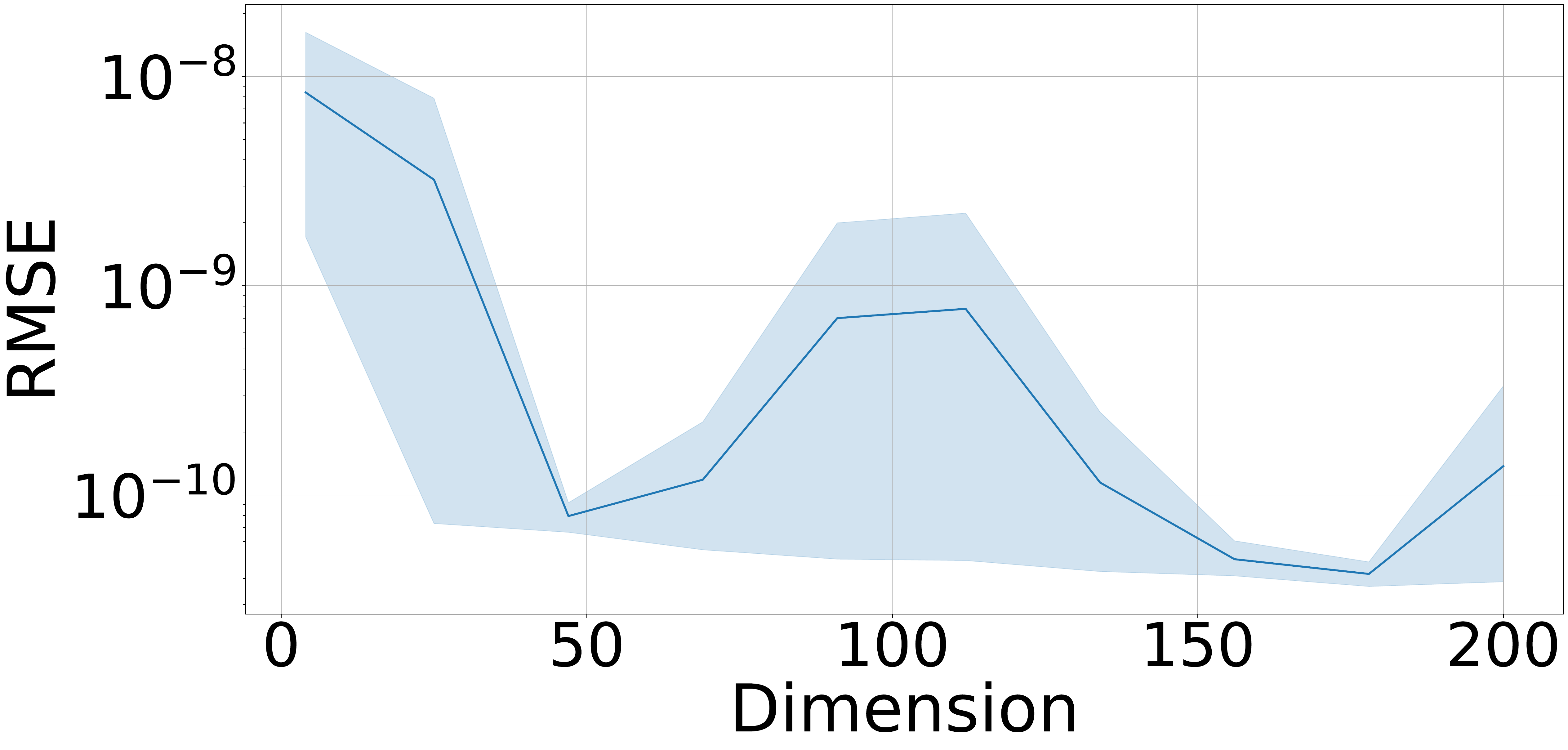}
         \caption{$\sigma=0$.}
         \label{linear_manifold_iter_0_c}
     \end{subfigure}
    \begin{subfigure}[b]{0.32\textwidth}
         \centering
         \includegraphics[width=0.96\textwidth]{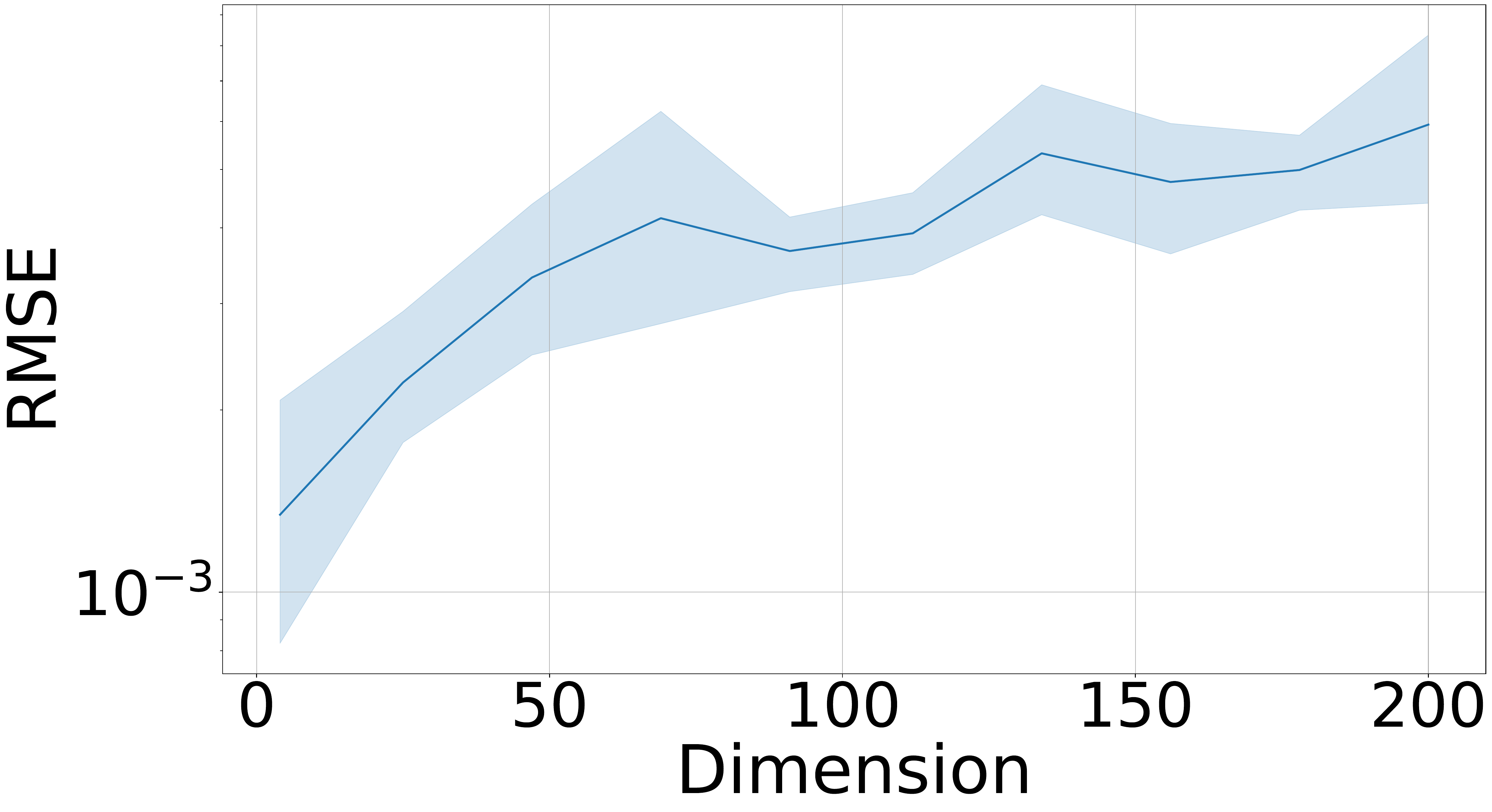}
         \caption{$\sigma=0.3$.}
         \label{linear_manifold_iter_0.3_c}
     \end{subfigure}
     \begin{subfigure}[b]{0.32\textwidth}
         \centering
         \includegraphics[width=0.96\textwidth]{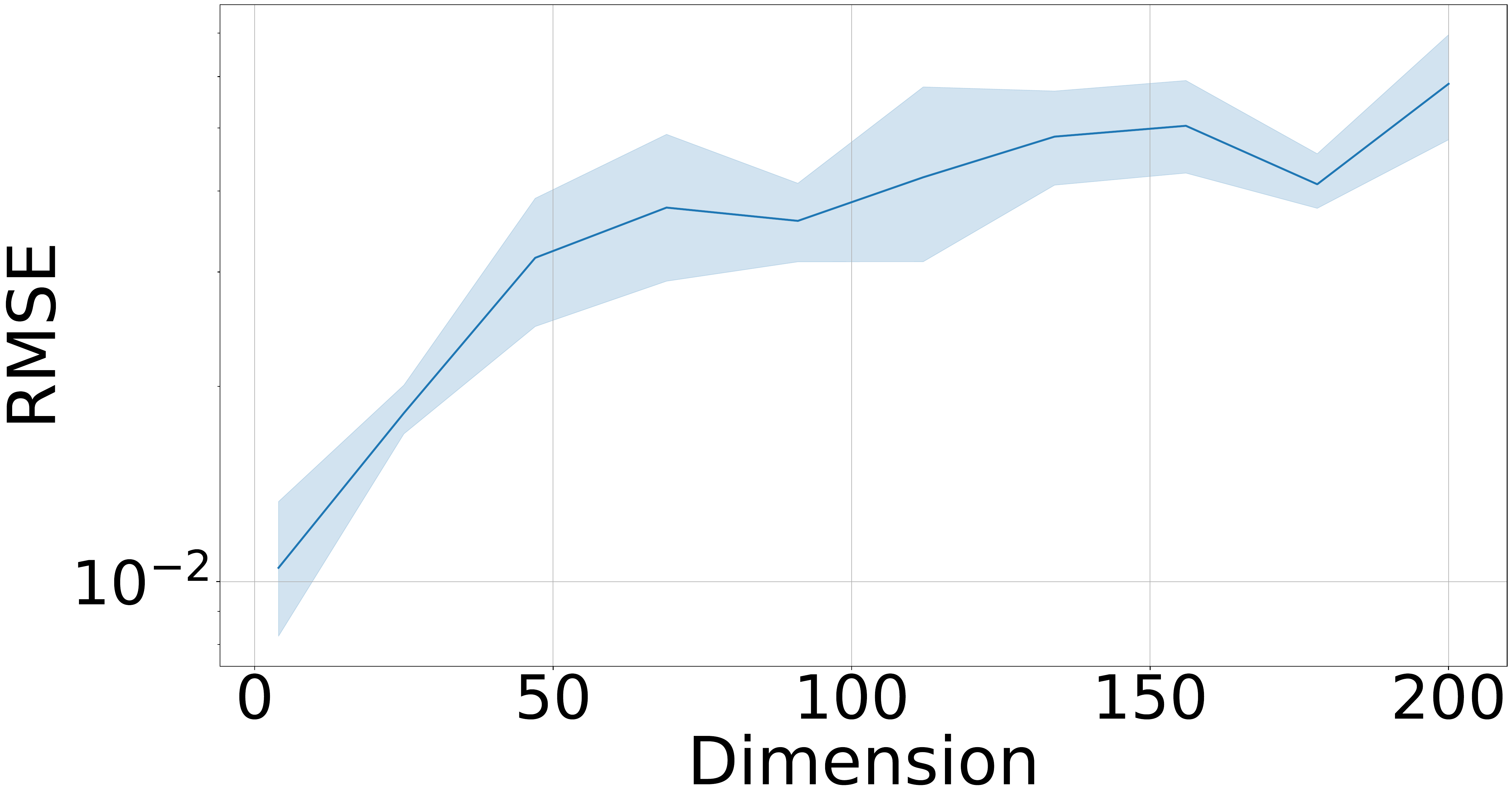}
         \caption{$\sigma=1$.}
         \label{linear_manifold_iter_1.0_c}
     \end{subfigure}
    \caption{Minimal RMSE vs Dimensionality, Initialization on the manifold $\mathcal{M}$.%{\color{red}problem on the y-axis: we do not know the scale}
    }
    \label{linear_manifold_change}
\end{figure}

\begin{figure}[!ht]
    \centering
    \begin{subfigure}[b]{0.325\textwidth}
         \centering
         \includegraphics[width=\textwidth]{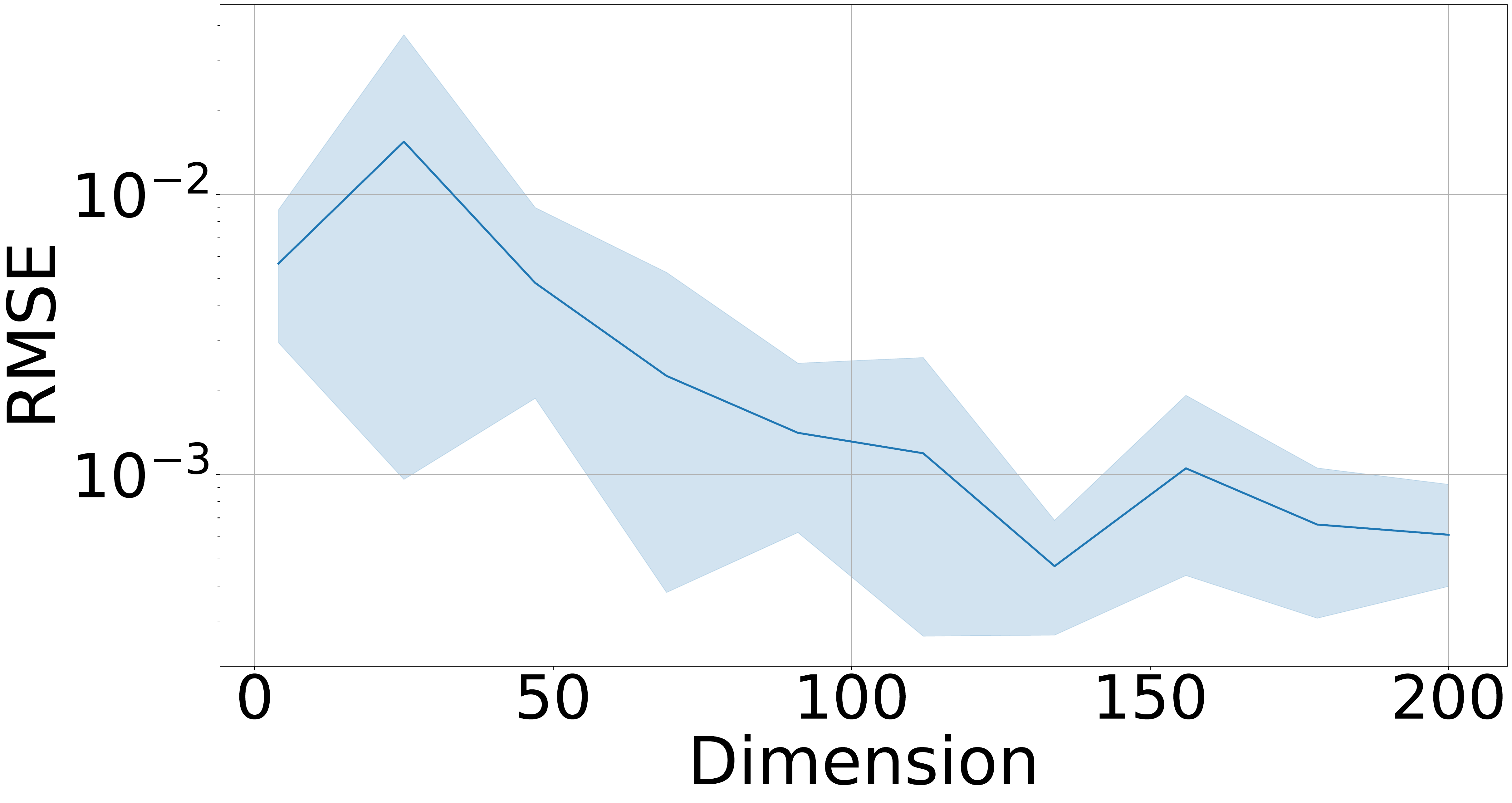}
         \caption{$\sigma=0$.}
         \label{linear_iter_0_c}
     \end{subfigure}
    \begin{subfigure}[b]{0.325\textwidth}
         \centering
         \includegraphics[width=\textwidth]{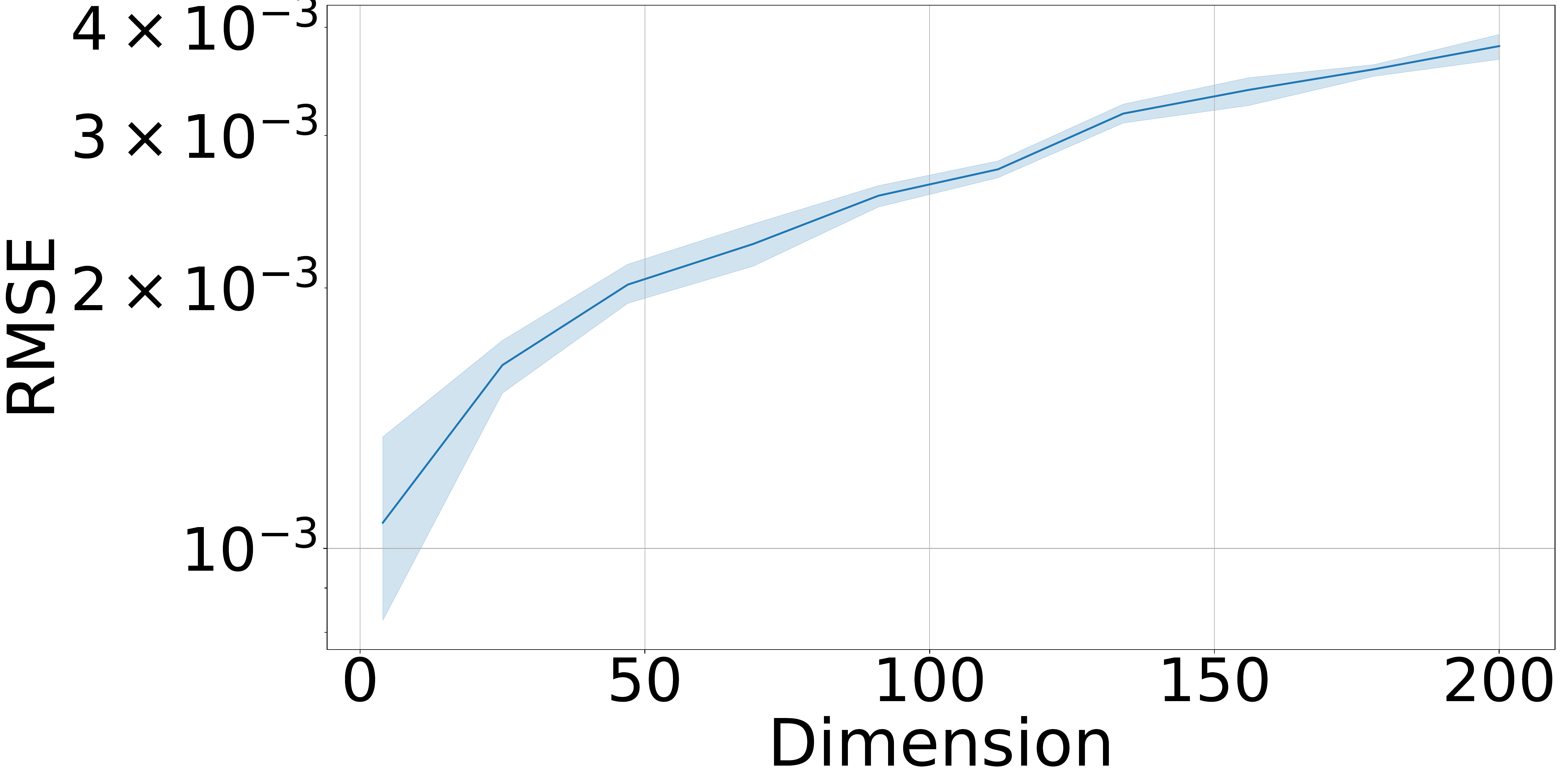}
         \caption{$\sigma=0.3$.}
         \label{linear_iter_0.3_c}
     \end{subfigure}
     \begin{subfigure}[b]{0.325\textwidth}
         \centering
         \includegraphics[width=\textwidth]{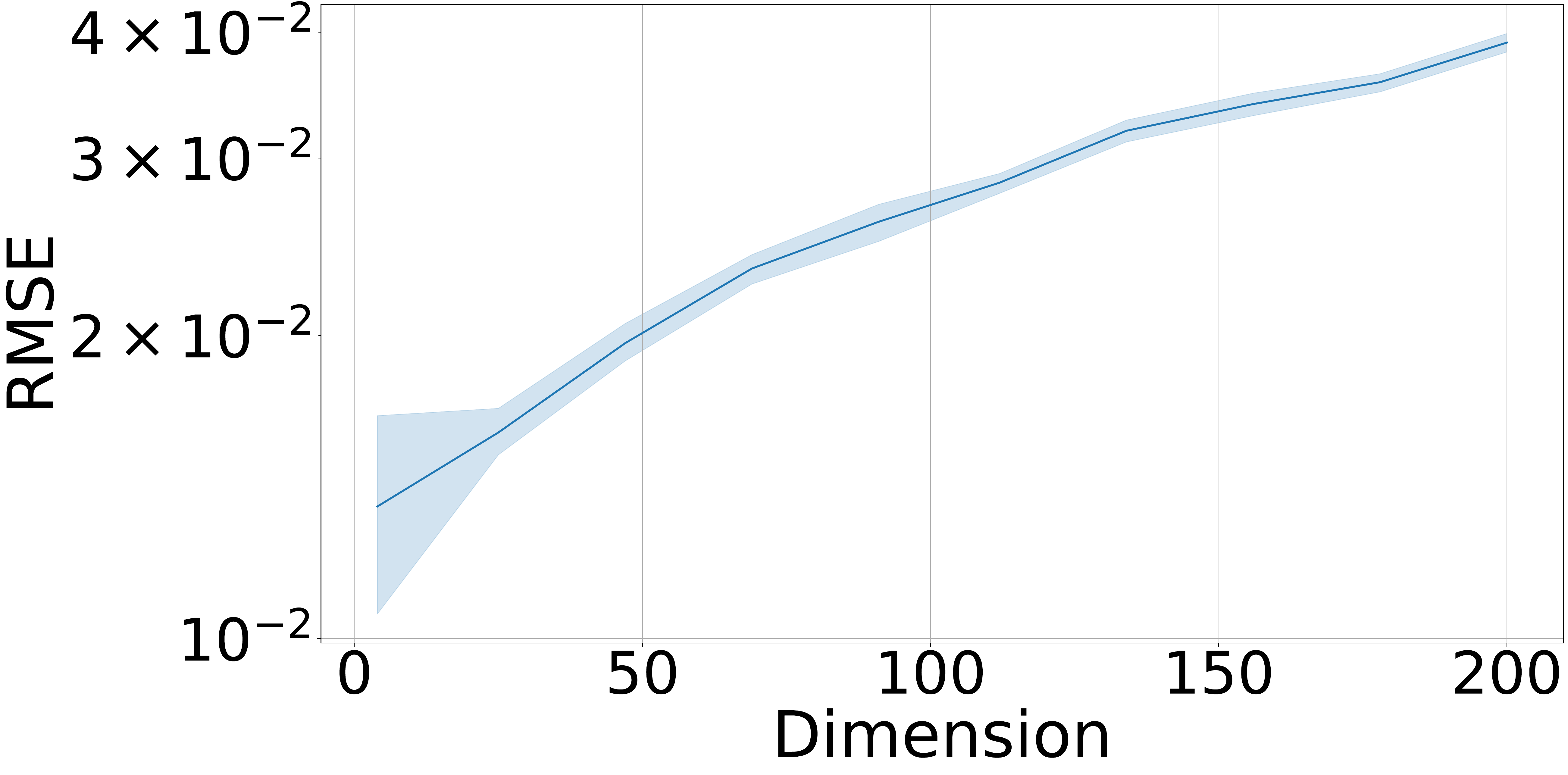}
         \caption{$\sigma=1$.}
         \label{linear_iter_1.0_c}
     \end{subfigure}
    
    \caption{Minimal RMSE vs Dimensionality, Regularization $\rho=0.1$ for $\sigma=0$,  $\rho=0.2$ for $\sigma>0$, Initialization on the unit sphere.}
    \label{linear_iters_change}
\end{figure}

%We note that independent of the initialization regime, when $\sigma=0$, the minimal RMSE seems to decay with bigger dimensions, however for strictly positive $\sigma$, the minimal RMSE increases as the dimension increases, nevertheless this increase is very slow, so we can get error per dimension of order $10^{-3}$ for $\sigma=0.3$ and of order $10^{-2}$ for $\sigma=1$. 

Regardless of the initialization regime, in the noiseless case ($\sigma=0$) the minimal RMSE decreases as the problem dimension $d$ grows. By contrast, for any strictly positive noise level, the minimal RMSE increases slowly with $d$— for $\sigma=0.3$ it remains of order $10^{-3}$, and for $\sigma=1$ of order $10^{-2}$. This reflects the growing difficulty of the problem as both dimensionality and noise increase. We recall that the minimal RMSE can be interpreted as the average discrepancy per coordinate between the estimated parameters and the true centroids, suggesting a high level of accuracy in each experiment.

Regarding running times, the experiments run in Figures \ref{linear_manifold_big} and \ref{linear_iters_big} take approximately two hours on a standard laptop, while that of Figures \ref{linear_manifold_change} and \ref{linear_iters_change} may require up to 12 hours, to cover the dimension grid.

\subsection{Relaxing the orthogonality assumption}
\label{app:relaxing_orhtogonality_xp}
We replicate the experiments and parameter‐selection procedure from Section \ref{app:xp_dim}, but this time initializing the centroids and the initial points uniformly at random on the sphere $\mathbb{S}^{d-1}$ in each run. Figure \ref{fig:norm1centroiddim50} illustrates the algorithm’s convergence behavior over $10000$ iterations in the case $d=50$. In contrast, Figure \ref{fig:norm1centroidchangedim} shows the minimal RMSE of \eqref{PGDrho} after $5000$ iterations for dimensions $d$ ranging from 4 to 100.
\begin{figure}
    \centering
    \includegraphics[width=0.4\linewidth]{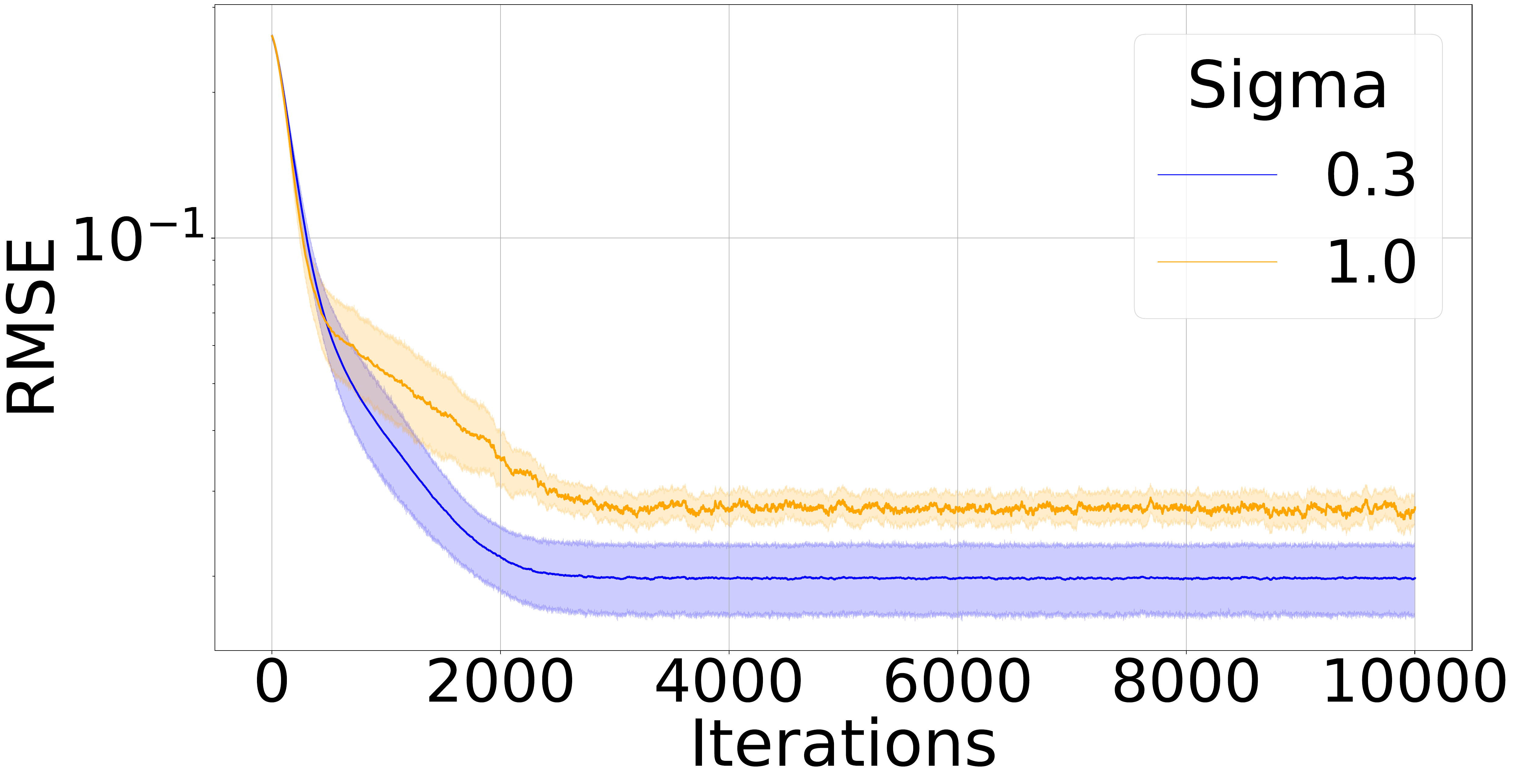}
    \caption{Minimal RMSE vs Iterations in dimension 50, Random initialization on the unit sphere of the centroids and of initial guesses, Regularization $\rho=0.2$, 10 runs, 95\% percentile intervals are plotted.}
    \label{fig:norm1centroiddim50}
\end{figure}
\begin{figure}
    \centering
    \includegraphics[width=0.4\linewidth]{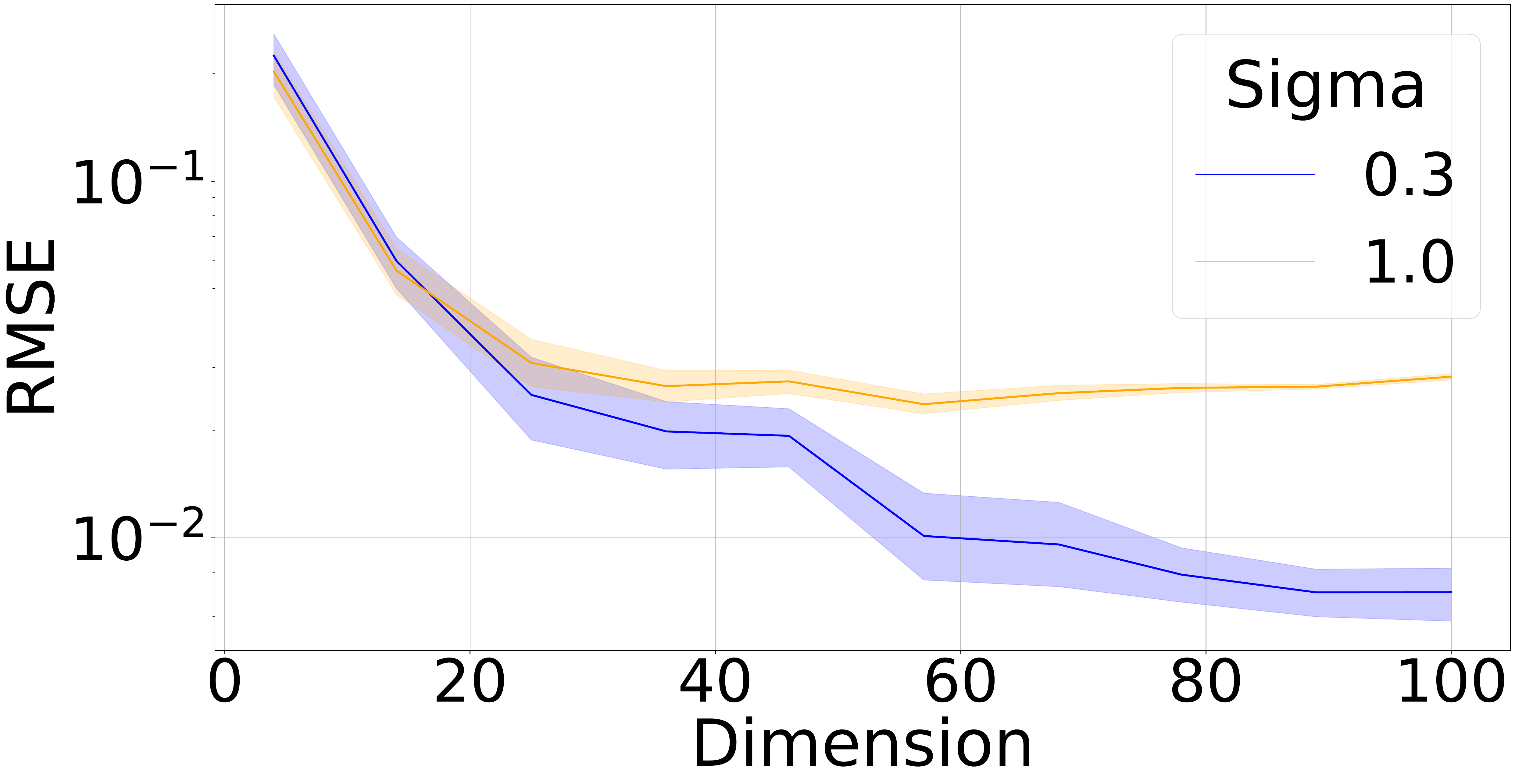}
    \caption{Minimal RMSE vs Dimensionality, Random initialization on the unit sphere of the centroids and of initial guesses, Regularization $\rho=0.2$, 10 runs, 95\% percentile intervals are plotted.}
    \label{fig:norm1centroidchangedim}
\end{figure}

We observed the expected behavior: as the dimension increases, randomly initializing centroids on the sphere makes them more likely to be orthogonal, and thus \clrmodif{training via the regularized theoretical risk} yields better results at higher dimensions. This effect is stronger at lower noise levels and becomes noticeably clearer beyond 40 dimensions.

Regarding running times, the experiments run in Figure \ref{fig:norm1centroiddim50} take approximately one hour on a standard laptop, while those of Figure \ref{fig:norm1centroidchangedim} may require up to 7 hours, to cover the dimension grid.

\subsection{Euclidean Projected SGD} \label{projsgd}
Based on Section \ref{def:psgdlinear} and the algorithm \eqref{PGDrho} presented there, we define the projected Euclidean SGD as the following update rule, with initialization $(\mu_0^0,\mu_1^0)\in(\mathbb{S}^{d-1})^2$, and stepsize $\gamma$: \begin{equation}\label{eq:psgd-euclidean}\tag{$\mathrm{PSGD-Euclidean}$} \begin{split} g_0^k&=\frac{1}{M}\sum_{i=1}^M\nabla_{\mu_0} h(\mu_0^k,\mu_1^k,\xi_i^k),\\ g_1^k&=\frac{1}{M}\sum_{i=1}^M\nabla_{\mu_1} h(\mu_0^k,\mu_1^k,\xi_i^k),\\ \mu_0^{k+1}&=\frac{\mu_0^k-\gamma g_0^k}{\Vert \mu_0^k-\gamma g_0^k\Vert_2},\\ \mu_1^{k+1}&=\frac{\mu_1^k-\gamma g_1^k}{\Vert \mu_1^k-\gamma g_1^k\Vert_2}, \end{split} \end{equation} 
where $h$ is defined in \eqref{hdef}.

Our empirical analysis reveals that it is a viable and simpler alternative to the theoretically grounded Riemannian gradient. This section provides the numerical evidence supporting this choice.  To compare the performance of Euclidean SGD against the Riemannian SGD (both projected on the unit sphere), we repeated the experiments outlined in Section \ref{sec:gmm}. The results are presented in Figure \ref{projsgdplots}. This empirical equivalence confirms that the simple projection of the Euclidean gradient serves as an effective and computationally simpler proxy for the true Riemannian gradient. 
\begin{figure}[ht]
    \centering
    \begin{subfigure}[b]{0.48\textwidth}
         \centering
         \includegraphics[width=\textwidth]{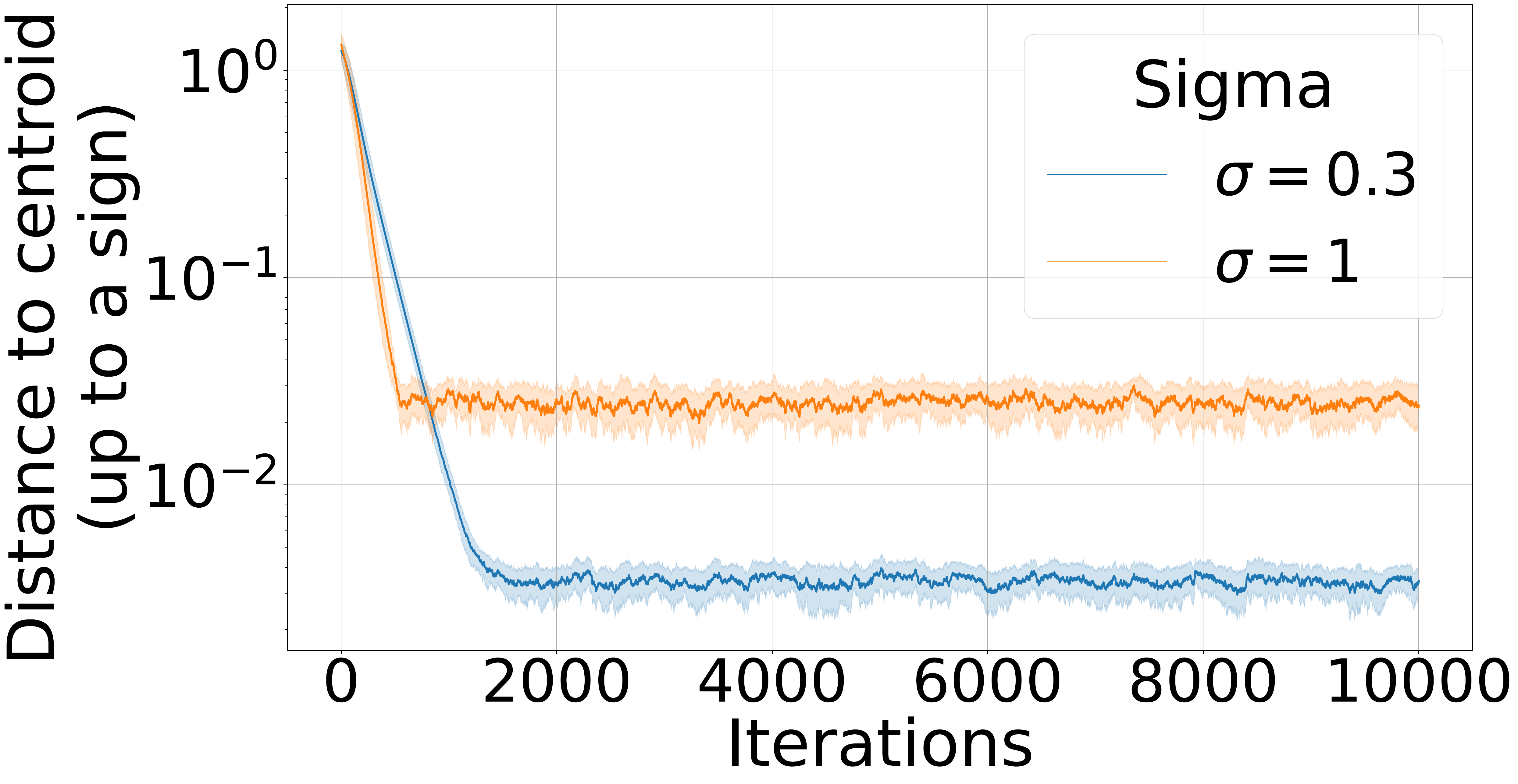}
         \caption{Distance to centroids vs Projected SGD iterations for the minimization of $\mathcal{R}$, initialization on the manifold.}
    %\label{linear_noise_reg}
     \end{subfigure}
     \begin{subfigure}[b]{0.48\textwidth}
         \centering
         \includegraphics[width=\textwidth]{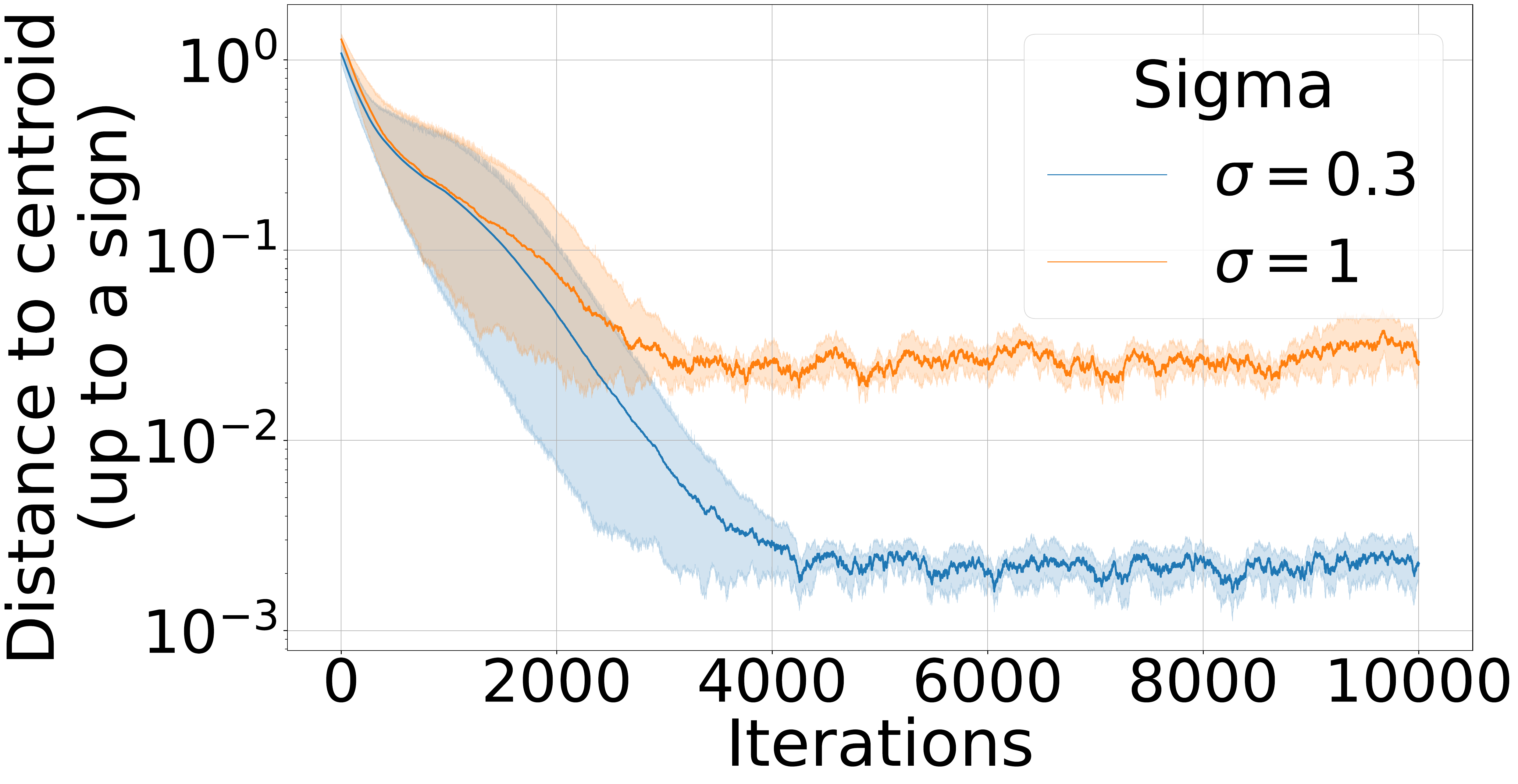}
         \caption{Distance to centroids vs Projected SGD iterations for the minimization of $\mathcal{R}^{\rho}$, initialization on the unit sphere, with regularization $\rho=0.2$. }
    %\label{linear_noise_iters}
     \end{subfigure}
     \caption{Performance of Projected SGD. 10 runs, 95\% percentile intervals are plotted.}
     \label{projsgdplots}
\end{figure}

\subsection{Extension to Gaussian mixture model with 3 components}
\label{app:K=3}
We propose an extension of our work to the case of three orthonormal centroids. We believe that the approach described below would further generalize to the case of $K$ orthonormal centroids with $K<d$. Specifically, we assume that the tokens are i.i.d.\ drawn from the mixture model
\begin{align}
\label{def:3-gaussian_mixture_model}
\tag{$\mathrm{P}_{\sigma}$}
X_\ell &\sim \frac{1}{3}\mathcal{N}(\mu_0^\star,\sigma^2I_d)+\frac{1}{3}\mathcal{N}(\mu_1^\star,\sigma^2I_d)+\frac{1}{3}\mathcal{N}(\mu_2^\star,\sigma^2I_d), 
\end{align}
where $\mu_0^\star,\mu_1^\star, \mu_2^\star$  are orthonormal vectors. It is natural to consider an attention-based predictor composed of three attention heads, parameterized by $\mu_0,\mu_1,\mu_2\in\mathbb{R}^d$,
    \begin{equation}
    T^{{\rm lin}, \mu_0, \mu_1,\mu_2}(\mathbb{X}) = H^{\mathrm{lin},\mu_0}(\mathbb{X}) + H^{\mathrm{lin},\mu_1}(\mathbb{X})+H^{\mathrm{lin},\mu_2}(\mathbb{X}).     
    \end{equation}
The associated risk is $\mathcal{R}(\mu_0,\mu_1,\mu_2)=\mathbb{E}[\Vert X_1-T^{{\rm lin}, \mu_0, \mu_1,\mu_2}(\mathbb{X})_1\Vert_2^2]$. 
There are two natural generalizations of the regularization term to this case: 
\begin{align*}
r^{(1)}(\mu_0,\mu_1,\mu_2)&= \sum_{0\leq i<j\leq 2}\langle\mu_i,X_1\rangle^2\langle\mu_j,X_1\rangle^2,\\
r^{(2)}(\mu_0,\mu_1,\mu_2)&= \prod_{i=0}^2\langle\mu_i,X_1\rangle^2.
\end{align*}
The first one promotes pairwise orthogonality while the second one promotes mutual orthogonality.

We use input sequences of length $L=30$ in $\R^6$, where we define the three centroids $\mu_0^\star=(1,0,0,0,0,0), \mu_1^\star=(0,0,0,1,0,0), \mu_2^\star=(0,0,0,0,0,1)$. The model is trained with an online batch sampling strategy (similar to \ref{PGDrho}, changing the data distribution and the regularization term), with a batch size of 256, and a learning rate of 0.01. We take $\lambda=0.6$ for $\sigma=0.3$, and $\lambda=0.2$ for $\sigma=1$. Since any parameter could learn any centroid up to a sign, we introduce the following distance to the centroid (up to a sign): $$\min_{\pi\in S_3}\min_{s\in \{-1,1\}^3}\sqrt{\sum_{i=0}^2 \Vert \hat{\mu}_{\pi(i)}-s_i\mu_i^\star\Vert^2},$$
where $S_3$ is the symmetric group of order 3 and $\hat{\mu}_0,\hat{\mu}_1,\hat{\mu}_2$ are the parameters returned by the algorithm. We present the results in Figure \ref{mixture3}. We observe that the regularization $r^{(1)}$ outperforms $r^{(2)}$, since it explicitly includes all pairwise terms to enforce orthogonality. However, we note that the number of regularization terms grows quadratically with the number of centroids.

\begin{figure}
    \centering
    \begin{subfigure}[b]{0.49\textwidth}
         \centering
         \includegraphics[width=\textwidth]{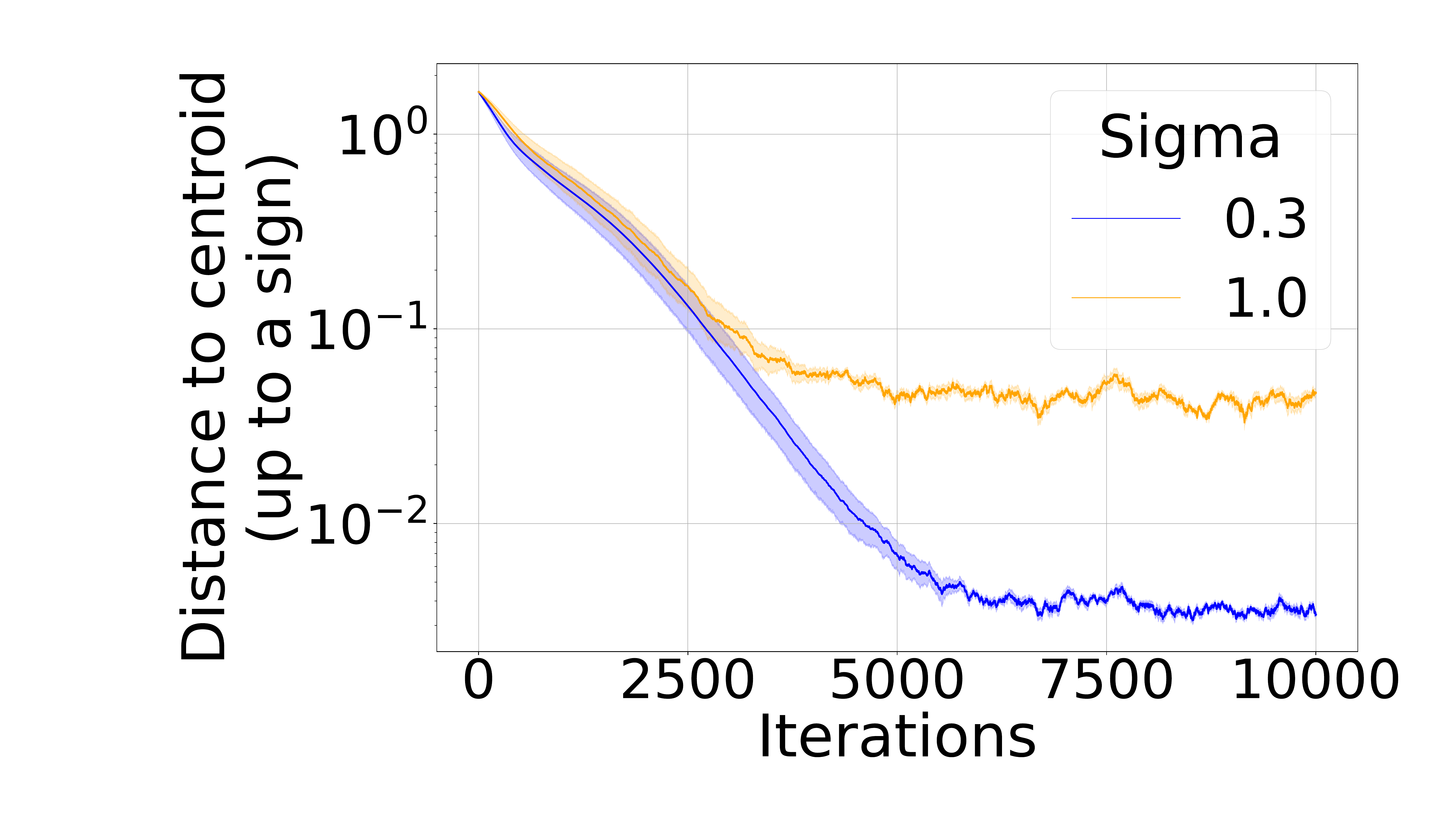}
         \caption{With regularization $r^{(1)}$.}
         \label{mixture3_reg}
     \end{subfigure}
    \begin{subfigure}[b]{0.49\textwidth}
         \centering
         \includegraphics[width=\textwidth]{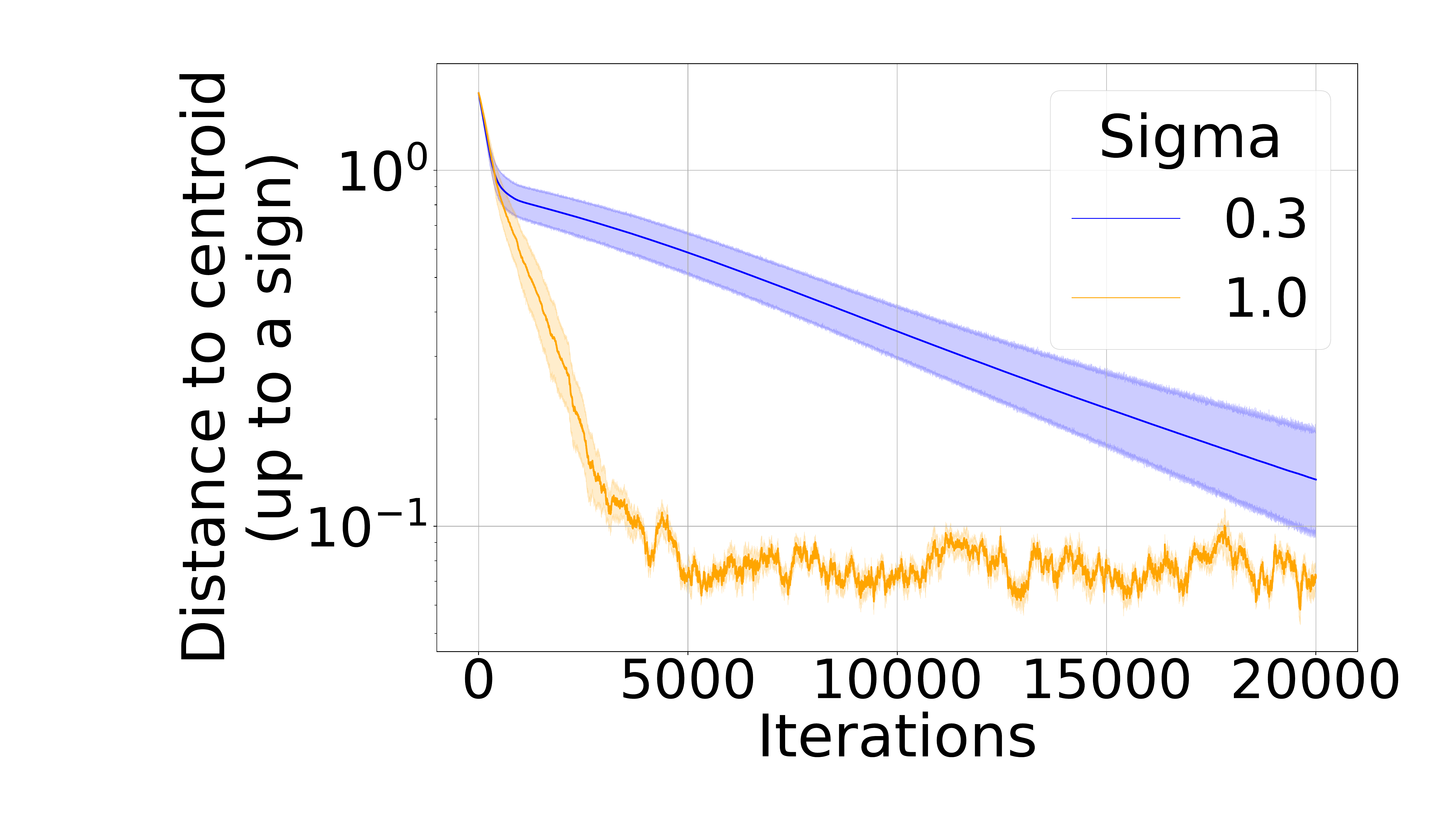}
         \caption{With regularization $r^{(2)}$.}
         \label{mixture3_20000}
     \end{subfigure}
    
    \caption{Distance to centroids vs number of iterations, with regularization strength $\rho=0.2$, initialization on the unit sphere. 10 runs, 95\% percentile intervals are plotted.}
    \label{mixture3}
\end{figure}

Regarding running times, the experiments run in Figure \ref{mixture3} take approximately 15 minutes on a standard laptop.

\color{black}
\section{Softmax attention layers and clustering}\label{sec:softmax}

In this section, we assess the abilities of attention-based predictors involving a softmax activation in a clustering context.

\subsection{Problem setting}
%Therefore, for each token, there exists an associated latent variable, denoted by $Z_\ell$, corresponding to a Bernoulli random variable of parameter $1/2$ and encoding its corresponding cluster.

\paragraph{An attention-based learner with softmax activation.} We recall that an attention head made of a self-attention layer can be written as follows: 
 $$
 H^{\mathrm{soft_\lambda}}(\mathbb{X}) = \mathrm{softmax}_\lambda \left(  \mathbb{X}QK^\top \mathbb{X}^\top \right) \mathbb{X} V
 $$
where the softmax of temperature $\lambda >0$ is applied row-wise, and the matrices $K,Q,V \in \mathbb{R}^{d\times p}$ are usually referred to as keys, queries and values. 
As in Section \ref{sec:starter}, we assume that the values are taken as identity, meaning that the attention head simply outputs combinations of the initial tokens weighted by attention scores. Furthermore, we assume that the key and query matrices are equal to the same row matrix $\mu^\top\in \mathbb{R}^{1\times d}$, we obtain 
 \begin{align}
 H^{\mathrm{soft_{\lambda}},\mu}(\mathbb{X}) = \mathrm{softmax}_\lambda \left(  \mathbb{X}\mu \mu^\top \mathbb{X}^\top \right) \mathbb{X} .
 \end{align}
With such an architecture, the $\ell$-th output vector is therefore given by
\begin{equation}
H^{\mathrm{soft_{\lambda}},\mu}(\mathbb{X})_\ell = \sum_{k=1}^L \mathrm{softmax}_\lambda \left(  X_\ell^\top  \mu \mu^\top \mathbb{X}^\top \right)_k X_k ,
\end{equation}
which corresponds to aggregating the $X_k$'s when $X_k$ and $X_\ell$ are simultaneously aligned with $\mu$.
This head should be a good candidate to estimate a centroid of a mixture model. 
In the case where the mixture involves two components, one could train two attention heads:
\begin{comment}
\clrcomment{Je ne suis pas pour mettre un "hat" pcq ça fait empirique alors que l'on regarde tjs un risque théorique. Pour n'importe quelle architecture, c'est tjs le même risque au fond, donc je me demande si on ne peut pas faire un abus de notation, en écrivant toujours $\mathcal{R}$ et en précisant que c'est une fonction des paramètres de l'architecture en considération et que ce sera toujours clair en fonction du contexte ?}
\end{comment}
\begin{align}
    (\hat{\mu}_0,\hat{\mu}_1) \in \mathrm{argmin}_{\mu_0 , \mu_1 \in \mathbb{S}^{d-1}} \mathcal{R}^{\mathrm{soft}}(\mu_0,\mu_1),
\end{align}
where 
\begin{align}
\begin{split}
    \mathcal{R}^{\mathrm{soft}}(\mu_0,\mu_1) &= \frac{1}{L}\mathbb{E}\left[ \left\| \mathbb{X} - (H^{\mathrm{soft_{\lambda}},\mu_0} + H^{\mathrm{soft_{\lambda}},\mu_1})(\mathbb{X}) \right\|_F^2 \right] \\
    &=\frac{1}{L} \mathbb{E}\left[  \sum_{\ell=1}^L \left\| X_\ell  - (H^{\mathrm{soft_{\lambda}},\mu_0} + H^{\mathrm{soft_{\lambda}},\mu_1})(\mathbb{X})_\ell \right\|_2^2 \right].
    \end{split}
\end{align}
%\clrmodif{The distinctive feature of this risk lies in the fact that, if the predictor were able to return, for each token $X_\ell$, its associated centroid $\mu^\star_{Z_\ell}$, the risk would exactly correspond to a quantization error, characteristic of a standard clustering task.}

\begin{remark}[The attention heads are biased]
\label{rem:attention_bias}
As the tokens $(X_\ell)_\ell$ are independent, we have that $$\mathcal{R}^{\mathrm{soft}}(\mu_0,\mu_1)=\mathbb{E}[\Vert X_1-(H^{\mathrm{soft_\lambda},\mu_0}+H^{\mathrm{soft}_\lambda,\mu_1})(\mathbb{X})_1\Vert_2^2].$$
We note that $H^{\mathrm{soft,\mu}}(\mathbb{X})_1=\mathrm{softmax}_{\lambda}(\langle X_1,\mu\rangle v)\mathbb{X}$, where the vector $v$ is $L$-dimensional with components $v_\ell=\langle X_\ell,\mu\rangle$. In the idealized case where $\sigma^2=0$, then each token $X_\ell$ is sampled according to a mixture of Dirac masses given by $\frac{1}{2}\delta_{\mu_0^\star}+\frac{1}{2}\delta_{\mu_1^\star}$. Therefore, if we evaluate  $H^{\mathrm{soft_{\lambda}},\mu}(\mathbb{X})_1$ on $\mu=\mu_0^\star$, we observe the following:
\begin{itemize}
    \item Conditionally to $X_1=\mu_1^\star$, then $$H^{\mathrm{soft_\lambda},\mu_0^\star}(\mathbb{X})_1=\left(\frac{1}{L},\ldots,\frac{1}{L}\right)\mathbb{X}.$$
    This implies that even in a completely misaligned set-up (i.e., $\mu=\mu_0^\star, X_1=\mu_1^\star$), the proposed attention head will return, as the transformation of the first token $X_1$, the empirical mean of the sequence tokens. This highlight the bias introduced by such an attention head, which should be handled through the use of centering techniques.
    \item Conditionally to $X_1=\mu_0^\star$, then $$
    H^{\mathrm{soft_\lambda},\mu_0^\star}(\mathbb{X})_1=\mathrm{softmax}_{\lambda}(v)\mathbb{X}\approx \frac{\exp(\lambda)\mu_0^\star+\mu_1^\star}{\exp(\lambda)+1}.
    $$ 
    This suggests that in the perfectly aligned case (i.e., $\mu=\mu_0^\star, X_1=\mu_0^\star$), selecting a sufficiently large softmax temperature $\lambda$ will cause the model to assign negligible weight to the misaligned components --a desirable property.
\end{itemize}
\end{remark}

\paragraph{Debiasing and disentangling heads.} To handle the bias introduced by the attention heads, discussed in Remark \ref{rem:attention_bias}, we propose to consider centered heads instead, leading to the following modified version of the risk
\begin{align}
    \mathcal{R}^{\mathrm{soft}}(\mu_0,\mu_1,\lambda, \psi) 
    &= \frac{1}{L}\mathbb{E}\left[  \sum_{\ell=1}^L \left\| X_\ell  -   (  H^{\mathrm{soft_{\lambda}},\mu_0} + H^{\mathrm{soft_{\lambda}},\mu_1})(\mathbb{X})_\ell + \frac{\psi}{L} \sum_{k=1}^L X_k \right\|_2^2 \right].
\end{align}
Considering such a risk is equivalent to using heads where a term proportional to $\frac{1}{L} \sum_{k=1}^L X_k$ is substracted.  This type of head is known as shaped attention \citep{noci2023shaped,simplifying}. 
For instance, initializing $\psi=2$ debiases both attention heads independently, without considering their interaction.
Using heads with oracle parameters, one would expect that a single head provides all the necessary information, making it sufficient to debias only that head (i.e. $\psi = 1$). In that case, one should obtain: %heads with oracle parameters should satisfy %\pierrecomment{explain where this intuition comes from}
\begin{align}
    \mathcal{R}^{\mathrm{soft}}(\mu_0^\star,\mu_1^\star,\lambda^{\star}, 1) 
    \approx \min\mathcal{R}^{\mathrm{soft}}.
\end{align}

However, when using non-oracle parameters $\mu_0$ and $\mu_1$ within the debiased heads, the risk function may admit global minima where the heads align with zero, one, or both centroids, which is undesirable for the clustering purpose. Therefore, we must enforce a constraint ensuring that each head aligns with exactly one centroid. To achieve this, we introduce the regularization term:
$$
r_0(\mu_0,\mu_1)\eqdef\mathbb{E}[(\langle \mu_0,X_1\rangle-1)^2(\langle \mu_1,X_1\rangle-1)^2]+\langle \mu_0,\mu_1\rangle,
$$
leading to the following regularized optimization problem
\begin{equation}\label{risk_softmax_regularized}\tag{$\mathcal{P}_{{\rho_0}}$}
\begin{aligned}
    \min_{\mu_0,\mu_1\in\mathbb{S}^{d-1}}\mathcal{R}^{\mathrm{soft},\rho_0}(\mu_0,\mu_1,\lambda, \psi) 
    \eqdef\mathcal{R}^{\mathrm{soft}}(\mu_0,\mu_1,\lambda,\psi)+\rho_0 r_0(\mu_0,\mu_1),
    \end{aligned}
\end{equation}
where $\rho_0>0$. %\clrcomment{why $\rho_0$ and not $\rho$?}

%\begin{figure}
%    \centering
%    \includegraphics[width=0.45\linewidth]{}
%    \caption{Interference vs $\sigma$}
%    \label{interference}
%\end{figure}

\subsection{Numerical experiments}\label{app:xp_softmax}

We run Projected Stochastic Gradient Descent (see Appendix \ref{psgdsoft}) to learn the centroids $\mu_0^\star$ and $\mu_1^\star$ as well as the weights $\psi$ and $\lambda$.

\begin{comment}\clrcomment{Ici j'irais beaucoup plus rapidement, on considère des mixtures de centroides ... et de variance $\sigma=...$ (low interference) or $\sigma=...$ (high interference). Puis faire commentaire commun. A reprendre pour que ce soit plus punchy.}
\end{comment}
In this experiment, we use input sequences of length $L=30$ of 5-dimensional tokens ($d=5$), drawn from a 2-component Gaussian mixture of centroids $\mu_0^\star=(0,0,0,0,1)$ and $\mu_1^\star=(-1,0,0,0,0)$. The variance of each component is either set to $\sigma=0.3$ (low interference) or to $\sigma=1$ (high interference).

 The model based on two softmax attention heads parameterized by $\mu_0$ and $\mu_1$ is trained using \eqref{PGDtheta} with an online batch sampling strategy, with a batch size of 256, a learning rate of $\gamma=0.01$, and running for a total of 3000 iterations. Additionally, we initialize with $\lambda$ set to 3 and a centering value $\psi$ of 2. Here we use the metric \emph{distance to the centroids}, given by
\begin{align}
&\sqrt{\min \left\{ \mathrm{dist}_1, \mathrm{dist}_2 \right\}},
\label{def:xp_metric3}
\end{align}
where $$\mathrm{dist}_1=\Vert\hat{\mu}_0 - \mu_0^\star\Vert^2 + \Vert\hat{\mu}_1 - \mu_1^\star\Vert^2,$$
$$\mathrm{dist}_2=\Vert\hat{\mu}_0 - \mu_1^\star\Vert^2 + \Vert\hat{\mu}_1 - \mu_0^\star\Vert^2,$$
and $\mu_0^\star, \mu_1^\star$ denote the true centroids, respectively, while $\hat{\mu}_0, \hat{\mu}_1$ are the parameters returned by \eqref{PGDtheta}. We remark that this distance is finer than the one defined in~\ref{def:xp_metric2}, as it does not disregard sign flips. The results are visualized in Figure \ref{softmax_reg_noise}, we observe that a regularization term substantially improves the accuracy of the recovered solutions. However, as the strength of the regularization increases, it gradually overrides the original objective and impairs the alignment of the head parameters with the true centroids —an 
effect that becomes more pronounced at higher noise level, an effect also noticed in Section \ref{sec:gmm}.
\begin{figure}[ht]
    \centering
    \begin{subfigure}[b]{0.48\textwidth}
        \centering
         \includegraphics[width=\textwidth]{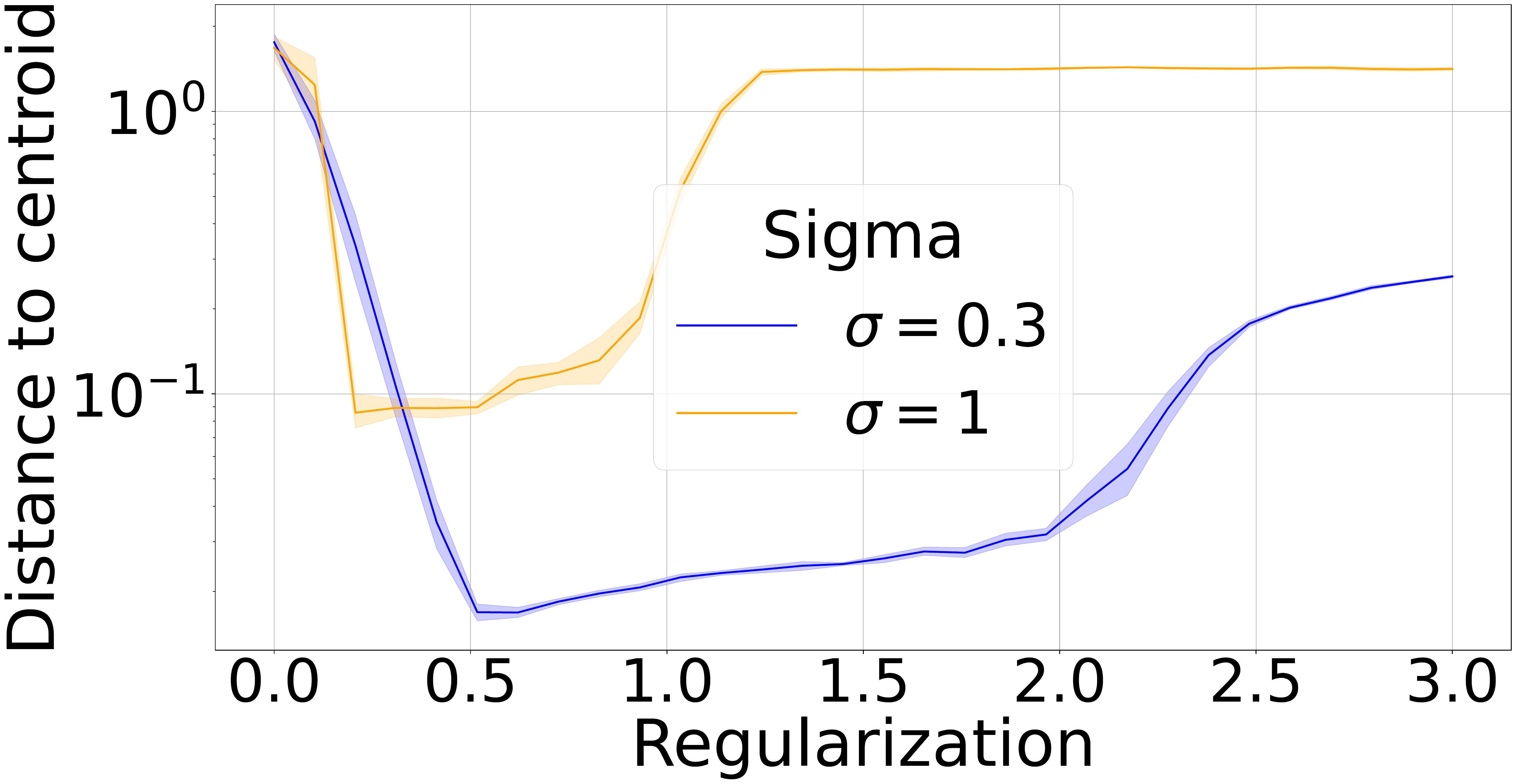}
         \caption{Distance to the centroids after 5000 PSGD iterations vs regularization strength $\rho$ for the minimization of $\mathcal{R}^{\mathrm{soft},\rho_0}$, with an initialization on the unit sphere, with data drawn from the non-degenerate case $\eqref{def:gaussian_mixture_model}$. 10 runs, 95\% percentile intervals are plotted}
         \label{softmax_reg_noise}
    \end{subfigure}
    \begin{subfigure}[b]{0.48\textwidth}
        \centering
        \includegraphics[width=\textwidth]{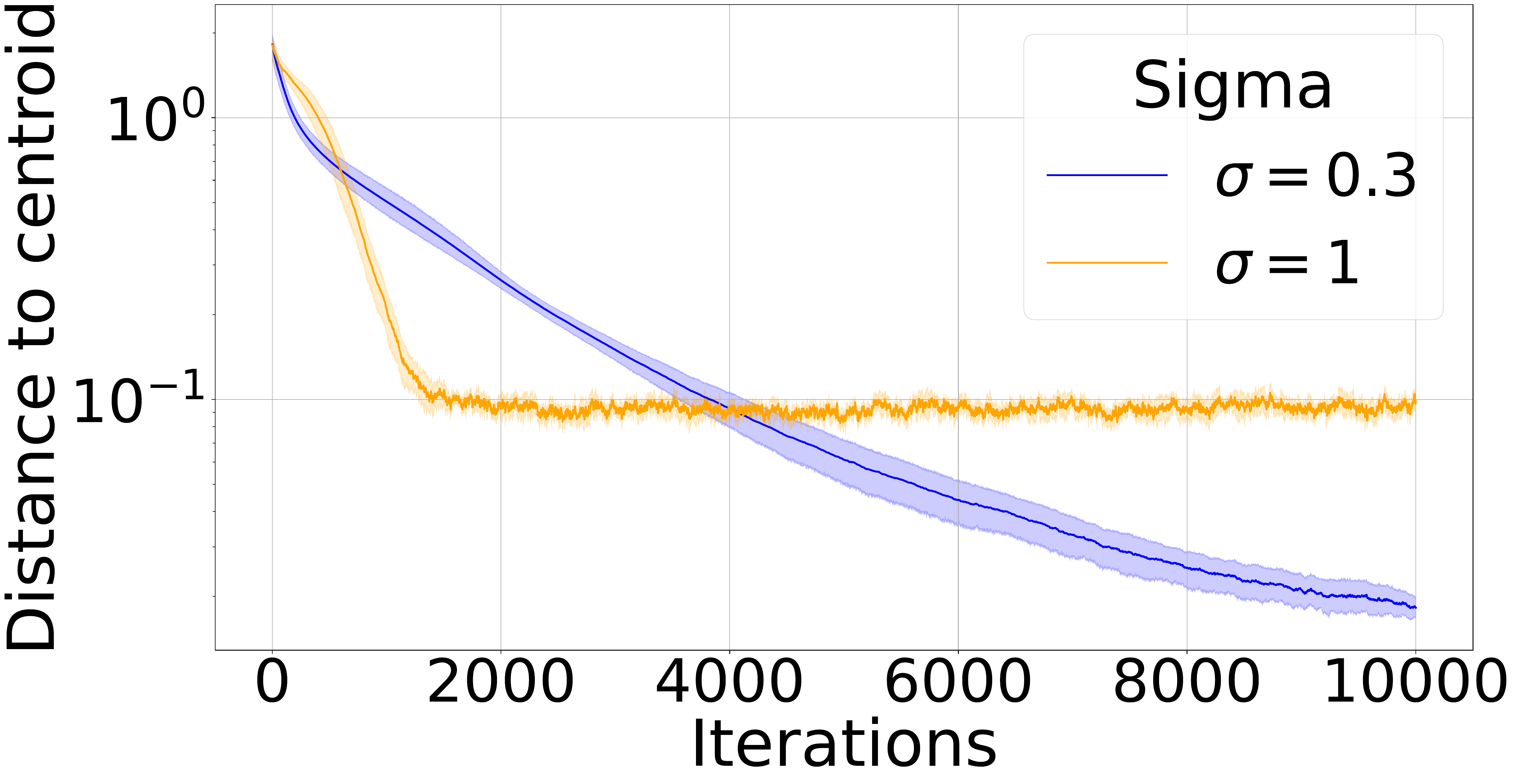}
        \caption{Distance to centroids vs PSGD iterations for the minimization of $\mathcal{R}^{\mathrm{soft},\rho_0}$, with an initialization on the unit sphere and regularization $\rho_0=0.5$, with data drawn from the non-degenerate case $\eqref{def:gaussian_mixture_model}$. 10 runs, 95\% percentile intervals are plotted.}
        \label{softmax_iter_noise}
    \end{subfigure}
    \caption{Performance of \eqref{PGDtheta}, with data drawn from the the non-degenerate case \eqref{def:gaussian_mixture_model}. 10 runs, 95\% percentile intervals are plotted.}
    \label{plots_softmax}
\end{figure}

%% plt.rcParams.update({'font.size': 14})
%% (in the plot) linewidth=3
%% save .pdf: plt.savefig("title.pdf", format="pdf", bbox_inches="tight")

In Figure \ref{softmax_iter_noise}, we set the regularization parameter $\rho_0$ to 0.5, and run \ref{PGDtheta} for $10^4$ iterations. We observe that the model yields accurate solutions under low interference ($\sigma = 0.3$); however, as the interference increases ($\sigma = 1$), the ability of the softmax attention heads to align with the underlying centroids is progressively impaired. A similar loss in alignment accuracy is observed as the dimensionality increases.

%We repeat the experiment with a bigger variance, in this case we set $\sigma=1$. In Figure \ref{softmax_reg_noise}, we present the distance to the centroids under different regularization levels $\rho_0$, ranging from 0 to 3, and in Figure \ref{softmax_iter_noise} we set $\rho_0=0.5$ to present the distance to the centroid at each iteration, for $10^4$ iterations. The rest of the parameters were left unchanged compared to the previous experiment.

The experiments in Figure \ref{softmax_iter_noise} run in a few minutes on a standard laptop, whereas those in Figure \ref{softmax_reg_noise} may take up to two hours to cover the grid in the regularization hyperparameter.

\section{Proofs of Section \ref{sec:incontext}}

\begin{proof}[Proof of Proposition \ref{def:attention_layer_in_contextsetting}]
For $c\in\{0,1\}$, one has %\pierrecomment{proof of the second equality?}

\begin{align*}
 &\mathbb{E}\left[\frac{2\lambda}{L}\sum_{\ell=1}^L\langle X_1,X_\ell\rangle X_\ell\Bigg| \mu_1^\star, \mu_0^\star, Z_1=c\right]\\
&=\frac{2\lambda}{L}\,\mathbb{E}[\|X_1\|^2 X_1 \mid \mu_1^\star, \mu_0^\star, Z_1=c]
+\frac{2\lambda (L-1)}{L}\,\mathbb{E}[\langle X_1,X_2\rangle X_2 \mid \mu_1^\star,\mu_0^\star,Z_1=c] \\
&= \frac{2\lambda}{L} \Big( \underbrace{\mathbb{E}[\|X_1\|^2X_1 \mid \mu_1^\star,\mu_0^\star,Z_1=c]}_{(1+(d+2)\sigma^2)\mu_c^\star} 
+ (L-1) \underbrace{\mathbb{E}[\langle X_1,X_2\rangle X_2 \mid \mu_1^\star,\mu_0^\star,Z_1=c]}_{(\frac{1}{2}+\sigma^2)\mu_c^\star} \Big) \\
&= \frac{2\lambda}{L}\Big[(1+(d+2)\sigma^2)+(L-1)\left(\frac{1}{2}+\sigma^2\right)\Big]\mu_c^\star.
\end{align*}

Where we have computed the conditional expectations by expanding each term using $X_i = \mu_{Z_i}^\star + \varepsilon_i$ with independent Gaussian errors $\varepsilon_i$, discarding odd-moment terms, and applying Isserlis' theorem \citep{isserlis}.

We remark that choosing $\lambda=\frac{L}{2}\frac{1}{1+(d+2)\sigma^2+(L-1)\left(\frac{1}{2}+\sigma^2\right)}$ we get that the encoding is unbiased.

And \begin{align*}
    \Bigg\Vert\mathbb{E}\left[\frac{2\lambda}{L}\sum_{\ell=1}^L\langle X_1,X_\ell\rangle X_\ell\Bigg| \mu_1^\star, \mu_0^\star, Z_1=c\right]\Bigg\Vert^2&=\frac{4\lambda^2}{L^2}\left[1+(d+2)\sigma^2+(L-1)\left(\frac{1}{2}+\sigma^2\right)\right]^2.
\end{align*}

Besides, \begin{align*}
    \mathbb{E}\left[\frac{2\lambda}{L}\sum_{\ell=1}^L\langle X_1,X_\ell\rangle X_\ell \Bigg| \mu_1^\star, \mu_0^\star \right]=\frac{\lambda}{L}\left[1+(d+2)\sigma^2+(L-1)\left(\frac{1}{2}+\sigma^2\right)\right](\mu_0^\star+\mu_1^\star)
\end{align*}
Also, \begin{align*}
    \Bigg\Vert\mathbb{E}\left[\frac{2\lambda}{L}\sum_{\ell=1}^L\langle X_1,X_\ell\rangle X_\ell\Bigg|\mu_1^\star, \mu_0^\star\right]\Bigg\Vert^2=\frac{2\lambda^2}{L^2}\left[1+(d+2)\sigma^2+(L-1)\left(\frac{1}{2}+\sigma^2\right)\right]^2
\end{align*}

On the other hand,

\begin{align*}
    \mathbb{E}\left[\Bigg\Vert\frac{2\lambda}{L}\sum_{\ell=1}^L\langle X_1,X_\ell\rangle X_\ell\Bigg\Vert^2\Bigg| \mu_1^\star,\mu_0^\star,Z_1=c\right]&=\frac{4\lambda^2}{L^2}\mathbb{E}[\Vert X_1\Vert^6|\mu_1^\star,\mu_0^\star, Z_1=c]\\
    &+\frac{12\lambda^2}{L^2}(L-1)\mathbb{E}[\Vert \langle X_1,X_2\rangle X_2\Vert^2|\mu_1^\star,\mu_0^\star,Z_1=c]\\
    &+\frac{8\lambda^2}{L^2}\frac{(L-1)(L-2)}{2}\mathbb{E}[\langle X_1,X_2\rangle\langle X_1,X_3\rangle\langle X_2,X_3\rangle|\mu_1^\star,\mu_0^\star,Z_1=c].
\end{align*}
Recalling the expressions stated at the beginning of the proof of Proposition \ref{riskrandomsetting} for moments of Gaussian r.v.~, we conclude that 
\begin{align*}
    &\mathbb{E}\left[\Bigg\Vert\frac{2\lambda}{L}\sum_{\ell=1}^L\langle X_1,X_\ell\rangle X_\ell\Bigg\Vert^2\Bigg| \mu_1^\star,\mu_0^\star,Z_1=c\right]\\
    &=\frac{4\lambda^2}{L^2}(1+3(d+4)\sigma^2+3(d+2)(d+4)\sigma^4+d(d+2)(d+4)\sigma^6)\\
    &+\frac{12\lambda^2}{L^2}(L-1)\left(\frac{1}{2}+\frac{(d+8)}{2}\sigma^2+3(d+2)\sigma^4+d(d+2)\sigma^6\right)\\
    &+\frac{8\lambda^2}{L^2}\frac{(L-1)(L-2)}{2}\left(2\left(\sigma^2+\frac{1}{2}\right)^3+(d-2)\sigma^6\right).
\end{align*}
And \begin{align*}
    &\mathrm{Var}\left[\frac{2\lambda}{L}\sum_{\ell=1}^L\langle X_1,X_\ell\rangle X_\ell\Bigg| \mu_1^\star,\mu_0^\star,Z_1=c\right]\\
    &=\mathbb{E}\left[\Bigg\Vert\frac{2\lambda}{L}\sum_{\ell=1}^L\langle X_1,X_\ell\rangle X_\ell\Bigg\Vert^2\Bigg| \mu_1^\star,\mu_0^\star,Z_1=c\right]-\Bigg\Vert\mathbb{E}\left[\frac{2\lambda}{L}\sum_{\ell=1}^L\langle X_1,X_\ell\rangle X_\ell\Bigg| \mu_1^\star, \mu_0^\star, Z_1=c\right]\Bigg\Vert^2\\
    &=\frac{4\lambda^2}{L^2}(1+3(d+4)\sigma^2+3(d+2)(d+4)\sigma^4+d(d+2)(d+4)\sigma^6)\\
    &+\frac{12\lambda^2}{L^2}(L-1)\left(\frac{1}{2}+\frac{(d+8)}{2}\sigma^2+3(d+2)\sigma^4+d(d+2)\sigma^6\right)\\
    &+\frac{8\lambda^2}{L^2}\frac{(L-1)(L-2)}{2}\left(2\left(\sigma^2+\frac{1}{2}\right)^3+(d-2)\sigma^6\right)\\
    &-\frac{4\lambda^2}{L^2}\left[1+(d+2)\sigma^2+(L-1)\left(\frac{1}{2}+\sigma^2\right)\right]^2.
\end{align*}
When $L\rightarrow\infty$, \begin{align*}
    &\mathrm{Var}\left[\frac{2\lambda}{L}\sum_{\ell=2}^L\langle X_1,X_\ell\rangle X_\ell\Bigg| \mu_1^\star,\mu_0^\star,Z_1=c\right]\\
    &\sim4\lambda^2\left(2\left(\sigma^2+\frac{1}{2}\right)^3+(d-2)\sigma^6\right)-4\lambda^2\left(\sigma^2+\frac{1}{2}\right)^2\\
    &=2\lambda^2\sigma^2(1+4\sigma^2+2d\sigma^4).
\end{align*}
Choosing the $\lambda=\frac{1}{1+2\sigma^2}$, we have an unbiased encoding with variance $$2\sigma^2\frac{1+4\sigma^2+2d\sigma^4}{(1+2\sigma^2)^2}$$
%And we note that 
%$$2\sigma^2\frac{1+4\sigma^2+2d\sigma^4}{(1+2\sigma^2)^2}< \sigma^2d.$$
%Therefore, we get a variance reduction effect with this encoding, but it %will be dimension-dependant. 
\end{proof}
\begin{proof}[Proof of Proposition \ref{riskrandomsetting}]
    We have
    \begin{align*}
    &\mathbb{E}\left[\Big\Vert X_1-\frac{2\lambda}{L}\sum_{\ell=1}^L\langle X_1,X_\ell\rangle X_\ell\Big\Vert^2 \Bigg| \mu_1^\star,\mu_0^\star \right]\\
    &=\mathbb{E}\left[\Vert X_1\Vert^2\right|\mu_1^\star,\mu_0^\star]-\frac{4\lambda}{L}\mathbb{E}[\Vert X_1\Vert^4|\mu_1^\star,\mu_0^\star]-\frac{4\lambda}{L}\sum_{\ell=2}^L\mathbb{E}\left[\langle X_1,X_\ell\rangle^2|\mu_1^\star,\mu_0^\star\right]\\
    &+\frac{4\lambda^2}{L^2}\mathbb{E}[\Vert X_1\Vert^6|\mu_1^\star,\mu_0^\star]+\frac{4\lambda^2}{L^2}\sum_{\ell=2}^L \mathbb{E}[\Vert \langle X_1,X_\ell\rangle X_\ell\Vert^2|\mu_1^\star,\mu_0^\star]\\
    &+\frac{8\lambda^2}{L^2}\sum_{\ell=2}^L \mathbb{E}[\Vert X_1\langle X_1,X_{\ell}\rangle\Vert^2||\mu_1^\star,\mu_0^\star]+\frac{8\lambda^2}{L^2}\sum_{2\leq \ell<k\leq L}\mathbb{E}[ \langle X_1,X_\ell\rangle\langle X_1,X_k\rangle\langle X_\ell,X_k\rangle|\mu_1^\star,\mu_0^\star ].
\end{align*}
Furthermore we have the following, %\pierrecomment{proofs?} 
\begin{align*}
    \mathbb{E}[\Vert X_1\Vert^2|\mu_1^\star,\mu_0^\star]&=1+\sigma^2d, \\
    \mathbb{E}\left[\langle X_1,X_2\rangle^2|\mu_1^\star,\mu_0^\star\right]&=\frac{1}{2}+2\sigma^2+d\sigma^4,\\ 
    \mathbb{E}[\Vert \langle X_1,X_2\rangle X_2\Vert^2|\mu_1^\star,\mu_0^\star]&=\frac{1}{2}+\frac{d+8}{2}\sigma^2+3(d+2)\sigma^4+d(d+2)\sigma^6,\\ 
    \mathbb{E}[\Vert X_1\Vert^4|\mu_1^\star,\mu_0^\star ]&=1+2(d+2)\sigma^2+d(d+2)\sigma^4,\\
    \mathbb{E}[\Vert X_1\Vert^6|\mu_1^\star,\mu_0^\star ]&=1+3(d+4)\sigma^2+3(d+2)(d+4)\sigma^4+d(d+2)(d+4)\sigma^6,\\
    \mathbb{E}[ \langle X_1,X_2\rangle\langle X_1,X_3\rangle\langle X_2,X_3\rangle|\mu_1^\star,\mu_0^\star ]&=2\left(\sigma^2+\frac{1}{2}\right)^3+(d-2)\sigma^6.
\end{align*}

These identities follow by writing
\[
X_i|\mu_i^\star = \mu_i^\star + \varepsilon_i, 
\qquad \varepsilon_i \sim \mathcal{N}(0,\sigma^2 I_d)\quad \text{i.i.d.},
\]
and expanding the expressions in terms of $\mu_i^\star$ and $\varepsilon_i$.  
The expectations then reduce to Gaussian moments, which can be evaluated systematically using Isserlis' theorem \citep{isserlis}.  
Since this involves only straightforward but lengthy computations, we omit the details here.

Since no expression depends on $\mu_1^\star,\mu_0^\star$, we have $$\mathbb{E}\left[\Big\Vert X_1-\frac{2\lambda}{L}\sum_{\ell=1}^L\langle X_1,X_\ell\rangle X_\ell\Big\Vert^2  \right]=\mathbb{E}\left[\Big\Vert X_1-\frac{2\lambda}{L}\sum_{\ell=1}^L\langle X_1,X_\ell\rangle X_\ell\Big\Vert^2\Bigg|\mu_1^\star,\mu_0^\star\right].$$
And
\begin{align*}
    &\mathbb{E}\left[\Big\Vert X_1-\frac{2\lambda}{L}\sum_{\ell=1}^L\langle X_1,X_\ell\rangle X_\ell\Big\Vert^2\right]\\
    &=1+\sigma^2d-\frac{4\lambda}{L}(1+2(d+2)\sigma^2+d(d+2)\sigma^4)-\frac{4\lambda}{L}(L-1)\left(\frac{1}{2}+2\sigma^2+d\sigma^4\right)\\
    &+\frac{4\lambda^2}{L^2}(1+3(d+4)\sigma^2+3(d+2)(d+4)\sigma^4+d(d+2)(d+4)\sigma^6)\\
    &+\frac{12\lambda^2}{L^2}(L-1)\left(\frac{1}{2}+\frac{d+8}{2}\sigma^2+(3d+6)\sigma^4+d(d+2)\sigma^6\right)\\
    &+\frac{8\lambda^2}{L^2}\frac{(L-1)(L-2)}{2}\left(2\left(\sigma^2+\frac{1}{2}\right)^3+(d-2)\sigma^6\right).
\end{align*}
When $L\rightarrow\infty$, we obtain 
\begin{align*}
    \mathbb{E}\left[\Big\Vert X_1-\frac{2\lambda}{L}\sum_{\ell=1}^L\langle X_1,X_\ell\rangle X_\ell\Big\Vert^2\right] =&   (1+\sigma^2d)-2\lambda(1+4\sigma^2+2d\sigma^4)\\
    &+4\lambda^2\left(2\left(\sigma^2+\frac{1}{2}\right)^3+(d-2)\sigma^6\right).
\end{align*}
And we can choose $\lambda=\frac{1+4\sigma^2+2d\sigma^4}{4\left(2\left(\sigma^2+\frac{1}{2}\right)^3+(d-2)\sigma^6\right)}$ to get \begin{align*}
    \mathbb{E}\left[\Big\Vert X_1-\frac{2\lambda}{L}\sum_{\ell=1}^L\langle X_1,X_\ell\rangle X_\ell\Big\Vert^2\right] &=   (1+\sigma^2d)-\frac{(1+4\sigma^2+2d\sigma^4)^2}{4\left(2\left(\sigma^2+\frac{1}{2}\right)^3+(d-2)\sigma^6\right)}\\
    &=\sigma^2(d-2)\frac{1+2\sigma^2}{1+6\sigma^2+12\sigma^4+4d\sigma^6}\\
    &\leq \sigma^2(d-2).
\end{align*}
\end{proof}

\newpage
\section*{NeurIPS Paper Checklist}

\begin{enumerate}

\item {\bf Claims}
    \item[] Question: Do the main claims made in the abstract and introduction accurately reflect the paper's contributions and scope?
    \item[] Answer: \answerYes{} % Replace by \answerYes{}, \answerNo{}, or \answerNA{}.
    \item[] Justification: Our main claim is stated in the abstract, and fully studied in Section \ref{sec:gmm}, particularly Theorem \ref{thm:main_gmm}.
    \item[] Guidelines:
    \begin{itemize}
        \item The answer NA means that the abstract and introduction do not include the claims made in the paper.
        \item The abstract and/or introduction should clearly state the claims made, including the contributions made in the paper and important assumptions and limitations. A No or NA answer to this question will not be perceived well by the reviewers. 
        \item The claims made should match theoretical and experimental results, and reflect how much the results can be expected to generalize to other settings. 
        \item It is fine to include aspirational goals as motivation as long as it is clear that these goals are not attained by the paper. 
    \end{itemize}

\item {\bf Limitations}
    \item[] Question: Does the paper discuss the limitations of the work performed by the authors?
    \item[] Answer: \answerYes{} % Replace by \answerYes{}, \answerNo{}, or \answerNA{}.
    \item[] Justification: The authors are transparent about the simplified nature of the architectures and mathematical models they study, and consistently highlight these choices as part of the scope and limitations of their work.
    \item[] Guidelines:
    \begin{itemize}
        \item The answer NA means that the paper has no limitation while the answer No means that the paper has limitations, but those are not discussed in the paper. 
        \item The authors are encouraged to create a separate "Limitations" section in their paper.
        \item The paper should point out any strong assumptions and how robust the results are to violations of these assumptions (e.g., independence assumptions, noiseless settings, model well-specification, asymptotic approximations only holding locally). The authors should reflect on how these assumptions might be violated in practice and what the implications would be.
        \item The authors should reflect on the scope of the claims made, e.g., if the approach was only tested on a few datasets or with a few runs. In general, empirical results often depend on implicit assumptions, which should be articulated.
        \item The authors should reflect on the factors that influence the performance of the approach. For example, a facial recognition algorithm may perform poorly when image resolution is low or images are taken in low lighting. Or a speech-to-text system might not be used reliably to provide closed captions for online lectures because it fails to handle technical jargon.
        \item The authors should discuss the computational efficiency of the proposed algorithms and how they scale with dataset size.
        \item If applicable, the authors should discuss possible limitations of their approach to address problems of privacy and fairness.
        \item While the authors might fear that complete honesty about limitations might be used by reviewers as grounds for rejection, a worse outcome might be that reviewers discover limitations that aren't acknowledged in the paper. The authors should use their best judgment and recognize that individual actions in favor of transparency play an important role in developing norms that preserve the integrity of the community. Reviewers will be specifically instructed to not penalize honesty concerning limitations.
    \end{itemize}

\item {\bf Theory assumptions and proofs}
    \item[] Question: For each theoretical result, does the paper provide the full set of assumptions and a complete (and correct) proof?
    \item[] Answer: \answerYes{} % Replace by \answerYes{}, \answerNo{}, or \answerNA{}.
    \item[] Justification: All theoretical assumptions are explicitly stated. Complete proofs are provided in Appendix \ref{sec:app_degenerate} and \ref{sec:app_gmm}, with supporting technical results in Appendix \ref{sec:computation}.
    \item[] Guidelines:
    \begin{itemize}
        \item The answer NA means that the paper does not include theoretical results. 
        \item All the theorems, formulas, and proofs in the paper should be numbered and cross-referenced.
        \item All assumptions should be clearly stated or referenced in the statement of any theorems.
        \item The proofs can either appear in the main paper or the supplemental material, but if they appear in the supplemental material, the authors are encouraged to provide a short proof sketch to provide intuition. 
        \item Inversely, any informal proof provided in the core of the paper should be complemented by formal proofs provided in appendix or supplemental material.
        \item Theorems and Lemmas that the proof relies upon should be properly referenced. 
    \end{itemize}

    \item {\bf Experimental result reproducibility}
    \item[] Question: Does the paper fully disclose all the information needed to reproduce the main experimental results of the paper to the extent that it affects the main claims and/or conclusions of the paper (regardless of whether the code and data are provided or not)?
    \item[] Answer: \answerYes{} % Replace by \answerYes{}, \answerNo{}, or \answerNA{}.
    \item[] Justification: The main algorithm used for our numerical experiments is defined in Section \ref{sec:psgd}. Detailed descriptions of the experimental setups are provided in Sections \ref{xp_dirac}, \ref{xp_gmm}, Appendix \ref{app:xp_higher_dim}, and \ref{app:xp_softmax}. This information enables others to replicate our experiments without the need for code.
    \item[] Guidelines:
    \begin{itemize}
        \item The answer NA means that the paper does not include experiments.
        \item If the paper includes experiments, a No answer to this question will not be perceived well by the reviewers: Making the paper reproducible is important, regardless of whether the code and data are provided or not.
        \item If the contribution is a dataset and/or model, the authors should describe the steps taken to make their results reproducible or verifiable. 
        \item Depending on the contribution, reproducibility can be accomplished in various ways. For example, if the contribution is a novel architecture, describing the architecture fully might suffice, or if the contribution is a specific model and empirical evaluation, it may be necessary to either make it possible for others to replicate the model with the same dataset, or provide access to the model. In general. releasing code and data is often one good way to accomplish this, but reproducibility can also be provided via detailed instructions for how to replicate the results, access to a hosted model (e.g., in the case of a large language model), releasing of a model checkpoint, or other means that are appropriate to the research performed.
        \item While NeurIPS does not require releasing code, the conference does require all submissions to provide some reasonable avenue for reproducibility, which may depend on the nature of the contribution. For example
        \begin{enumerate}
            \item If the contribution is primarily a new algorithm, the paper should make it clear how to reproduce that algorithm.
            \item If the contribution is primarily a new model architecture, the paper should describe the architecture clearly and fully.
            \item If the contribution is a new model (e.g., a large language model), then there should either be a way to access this model for reproducing the results or a way to reproduce the model (e.g., with an open-source dataset or instructions for how to construct the dataset).
            \item We recognize that reproducibility may be tricky in some cases, in which case authors are welcome to describe the particular way they provide for reproducibility. In the case of closed-source models, it may be that access to the model is limited in some way (e.g., to registered users), but it should be possible for other researchers to have some path to reproducing or verifying the results.
        \end{enumerate}
    \end{itemize}

\item {\bf Open access to data and code}
    \item[] Question: Does the paper provide open access to the data and code, with sufficient instructions to faithfully reproduce the main experimental results, as described in supplemental material?
    \item[] Answer: \answerYes{} % Replace by \answerYes{}, \answerNo{}, or \answerNA{}.
    \item[] Justification: Our primary contribution is theoretical, focusing on the analysis of a projected gradient descent algorithm. The experimental results serve to illustrate these theoretical findings. The data used are synthetic mixtures of Gaussian, and the algorithm is a standard projected gradient descent on the unit sphere, both of which are straightforward to implement. The code can be found in the supplementary materials.
    %Therefore, we have not provided the code, as it is not central to our contribution.
    \item[] Guidelines: 
    \begin{itemize}
        \item The answer NA means that paper does not include experiments requiring code.
        \item Please see the NeurIPS code and data submission guidelines (\url{https://nips.cc/public/guides/CodeSubmissionPolicy}) for more details.
        \item While we encourage the release of code and data, we understand that this might not be possible, so “No” is an acceptable answer. Papers cannot be rejected simply for not including code, unless this is central to the contribution (e.g., for a new open-source benchmark).
        \item The instructions should contain the exact command and environment needed to run to reproduce the results. See the NeurIPS code and data submission guidelines (\url{https://nips.cc/public/guides/CodeSubmissionPolicy}) for more details.
        \item The authors should provide instructions on data access and preparation, including how to access the raw data, preprocessed data, intermediate data, and generated data, etc.
        \item The authors should provide scripts to reproduce all experimental results for the new proposed method and baselines. If only a subset of experiments are reproducible, they should state which ones are omitted from the script and why.
        \item At submission time, to preserve anonymity, the authors should release anonymized versions (if applicable).
        \item Providing as much information as possible in supplemental material (appended to the paper) is recommended, but including URLs to data and code is permitted.
    \end{itemize}

\item {\bf Experimental setting/details}
    \item[] Question: Does the paper specify all the training and test details (e.g., data splits, hyperparameters, how they were chosen, type of optimizer, etc.) necessary to understand the results?
    \item[] Answer: \answerYes{} % Replace by \answerYes{}, \answerNo{}, or \answerNA{}.
    \item[] Justification: Detailed accounts of the experimental configurations are presented in Sections \ref{xp_dirac}, \ref{xp_gmm}, \ref{app:xp_higher_dim}, and \ref{app:xp_softmax}.
    \item[] Guidelines:
    \begin{itemize}
        \item The answer NA means that the paper does not include experiments.
        \item The experimental setting should be presented in the core of the paper to a level of detail that is necessary to appreciate the results and make sense of them.
        \item The full details can be provided either with the code, in appendix, or as supplemental material.
    \end{itemize}

\item {\bf Experiment statistical significance}
    \item[] Question: Does the paper report error bars suitably and correctly defined or other appropriate information about the statistical significance of the experiments?
    \item[] Answer: \answerYes{} % Replace by \answerYes{}, \answerNo{}, or \answerNA{}.
    \item[] Justification: To assess the statistical significance of our experimental results, we conducted 10 independent runs for each experiment. The outcomes are visualized using Seaborn's lineplot function, which displays the mean performance along with error bands representing variability across runs.
    \item[] Guidelines:
    \begin{itemize}
        \item The answer NA means that the paper does not include experiments.
        \item The authors should answer "Yes" if the results are accompanied by error bars, confidence intervals, or statistical significance tests, at least for the experiments that support the main claims of the paper.
        \item The factors of variability that the error bars are capturing should be clearly stated (for example, train/test split, initialization, random drawing of some parameter, or overall run with given experimental conditions).
        \item The method for calculating the error bars should be explained (closed form formula, call to a library function, bootstrap, etc.)
        \item The assumptions made should be given (e.g., Normally distributed errors).
        \item It should be clear whether the error bar is the standard deviation or the standard error of the mean.
        \item It is OK to report 1-sigma error bars, but one should state it. The authors should preferably report a 2-sigma error bar than state that they have a 96\% CI, if the hypothesis of Normality of errors is not verified.
        \item For asymmetric distributions, the authors should be careful not to show in tables or figures symmetric error bars that would yield results that are out of range (e.g. negative error rates).
        \item If error bars are reported in tables or plots, The authors should explain in the text how they were calculated and reference the corresponding figures or tables in the text.
    \end{itemize}

\item {\bf Experiments compute resources}
    \item[] Question: For each experiment, does the paper provide sufficient information on the computer resources (type of compute workers, memory, time of execution) needed to reproduce the experiments?
    \item[] Answer: \answerYes{} % Replace by \answerYes{}, \answerNo{}, or \answerNA{}.
    \item[] Justification: All experiments were conducted on a standard laptop. Specific details regarding the computational resources, such as execution times, are provided at the end of each numerical experimental section.
    \item[] Guidelines:
    \begin{itemize}
        \item The answer NA means that the paper does not include experiments.
        \item The paper should indicate the type of compute workers CPU or GPU, internal cluster, or cloud provider, including relevant memory and storage.
        \item The paper should provide the amount of compute required for each of the individual experimental runs as well as estimate the total compute. 
        \item The paper should disclose whether the full research project required more compute than the experiments reported in the paper (e.g., preliminary or failed experiments that didn't make it into the paper). 
    \end{itemize}
    
\item {\bf Code of ethics}
    \item[] Question: Does the research conducted in the paper conform, in every respect, with the NeurIPS Code of Ethics \url{https://neurips.cc/public/EthicsGuidelines}?
    \item[] Answer: \answerYes{} % Replace by \answerYes{}, \answerNo{}, or \answerNA{}.
    \item[] Justification: We have read and understood the code of ethics; and have committed to conform.
    \item[] Guidelines:
    \begin{itemize}
        \item The answer NA means that the authors have not reviewed the NeurIPS Code of Ethics.
        \item If the authors answer No, they should explain the special circumstances that require a deviation from the Code of Ethics.
        \item The authors should make sure to preserve anonymity (e.g., if there is a special consideration due to laws or regulations in their jurisdiction).
    \end{itemize}

\item {\bf Broader impacts}
    \item[] Question: Does the paper discuss both potential positive societal impacts and negative societal impacts of the work performed?
    \item[] Answer: \answerNA{} % Replace by \answerYes{}, \answerNo{}, or \answerNA{}.
    \item[] Justification: This work is primarily theoretical and employs simplified, synthetic distributions. It does not have direct societal implications beyond contributing to a deeper understanding of certain aspects of attention-based mechanisms.
    \item[] Guidelines:
    \begin{itemize}
        \item The answer NA means that there is no societal impact of the work performed.
        \item If the authors answer NA or No, they should explain why their work has no societal impact or why the paper does not address societal impact.
        \item Examples of negative societal impacts include potential malicious or unintended uses (e.g., disinformation, generating fake profiles, surveillance), fairness considerations (e.g., deployment of technologies that could make decisions that unfairly impact specific groups), privacy considerations, and security considerations.
        \item The conference expects that many papers will be foundational research and not tied to particular applications, let alone deployments. However, if there is a direct path to any negative applications, the authors should point it out. For example, it is legitimate to point out that an improvement in the quality of generative models could be used to generate deepfakes for disinformation. On the other hand, it is not needed to point out that a generic algorithm for optimizing neural networks could enable people to train models that generate Deepfakes faster.
        \item The authors should consider possible harms that could arise when the technology is being used as intended and functioning correctly, harms that could arise when the technology is being used as intended but gives incorrect results, and harms following from (intentional or unintentional) misuse of the technology.
        \item If there are negative societal impacts, the authors could also discuss possible mitigation strategies (e.g., gated release of models, providing defenses in addition to attacks, mechanisms for monitoring misuse, mechanisms to monitor how a system learns from feedback over time, improving the efficiency and accessibility of ML).
    \end{itemize}
    
\item {\bf Safeguards}
    \item[] Question: Does the paper describe safeguards that have been put in place for responsible release of data or models that have a high risk for misuse (e.g., pretrained language models, image generators, or scraped datasets)?
    \item[] Answer: \answerNA{} % Replace by \answerYes{}, \answerNo{}, or \answerNA{}.
    \item[] Justification: This paper is mainly theoretical and does not involve the release of any models or datasets that could pose a risk of misuse.
    \item[] Guidelines:
    \begin{itemize}
        \item The answer NA means that the paper poses no such risks.
        \item Released models that have a high risk for misuse or dual-use should be released with necessary safeguards to allow for controlled use of the model, for example by requiring that users adhere to usage guidelines or restrictions to access the model or implementing safety filters. 
        \item Datasets that have been scraped from the Internet could pose safety risks. The authors should describe how they avoided releasing unsafe images.
        \item We recognize that providing effective safeguards is challenging, and many papers do not require this, but we encourage authors to take this into account and make a best faith effort.
    \end{itemize}

\item {\bf Licenses for existing assets}
    \item[] Question: Are the creators or original owners of assets (e.g., code, data, models), used in the paper, properly credited and are the license and terms of use explicitly mentioned and properly respected?
    \item[] Answer: \answerYes{} % Replace by \answerYes{}, \answerNo{}, or \answerNA{}.
    \item[] Justification: Our code is self-contained and does not include any source code or binary files from external libraries. Therefore, there are no concerns regarding permissions or licensing. We did cite open-sourced libraries used in our paper, such as JAX.
    \item[] Guidelines:
    \begin{itemize}
        \item The answer NA means that the paper does not use existing assets.
        \item The authors should cite the original paper that produced the code package or dataset.
        \item The authors should state which version of the asset is used and, if possible, include a URL.
        \item The name of the license (e.g., CC-BY 4.0) should be included for each asset.
        \item For scraped data from a particular source (e.g., website), the copyright and terms of service of that source should be provided.
        \item If assets are released, the license, copyright information, and terms of use in the package should be provided. For popular datasets, \url{paperswithcode.com/datasets} has curated licenses for some datasets. Their licensing guide can help determine the license of a dataset.
        \item For existing datasets that are re-packaged, both the original license and the license of the derived asset (if it has changed) should be provided.
        \item If this information is not available online, the authors are encouraged to reach out to the asset's creators.
    \end{itemize}

\item {\bf New assets}
    \item[] Question: Are new assets introduced in the paper well documented and is the documentation provided alongside the assets?
    \item[] Answer: \answerNA{} % Replace by \answerYes{}, \answerNo{}, or \answerNA{}.
    \item[] Justification: This paper does not introduce or release any new assets. The numerical experiments conducted serve solely to support the theoretical results and are not central to the paper's contribution.
    \item[] Guidelines:
    \begin{itemize}
        \item The answer NA means that the paper does not release new assets.
        \item Researchers should communicate the details of the dataset/code/model as part of their submissions via structured templates. This includes details about training, license, limitations, etc. 
        \item The paper should discuss whether and how consent was obtained from people whose asset is used.
        \item At submission time, remember to anonymize your assets (if applicable). You can either create an anonymized URL or include an anonymized zip file.
    \end{itemize}

\item {\bf Crowdsourcing and research with human subjects}
    \item[] Question: For crowdsourcing experiments and research with human subjects, does the paper include the full text of instructions given to participants and screenshots, if applicable, as well as details about compensation (if any)? 
    \item[] Answer: \answerNA{} % Replace by \answerYes{}, \answerNo{}, or \answerNA{}.
    \item[] Justification: This work does not involve crowdsourcing nor research with human subjects.
    \item[] Guidelines:
    \begin{itemize}
        \item The answer NA means that the paper does not involve crowdsourcing nor research with human subjects.
        \item Including this information in the supplemental material is fine, but if the main contribution of the paper involves human subjects, then as much detail as possible should be included in the main paper. 
        \item According to the NeurIPS Code of Ethics, workers involved in data collection, curation, or other labor should be paid at least the minimum wage in the country of the data collector. 
    \end{itemize}

\item {\bf Institutional review board (IRB) approvals or equivalent for research with human subjects}
    \item[] Question: Does the paper describe potential risks incurred by study participants, whether such risks were disclosed to the subjects, and whether Institutional Review Board (IRB) approvals (or an equivalent approval/review based on the requirements of your country or institution) were obtained?
    \item[] Answer: \answerNA{} % Replace by \answerYes{}, \answerNo{}, or \answerNA{}.
    \item[] Justification: This work does not involve crowdsourcing nor research with human subjects.
    \item[] Guidelines:
    \begin{itemize}
        \item The answer NA means that the paper does not involve crowdsourcing nor research with human subjects.
        \item Depending on the country in which research is conducted, IRB approval (or equivalent) may be required for any human subjects research. If you obtained IRB approval, you should clearly state this in the paper. 
        \item We recognize that the procedures for this may vary significantly between institutions and locations, and we expect authors to adhere to the NeurIPS Code of Ethics and the guidelines for their institution. 
        \item For initial submissions, do not include any information that would break anonymity (if applicable), such as the institution conducting the review.
    \end{itemize}

\item {\bf Declaration of LLM usage}
    \item[] Question: Does the paper describe the usage of LLMs if it is an important, original, or non-standard component of the core methods in this research? Note that if the LLM is used only for writing, editing, or formatting purposes and does not impact the core methodology, scientific rigorousness, or originality of the research, declaration is not required.
    %this research? 
    \item[] Answer: \answerNA{} % Replace by \answerYes{}, \answerNo{}, or \answerNA{}.
    \item[] Justification: We did not employ any large language models (LLMs) during any stage of the research process.
    \item[] Guidelines:
    \begin{itemize}
        \item The answer NA means that the core method development in this research does not involve LLMs as any important, original, or non-standard components.
        \item Please refer to our LLM policy (\url{https://neurips.cc/Conferences/2025/LLM}) for what should or should not be described.
    \end{itemize}

\end{enumerate}

\end{document}